\def\eqref#1{(\ref{#1})}
\def\1{\bm{1}}
\def\rmF{{\mathbf{F}}}
\def\rmL{{\mathbf{L}}}
\def\rmR{{\mathbf{R}}}
\def\rmS{{\mathbf{S}}}
\def\rmT{{\mathbf{T}}}
\def\rmV{{\mathbf{V}}}
\def\rmW{{\mathbf{W}}}
\def\rmX{{\mathbf{X}}}
\def\rmY{{\mathbf{Y}}}
\def\rmZ{{\mathbf{Z}}}
\def\mX{{\bm{X}}}
\def\mZ{{\bm{Z}}}
\DeclareMathAlphabet{\mathsfit}{\encodingdefault}{\sfdefault}{m}{sl}
\SetMathAlphabet{\mathsfit}{bold}{\encodingdefault}{\sfdefault}{bx}{n}
\def\gA{{\mathcal{A}}}
\def\gB{{\mathcal{B}}}
\def\gT{{\mathcal{T}}}
\def\gV{{\mathcal{V}}}
\def\gW{{\mathcal{W}}}
\def\gX{{\mathcal{X}}}
\def\gY{{\mathcal{Y}}}
\def\gZ{{\mathcal{Z}}}
\def\sH{{\mathbb{H}}}
\def\sR{{\mathbb{R}}}
\newcommand{\E}{\mathbb{E}}
\newcommand{\Var}{\mathrm{Var}}
\theoremstyle{plain}
\newtheorem{theorem}{Theorem}
\newtheorem{lemma}{Lemma} 
\theoremstyle{definition}
\newtheorem{definition}{Definition} 
\theoremstyle{remark}
\newtheorem{remark}{Remark} 
\newcommand{\nn}{\nonumber}
\icmltitlerunning{Fairness Overfitting in Machine Learning: An Information-Theoretic Perspective}
\begin{document}

\twocolumn[
\icmltitle{Fairness Overfitting in Machine Learning: An Information-Theoretic Perspective}

%



\icmlsetsymbol{equal}{*}

\begin{icmlauthorlist}
\icmlauthor{Firas Laakom}{yyy}
\icmlauthor{Haobo Chen}{flo}
\icmlauthor{J\"{u}rgen Schmidhuber}{yyy,sch}
\icmlauthor{Yuheng Bu}{flo}

\end{icmlauthorlist}

\icmlaffiliation{yyy}{Center of Excellence for Generative AI, KAUST, Saudi Arabia}
\icmlaffiliation{flo}{University of Florida, Gainesville, USA}
\icmlaffiliation{sch}{The Swiss AI Lab, IDSIA, USI 
\& SUPSI, Switzerland}

\icmlcorrespondingauthor{Firas Laakom}{firas.laakom@kaust.edu.sa}
\icmlcorrespondingauthor{Yuheng Bu}{buyuheng@ufl.edu}

\icmlkeywords{Machine Learning, ICML}

\vskip 0.3in]



\printAffiliationsAndNotice{}  

\begin{abstract}
Despite substantial progress in promoting fairness in high-stake applications using machine learning models, existing methods often modify the training process, such as through regularizers or other interventions, but lack formal guarantees that fairness achieved during training will generalize to unseen data. Although overfitting with respect to prediction performance has been extensively studied, overfitting in terms of fairness loss has received far less attention.
This paper proposes a theoretical framework for analyzing fairness generalization error through an information-theoretic lens. Our novel bounding technique is based on Efron–Stein inequality, which allows us to derive tight information-theoretic fairness generalization bounds with both Mutual Information (MI) and Conditional Mutual Information (CMI). Our empirical results validate the tightness and practical relevance of these bounds across diverse fairness-aware learning algorithms.
Our framework offers valuable insights to guide the design of algorithms improving fairness generalization.
\end{abstract}

\section{Introduction} \label{introduction_sec}

As machine learning advances, its deployment in high-stakes applications, such as hiring, financial lending, and criminal justice, has raised critical fairness concerns \citep{mehrabi2021survey,li2022achieving,barocas2019fairness}. Deep learning models often inherit biases from the data they are trained on, potentially leading to inequitable outcomes for certain groups \citep{pessach2022review,ruggieri2023can}. Addressing these issues is particularly challenging because neural networks rely on high-dimensional, complex representations, which can obscure underlying biases \citep{zemel2013learning}. To mitigate these risks, various fairness interventions \citep{barocas2019fairness,zafar2017fairness,lee2021fair}, particularly regularization-based in-processing methods \citep{kamishima2012fairness,baharlouei2019r,mroueh2021fair,li2022kernel,lee2022maximal,alghamdi2022beyond,shui2022learning}, have been proposed to ensure equitable predictions without significantly compromising model performance.

All these in-processing approaches are built on the implicit assumption that imposing fairness constraints during training will inherently generalize and maintain these fairness standards on new, unseen data.
However, neural networks are known for their strong memorization capabilities, often leading to overfitting with limited training data. This suggests that the aforementioned assumption may not hold, and neural networks could exhibit `fairness overfitting.'

\textbf{Is there a `fairness overfitting?'}  To investigate this question, we conduct experiments on the COMPAS dataset~\citep{larson2016propublica}. We evaluate the performance of the standard Empirical Risk Minimization (ERM) approach alongside three fairness regularization techniques. The results, presented in Figure~\ref{motivating_example}, reveal that all these algorithms exhibit fairness overfitting, characterized by a noticeable fairness generalization error, particularly in the low data regime. Furthermore, the impact of different fairness techniques on fairness generalization errors varies. While some approaches slightly mitigate the generalization error, others worsen it, in certain cases doubling the generalization error compared to ERM. This highlights the need for a deeper theoretical understanding of fairness generalization, which is the main goal of this paper.


\begin{figure*}[t]
\centering
\includegraphics[width=0.99\linewidth]{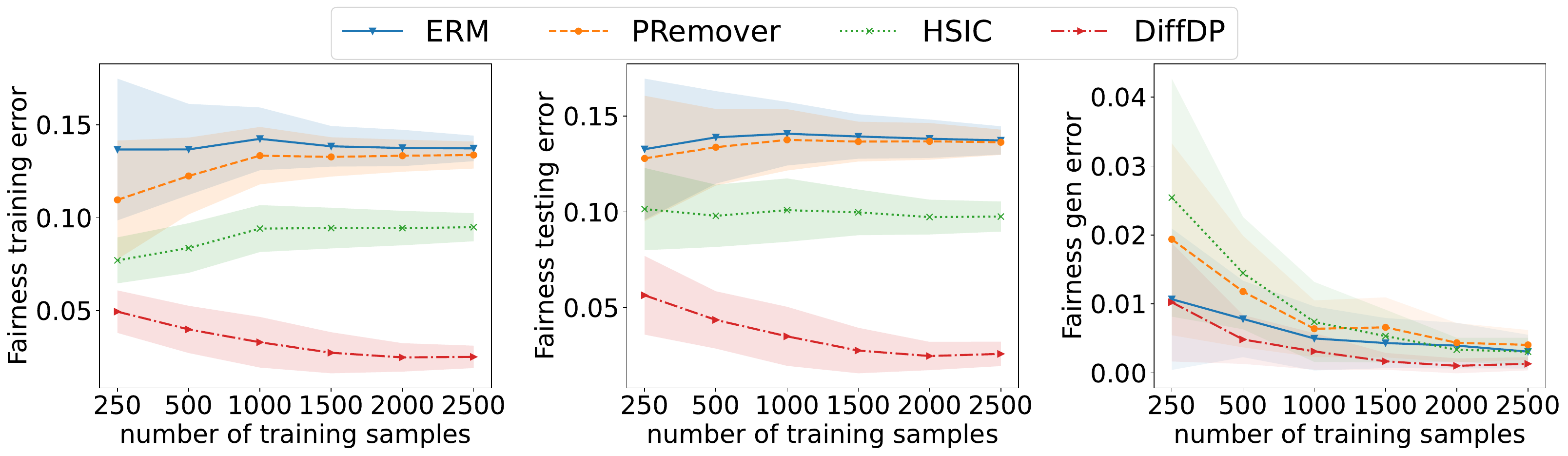}
\caption{Fairness training error (left), Fairness test error (middle), and fairness generalization error (right), i.e., the difference between test fairness and training fairness error, are shown as functions of the number of training samples, using the COMPAS dataset with gender as the sensitive attribute.
Experimental details are provided in Section~\ref{empirical_res}.
}
\label{motivating_example}
\end{figure*}

One key insight from these observations is that  \textit{neural networks exhibit fairness overfitting}, and the fairness generalization depends on both the \emph{data} and the \emph{algorithms}. This paper introduces a theoretical framework for understanding fairness generalization through information-theoretic tools. These tools, which account for both data- and algorithm-dependent factors \citep{xu2017information,harutyunyan2021information,wang2023tighter} rather than relying solely on model complexity as in traditional VC-dimension analysis~\citep{sontag1998vc,harvey2017nearly}, provide a rigorous foundation for understanding and characterizing the fairness-generalization behavior of machine learning models.

In the context of fairness generalization, prior work has derived guarantees for DP and EO within specific algorithmic frameworks and loss functions \citep{woodworth2017learning, agarwal2018reductions}. While informative, these analyses primarily address sample complexity and do not account for broader algorithmic or data-dependent factors. More closely related is the work of \citet{oneto2020randomized}, which provides bounds under randomized algorithms, but their KL-based formulation is not practically computable in modern settings, e.g., deep neural network. In this paper, we aim to develop a more general and tractable framework for fairness generalization that accommodates a wider class of loss functions and learning algorithms, while capturing key data-specific and algorithmic factors. A detailed related work discussion is presented in Appendix~\ref{relate_work}. 

Our main contributions are as follows:
\begin{itemize}[leftmargin=*,topsep=0em,itemsep=0em]
\item We introduce the concept of fairness generalization to quantify and analyze how fairness properties observed during training extend to unseen test data. This concept provides a foundation for understanding the challenges posed by fairness overfitting during model training.
\item We propose a novel bounding technique based on the Efron–Stein inequality, enabling the derivation of tighter information-theoretic bounds tailored for fairness generalization. Our bounds, presented in terms of Mutual Information (MI) and Conditional Mutual Information (CMI), offer rigorous insights into the different factors affecting fairness generalization.
\item We conduct extensive empirical experiments on standard fairness datasets, including COMPAS and Adult. These experiments highlight the tightness of our bounds and their ability to capture the complex behavior of the fairness generalization error, providing valuable insights for future algorithm design.

\end{itemize}

\textbf{Notations:} We use upper-case letters to denote random variables, e.g., $\rmZ$, and lower-case letters to denote the realization of random variables. $\E_{\rmZ \sim P}$ denotes the expectation of $\rmZ$ over a distribution $P$. For a pair of random variables $\rmW$ and $\rmZ$, their joint distribution is denoted as $P_{\rmW,\rmZ}$. Let $\overline{\rmW}$ be an independent copy of $\rmW$, and $\overline{\rmZ}$ be an independent copy of $\rmZ$, such that $P_{\overline{\rmW},\overline{\rmZ}}= P_{\rmW} \otimes P_{\rmZ}$. For random variables $\rmX$, $\rmY$ and $\rmZ$, $I(\rmX;\rmY) \triangleq D(P_{\rmX,\rmY}\|P_{\rmX}\otimes P_{\rmY})$ denotes the mutual information (MI), and $I_{z}(\rmX;\rmY) \triangleq D(P_{\rmX,\rmY|\rmZ=z}\|P_{\rmX|\rmZ=z}\otimes P_{\rmY|\rmZ=z} )$ denotes disintegrated conditional mutual information (CMI), and $\E_\rmZ [I_{\rmZ}(\rmX;\rmY)] = I(\rmX;\rmY|\rmZ)$ is the standard CMI. We will also use the notation $\rmX,\rmY|z$ to simplify $\rmX,\rmY|\rmZ=z$ when it is clear from the context.

\section{Fairness Generalization error}

\subsection{Problem Formulation}
Let $\gX$, $\gT$, and $\gY$ denote the spaces of features, sensitive attributes (e.g., race or gender), and labels, respectively, with random variables $\rmX$, $\rmT$, and $\rmY$ taking values in these spaces. Suppose that a training set $\rmS \triangleq \{(\rmX_i,\rmT_i,\rmY_i)\}_{i=1}^n $ contains $n$ i.i.d. samples $\rmZ_i \in \gZ$ generated from the distribution $P_{\rmZ}$, where $\rmV_i=(\rmX_i,\rmT_i) \in \gV$, and $\rmZ_i= (\rmV_i,\rmY_i)=(\rmX_i,\rmT_i,\rmY_i)$. 
For simplicity, we consider the case of the binary sensitive attribute, i.e., $\gT=\{0,1 \}$. 
The learning algorithm $\gA$ then takes $\rmS$ as input and produces a hypothesis $\rmW \in \gW$, which is characterized by the conditional distribution $P_{\rmW|\rmS}$. 
Furthermore, given a set of indices $u = \{u_i\}^m_{i=1} \in\{ 1,\cdots, n\}^m$ with size $m$, let $\rmZ_u = \{\rmZ_{u_i}\}^m_{i=1}$ be the set of training samples indexed by $u$. In addition, we denote by $\rmX_{u} = \{(\rmX_{u_i})\}_{i=1}^m$ and $\rmV_{u} = \{(\rmX_{u_i},\rmT_{u_i})\}_{i=1}^m  $. Furthermore, in the remaining part of the paper, we denote by $n^{\rmV}_0$ and $n^{\rmV}_1$ the total number of samples in $\rmV$ with sensitive attribute $\rmT_i=0$ and $\rmT_i=1$, respectively.

Here, we illustrate the definition of fairness generalization using the widely recognized fairness metric ``Demographic Parity'' (DP) \citep{barocas2019fairness}, which aims to ensure that a machine learning model's predictions are independent of the sensitive group. In Section~\ref{sec_separation}, we show how to extend our analysis to other metrics, e.g., equalized odds.
Formally, demographic parity is satisfied when the probability of a specific prediction (say $\hat{y} = f(w,x)$) is invariant to the sensitive attribute, i.e., $\hat{\rmY} \perp \rmT$. To quantify this, we define the fairness-empirical risk for DP as follows:
\begin{align}  \label{emp_risk_fairness}
  \ell^{F}_E(w,S) \triangleq  \Big|\frac{1}{n_{0}+2} & \sum_{T_i=0} f(w,x_i) \nonumber \\
 & - \frac{1}{n_{1}+2}\sum_{T_i=1} f(w,x_i) \Big|,
\end{align}
where $n_{0}$ and $n_{1}$ denote the number of samples in $S$ with sensitive attribute 0 and 1,
respectively. Given a model $w$ and a dataset $S$, $\ell^{F}_E(w,S)$ quantifies the discrepancy in predictions across sensitive groups.

\begin{remark}
   In \eqref{emp_risk_fairness}, we normalize using $n_{t}+2$ instead of directly using $n_{t}$ to prevent extreme cases of division by zero when $n_{0}=0$ or $n_{1}=0$.  
\end{remark}

The fairness-population risk, which captures the expected discrepancy in model predictions across sensitive groups, averaged over the data-generating distribution $P_{\rmS}$,  is
\begin{equation} 
  \ell^{F}_{P}(w,P_{\rmS})  \triangleq  \E_{P_{\rmS}} [   \ell^{F}_E(w,\rmS) ]. 
\end{equation}

Thus, the expected fairness generalization error, which quantifies the degree of fairness
over-fitting can be defined as:
\begin{definition} \label{gen_fair} The fairness generalization error is 
\begin{equation}
\overline{gen}_{fairness}   \triangleq  \E_{P_{\rmW,\rmS}} [  \ell^{F}_{P}(\rmW,P_{\rmS}) -    \ell^{F}_{E}(\rmW,\rmS)   ], \label{fairness_MI_definition}
\end{equation}
where $P_{\rmW,\rmS}$ is induced by the learning algorithm $P_{\rmW|\rmS}$ and data generating distribution $P_{\rmS}$.
\end{definition}
The generalization error, defined in Definition \ref{gen_fair}, measures the discrepancy between the fairness-population risk and the fairness-empirical risk.  The primary goal of this paper is to understand and derive bounds for the fairness generalization error, a crucial aspect often overlooked in fairness-aware learning algorithms.




\subsection{Key Challenges} \label{key_challenges}
Compared to the standard generalization error, we outline the following key challenges in analyzing the fairness generalization error:
\begin{itemize}[leftmargin=*,topsep=0em,itemsep=0em]
    \item \textbf{Dependence on Sensitive Attributes:}  The conditioning on the sensitive attribute $\rmT$ is pivotal in the fairness empirical risk, as expressed in~\eqref{emp_risk_fairness}.  Designing bounds necessitates careful consideration of how the fairness loss function depends on the sensitive attribute, which is why we present separate bounds for DP and equalized odds (EO) in Section~\ref{sec_separation}. This dependence adds additional complexity, making the derivation of tight bounds more challenging than in standard generalization error analyses.

    \item \textbf{No Sample-Based Formulation:} Individual sample-based formulations~\citep{bu2020tightening,wang2023tighter,laakom2024class} are widely used to derive tighter bounds in information-theoretic generalization error analyses, leveraging the fact that the standard empirical risk is an average of the loss function over individual training samples. 
    However, it is not the case for fairness generalization error, as the fairness loss $ \ell^{F}_{E}(\rmW,\rmS) $ is inherently group-dependent, requiring multiple samples for computation and cannot be reduced to individual sample-based terms. Consequently, existing bounding techniques~\citep{bu2020tightening,wang2023tighter,laakom2024class} are not directly applicable in this context. 

    
    \item \textbf{Sub-Gaussian/boundedness Assumptions:}  
    In standard empirical risk, the loss function is averaged over i.i.d. samples, and generalization bounds are commonly derived using sub-Gaussian assumptions on $f(w, \rmX)$ to establish a $1/\sqrt{n}$ convergence rate \citep{xu2017information,steinke2020reasoning}. However,  as $\ell^{F}_{P}(\rmW, P_{\rmS})$ is not a simple average of $f(w, x_i)$, sub-Gaussian assumptions or even boundedness leads to loose bounds in our case.

\end{itemize}

\section{General Methodology}
\label{section_classgenerror}

The idea of deriving information-theoretic bounds using a subset of data (of size $m$)  rather than the full dataset was first introduced by \citet{harutyunyan2021information}, which also demonstrated that individual-based bounds ($m=1$) provide the tightest analysis for standard generalization.
Building upon this idea and to address group-based losses, \citet{dongtowards} proposed a bounding technique that decomposes the generalization error by leveraging different permutations of a subset of training data 
$\rmZ_u$ and rewriting the generalization error as the average over the different errors defined using these permutations. Directly leveraging this approach,  Lemma~\ref{loosebound2} presents a preliminary bound for the fairness generalization error, as formalized in Definition~\ref{gen_fair}:

\begin{lemma} \label{loosebound2}
Assume that $ |f| \in [0,1]$, then for any $m \in \{2,\cdots,n\} $,
    \begin{equation}
    \overline{gen}_{fairness} \leq \frac{1}{|C^m_n|} \sum_{u \in C^m_n} \sqrt{2  I(\rmW; \rmV_{u}) },
\end{equation}
where  $C^m_n$ is the set of $m$-combinations. 
\end{lemma}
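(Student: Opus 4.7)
The plan is to combine the subset-based technique of \citet{harutyunyan2021information} with the permutation decomposition of \citet{dongtowards} for group-dependent losses, and to close each per-subset term with a Donsker--Varadhan bound.

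First, I would establish a sub-Gaussianity constant. Since $|f|\in[0,1]$, each group average $A_t(w,V)=\tfrac{1}{n_t+2}\sum_{T_i=t}f(w,X_i)$ lies in $[-1,1]$, so $\ell^{F}_{E}(w,V)=|A_0(w,V)-A_1(w,V)|\in[0,2]$. Hoeffding's lemma then guarantees that the centered fairness loss is $1$-sub-Gaussian under any product marginal over the samples, which produces the constant $2$ inside the $\sqrt{\cdot}$ of the target bound.

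Next, I would decompose $\overline{gen}_{fairness}$ as a uniform average of per-subset gaps. Introducing an independent ghost copy $\overline{\rmV}$ of the feature-attribute sequence and, for each $u\in C^m_n$, the hybrid $\rmV^{u}$ obtained by replacing the $u$-coordinates of $\rmV$ by $\overline{\rmV}_u$, the i.i.d.\ assumption together with the index-symmetry of $\ell^{F}_{E}$ (it reads only group-wise sums and sizes) lets me rewrite
$$\overline{gen}_{fairness} = \frac{1}{|C^m_n|}\sum_{u\in C^m_n}\E\bigl[\ell^{F}_{E}(\rmW,\rmV^{u}) - \ell^{F}_{E}(\rmW,\rmV)\bigr].$$
This is the fairness analogue of the permutation decomposition of \citet{dongtowards}, splitting a single opaque generalization gap into an average of per-subset gaps each indexed by the marginal of $\rmV_u$ as the only ``fresh'' random component.

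Finally, for each $u$ I would apply Donsker--Varadhan to the inner expectation with the reference measure supplied by the swap construction, in which $(\rmW,\rmV_{u^c})$ is paired with an independent $\rmV_u\sim P_{\rmV_u}$ and the target law is $P_{\rmW,\rmV}$. Combined with $\rmV_u\perp\rmV_{u^c}$ coming from the i.i.d.\ structure and the $1$-sub-Gaussianity of $\ell^{F}_{E}$, this yields the per-subset bound $|\E[\ell^{F}_{E}(\rmW,\rmV^{u})-\ell^{F}_{E}(\rmW,\rmV)]|\leq\sqrt{2\,I(\rmW;\rmV_u)}$, and the triangle inequality over the outer average closes the proof. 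The main obstacle I anticipate is the second step: rigorously certifying that the symmetric averaging over $m$-subsets of the swapped expectations coincides exactly with the ghost-sample expectation without a residual term, which ultimately rests on the permutation invariance of $\ell^{F}_{E}$ and the i.i.d.\ assumption; a secondary point is that the $n_t+2$ normalization is exactly what keeps every per-subset quantity well-defined even for subsets imbalanced across the sensitive attribute, which is also why the statement is comfortable starting from $m\geq 2$.
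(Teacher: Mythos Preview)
Your decomposition in step~2 does not hold for $m<n$, and this is the genuine gap. By exchangeability every hybrid expectation $\E[\ell^{F}_{E}(\rmW,\rmV^{u})]$ is the same, so your right-hand side collapses to $\E[\ell^{F}_{E}(\rmW,\rmV^{u_0})]$ for a single fixed $u_0$. But $\rmV^{u_0}$ still carries the $n-m$ original training coordinates $\rmV_{u_0^c}$, which remain correlated with $\rmW$; hence $\E[\ell^{F}_{E}(\rmW,\rmV^{u_0})]\neq \E_{P_{\rmW}\otimes P_{\rmV}}[\ell^{F}_{E}(\rmW,\rmV)]$ in general. Permutation invariance and the i.i.d.\ assumption only tell you the $u$-terms are identically distributed, not that any one of them equals the fully decoupled population risk. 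There is no telescoping or cancellation available here because the subsets overlap and the loss is not additive over samples.

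Even granting step~2, step~3 would not produce $I(\rmW;\rmV_u)$. The reference law you describe, $P_{\rmW,\rmV_{u^c}}\otimes P_{\rmV_u}$, has KL divergence from $P_{\rmW,\rmV}$ equal to $I(\rmV_u;\rmW,\rmV_{u^c})=I(\rmV_u;\rmW\mid\rmV_{u^c})$ (using $\rmV_u\perp\rmV_{u^c}$), which is a conditional MI and not the quantity in the lemma.

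The paper avoids both issues by evaluating the fairness loss on size-$m$ subsets rather than on full-size hybrids. Concretely, it rewrites both the population and empirical sides in terms of $\ell^{F}_{E}(\cdot,\rmZ_u)$ with $|\rmZ_u|=m$, so that each per-$u$ gap compares $\E_{P_{\rmW,\rmZ_u}}[\ell^{F}_{E}(\rmW,\rmZ_u)]$ against $\E_{P_{\rmW}\otimes P_{\rmZ_u}}[\ell^{F}_{E}(\rmW,\rmZ_u)]$. Donsker--Varadhan then yields exactly $I(\rmW;\rmZ_u)=I(\rmW;\rmV_u)$, with no leftover $\rmV_{u^c}$ in the picture. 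The boundedness of $\ell^{F}_{E}$ supplies the sub-Gaussian constant, and permutation invariance of MI converts the sum over $P^m_n$ to one over $C^m_n$.
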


The proof is provided in \ref{loosebound2_proof}. In contrast to \citet{dongtowards}, where $m$ is predefined by the loss function, $m$ here is a flexible hyperparameter, similar to \citet{harutyunyan2021information}, that controls the number of samples used to decompose the loss in~\eqref{emp_risk_fairness} when deriving the bound.

Although Lemma \ref{loosebound2} provides an important first step toward bounding fairness generalization error, it does not explicitly leverage the structure of fairness losses, resulting in an overly general bound but consequently too loose. 
This limitation can be rooted in the proof of Lemma~\ref{loosebound2}, where the technique proposed by \citet{dongtowards} addresses the ``No Sample-Based Formulation'' challenge discussed in Section~\ref{key_challenges}. However, its primary limitation is that it still relies on the boundedness of $\ell^{F}_{P}(\rmW, P_{\rmS})$ to bound its log-moment generating function, encountered in Donsker-Varadhan’s variational representation, which yields the loose bound. Hence, while Lemma~\ref{loosebound2} offers a valuable starting point, it does not address all the challenges in Section~\ref{key_challenges}.


In the following, we propose an alternative technique that effectively addresses the remaining challenges and derives tight bounds for the log-moment generating function. The key result, presented in Lemma~\ref{variance_bound_lemma}, represents the first major contribution of this paper.

\begin{lemma} \label{variance_bound_lemma}
Let $ g(\rmV)=g( \rmV_1, \rmV_2, \ldots,  \rmV_m) $ be a real-valued square-integrable function of $m$ i.i.d random variables. Given fixed $v= (v_1,\ldots, v_m)$ and an index $i \in \{ 1, \ldots,m \}$, define $\Tilde{v}^{i}=( \Tilde{v}_1,\ldots,\Tilde{v}_m) $  such that $\Tilde{v}_j= v_j $ for all $ j \neq i$ except on $i$, i.e., $\Tilde{v}_i \neq v_i$. 
If $g(v)$ satisfies the following condition for a fixed $n^{v}_{0}$ (and as a consequence $n^{v}_{1} = m -n^{v
}_{0} $), 
\begin{equation} \label{variance_bound_lemma_Cond}
   \sup_{v\in \gV^m,\Tilde{v}^i_i \in \gV} |g(v)-  g(\Tilde{v}^{i}) |  \leq \beta,
   \quad 1 \leq i \leq m,
\end{equation}
where $\beta$ may depend on $n^{v}_{0}$. 
Then, we have 
\begin{equation}
    \Var( g(\rmV)) \leq  \frac{m}{4}  \E[ \beta^2].
\end{equation}
\end{lemma}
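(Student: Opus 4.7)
The plan is to combine the Efron--Stein inequality in its jackknife form with Popoviciu's inequality on variances.

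First, since $\rmV_1, \ldots, \rmV_m$ are independent, the Efron--Stein inequality yields
\begin{equation*}
\Var(g(\rmV)) \leq \sum_{i=1}^{m} \E\bigl[\Var\bigl(g(\rmV) \,\big|\, \rmV_{-i}\bigr)\bigr],
\end{equation*}
where $\rmV_{-i} \triangleq (\rmV_1, \ldots, \rmV_{i-1}, \rmV_{i+1}, \ldots, \rmV_m)$. The strategy is to control each inner conditional variance via the bounded-differences hypothesis~\eqref{variance_bound_lemma_Cond} and then sum over $i$. This route is preferable to the symmetric Efron--Stein form involving an i.i.d.\ copy of $\rmV_i$, since the latter would only deliver a constant of $1/2$ instead of the target $1/4$.

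Next, I would fix $i$ and examine $\Var(g(\rmV)\mid \rmV_{-i})$. Conditional on $\rmV_{-i}$, $g(\rmV)$ is a function of $\rmV_i$ alone. By hypothesis~\eqref{variance_bound_lemma_Cond}, for any two realizations $v_i, \tilde{v}_i$ of $\rmV_i$ we have $|g(\rmV_{-i}, v_i) - g(\rmV_{-i}, \tilde{v}_i)| \leq \beta$, so the conditional support of $g(\rmV)$ lies in an interval of length at most $\beta$. Popoviciu's inequality on variances then gives
\begin{equation*}
\Var\bigl(g(\rmV) \,\big|\, \rmV_{-i}\bigr) \leq \frac{\beta^2}{4}.
\end{equation*}
Substituting this bound back into the Efron--Stein inequality and summing,
\begin{equation*}
\Var(g(\rmV)) \leq \sum_{i=1}^{m} \E\!\left[\frac{\beta^2}{4}\right] = \frac{m}{4}\,\E[\beta^2],
\end{equation*}
which is the desired claim.

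The main obstacle I expect to need to handle carefully is the dependence of $\beta$ on $n^{v}_{0}$: varying $\rmV_i$ with $\rmV_{-i}$ held fixed can alter $n^{v}_{0}$ by at most one through the sensitive-attribute component $\rmT_i$, so strictly speaking the constant governing the range of $g(\rmV)$ when conditioning on $\rmV_{-i}$ is the worst case of $\beta$ over the (at most two) values of $n^{v}_{0}$ reachable from $\rmV_{-i}$. Taking this worst case preserves the conditional-range bound used in the Popoviciu step and is absorbed into the final expectation $\E[\beta^2]$ without altering the form of the inequality. Everything else reduces to standard manipulations.
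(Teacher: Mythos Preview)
Your proposal is correct and follows essentially the same route as the paper: Efron--Stein plus a ``range at most $\beta$ implies variance at most $\beta^2/4$'' step, where the paper constructs the midpoint $g_i=\tfrac{1}{2}(\sup_{x'_i}g+\inf_{x'_i}g)$ explicitly in the infimum form of Efron--Stein while you invoke Popoviciu's inequality in the jackknife form---these are the same argument. The paper handles the $n^{v}_0$-dependence of $\beta$ by inserting a law-of-total-expectation conditioning on $n^{\rmV}_0$ before bounding, whereas you flag and resolve the same subtlety at the end; both land at $\E[\beta^2]$.
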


The proof of Lemma~\ref{variance_bound_lemma}, provided in Appendix~\ref{variance_bound_lemma_proof}, leverages the Efron–Stein inequality \citep{boucheron_book} and the law of total expectation. Lemma~\ref{variance_bound_lemma} provides a bound for the variance of any smooth function defined over independent random variables. As will be shown in the proofs of all subsequent Theorems~\ref{MI_theorem}-\ref{CMI_the4}, this result offers a flexible approach that, when combined with Hoeffding’s lemma, serves as a powerful alternative to the traditional sub-gaussian/boundedness assumption. This bounded difference setup is particularly well-suited for analyzing generalization errors of loss functions beyond simple average, e.g., the fairness loss in \eqref{emp_risk_fairness}.

\begin{remark}
     The variational bounded difference condition introduced in \eqref{variance_bound_lemma_Cond} is similar to the condition in McDiarmid’s inequality~\citep{doob1940regularity}, as it captures the sensitivity of the function $g(v)$ to perturbations in its inputs. However, a notable distinction lies in the fact that in our case $\beta$ is a random variable as it depends on $n^\rmV_0$, as opposed to the fixed constants typically assumed in McDiarmid’s inequality. 
\end{remark}

\begin{remark}
The square-integrability assumption in Lemma~\ref{variance_bound_lemma} is required for the application of the Efron-Stein inequality. In our setting, this condition is naturally satisfied because all the random variables involved (i.e., $|f| \!\in\! [0,1]$) are bounded. Since any bounded random variable is square-integrable, this ensures that the assumption holds in our case.
\end{remark}

\section{Demographic Parity}\label{section_DPgenerror}
In this section, we present fairness generalization error bounds specifically tailored for the DP loss defined in  \eqref{emp_risk_fairness}.

\subsection{Mutual Information-based Bounds}

In order to leverage Lemma~\ref{variance_bound_lemma} in the context of Definition~\ref{gen_fair}, we derive a bound on the sensitivity of $\ell^{F}_E(w,v_{u})$ with respect to $v_{u}$. The main result is presented in Lemma \ref{bounded_difference_lemma}.

\begin{lemma} \label{bounded_difference_lemma}
Assume that $ |f| \in [0,1]$. let $g: \gV^{m} \rightarrow \sR $ be defined as $g(v)=g(v_1,\ldots,v_m) =  \ell^{F}_E(w,v)$, where $ \ell ^{F}_E(w,v)$ is defined \eqref{emp_risk_fairness}.  Then, for a fixed $w \in \gW$ and a fixed $n^v_{0}$ and $n^v_{1}$,  for any $1 \leq i \leq m$, we have
\begin{equation} 
   \sup_{v\in \gV^m,\Tilde{v}_i \in \gV} |g(v)-  g(\Tilde{v}^{i}) |  \leq \frac{1}{\sH(n^{v}_{0},n^v_{1})},
\end{equation}
where $\sH$ denotes a shifted harmonic mean operator, i.e., $\forall \{a_i \}_{i=1}^k, \, \sH(a_1,\cdots,a_k)= \frac{1}{\frac{1}{a_1+2} +\cdots + \frac{1}{a_k+2}} $.
\end{lemma}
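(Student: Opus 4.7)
The plan is to prove the bound by a direct case analysis on the sensitive attribute $t_i$ of the swapped coordinate, combined with the reverse triangle inequality to strip the outer absolute value in~\eqref{emp_risk_fairness}.

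First, I would observe that holding $n^v_0$ and $n^v_1$ fixed when swapping $v_i$ for $\tilde{v}_i$ forces $\tilde{t}_i = t_i$, since otherwise the group counts of $\tilde{v}^i$ would differ from those of $v$. Hence only the feature component $x_i$ is genuinely perturbed. Denoting the signed quantity inside the absolute value by $A(v) = \frac{1}{n^v_0+2}\sum_{T_j=0} f(w, x_j) - \frac{1}{n^v_1+2}\sum_{T_j=1} f(w, x_j)$, we have $g(v) = |A(v)|$.

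Next I would case split on $t_i$. When $t_i = 0$, only the first sum is affected, giving
\begin{equation*}
A(\tilde{v}^i) - A(v) = \frac{1}{n^v_0+2}\bigl(f(w, \tilde{x}_i) - f(w, x_i)\bigr).
\end{equation*}
Applying the reverse triangle inequality $\bigl||A(\tilde{v}^i)| - |A(v)|\bigr| \leq |A(\tilde{v}^i) - A(v)|$ together with the boundedness assumption $|f| \in [0,1]$ yields $|g(v) - g(\tilde{v}^i)| \leq \frac{1}{n^v_0+2}$. The $t_i = 1$ case is entirely symmetric and produces $\frac{1}{n^v_1+2}$.

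To obtain a single, attribute-agnostic bound, I would upper bound the case-wise maximum by the sum, $\max\!\left(\frac{1}{n^v_0+2}, \frac{1}{n^v_1+2}\right) \leq \frac{1}{n^v_0+2} + \frac{1}{n^v_1+2} = \frac{1}{\sH(n^v_0, n^v_1)}$, matching the stated result. This slightly looser but attribute-independent formulation is the natural one, since when Lemma~\ref{bounded_difference_lemma} is plugged into Lemma~\ref{variance_bound_lemma} one needs a single $\beta$ that does not branch on the group label of the perturbed sample. The argument is largely mechanical; the main conceptual subtlety is recognizing that the outer absolute value is handled cleanly by reverse triangle inequality rather than by a further case split on the sign of $A$, which would complicate the bookkeeping without improving the bound.
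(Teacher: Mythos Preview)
Your argument has a genuine gap in its first step. You claim that fixing $n^v_0$ and $n^v_1$ forces $\tilde t_i = t_i$, but this misreads the lemma. The phrase ``for a fixed $n^v_0$ and $n^v_1$'' constrains only the group counts of the \emph{original} vector $v$; the supremum is explicitly over $\tilde v_i \in \gV$ with no restriction on its sensitive attribute. Indeed, the downstream use in Lemma~\ref{variance_bound_lemma} (via Efron--Stein) conditions on $n^{\rmV}_0$ and then takes a sup/inf over an unrestricted replacement coordinate, so the bound must survive perturbations that flip the group membership of the $i$-th sample.

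Consequently you have handled only two of the four cases. The remaining cases $t_i = 0,\ \tilde t_i = 1$ and $t_i = 1,\ \tilde t_i = 0$ are the substantive ones: when the sensitive attribute flips, the normalization constants in \emph{both} group averages change (e.g., $n^v_0+2 \mapsto n^v_0+1$ and $n^v_1+2 \mapsto n^v_1+3$), so your single-summand reverse-triangle argument no longer suffices. The paper treats these cases explicitly, expanding $A(v)-A(\tilde v^i)$, regrouping the change-of-denominator terms together with the added/removed sample, and bounding each piece to reach $\tfrac{1}{n^v_0+2} + \tfrac{1}{n^v_1+3}$ (and the symmetric expression), which is then relaxed to $\tfrac{1}{n^v_0+2} + \tfrac{1}{n^v_1+2}$. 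These cross-attribute cases are precisely where the full sum $\tfrac{1}{\sH(n^v_0,n^v_1)}$ is actually needed; your two easy cases individually yield only one of the two terms.
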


The proof is provided in Appendix \ref{bounded_difference_lemma_proof}. Lemma~\ref{bounded_difference_lemma} establishes a direct link between the sensitivity of $\ell^{F}_E(w,v)$ to individual perturbations in $v$ and the group sizes $n^{v}_{0}$ and $n^{v}_{1}$ for any fixed $w$. With the key component established in Lemma \ref{bounded_difference_lemma}, we present the first information-theoretic bound for the fairness generalization error of DP in Definition \ref{gen_fair}. The main result is presented in Theorem~\ref{MI_theorem}, which provides a bound in terms of the MI between the output hypothesis $\rmW$ and a subset of the training samples $\rmV_u$.

\begin{theorem} \label{MI_theorem}
Assume that $ |f| \in [0,1]$, then for any hyper-parameter $m \in \{2,\cdots,n\}$, we have
\begin{align}
    &\overline{gen}_{fairness}  \\
    &\leq\frac{1}{|C^m_n|}  \sum_{u \in C^m_n}  \sqrt{\frac{m}{2} \E_{\rmV_{u}} \big[ \frac{1}{\sH(n^{\rmV_{u}}_{0},n^{\rmV_{u}}_{1})^2} \big] I (\rmW, \rmV_{u}) },  \nn
\end{align}
where   $C^m_n$ is the set of $m$-combinations. 
\end{theorem}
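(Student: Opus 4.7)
I would follow the same outer template that gives Lemma~\ref{loosebound2} but replace its crude boundedness-based log-MGF estimate with the tighter, variance-sensitive one supplied by Lemmas~\ref{variance_bound_lemma} and~\ref{bounded_difference_lemma}, using a conditioning step on the random group count $n^{\rmV_u}_0$ to handle the fact that the bounded-difference constant $\beta = 1/\sH(n^{\rmV_u}_0, n^{\rmV_u}_1)$ is itself a random variable.

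First, I would reuse the subset decomposition underlying Lemma~\ref{loosebound2} to express $\overline{gen}_{fairness}$ as an average over $m$-combinations $u \in C^m_n$ of per-subset gaps of the form $\E_{P_{\rmW, \rmV_u}}[\ell^{F}_{P}(\rmW, P_\rmV) - \ell^{F}_E(\rmW, \rmV_u)]$. For each $u$, applying the disintegrated Donsker--Varadhan variational inequality conditionally on $n^{\rmV_u}_0 = k$ gives, for every $\lambda > 0$,
\begin{align*}
\lambda \, \E[\text{gap} \mid n^{\rmV_u}_0 = k] \leq I_k(\rmW; \rmV_u) + \log \E_{P_{\overline{\rmW} \mid k} \otimes P_{\rmV_u \mid k}}[e^{\lambda \cdot \text{gap}}],
\end{align*}
so the entire task reduces to bounding this conditional log-MGF uniformly in the decoupled $\overline{\rmW}$.

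Second, I would bound the conditional log-MGF via Lemmas~\ref{bounded_difference_lemma} and~\ref{variance_bound_lemma}. Given $n^{\rmV_u}_0 = k$, Lemma~\ref{bounded_difference_lemma} certifies that the map $v \mapsto \ell^{F}_E(\overline{w}, v)$ has bounded differences with the \emph{deterministic} constant $\beta(k) = 1/\sH(k, m-k)$, so iterating Hoeffding's lemma along the martingale decomposition---the same ingredient underlying the Efron--Stein proof of Lemma~\ref{variance_bound_lemma}---produces a conditional sub-Gaussian log-MGF bound of order $\lambda^2 m \beta(k)^2$. Optimizing $\lambda$ yields the conditional bound $\sqrt{\frac{m \beta(k)^2}{2} I_k(\rmW; \rmV_u)}$, and averaging over $n^{\rmV_u}_0$ via Cauchy--Schwarz gives
\begin{align*}
\E[\text{gap}] \leq \sqrt{\frac{m}{2} \, \E_{\rmV_u}\!\left[\frac{1}{\sH(n^{\rmV_u}_0, n^{\rmV_u}_1)^2}\right] \, I(\rmW; \rmV_u \mid n^{\rmV_u}_0)}.
\end{align*}

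Third, since $n^{\rmV_u}_0$ is a deterministic function of $\rmV_u$, the chain rule for MI gives $I(\rmW; \rmV_u \mid n^{\rmV_u}_0) = I(\rmW; \rmV_u) - I(\rmW; n^{\rmV_u}_0) \leq I(\rmW; \rmV_u)$, upgrading the conditional MI to the claimed unconditional one, and averaging the per-subset bound over $u \in C^m_n$ recovers the theorem. The main obstacle is precisely the randomness of $\beta$ through its dependence on $n^{\rmV_u}_0$, which rules out a direct McDiarmid / sub-Gaussian concentration argument: it is exactly this issue that motivates the conditional-then-averaged design of Lemma~\ref{variance_bound_lemma}, and the disintegration on $n^{\rmV_u}_0$ paired with Cauchy--Schwarz (and the chain rule for MI) is what converts the per-realization sub-Gaussianity into the averaged factor $\E_{\rmV_u}[1/\sH(n^{\rmV_u}_0, n^{\rmV_u}_1)^2]$ that appears in the final bound.
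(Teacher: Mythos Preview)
Your approach differs structurally from the paper's: you disintegrate on $n^{\rmV_u}_0$ \emph{before} applying Donsker--Varadhan and then re-aggregate via Cauchy--Schwarz and the MI chain rule, whereas the paper stays unconditional throughout. It applies Donsker--Varadhan directly to $I(\rmW;\rmV_u)$ and, for fixed $w$, bounds the unconditional log-MGF of $\ell^{F}_E(w,\overline{\rmV_u})-\E[\ell^{F}_E(w,\overline{\rmV_u})]$ by invoking Lemma~\ref{variance_bound_lemma} (Efron--Stein on the i.i.d.\ $\rmV_i$'s, combined with Lemma~\ref{bounded_difference_lemma}) followed by Hoeffding's lemma; no Cauchy--Schwarz or chain-rule detour is needed, and the averaged factor $\E_{\rmV_u}[1/\sH^2]$ emerges directly from the law-of-total-expectation step inside Lemma~\ref{variance_bound_lemma}.

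There is a genuine gap in your conditional route. Once you fix $n^{\rmV_u}_0=k$, the coordinates $\rmV_1,\ldots,\rmV_m$ are no longer independent---they are merely exchangeable, coupled by the constraint $\sum_i\mathds{1}\{\rmT_i=0\}=k$. The step you call ``iterating Hoeffding's lemma along the martingale decomposition'' is the McDiarmid/Azuma argument on the Doob filtration, and it requires independent coordinates (or at least martingale increments with deterministic bounds); both fail under this conditioning, so the conditional sub-Gaussian log-MGF bound of order $\lambda^2 m\,\beta(k)^2$ is not justified as written. (Note also that Lemma~\ref{variance_bound_lemma} is a \emph{variance} bound via Efron--Stein, not an MGF/martingale bound, so it is not ``the same ingredient'' as what you describe.) The paper's device is precisely to avoid this trap: the conditioning on $n^{\rmV}_0$ occurs only \emph{inside} each per-coordinate expectation in the Efron--Stein sum (via the law of total expectation), while Efron--Stein itself is applied to the original i.i.d.\ sequence. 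This is what allows the random bounded-difference constant $\beta(n^{\rmV}_0)$ to be averaged into $\E[\beta^2]$ without ever breaking the product structure needed for concentration.
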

The proof, detailed in Appendix~\ref{MI_theorem_proof}, leverages Donsker-Varadhan's variational representation of KL divergence along with both Lemmas \ref{variance_bound_lemma} and \ref{bounded_difference_lemma}.
Theorem \ref{MI_theorem} sheds light on the behavior of fairness-generalization performance, which implies that the less dependent the output hypothesis $\rmW$ is on the input samples $\rmV=(\rmX,\rmT)$, the more effectively the learning algorithm generalizes. 

\begin{remark}
The MI terms encountered in conventional MI-generalization error bounds, e.g., \citep{bu2020tightening,wang2023tighter,harutyunyan2021information}, typically involve dependence on the label $\rmY$, i.e., $I(\rmW, \rmZ)$. However, an intriguing aspect of Theorem \ref{MI_theorem} is that the bound is label-independent, with the MI terms involving explicitly only $\rmV=(\rmX, \rmT)$. This observation underscores the distinct nature of the DP fairness generalization error.
\end{remark}

\begin{remark}
The term $ \frac{1}{\sH(n^{\rmV_{u}}_{0}, n^{\rmZ{u}}_{1})}= \frac{1}{n^{\rmZ{u}}_{0}+2} + \frac{1}{n^{\rmZ{u}}_{1}+2} $ quantifies the impact of group imbalances among different sensitive attributes, encapsulating how the relative sizes of the sensitive groups influence the fairness generalization error. Notably, this term is minimized when $n^{\rmZ{u}}_{0}=n^{\rmZ{u}}_{1}$ for a fixed sample size. To the best of our knowledge, Theorem \ref{MI_theorem} provides the first fairness-generalization bound explicitly linking this error to group imbalance, offering new insights into this critical factor.
\end{remark}

\begin{remark} \label{m_hyperparameter_remark}
Unlike \citet{dongtowards}, where the value of $m$ is determined by the specific loss function, Theorem~\ref{CMI_the4} introduces $m > 1$ as a hyperparameter of the bound, independent of the learning algorithm and the form of the loss function. This parameter governs both the size of the subset $u$ and the number of terms considered in the bound’s summation. Notably, setting $m = n$  (or $m=n-1$) simplifies the bound to one term (or $m$ terms) that scales as $1 / \sqrt{n}$ and is easy to estimate. In contrast, choosing $m < n-1$ reduces $ I (\rmW, \rmV_{u})$ by incorporating more terms, albeit at the cost of increased estimation complexity.
\end{remark}

\subsection{Conditional Mutual Information-based Bounds} \label{section_cmisettings}

One drawback of the proposed bound in Theorem~\ref{MI_theorem} is that it can be vacuous and challenging to compute in practice, due to its dependence on the potentially high-dimensional model weights $\rmW$. To address this issue, the conditional mutual information (CMI) framework, introduced by \citet{steinke2020reasoning}, has been demonstrated in recent studies \citep{hellstrom2022new, wang2023tighter,laakom2024class} to yield practical generalization error bounds, even when $\rmW$ are high-dimensional and continuous.

In this section, we extend our fairness-generalization analysis using CMI with the super-sample framework. In particular, we assume that there are $n$ pairs of super-samples $\rmZ_{[2n]}= (\rmZ^{0,1}_1, \cdots, \rmZ^{0,1}_n) \in \gZ^{2n} $ i.i.d generated  from $P_{\rmZ}$. The training data $\rmS = (\mZ^{\rmR_1}_1, \rmZ^{\rmR_2}_2, \cdots,\rmZ^{\rmR_n}_n) $ are selected from $\rmZ_{[2n]}$, where $\rmR=(\rmR_1, \cdots, \rmR_n) \in \{0,1\}^n$ is the selection vector composed of $n$ independent uniform random variables.  Intuitively, $\rmR_i$ selects sample ${\rmZ^{\rmR_i}_i}$ from the super-sample ${\rmZ^{0,1}_i}$ to be used in training, and the remaining ${\rmZ^{\overline{\rmR_i}}_i}$ is for the test. Let $\overline{\rmS}= (\mZ^{\overline{\rmR_1}}_1, \rmZ^{\overline{\rmR_2}}_2, \cdots,\rmZ^{\overline{\rmR_n}}_n) $.
Therefore, analogous to Definition \ref{gen_fair}, we have a similar definition of fairness generalization error in the super-sample setting. 

\begin{definition} \label{CMI_gen_fair}
The fairness generalization error in the super-sample framework is:
\begin{equation}
\overline{gen}_{fairness}  \triangleq   \E_{P_{\rmW,\rmZ_{[2n]},\rmR}} [    \ell^{F}_E(\rmW,\rmS) -  \ell^{F}_E(\rmW,\overline{\rmS})   ].
\end{equation}
\end{definition}

\textbf{Relaxing the sub-Gaussianity.} Similar to the MI setting, we aim to provide an alternative to the sub-Gaussian condition for the fairness generalization error in the CMI framework. Lemma \ref{subgauss_alternative} establishes an inequality that bounds the log-moment generating function of $\ell^{F}_E(w, \rmS_u) - \ell^{F}_E(w, \overline{\rmS}_u)$ in terms of the structure of the data. 

\begin{lemma} \label{subgauss_alternative}
$\forall \lambda \in \sR$, and $ |f| \in [0,1]$, we have
    \begin{align}    \log\Big( &\E_{\overline{\rmR}_u|\rmZ_{[2n]}= z_{[2n]},\overline{\rmW}=w}  \big[ e^{ \lambda (\ell^{F}_E(w,\rmS_u) -  \ell^{F}_E(w,\overline{\rmS}_u) )   } \big] \Big)\nn\\ & \quad \leq \frac{\lambda^2 m
    }{8}  \E \Big[ \frac{1}{\sH\big(n^{S_{u}}_{0},n^{S_{u}}_{1},n^{\overline{S}_{u}}_{0},n^{\overline{S}_{u}}_{1}\big)^2}  \Big]. 
\end{align}
\end{lemma}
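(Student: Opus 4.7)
The plan is to condition away all randomness except for the selection vector, establish a bounded-differences property for the loss-difference function, and then upgrade the variance bound of Lemma~\ref{variance_bound_lemma} to an MGF bound via a coordinate-wise Hoeffding argument.

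First, I would condition throughout on $\rmZ_{[2n]} = z_{[2n]}$ and $\overline{\rmW} = w$, so that the only remaining randomness comes from the selection vector $\overline{\rmR}_u \in \{0,1\}^m$, whose components are i.i.d.\ uniform Bernoullis. Writing $h(r_u) \triangleq \ell^{F}_E(w, \rmS_u) - \ell^{F}_E(w, \overline{\rmS}_u)$ with $\rmS_{u_i} = z^{r_{u_i}}_{u_i}$ and $\overline{\rmS}_{u_i} = z^{1-r_{u_i}}_{u_i}$, the task reduces to upper bounding $\log \E_{\overline{\rmR}_u}[e^{\lambda h(\overline{\rmR}_u)}]$. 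By the super-sample construction, flipping every coordinate exchanges $\rmS_u$ and $\overline{\rmS}_u$, so $h$ is antisymmetric under this global flip; combined with the symmetry of $\overline{\rmR}_u$, this yields $\E_{\overline{\rmR}_u}[h] = 0$, eliminating the first-order term that would otherwise appear in the MGF expansion.

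Second, I would prove a bounded-differences property for $h$. Flipping only the $i$-th coordinate of $r_u$ swaps exactly one sample between $\rmS_u$ and $\overline{\rmS}_u$. Applying Lemma~\ref{bounded_difference_lemma} separately to $\ell^F_E(w, \rmS_u)$ and $\ell^F_E(w, \overline{\rmS}_u)$ and adding the two contributions gives
\begin{equation*}
|h(r_u) - h(\tilde r_u)| \leq \frac{1}{\sH(n^{\rmS_u}_0, n^{\rmS_u}_1)} + \frac{1}{\sH(n^{\overline{\rmS}_u}_0, n^{\overline{\rmS}_u}_1)} = \frac{1}{\sH(n^{\rmS_u}_0, n^{\rmS_u}_1, n^{\overline{\rmS}_u}_0, n^{\overline{\rmS}_u}_1)} \triangleq \beta(r_u),
\end{equation*}
where $\tilde r_u$ differs from $r_u$ only at coordinate $i$ and the last equality uses the definition of the shifted harmonic mean with four arguments. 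This is precisely the variational bounded-differences condition \eqref{variance_bound_lemma_Cond} required by Lemma~\ref{variance_bound_lemma}.

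Third, I would convert the bounded-differences property into the claimed MGF bound via a coordinate-wise Hoeffding argument: conditioning on all coordinates of $\overline{\rmR}_u$ except the $i$-th, the two-point conditional distribution of $h$ has range at most $\beta$, so Hoeffding's lemma produces an inner MGF factor of the form $\exp(\lambda^2 \beta^2 / 8)$. Iterating over $i = 1, \ldots, m$ and combining with the zero-mean property should aggregate to the announced $\lambda^2 m \E[\beta^2] / 8$.

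The main obstacle will be reconciling the \emph{random} bounded-difference $\beta(\overline{\rmR}_u)$---which itself depends on the selection pattern through the four group counts---with a classical McDiarmid or Azuma--Hoeffding argument that normally assumes deterministic bounded-difference constants. The plan for resolving this is to exploit the fact that flipping one bit changes each group count by at most one, so $\beta$ is only mildly coordinate-sensitive, and to couple this near-invariance with the Efron--Stein-style averaging already encapsulated in Lemma~\ref{variance_bound_lemma}. This should allow the final aggregation to replace an $\ell^\infty$-type quantity $\sum_i (\sup \beta_i)^2$ by the desired expectation $m\,\E[\beta^2]$, yielding exactly the right-hand side of the lemma.
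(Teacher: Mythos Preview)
Your approach matches the paper's: condition on $z_{[2n]}$ and $w$, establish the bounded-differences property for $h(r_u)=\ell^F_E(w,\rmS_u)-\ell^F_E(w,\overline{\rmS}_u)$ by applying Lemma~\ref{bounded_difference_lemma} to each term, observe zero mean by the Rademacher symmetry, and pass to the MGF via a Hoeffding-type step. The paper's proof in Appendix~\ref{subgauss_alternative_proof} actually routes through Lemma~\ref{variance_bound_lemma} to bound the variance and then simply writes ``using Hoeffding's lemma'' to obtain the MGF bound, without addressing the random-$\beta$ issue you correctly flag; your coordinate-wise plan is therefore at least as detailed as what the paper provides at this step.
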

The detailed proof is available in Appendix \ref{subgauss_alternative_proof} and is based on Lemmas \ref{variance_bound_lemma} and \ref{bounded_difference_lemma} along with Hoeffding’s lemma. Lemma \ref{subgauss_alternative} is a fundamental element in all the subsequent proofs, as it provides an alternative approach to tightly bound the log-moment generating function typically encountered in Donsker-Varadhan’s variational representation.

Theorem \ref{CMI_the1} presents a bound for the fairness generalization error defined in Definition \ref{CMI_gen_fair} using the disintegrated CMI between $\rmW$ and the
selection variable $\rmR_u$ conditioned on super-sample $z_{[2n]}$. 
\begin{theorem}[CMI bound] \label{CMI_the1}
Assume that $ |f| \in [0,1]$, then for any $m \in \{2,\cdots,n\} $, we have
\begin{align}
 & \overline{gen}_{fairness}   \leq \frac{1}{|C^m_n|}  \E_{\rmZ_{[2n]}}  \Big[ \sum_{u \in C^m_n}   \\
 & \sqrt{\frac{m }{2} \E_{\rmR_u}\Big[ \frac{1}{\sH\big(n^{S_{u}}_{0},n^{S_{u}}_{1},n^{\overline{S}_{u}}_{0},n^{\overline{S}_{u}}_{1}\big)^2} \Big] I_{z_{[2n]}} (\rmW, \rmR_u) } \Big], \nn
\end{align}
where  $C^m_n$ is the set of $m$-combinations. 
\end{theorem}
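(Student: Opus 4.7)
The plan is to lift the MI argument of Theorem \ref{MI_theorem} into the super-sample CMI framework of Definition \ref{CMI_gen_fair}, with the tight log-moment-generating function bound from Lemma \ref{subgauss_alternative} replacing the usual sub-Gaussian/boundedness step. Throughout, I would work conditionally on a fixed realization $\rmZ_{[2n]}=z_{[2n]}$, so that the only remaining randomness lies in $\rmW$ and $\rmR$ and the KL-based CMI naturally appears in its disintegrated form $I_{z_{[2n]}}(\rmW;\rmR_u)$.

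The first step is to reduce the full generalization gap $\ell^{F}_E(\rmW,\rmS)-\ell^{F}_E(\rmW,\overline{\rmS})$ to an average over $m$-combinations $u\in C^m_n$ of subset gaps $\ell^{F}_E(\rmW,\rmS_u)-\ell^{F}_E(\rmW,\overline{\rmS}_u)$, mirroring the exchangeability reduction that underlies Lemma \ref{loosebound2} and the methodology of \citet{dongtowards}. For each fixed $u$, I then apply Donsker--Varadhan's variational representation of KL divergence to the pair of measures $P_{\rmW,\rmR_u\mid z_{[2n]}}$ and $P_{\rmW\mid z_{[2n]}}\otimes P_{\rmR_u\mid z_{[2n]}}$ with the test function $\lambda\bigl(\ell^{F}_E(\rmW,\rmS_u)-\ell^{F}_E(\rmW,\overline{\rmS}_u)\bigr)$, obtaining, for every $\lambda>0$,
\[
\lambda\,\E\!\left[\ell^{F}_E(\rmW,\rmS_u)-\ell^{F}_E(\rmW,\overline{\rmS}_u)\,\big|\,z_{[2n]}\right]\le I_{z_{[2n]}}(\rmW;\rmR_u)+\log\E_{\overline{\rmW},\overline{\rmR}_u\mid z_{[2n]}}\!\bigl[e^{\lambda(\ell^{F}_E(\overline{\rmW},\rmS_u)-\ell^{F}_E(\overline{\rmW},\overline{\rmS}_u))}\bigr],
\]
with $\overline{\rmW},\overline{\rmR}_u$ an independent copy of $(\rmW,\rmR_u)$ under the conditional product law.

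The second step controls the log-MGF: conditioning first on $\overline{\rmW}=w$, Lemma \ref{subgauss_alternative} provides the upper bound $\tfrac{\lambda^2 m}{8}\,\E_{\rmR_u\mid z_{[2n]}}\!\bigl[1/\sH(n^{S_u}_{0},n^{S_u}_{1},n^{\overline{S}_u}_{0},n^{\overline{S}_u}_{1})^2\bigr]$, which does not depend on $w$, so integrating back over $\overline{\rmW}$ preserves it. Optimizing the resulting quadratic-in-$\lambda$ inequality over $\lambda>0$ in the standard way yields the per-$(u,z_{[2n]})$ sub-root bound
\[
\E\!\left[\ell^{F}_E(\rmW,\rmS_u)-\ell^{F}_E(\rmW,\overline{\rmS}_u)\,\big|\,z_{[2n]}\right]\le\sqrt{\tfrac{m}{2}\,\E_{\rmR_u\mid z_{[2n]}}\!\bigl[1/\sH(\cdot)^2\bigr]\,I_{z_{[2n]}}(\rmW;\rmR_u)}.
\]
Summing this over $u\in C^m_n$, dividing by $|C^m_n|$, combining with the first-step reduction, and finally taking the outer expectation $\E_{\rmZ_{[2n]}}$ (which commutes linearly with the sum over $u$) delivers the claimed inequality.

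The main obstacle is the first step: the fairness loss is inherently group-dependent and does not decompose as a per-sample average, so the reduction to subset gaps requires the exchangeability/permutation argument of \citet{dongtowards} rather than the familiar individual-sample trick of \citet{harutyunyan2021information}. Once that reduction is in place, the remainder is a disciplined combination of Donsker--Varadhan with the tight log-MGF bound of Lemma \ref{subgauss_alternative}, which is exactly what the Efron--Stein machinery developed in Lemmas \ref{variance_bound_lemma} and \ref{bounded_difference_lemma} was built to supply.
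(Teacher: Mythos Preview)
Your proposal is correct and follows essentially the same route as the paper's proof: the permutation/exchangeability reduction of \citet{dongtowards} to subset gaps, Donsker--Varadhan applied conditionally on $z_{[2n]}$ between $P_{\rmW,\rmR_u\mid z_{[2n]}}$ and its product marginal, the log-MGF controlled via Lemma~\ref{subgauss_alternative}, and then optimization in $\lambda$ (the paper phrases this last step as a nonpositive-discriminant argument for the quadratic in $\lambda\in\sR$, which also yields the two-sided bound with absolute value, but the content is identical). The only cosmetic difference is that the paper first works over $m$-permutations $P^m_n$ and then passes to $C^m_n$ by permutation invariance of mutual information, whereas you go to combinations directly.
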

The full proof is provided in Appendix~\ref{CMI_the1_proof}. The bound in Theorem~\ref{CMI_the1} has an explicit dependency on the weights $\rmW$. This dependency highlights that fairness overfitting is influenced by the extent to which the random selection process reveals information about the model's weights.

One approach to tightening the bound in Theorem~\ref{CMI_the1} is to leverage the model’s predictions rather than its weights, as the $f$-CMI bound proposed by~\citet{harutyunyan2021information}. In this context, instead of directly using $I_{\rmZ_{[2n]}} (\rmW; \rmR_u)$, we consider the predictions $\rmF_{u} = (f(\rmW,\rmX^{0}_u),f(\rmW,\rmX^{1}_u))$. 
\begin{theorem}[$f$-CMI bound] \label{CMI_the2}
Assume that $ |f| \in [0,1]$, then for any $m \in \{2,\cdots,n\} $, we have
\begin{align}
&\overline{gen}_{fairness}  \leq \frac{1}{|C^m_n|} \E_{\rmZ_{[2n]}} \Big[ \sum_{u \in C^m_n} \\
& \sqrt{\frac{m }{2} \E_{\rmR_u}\Big[ \frac{1}{\sH\big(n^{S_{u}}_{0},n^{S_{u}}_{1},n^{\overline{S}_{u}}_{0},n^{\overline{S}_{u}}_{1}\big)^2} \Big] I_{z_{[2n]}} (\rmF_{u};\rmR_u) } \Big], \nn
\end{align}
where  $C^m_n$ is the set of $m$-combinations. 
\end{theorem}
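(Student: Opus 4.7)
The plan is to follow the structure of the proof of Theorem~\ref{CMI_the1} but substitute $\rmF_u$ for $\rmW$ inside the conditional mutual information via a data-processing observation. The key structural fact is that, conditional on the super-sample $\rmZ_{[2n]}=z_{[2n]}$, the difference $\ell^{F}_E(\rmW,\rmS_u)-\ell^{F}_E(\rmW,\overline{\rmS}_u)$ is computed only from the $2m$ prediction values in $\rmF_u=(f(\rmW,\rmX^0_u),f(\rmW,\rmX^1_u))$ and from the selection bits $\rmR_u$ (which decide, for each index in $u$, which of the two super-samples enters $\rmS_u$ versus $\overline{\rmS}_u$). Hence there is a deterministic map $\phi_{z_{[2n]}}$ with $\ell^{F}_E(\rmW,\rmS_u)-\ell^{F}_E(\rmW,\overline{\rmS}_u)=\phi_{z_{[2n]}}(\rmF_u,\rmR_u)$.

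For each fixed $u$ and $z_{[2n]}$, I would apply the Donsker--Varadhan variational representation to the disintegrated CMI $I_{z_{[2n]}}(\rmF_u;\rmR_u)$ with test function $\lambda\,\phi_{z_{[2n]}}$, obtaining for every $\lambda\in\sR$
\begin{equation*}
\lambda\,\E[\phi_{z_{[2n]}}(\rmF_u,\rmR_u)\mid z_{[2n]}]\leq I_{z_{[2n]}}(\rmF_u;\rmR_u)+\log\E\bigl[e^{\lambda\phi_{z_{[2n]}}(\overline{\rmF}_u,\overline{\rmR}_u)}\mid z_{[2n]}\bigr],
\end{equation*}
where $\overline{\rmF}_u$ and $\overline{\rmR}_u$ are drawn independently from their respective conditional marginals. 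To handle the log-MGF on the right, I combine Lemma~\ref{subgauss_alternative} with the tower property: conditional on $\overline{\rmW}=w$ one has $\overline{\rmF}_u=f_u(w)$ (since $\overline{\rmR}_u$ is independent of $\overline{\rmW}$), and Lemma~\ref{subgauss_alternative} yields the bound $\frac{\lambda^2 m}{8}\,\E_{\overline{\rmR}_u}\bigl[1/\sH(n^{S_u}_0,n^{S_u}_1,n^{\overline{S}_u}_0,n^{\overline{S}_u}_1)^2\bigr]$, which is \emph{independent of $w$}. Taking the outer expectation over $\overline{\rmW}\mid z_{[2n]}$ therefore preserves the same bound on the log-MGF.

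Substituting and optimizing the resulting inequality $\lambda\,\E[\phi_{z_{[2n]}}\mid z_{[2n]}]\leq I_{z_{[2n]}}(\rmF_u;\rmR_u)+\frac{\lambda^2 m}{8}\,\E_{\rmR_u}[1/\sH^2]$ over $\lambda>0$ yields the single-$u$, single-$z_{[2n]}$ bound
\begin{equation*}
\E[\phi_{z_{[2n]}}(\rmF_u,\rmR_u)\mid z_{[2n]}]\leq\sqrt{\frac{m}{2}\,\E_{\rmR_u}\bigl[1/\sH(n^{S_u}_0,n^{S_u}_1,n^{\overline{S}_u}_0,n^{\overline{S}_u}_1)^2\bigr]\,I_{z_{[2n]}}(\rmF_u;\rmR_u)}.
\end{equation*}
Averaging over the $m$-combinations $u\in C^m_n$ via the same symmetric decomposition of $\overline{gen}_{fairness}$ used in the proof of Theorem~\ref{CMI_the1}, and then taking expectation over $\rmZ_{[2n]}$, delivers exactly the claimed inequality. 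I expect the main technical obstacle to lie in the second step: making precise the replacement of $\rmW$ by $\rmF_u$, i.e., transferring the conditional-on-$w$ MGF bound of Lemma~\ref{subgauss_alternative} to the ``product-marginals'' conditioning that appears in the DV representation of $I_{z_{[2n]}}(\rmF_u;\rmR_u)$. This is resolved by the observation that the bound in Lemma~\ref{subgauss_alternative} is uniform in $w$, so the tower-property step goes through cleanly.
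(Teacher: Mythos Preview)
Your proposal is correct and follows essentially the same approach as the paper's proof: decompose $\overline{gen}_{fairness}$ over $m$-subsets, apply Donsker--Varadhan to $I_{z_{[2n]}}(\rmF_u;\rmR_u)$ with test function $\lambda\,\phi_{z_{[2n]}}$, bound the log-MGF via Lemma~\ref{subgauss_alternative}, and conclude by optimizing in $\lambda$ (the paper phrases this as the nonpositive-discriminant argument, which also handles both signs, whereas your ``$\lambda>0$'' literally yields only one direction---a cosmetic point). If anything, you are more explicit than the paper about the key transition: the paper writes $\log\E_{\overline{\rmF}_u}\bigl[\E_{\overline{\rmR}_u\mid z_{[2n]},\overline{\rmF}_u}[\cdot]\bigr]$ and then immediately invokes Lemma~\ref{subgauss_alternative}, whereas you spell out that one may introduce $\overline{\rmW}\sim P_{\rmW\mid z_{[2n]}}$ independent of $\overline{\rmR}_u$, so that $\overline{\rmF}_u=f_u(\overline{\rmW})$ has the correct marginal, and then apply Lemma~\ref{subgauss_alternative} conditionally on $w$ using its uniformity in $w$.
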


To achieve even tighter bounds, inspired by \citet{hellstrom2022new}, we can incorporate the the fairness loss pairs $\rmL_u=  (\ell^{F}_E(\rmW,\rmV^{0}_u), \ell^{F}_E(\rmW,\rmV^{1}_u)) $. This leads to the following main result established in Theorem~\ref{CMI_the3}.

\begin{theorem}[e-CMI bound] \label{CMI_the3}
Assume that $ |f| \in [0,1]$, then for any $m \in \{2,\cdots,n\} $, we have
\begin{align}
& \overline{gen}_{fairness}  \leq \frac{1}{|C^m_n|} \E_{\rmZ_{[2n]}} \Big[ \sum_{u \in C^m_n}  \\
& \sqrt{\frac{m}{2} \E_{\rmR_u}\Big[ \frac{1}{\sH\big(n^{S_{u}}_{0},n^{S_{u}}_{1},n^{\overline{S}_{u}}_{0},n^{\overline{S}_{u}}_{1}\big)^2} \Big] I_{z_{[2n]}} (\rmL_{u};\rmR_u) } \Big], \nn
\end{align}
where  $C^m_n$ is the set of $m$-combinations. 
\end{theorem}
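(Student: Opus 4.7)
The plan is to follow the same three-step template used in the proofs of Theorems~\ref{CMI_the1} and~\ref{CMI_the2}: first decompose the super-sample fairness generalization error over $m$-subsets $u \in C^m_n$; next apply Donsker-Varadhan's variational representation of KL divergence at the appropriate level of granularity; and finally control the resulting log-moment generating function via Lemma~\ref{subgauss_alternative}. The novelty in Theorem~\ref{CMI_the3} is that the variational step is now carried out at the level of the fairness loss pair $\rmL_u$ rather than at the level of $\rmW$ (as in Theorem~\ref{CMI_the1}) or $\rmF_u$ (as in Theorem~\ref{CMI_the2}).

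First, I decompose $\overline{gen}_{fairness}$ exactly as in the proof of Theorem~\ref{CMI_the1}, writing it as a $u$-average of terms of the form $\E[\ell^F_E(\rmW, \rmS_u) - \ell^F_E(\rmW, \overline{\rmS}_u)]$ taken under the joint law of $(\rmW, \rmR_u, \rmZ_{[2n]})$. Conditioning on $\rmZ_{[2n]} = z_{[2n]}$ brings the disintegrated CMI $I_{z_{[2n]}}(\rmL_u; \rmR_u)$ into view as the relevant information-theoretic quantity for the $u$-th subset.

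Next, for each $u$, I apply Donsker-Varadhan's inequality to the KL divergence $D(P_{\rmL_u, \rmR_u \mid z_{[2n]}} \,\|\, P_{\rmL_u \mid z_{[2n]}} \otimes P_{\rmR_u \mid z_{[2n]}})$ with test function $\phi_\lambda(\rmL_u, \rmR_u) = \E_{\rmW \mid \rmL_u, \rmR_u, z_{[2n]}}[\lambda(\ell^F_E(\rmW, \rmS_u) - \ell^F_E(\rmW, \overline{\rmS}_u))]$. The tower property guarantees that the expectation of $\phi_\lambda$ under the true joint equals $\lambda$ times the target loss difference, so the variational inequality yields a one-sided bound of $\lambda$ times the loss difference by $I_{z_{[2n]}}(\rmL_u; \rmR_u)$ plus a log-moment generating function evaluated under the product law. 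Applying Jensen's inequality through the inner conditional expectation defining $\phi_\lambda$ upper bounds this log-MGF by the log-MGF of $\lambda(\ell^F_E(\overline{\rmW}, \rmS_u) - \ell^F_E(\overline{\rmW}, \overline{\rmS}_u))$ under the corresponding product measure on $(\overline{\rmW}, \overline{\rmR}_u)$, which Lemma~\ref{subgauss_alternative} bounds by $(\lambda^2 m/8)\,\E[1/\sH(n^{S_u}_0, n^{S_u}_1, n^{\overline S_u}_0, n^{\overline S_u}_1)^2]$. Optimizing over $\lambda$ gives the per-subset square-root bound, and averaging over $u \in C^m_n$ and $\rmZ_{[2n]}$ produces the stated inequality.

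The main obstacle will be the Jensen reduction of the log-MGF from a $(\rmL_u, \rmR_u)$-dependence back to a $(\overline{\rmW}, \overline{\rmR}_u)$-dependence, which is the step in which the argument genuinely differs from the $f$-CMI proof of Theorem~\ref{CMI_the2}. Because $\rmL_u$ is strictly coarser than $\rmF_u$ conditional on $z_{[2n]}$, the product measure on $(\rmL_u, \rmR_u)$ does not factor cleanly into a product measure on $(\rmW, \rmR_u)$; making this step rigorous requires carefully tracking how the conditional kernel $P_{\rmW \mid \rmL_u, \rmR_u, z_{[2n]}}$ interacts with the product of marginals, invoking the Markov chain $\rmR_u \to \rmW \to \rmL_u$ conditional on $z_{[2n]}$, so that Lemma~\ref{subgauss_alternative} may be applied at the end to bound the resulting log-MGF.
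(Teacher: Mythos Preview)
Your three-step template matches the paper's proof, but you add machinery that the paper does not need. In the paper the loss pair is taken (in the proof) to be $\rmL_u=\bigl(\ell^{F}_E(\rmW,\overline{\rmS}_u),\,\ell^{F}_E(\rmW,\rmS_u)\bigr)$, so the quantity $\ell^{F}_E(\rmW,\overline{\rmS}_u)-\ell^{F}_E(\rmW,\rmS_u)$ is \emph{already} a deterministic function of $\rmL_u$ alone. Donsker--Varadhan is therefore applied directly with the test function $\lambda\bigl(\rmL_u^{(1)}-\rmL_u^{(2)}\bigr)$: there is no conditional expectation over $\rmW$, no Jensen step, and no Markov-chain bookkeeping. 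The log-MGF under the product measure $P_{\rmL_u\mid z_{[2n]}}\otimes P_{\rmR_u}$ is then re-expressed at the level of the underlying variables $(\overline{\rmW},\overline{\rmR}_u)$---exactly the same transition used verbatim in the proofs of Theorems~\ref{CMI_the1} and~\ref{CMI_the2}---and Lemma~\ref{subgauss_alternative} is applied directly.

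With this definition of $\rmL_u$ your proposed route is not incorrect: the inner conditional expectation $\E_{\rmW\mid\rmL_u,\rmR_u,z_{[2n]}}[\cdot]$ collapses to the identity (the integrand is $\rmL_u$-measurable) and your Jensen step becomes an equality. But this renders the ``main obstacle'' you flag---reconciling the product law on $(\rmL_u,\rmR_u)$ with one on $(\rmW,\rmR_u)$ via a Markov chain---unnecessary, since the paper simply pushes the $\rmL_u$-marginal back to $(\rmW,\rmR_u)$ in the same line as the earlier CMI proofs. Your extra layer obscures the structure that makes the e-CMI argument a one-line modification of the $f$-CMI argument.
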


Given $\rmZ_{[2n]}$, the sequence $\rmR_u \to \rmW \to f(\rmW, \rmX_u^{0,1}) \to \rmL_{u}$ forms a Markov chain. Therefore, by the data processing inequality, $f$-CMI bound in Theorem~\ref{CMI_the2} is tighter than CMI bound in Theorem~\ref{CMI_the1}, and e-CMI bound in Theorem~\ref{CMI_the3} is tighter than $f$-CMI  in Theorem~\ref{CMI_the2}.

However, the main limitation of Theorem~\ref{CMI_the3} remains its computational cost. Specifically, since $\rmR_u$ is multidimensional (of size $m$) and $\rmL_u$ is both continuous and multidimensional, estimating the bound becomes challenging in practice. To address this issue, we leverage the loss-difference technique proposed in~\citep{dongtowards}, which reduces the dimensionality of both terms to one.

To this end, we define $ \Phi_u = \{ R_{u_1} \oplus R_{u_i} \}_{i=2}^m \in \{0,1\}^{m-1} $, where $ \oplus $ denotes the XOR operation. Given a binary value $ b$, we define $ b \otimes \Phi_u = (b, \{ \Phi_{u_i} \oplus b \}_{i=1}^{m-1}) \in \{0,1\}^m $. To simplify the notation, we denote $ 0 \otimes \Phi_u $ and $ 1 \otimes \Phi_u $ as $ \Phi_u^- $ and $ \Phi_u^+ $, respectively. $ L^{\Phi_u}_u  = (L^{\Phi^-_u}, L^{\Phi^+_u}) $ denotes a pair of losses, while $ \Delta^{\Phi_u} L_u  = L^{\Phi^+_u}_u - L^{\Phi^-_u}_u $ denotes their difference. The main result is presented in Theorem~\ref{CMI_the4}.

\begin{theorem} [$\Delta \rmL$CMI bound] 
 \label{CMI_the4}
Assume that $ |f| \in [0,1]$, then for any $m \in \{2,\cdots,n\} $, we have
\begin{align}
& \overline{gen}_{fairness} \leq \frac{1}{|C^m_n|} \E_{\rmZ_{[2n]}} \Big[ \sum_{u \in C^m_n} \\
&  \sqrt{\frac{m }{2} \E_{\rmR_u}\Big[ \frac{1}{\sH\big(n^{S_{u}}_{0},n^{S_{u}}_{1},n^{\overline{S}_{u}}_{0},n^{\overline{S}_{u}}_{1}\big)^2} \Big] I_{z_{[2n]}} (\Delta \rmL^{\Phi_u}_{u};\rmR_{u_1}) } \Big], \nn
\end{align}
where  $C^m_n$ is the set of $m$-combinations. 
\end{theorem}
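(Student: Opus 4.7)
The plan is to adapt the proof of Theorem~\ref{CMI_the3} by exploiting the XOR bijection $R_u \leftrightarrow (R_{u_1},\Phi_u)$ to replace the high-dimensional pair $(\rmL_u, R_u)$ by the scalar pair $(\Delta L^{\Phi_u}_u, R_{u_1})$ in the Donsker--Varadhan step. As in the earlier CMI theorems, I would first invoke the subset decomposition
\[
\overline{gen}_{fairness} = \frac{1}{|C^m_n|}\sum_{u \in C^m_n}\E\bigl[\ell^F_E(\rmW,\rmS_u) - \ell^F_E(\rmW,\overline{\rmS}_u)\bigr],
\]
and verify the key identity
\[
\ell^F_E(\rmW,\rmS_u) - \ell^F_E(\rmW,\overline{\rmS}_u) = (2R_{u_1}-1)\,\Delta L^{\Phi_u}_u,
\]
which follows because $R_{u_1}=0$ forces $R_u = \Phi_u^-$ and $\overline R_u = \Phi_u^+$ (so the difference equals $-\Delta L^{\Phi_u}_u$), while the roles swap for $R_{u_1}=1$.

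I would then apply Donsker--Varadhan conditionally on $\rmZ_{[2n]}=z_{[2n]}$ using the joint $P = P_{R_{u_1},\Delta L^{\Phi_u}_u|z_{[2n]}}$ and the product reference $Q = P_{R_{u_1}|z_{[2n]}} \otimes P_{\Delta L^{\Phi_u}_u|z_{[2n]}}$, obtaining for any $\lambda > 0$
\[
\lambda\,\E_P\bigl[(2R_{u_1}-1)\Delta L^{\Phi_u}_u\bigr] \le I_{z_{[2n]}}(\Delta L^{\Phi_u}_u; R_{u_1}) + \log\E_Q\bigl[e^{\lambda(2\overline R_{u_1}-1)\overline{\Delta L}^{\Phi_u}_u}\bigr].
\]
Because $\overline R_{u_1}\sim\mathrm{Bern}(1/2)$ is independent of $\overline{\Delta L}^{\Phi_u}_u$ under $Q$, integrating over $\overline R_{u_1}$ first collapses the MGF into $\E[\cosh(\lambda\,\overline{\Delta L}^{\Phi_u}_u)]$. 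I would then bound this by $\exp\bigl(\tfrac{\lambda^2 m}{8}\E_{R_u}[1/\sH^2]\bigr)$ via Lemma~\ref{subgauss_alternative}: rewriting that lemma's LHS with the identity above, and using the independence of $\overline R_{u_1}$ and $\overline\Phi_u$ under the uniform law on $\overline R_u$, its content is exactly $\E_{\overline\Phi_u}[\cosh(\lambda \Delta L^{\overline\Phi_u}_u(w))]\le \exp(\tfrac{\lambda^2 m}{8}\E[1/\sH^2])$ for every fixed $w$, and this uniformity in $w$ is what permits lifting the bound to the $P$-marginal of $\Delta L^{\Phi_u}_u$.

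Optimizing over $\lambda$ then produces the per-subset bound $\sqrt{\tfrac{m}{2}\E_{R_u}[1/\sH^2]\,I_{z_{[2n]}}(\Delta L^{\Phi_u}_u; R_{u_1})}$; summing over $u$, normalizing, and pulling $\E_{\rmZ_{[2n]}}$ inside the square root by Jensen's inequality yields the stated result. The step I expect to be the main obstacle is the MGF transfer itself: Lemma~\ref{subgauss_alternative} controls $\E_{\overline\Phi_u}[\cosh(\lambda \Delta L^{\overline\Phi_u}_u(w))]$ with $w$ \emph{fixed} and $\overline\Phi_u$ uniform, whereas the $P$-marginal of $\Delta L^{\Phi_u}_u$ integrates against a joint law of $(W,\Phi_u)$ that is generally not a product. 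Bridging this gap cleanly will likely require an intermediate reference of the form $P_{R_{u_1}|z_{[2n]}}\otimes P_{W|z_{[2n]}}\otimes P_{\overline\Phi_u}$, combined with the data-processing inequality, so that the uniform-in-$w$ control from Lemma~\ref{subgauss_alternative} can be applied conditionally on $W$ and then compressed to the tighter CMI $I_{z_{[2n]}}(\Delta L^{\Phi_u}_u; R_{u_1})$ appearing in the final statement.
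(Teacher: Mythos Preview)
Your plan mirrors the paper's proof closely: the same subset decomposition, the same identity $\ell^{F}_E(\rmW,\rmS_u)-\ell^{F}_E(\rmW,\overline{\rmS}_u)=\pm\Delta\rmL^{\Phi_u}_u$ via the XOR bijection $R_u\leftrightarrow(R_{u_1},\Phi_u)$, Donsker--Varadhan applied to the pair $(\Delta\rmL^{\Phi_u}_u,R_{u_1})$ conditionally on $z_{[2n]}$, Lemma~\ref{subgauss_alternative} for the log-MGF, and then the discriminant argument over $\lambda$. That is precisely the route the paper takes.

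Where you diverge is in how much you worry about the MGF step. The paper does not invoke any auxiliary reference measure or DPI here: after writing the reference expectation $\E_{\overline{\Delta\rmL},\overline{R_{u_1}}}[\,e^{\lambda(-1)^{\overline{R_{u_1}}}\overline{\Delta\rmL}}\,]$, it passes in one line to $\E_{\overline{\rmW},\overline{\rmR}_u}[\,e^{\lambda(\ell^F_E(\overline{\rmW},\rmS_u)-\ell^F_E(\overline{\rmW},\overline{\rmS}_u))}\,]$ with $\overline{\rmW}\perp\overline{\rmR}_u$, conditions on $\overline{\rmW}=w$, and applies Lemma~\ref{subgauss_alternative} directly. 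Your observation that the marginal of $\Delta\rmL^{\Phi_u}_u$ pushes forward the \emph{joint} law of $(\rmW,\Phi_u)$, not a product, is sharper than the paper's exposition, which treats this rewrite as immediate.

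That said, your proposed repair does not land. Running Donsker--Varadhan against the intermediate reference $P_{R_{u_1}}\otimes P_{\rmW|z_{[2n]}}\otimes P_{\overline{\Phi}_u}$ puts $I_{z_{[2n]}}(\rmW;\rmR_u)$ on the KL side (because $R_{u_1}\perp\Phi_u$ and $(R_{u_1},\Phi_u)\leftrightarrow R_u$), and Lemma~\ref{subgauss_alternative} then controls the MGF; but this recovers Theorem~\ref{CMI_the1}, not Theorem~\ref{CMI_the4}. The data-processing inequality gives $I_{z_{[2n]}}(\Delta\rmL^{\Phi_u}_u;R_{u_1})\le I_{z_{[2n]}}(\rmW;\rmR_u)$, which is the wrong direction to ``compress'' back to the smaller CMI inside an upper bound. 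So the DPI route you sketch cannot deliver the stated $I_{z_{[2n]}}(\Delta\rmL^{\Phi_u}_u;R_{u_1})$ term.
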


Given $\rmZ_{[2n]}$, the sequence $\rmR \to (\rmL_u, \Phi_u ) \to \Delta \rmL^{\Phi_u}_{u}$ forms a Markov chain.
Using the data processing inequality and the independence of $\Phi_u$ and $\rmR_{u_1}$, we obtain
\begin{align}
    I_{z_{[2n]}} (\Delta \rmL^{\Phi_u}_{u};\rmR_{u_1})  &\leq I_{z_{[2n]}} (\rmL_{u},\Phi_u;\rmR_{u_1}) \nn \\
    &=  I_{z_{[2n]}} (\rmL_{u};\rmR_{u_1}|\Phi_u) \nonumber \\
    &= I_{z_{[2n]}} (\rmL_{u};\rmR_{u}) - I_{z_{[2n]}} (\rmL_{u};\Phi_u) \nonumber \\  
    &\leq I_{z_{[2n]}} (\rmL_{u};\rmR_{u})
\end{align}

This establishes that Theorem~\ref{CMI_the4} is tighter than Theorems~\ref{CMI_the2} and \ref{CMI_the3}. Intuitively, the difference between two loss values, $\Delta \rmL^{\Phi_u}_{u}$, conveys significantly less information about the selection process $\rmR_{u_1}$ compared to the pair $\rmL_{u}$.


\section{Equalized Odds}
\label{sec_separation}

In this section, we extend our general bounding framework to address fairness under a \emph{separation-based} notion, i.e., the Equalized Odds (EO) ~\citep{hardt2016equality}.
\subsection{Problem Formulation for EO}

Under a \emph{separation-based} fairness notion, the requirement is $\widehat{\rmY} \perp \rmT \lvert \rmY$,
indicating that the predicted label \(\widehat{\rmY}\) is conditionally independent of the sensitive attribute \(\rmT\), given the true label \(\rmY\). 
Extending our previous analysis of DP to this case introduces finer-grained fairness requirements by accounting for dependencies within each label class.
 In this section, we assume a binary label case, \(\rmY \in \{0, 1\}\), which is typical in EO formulations. In addition, our hypothesis \(\rmW\) produces prediction outputs \(f(\rmW, \rmX) \in [0, 1]\).

\paragraph{Comparison with DP.} In the EO setting, we split samples not only by \(\rmT\in\{0,1\}\) but also by \(\rmY\in\{0,1\}\). For  $t \in \{0,1\}$, $y \in \{0,1\}$, let $
n_{t,y} = \sum_{i=1}^n \mathds{1}\{\rmT_i=t,\,\rmY_i=y\}$. Then the fairness empirical risk for EO can be written as
\begin{equation}
\label{emp_risk_DEO}
\ell^{F_S}_E(w,S)
= \ell^{F_S}_E(w,S\! \mid\! \rmY\!=\!0) 
+ \ell^{F_S}_E(w,S\! \mid \! \rmY\!=\!1),
\end{equation}
where
\begin{align}
\ell^{F_S}_E(w,S\! \mid \! Y=y) 
&
= \Bigl|\frac{1}{n_{0,y}+2} \sum_{\substack{T_i=0,\,Y_i=y}} f(w,x_i) \\
& \qquad - \frac{1}{n_{1,y}+2} \sum_{\substack{T_i=1,\,Y_i=y}} f(w,x_i)\Bigr|. \nn
\end{align}
The EO differs from DP in its conditional requirement: DP evaluates dependence between predictions and \(\rmT\) unconditionally, while EO measures fairness by conditioning each label class \(\rmY\). 
This distinction refines the fairness loss to capture the influence of \(\rmT\) on predictions 
within subpopulations defined by \(\rmY\) (a dependency reflected in the function 
\(\ell^{F_S}_E(W,S)\), which now accounts for the \emph{joint distribution} 
over both \(\rmT\) and \(\rmY\)). 
Despite these conceptual differences, the bounding methodology remains similar, 
utilizing Lemma~\ref{variance_bound_lemma} in the context of Definition~\ref{gen_fair} with EO loss. We maintain the same notation and subset enumeration as in the DP setting to ensure continuity in the analysis.



%

\begin{figure*}[!t]
\centering
\includegraphics[width=0.33\linewidth]{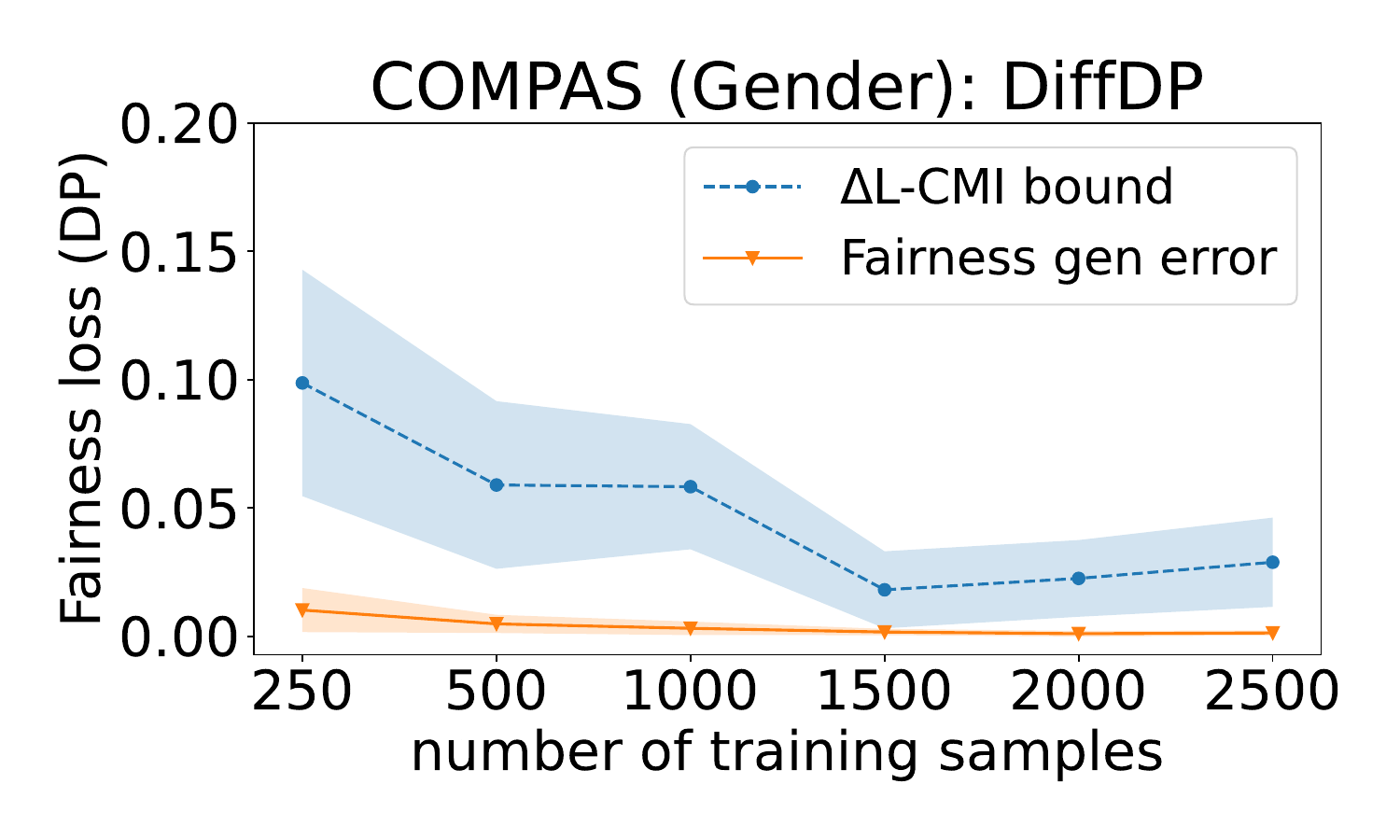}
\includegraphics[width=0.33\linewidth]{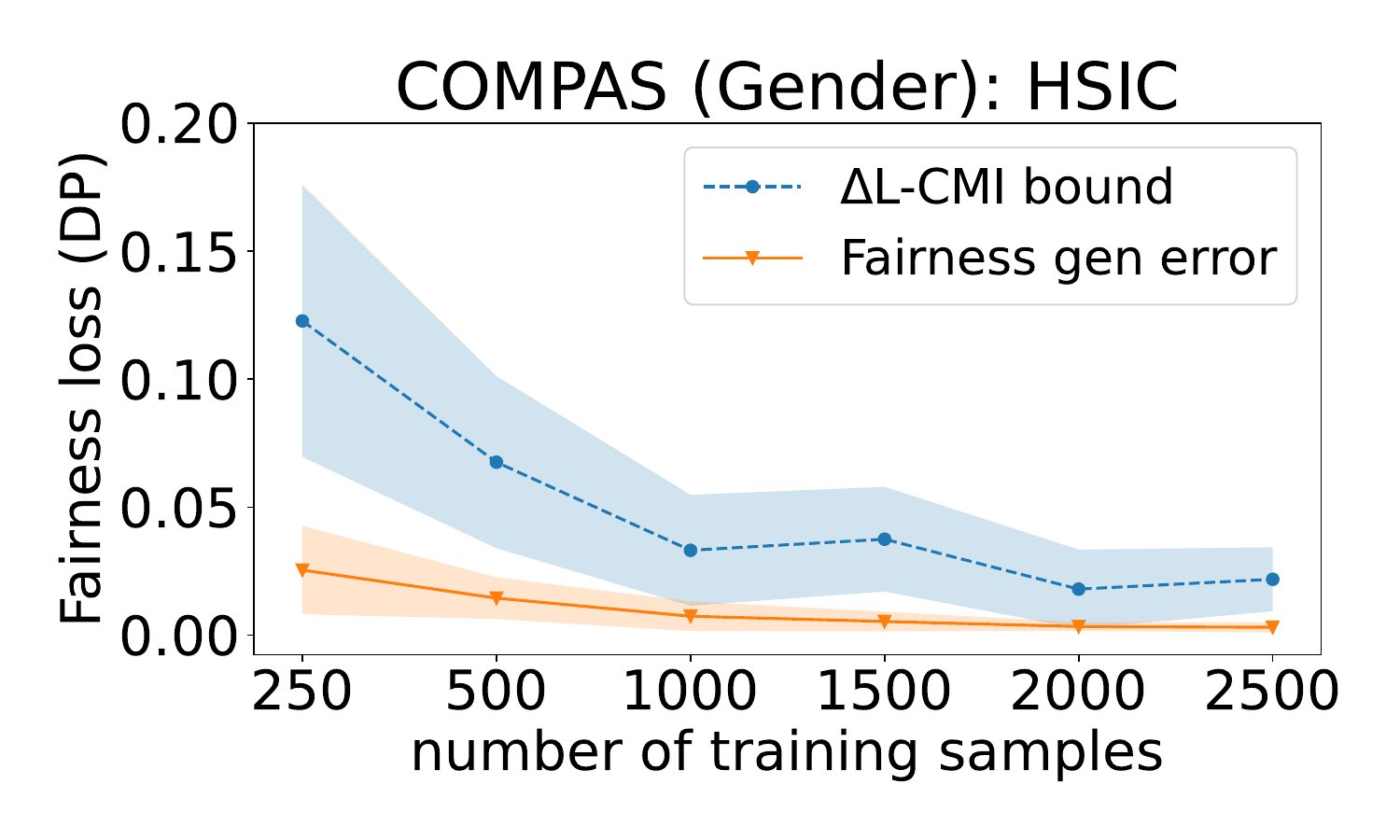}
\includegraphics[width=0.33\linewidth]{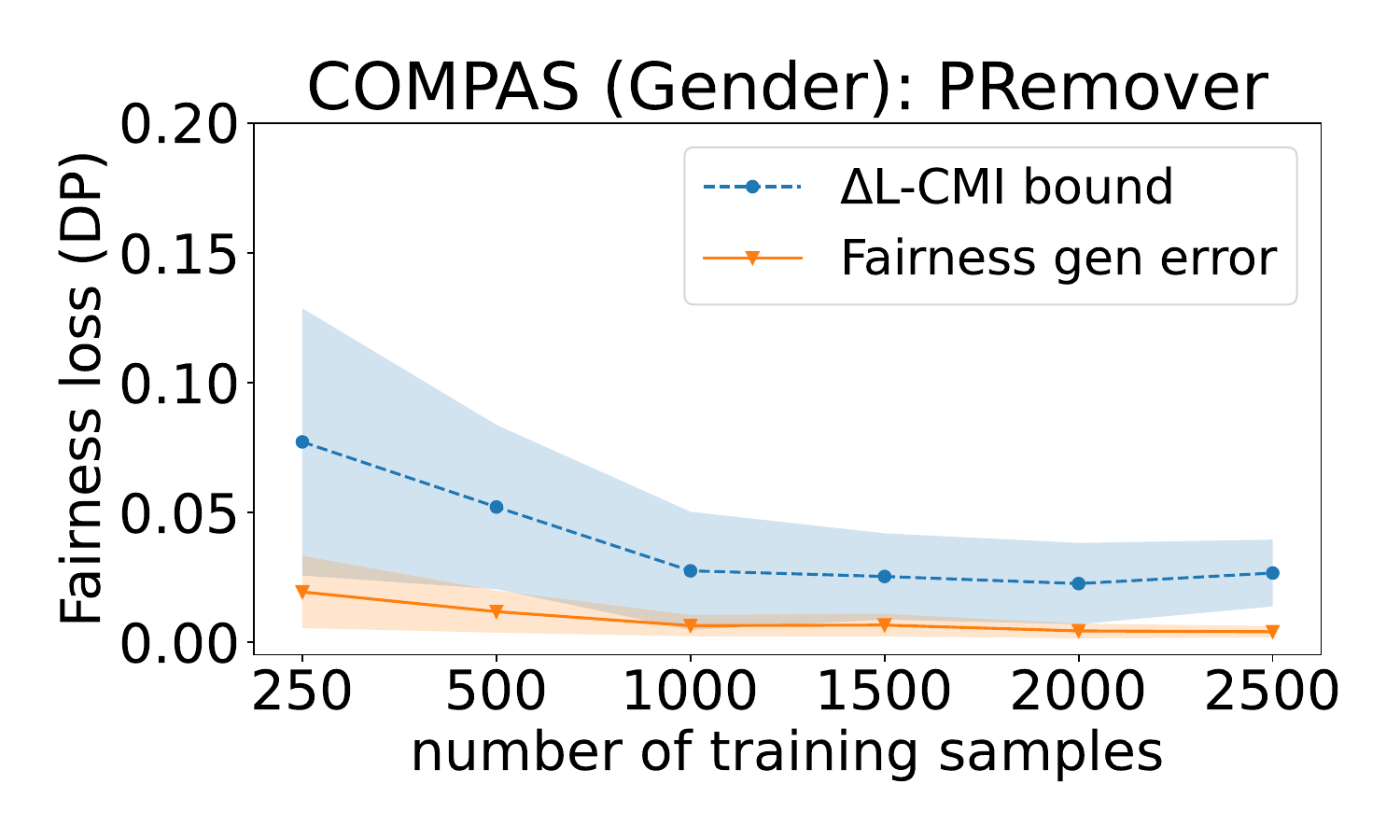}
\includegraphics[width=0.33\linewidth]{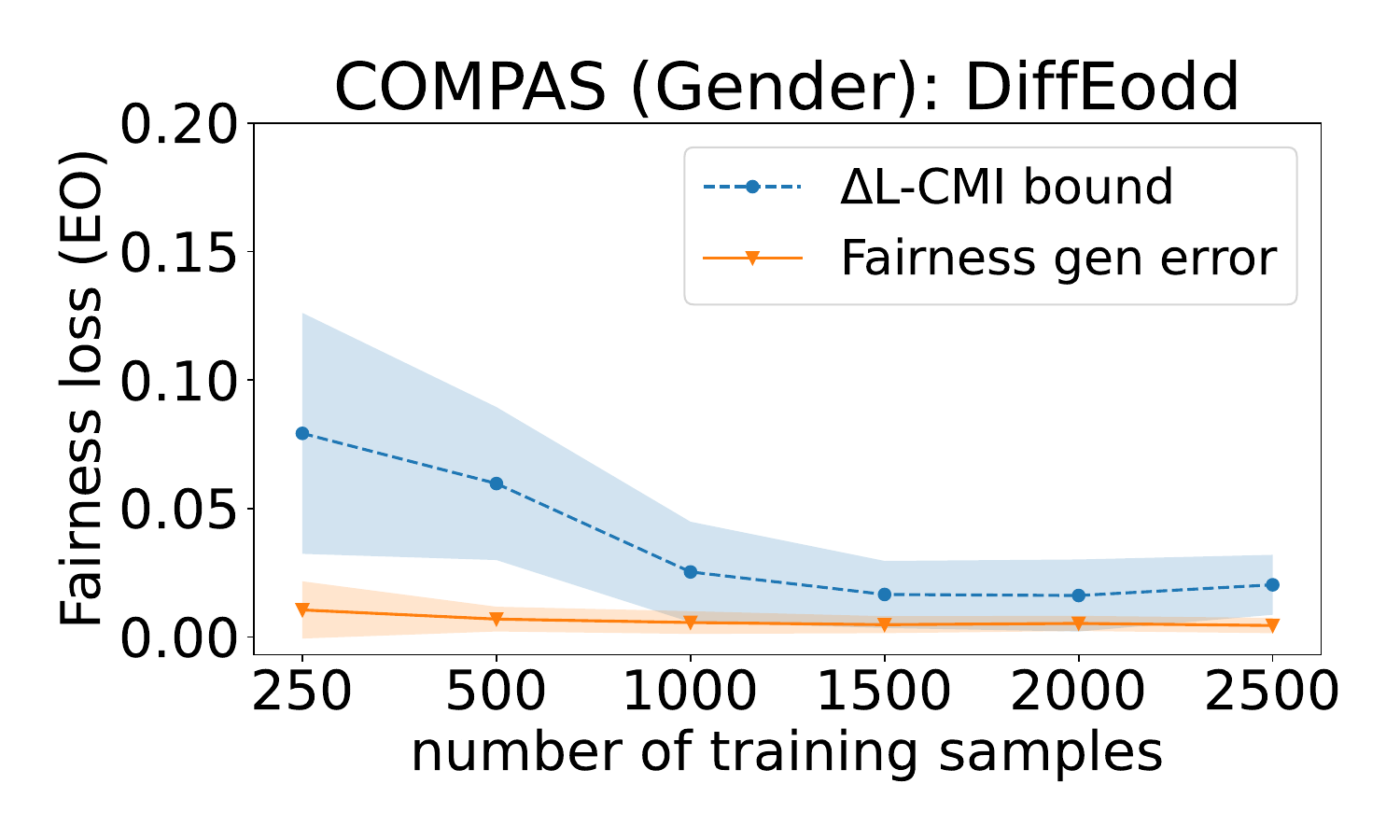}
\includegraphics[width=0.33\linewidth]{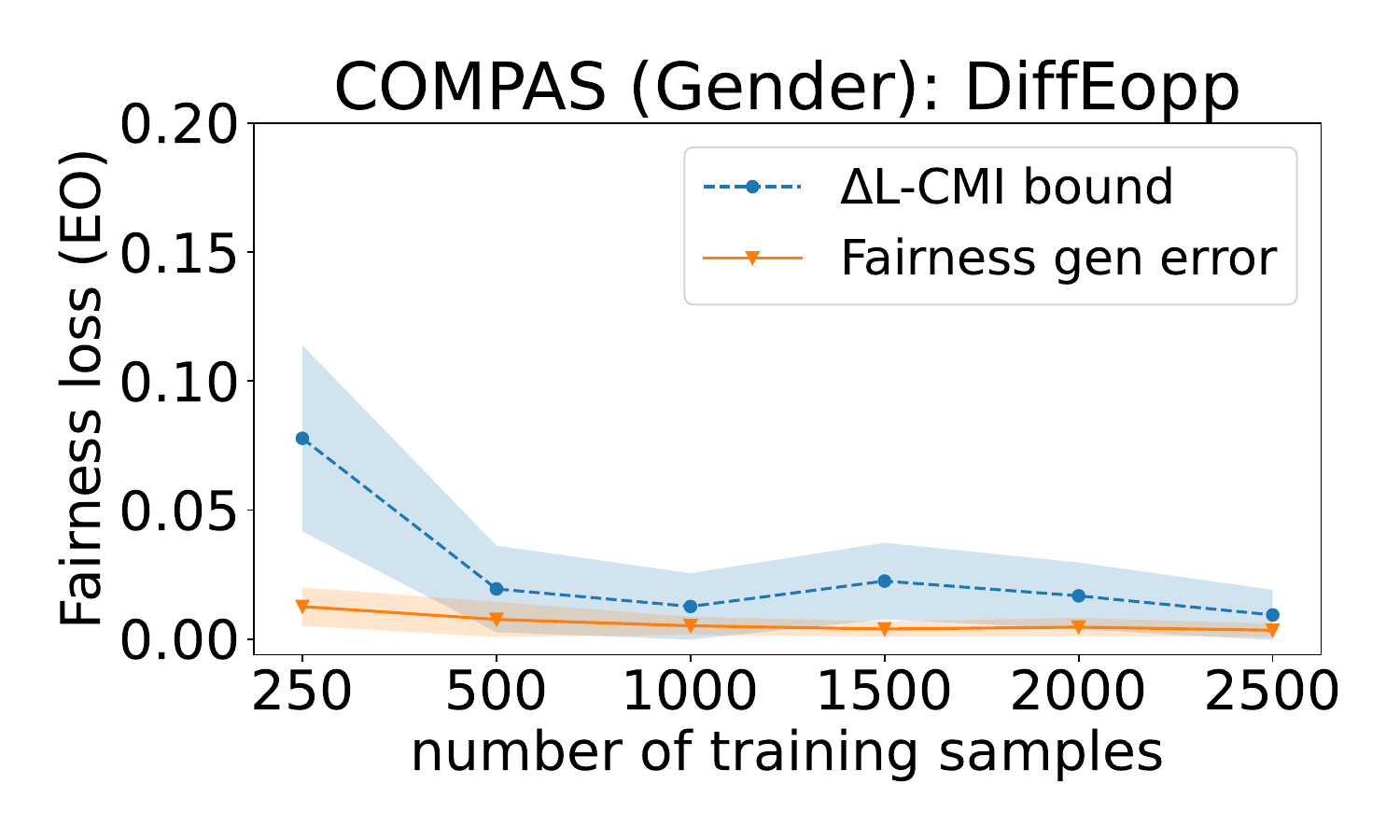}
\includegraphics[width=0.33\linewidth]{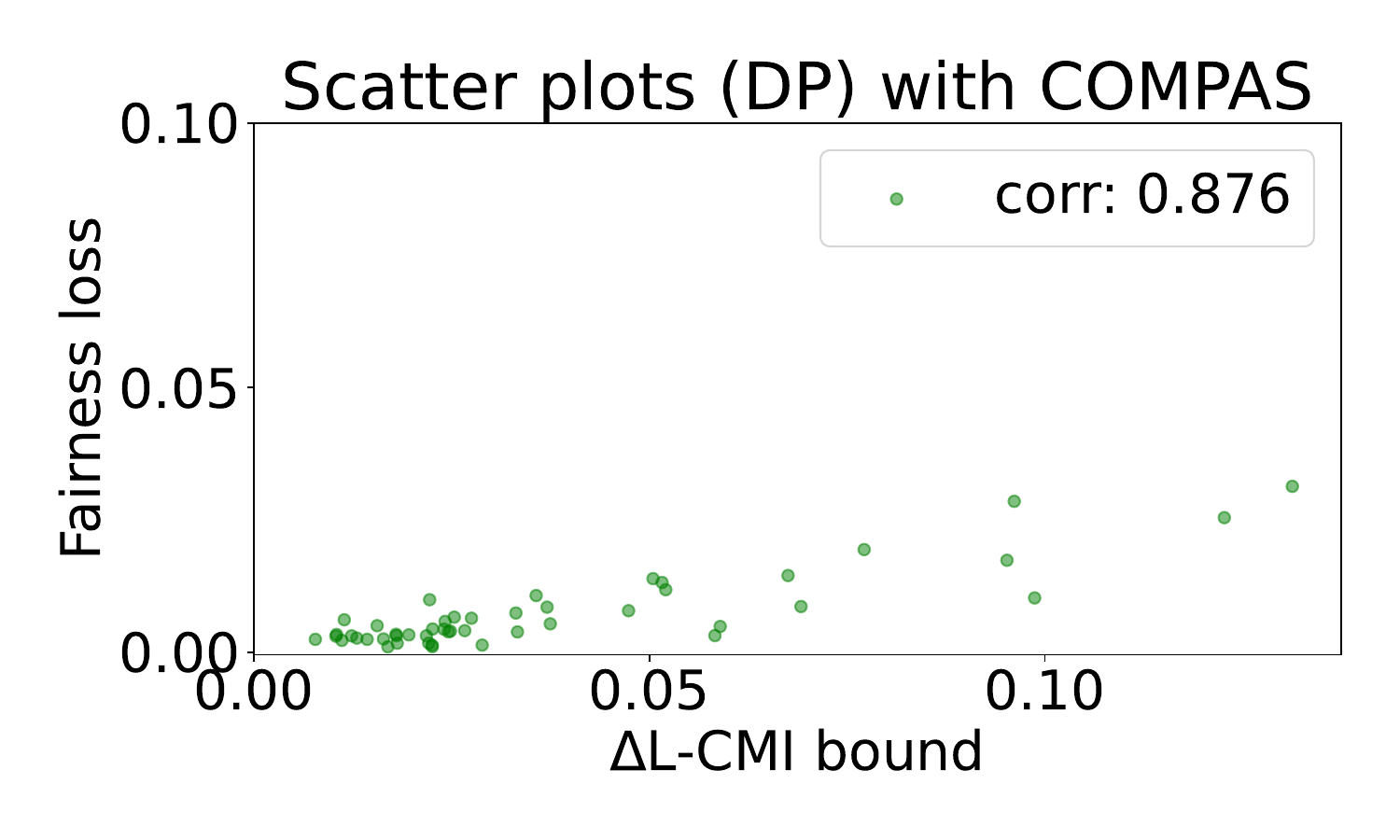}
\vspace{-1em}
\caption{Evaluation of fairness generalization bounds on the COMPAS dataset (gender as sensitive attribute) as a function of training set size $n$. Top: DP methods and corresponding bound from Theorem~\ref{CMI_the4}. Bottom-left\&middle: EO methods and corresponding bound from Theorem~\ref{sep_CMI_the4_S}. Bottom-right: Scatter plot showing the correlation between our DP bound and observed fairness generalization error. The results confirm the tightness and reliability of our bounds across different methods. 
}

\label{CMIbound_em_results}
\end{figure*}

\subsection{Generalization Error Bounds for EO}

We will show that \(\ell^{F_S}_E(\rmW,\rmS)\) satisfies a similar bounded-differences condition as in the DP setting, but with subgroup sizes partitioned by both \(\rmT\) and \(\rmY\).

\begin{lemma}
\label{bounded_difference_separation}
Fix \(w \in \mathcal{W}\) and a fixed 
\(n^{Z_{u}}_{0, 0}\), \(n^{Z_{u}}_{1, 0}\), \(n^{Z_{u}}_{0, 1}\), \(n^{Z_{u}}_{1, 1}\). Let $g: \gZ^{m} \rightarrow \sR $ be defined as follows: $g(z_1,\ldots,z_m) =  \ell^{F_S}_E(w,z_{u})$, where $ \ell ^{F_S}_E(w,z_{u})$ is defined \eqref{emp_risk_DEO}. Then, for any $1 \leq i \leq n$, we have
\begin{equation}
\sup_{z_u, \tilde{z}_u^i} 
\bigl|g(z_u) - g(\tilde{z}_u^i)\bigr|
\le \frac{2}{\min_{(t,y)}\bigl(n^{{Z}_u}_{t,y}+2\bigr)},
\end{equation}
where \(|f(w, x_i)|\) is bounded by 1 in the binary case, and \(\min_{(t,y)}(n_{t,y}+2)\) is the smallest subgroup size across all \(t\) and \(y\).  
\end{lemma}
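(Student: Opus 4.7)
The plan is to adapt the bounded-differences reasoning from Lemma~\ref{bounded_difference_lemma} to the finer partition induced by conditioning on $\rmY$, by exploiting the additive decomposition
\begin{equation*}
g(z_u) \;=\; g_0(z_u) + g_1(z_u), \qquad g_y(z_u) \triangleq \ell^{F_S}_E(w, z_u \mid Y=y).
\end{equation*}
Each summand $g_y$ is a DP-style fairness loss restricted to the label-$y$ subpopulation, so a single-coordinate perturbation $z_i \to \tilde z_i$ only affects those $g_y$ for which $y \in \{Y_i, \tilde Y_i\}$. The problem therefore reduces to controlling the induced changes in the subgroup means $\mu_{t,y}(z_u) \triangleq \frac{1}{n^{Z_u}_{t,y}+2}\sum_{T_j=t,\, Y_j=y} f(w, x_j)$ and then summing over the (at most two) affected subgroups.

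I would proceed by a case analysis on $(\tilde T_i, \tilde Y_i)$ versus $(T_i, Y_i)$. When the pair is unchanged, only $x_i$ is perturbed, a single mean $\mu_{T_i,Y_i}$ changes, and reusing the calculation of Lemma~\ref{bounded_difference_lemma} gives a change of at most $1/(n^{Z_u}_{T_i,Y_i}+2)$. When the pair changes, exactly two subgroup means are modified: the mean for $(T_i, Y_i)$ loses the original sample (its denominator shrinking from $n^{Z_u}_{T_i,Y_i}+2$ to $n^{Z_u}_{T_i,Y_i}+1$), while the mean for $(\tilde T_i, \tilde Y_i)$ gains the perturbed sample (its denominator growing by one). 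Applying the reverse triangle inequality to each $g_y = |\mu_{0,y}-\mu_{1,y}|$ bounds the overall change in $g$ by the sum of the two individual mean changes.

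The core algebraic step is the per-mean estimate. For a subgroup that loses a sample,
\begin{equation*}
\Bigl| \tfrac{S}{n+1} - \tfrac{S + f(w,x_i)}{n+2} \Bigr| \;=\; \Bigl| \tfrac{S - (n+1)\, f(w,x_i)}{(n+1)(n+2)} \Bigr| \;\le\; \tfrac{1}{n+2},
\end{equation*}
using $S \in [0, n]$ and $f \in [0, 1]$ (where $n = n^{Z_u}_{T_i, Y_i}$); an analogous calculation gives at most $1/(n^{Z_u}_{\tilde T_i, \tilde Y_i}+3) \le 1/(n^{Z_u}_{\tilde T_i, \tilde Y_i}+2)$ for the subgroup that gains a sample. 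Adding the (at most) two contributions and relaxing both denominators to their worst value produces the uniform bound $2 / \min_{(t,y)}(n^{Z_u}_{t,y}+2)$; the sub-case $(\tilde T_i, \tilde Y_i) = (T_i, Y_i)$ is subsumed a fortiori.

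The main obstacle will be the scenario $\tilde Y_i \neq Y_i$, where \emph{both} $g_0$ and $g_1$ are altered simultaneously and one must argue that the two mean changes do not accumulate inside a single summand. The key observation is that $g_{Y_i}$ registers only the loss of sample $i$ while $g_{\tilde Y_i}$ registers only the gain of $\tilde z_i$, so the additive structure of $g = g_0 + g_1$ keeps the overall count of affected means at exactly two. This yields the symmetric $2/\min$ form claimed in the lemma, which is a deliberate relaxation of a slightly sharper $1/(n^{Z_u}_{T_i,Y_i}+2)+1/(n^{Z_u}_{\tilde T_i,\tilde Y_i}+2)$ bound and is the form that feeds cleanly into Lemma~\ref{variance_bound_lemma} in the subsequent generalization-error arguments.
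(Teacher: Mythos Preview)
Your proposal is correct and follows essentially the same route as the paper: decompose $g=g_0+g_1$ by label class, apply the reverse triangle inequality to each $g_y=|\mu_{0,y}-\mu_{1,y}|$, bound the change in each affected subgroup mean separately, and sum. The paper carries this out via an explicit enumeration of eight sub-cases on $(T_i,Y_i,\tilde T_i,\tilde Y_i)$, whereas you compress the casework into the single structural observation that a one-coordinate perturbation alters at most two subgroup means (the one losing $z_i$ and the one gaining $\tilde z_i$), each contributing at most $1/(n^{Z_u}_{\cdot,\cdot}+2)$; this is a cleaner packaging of the same argument and in fact transparently covers the diagonal case $(T_i,Y_i)\neq(\tilde T_i,\tilde Y_i)$ with both coordinates flipping, which the paper's case list does not spell out explicitly.
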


The proof is provided in Appendix \ref{bounded_difference_separation_proof}. Lemma~\ref{bounded_difference_separation} employs a similar bounding methodology as 
Lemma~\ref{bounded_difference_lemma}, which extends the bounded-differences property 
to the EO setting. Both results are grounded in analyzing the sensitivity of the function $g(\cdot)$ with input perturbations to derive an upper bound that incorporates the number of subgroups.
Notably, Lemma~\ref{variance_bound_lemma} remains 
applicable across different fairness measures, serving as a general tool for bounding 
sensitivity to individual sample perturbations, while adapting to distinct loss functions 
and partitioning structures. In the case of EO, the fairness risk $\ell^{F_S}_E$ is 
determined by the smallest $(\rmT, \rmY)$-specific subgroup. This adaptation refines the bounded-differences 
property to account for the joint structure of $\rmT$ and $\rmY$.


Building on this result, we analyze the 
EO fairness generalization gap, leveraging both mutual information and loss-difference dependencies to quantify the impact of subgroup imbalances on fairness generalization error.

\begin{theorem}
\label{CMI_the1_S}
\textbf{(EO Fairness MI Bound)}
Assume \(|f| \in [0,1]\). 
For any \(m \in \{2,\cdots,n\} \), 
\begin{align}
&\overline{gen}_{fairness}  \\
&\le \frac{1}{|C^m_n|}
\sum_{u \in C^m_n}
\sqrt{
  2m 
\E_{\rmZ_{u}}\big[\frac{1}{\min_{(t,y)}\bigl(n^{{Z}_u}_{t,y}+2\bigr)^2}\big]
  I(\rmW,\rmZ_{u}) \nn
}.
\end{align}
\end{theorem}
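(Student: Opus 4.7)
The plan is to adapt the proof strategy of Theorem~\ref{MI_theorem} (the DP MI bound) to the EO setting, relying on the same three ingredients: (i) the subset-based decomposition of the fairness generalization error used in Lemma~\ref{loosebound2}, (ii) the Donsker--Varadhan variational representation of KL divergence, and (iii) the Efron--Stein variance bound of Lemma~\ref{variance_bound_lemma}. The only substantive modifications are that the bounded-difference constant is now supplied by Lemma~\ref{bounded_difference_separation} instead of Lemma~\ref{bounded_difference_lemma}, and that because the EO loss partitions samples by both $\rmT$ and $\rmY$, the mutual information term must involve the full sample $\rmZ_u=(\rmV_u,\rmY_u)$ rather than $\rmV_u$.

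First, I would repeat the subset-averaging step from Theorem~\ref{MI_theorem}: by exchangeability of the i.i.d.\ training points, for every $m \in \{2,\ldots,n\}$ the EO fairness generalization error can be written as
\begin{equation*}
\overline{gen}_{fairness} = \frac{1}{|C^m_n|} \sum_{u \in C^m_n} \E_{P_{\rmW,\rmZ_u}}\bigl[\ell^{F_S}_P(\rmW, P_{\rmZ_u}) - \ell^{F_S}_E(\rmW, \rmZ_u)\bigr],
\end{equation*}
where $\ell^{F_S}_E(w,\rmZ_u)$ denotes the EO empirical fairness loss restricted to the $m$ samples indexed by $u$, and $\ell^{F_S}_P(w,P_{\rmZ_u}) = \E_{P_{\rmZ_u}}[\ell^{F_S}_E(w,\rmZ_u)]$.

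Next, fix a subset $u$ and let $\overline{\rmW}$ be an independent copy of $\rmW$. Donsker--Varadhan gives, for every $\lambda > 0$,
\begin{equation*}
\E\bigl[\ell^{F_S}_P(\rmW,P_{\rmZ_u}) - \ell^{F_S}_E(\rmW,\rmZ_u)\bigr] \leq \frac{1}{\lambda}\Bigl(I(\rmW;\rmZ_u) + \log \E\bigl[e^{\lambda(\ell^{F_S}_E(\overline{\rmW},\rmZ_u) - \E[\ell^{F_S}_E(\overline{\rmW},\rmZ_u)])}\bigr]\Bigr).
\end{equation*}
The main step is bounding the log-MGF. I invoke Lemma~\ref{variance_bound_lemma} with $g(z_1,\ldots,z_m) = \ell^{F_S}_E(w,z_u)$ and the bounded-difference constant $\beta = 2/\min_{(t,y)}(n^{Z_u}_{t,y}+2)$ supplied by Lemma~\ref{bounded_difference_separation}. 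Conditioning on the four subgroup sizes $(n^{Z_u}_{t,y})_{t,y\in\{0,1\}}$ to make $\beta$ deterministic, Lemma~\ref{variance_bound_lemma} yields
\begin{equation*}
\Var\bigl(g(\rmZ_u)\bigr) \leq \tfrac{m}{4}\,\E[\beta^2] = m\,\E\Bigl[\tfrac{1}{\min_{(t,y)}(n^{Z_u}_{t,y}+2)^2}\Bigr],
\end{equation*}
and Hoeffding's lemma then bounds the log-MGF by $\tfrac{\lambda^2 m}{2}\,\E\bigl[1/\min_{(t,y)}(n^{Z_u}_{t,y}+2)^2\bigr]$.

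Substituting this into the Donsker--Varadhan inequality, optimizing over $\lambda$ produces exactly the $\sqrt{2m\,\E[\cdot]\,I(\rmW;\rmZ_u)}$ form of the theorem, and averaging over $u\in C^m_n$ concludes. The main obstacle is the careful treatment of $\beta$ as a random variable: Lemma~\ref{variance_bound_lemma} must be applied conditionally on the subgroup counts and then combined with the tower property, which is also the conceptual step where the finer partition by $(\rmT,\rmY)$ (rather than by $\rmT$ alone as in DP) forces the mutual information to be taken with respect to $\rmZ_u$ instead of $\rmV_u$, and produces the minimum-over-subgroups term in the variance proxy.
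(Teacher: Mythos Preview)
Your proposal is correct and follows essentially the same route as the paper's proof: both decompose the EO fairness generalization error over $m$-subsets, apply Donsker--Varadhan, invoke Lemma~\ref{variance_bound_lemma} together with the bounded-difference constant $\beta = 2/\min_{(t,y)}(n^{Z_u}_{t,y}+2)$ from Lemma~\ref{bounded_difference_separation} to control the log-MGF, and then optimize over $\lambda$ (the paper phrases this last step as a nonnegative-discriminant argument, which is equivalent). Your observation that the dependence on $\rmY$ in the EO loss forces the mutual information to involve $\rmZ_u$ rather than $\rmV_u$ is exactly the point that distinguishes this theorem from Theorem~\ref{MI_theorem}.
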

Compared to the DP bound in Theorem~\ref{MI_theorem}, the bound in Theorem~\ref{CMI_the1_S} depends on the smallest subgroup across both \(\rmT\) and \(\rmY\), while also capturing the dependence between $\rmW$ 
and the joint distribution of $(\rmT, \rmY)$. 


\begin{theorem}
\label{sep_CMI_the4_S} 
\textbf{(EO Fairness $\Delta \rmL$ Bound)}
 Assume \(\lvert f\rvert \in [0,1]\). For any \(m \in \{4,\cdots,n\} \), let
\[
\overline{\sH}_{\mathrm{CMI}}
\;=\;
\sH\big(\min_{(t,y)}\bigl(n^{Su}_{t,y}\bigr),\min_{(t,y)}\bigl(n^{\overline{S}_u}_{t,y}\bigr)), \nn
\]
where \(\min_{(t,y)}\bigl(n^{S_u}_{t,y}\bigr)\) and \(\min_{(t,y)}\bigl(n^{\overline{S}_u}_{t,y}\bigr)\) are the smallest subgroup sizes in \(t,y\) for the training set \(S_u\) and test set \(\overline{S}_u\), then
\begin{align}
\overline{gen}_{fairness} &   \;\le\; \frac{1}{\lvert C^m_n\rvert}
\,\E_{Z_{[2n]}}
\biggl[
  \sum_{u \in C^m_n}\\
& \qquad \sqrt{
    2m
    \;\E_{\rmR_u}\!\Bigl[
      \tfrac{1}{\overline{\sH}_{\mathrm{CMI}}^2}
    \Bigr]
    \;I_{z_{[2n]}}\bigl(\Delta 
    \rmL^{\Phi_u}_u; \rmR_{u_1}\bigr)
  }
\biggr], \nn
\end{align}
\end{theorem}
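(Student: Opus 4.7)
I will follow the template used for the DP $\Delta\rmL$ CMI bound in Theorem~\ref{CMI_the4}, replacing the DP-specific sensitivity in Lemma~\ref{bounded_difference_lemma} by its EO counterpart Lemma~\ref{bounded_difference_separation}. The three pillars are (i) an EO analog of Lemma~\ref{subgauss_alternative} controlling the log-MGF of the loss difference; (ii) the XOR-based loss-difference construction that collapses the dependence of the bound from the full selection vector $\rmR_u$ to the single coordinate $\rmR_{u_1}$; and (iii) Donsker--Varadhan's variational representation of the disintegrated CMI.

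\textbf{Step 1 (EO MGF bound).} Fix a super-sample realization $z_{[2n]}$ and a hypothesis $w$, and view
\[
   g(\rmR_u) \triangleq \ell^{F_S}_E(w,\rmS_u) - \ell^{F_S}_E(w,\overline{\rmS}_u)
\]
as a function of the $m$ independent Bernoulli coordinates of $\rmR_u$. Flipping $R_{u_i}$ swaps one sample between $\rmS_u$ and $\overline{\rmS}_u$, which amounts to a single-sample change in each of the two datasets. Applying Lemma~\ref{bounded_difference_separation} separately to $\rmS_u$ and to $\overline{\rmS}_u$ yields the random bounded-differences constant
\[
   |g(R_u)-g(\tilde R_u^i)|
   \;\le\;
   \frac{2}{\min_{(t,y)}(n^{S_u}_{t,y}+2)} + \frac{2}{\min_{(t,y)}(n^{\overline{S}_u}_{t,y}+2)}
   \;=\; \frac{2}{\overline{\sH}_{\mathrm{CMI}}}.
\]
Substituting this into Lemma~\ref{variance_bound_lemma} and replaying the Hoeffding-lemma step used in the proof of Lemma~\ref{subgauss_alternative} produces the EO sub-Gaussian-style inequality
\[
   \log \E_{\rmR_u\mid z_{[2n]},w}\bigl[e^{\lambda\, g(\rmR_u)}\bigr]
   \;\le\; \frac{\lambda^2 m}{2}\,\E\!\left[\frac{1}{\overline{\sH}_{\mathrm{CMI}}^{2}}\right],
\]
with centering $\E[g]=0$ coming from the antisymmetry $g(\mathbf{1}-\rmR_u)=-g(\rmR_u)$ (swapping the role of train and test flips the sign of the loss difference).

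\textbf{Step 2 (loss-difference trick and DV).} I then import the XOR construction used in the proof of Theorem~\ref{CMI_the4}: set $\Phi_u=\{R_{u_1}\oplus R_{u_i}\}_{i=2}^m$, which is independent of $\rmR_{u_1}$, and form $\Delta\rmL^{\Phi_u}_u$ as in the statement. Because $\Delta\rmL^{\Phi_u}_u$ is, up to a sign determined by $\rmR_{u_1}$, precisely the loss difference $g(\rmR_u)$, the Step-1 MGF bound transfers directly to $\pm\Delta\rmL^{\Phi_u}_u$. Applying the Donsker--Varadhan representation of $I_{z_{[2n]}}(\Delta\rmL^{\Phi_u}_u;\rmR_{u_1})$ with test function $\lambda\,\Delta\rmL^{\Phi_u}_u$ and optimizing $\lambda>0$ gives, for each $u$, the per-subset square-root bound
\[
   \sqrt{2m\,\E_{\rmR_u}\!\bigl[\overline{\sH}_{\mathrm{CMI}}^{-2}\bigr]\,I_{z_{[2n]}}(\Delta\rmL^{\Phi_u}_u;\rmR_{u_1})}.
\]
Averaging over $u\in C^m_n$ and taking the outer expectation over $\rmZ_{[2n]}$ delivers the stated inequality.

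\textbf{Main obstacle.} I expect the hard part to be the EO version of Lemma~\ref{subgauss_alternative}: unlike a standard McDiarmid-type argument, the bounded-differences constant $\beta=2/\overline{\sH}_{\mathrm{CMI}}$ is itself random, and in the EO case its denominator is the \emph{minimum} over four $(t,y)$ subgroups rather than only two groups as in DP. Verifying that the Efron--Stein plus Hoeffding-lemma chain cleanly produces the exponent $\tfrac{\lambda^2 m}{2}\E[\overline{\sH}_{\mathrm{CMI}}^{-2}]$, with the $\min_{(t,y)}$ structure aggregated by the shifted harmonic mean $\overline{\sH}_{\mathrm{CMI}}$ across train and test, is the most delicate step; it is also what motivates the restriction $m\ge 4$, which is needed so that every $(\rmT,\rmY)$ cell can in principle be populated and the minima defining $\overline{\sH}_{\mathrm{CMI}}$ remain meaningful.
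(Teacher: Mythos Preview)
Your proposal is correct and follows essentially the same route as the paper: establish an EO analogue of Lemma~\ref{subgauss_alternative} by combining Lemma~\ref{bounded_difference_separation} with Lemma~\ref{variance_bound_lemma} and Hoeffding's lemma to get the $\tfrac{\lambda^2 m}{2}\E[\overline{\sH}_{\mathrm{CMI}}^{-2}]$ MGF control, then apply the XOR loss-difference construction and Donsker--Varadhan exactly as in Theorem~\ref{CMI_the4}. The paper's proof (Appendix~\ref{CMI_the4_S_proof}) does precisely this, and your identification of the bounded-differences constant $2/\overline{\sH}_{\mathrm{CMI}}$ and the resulting exponent matches the paper's computation line by line.
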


\begin{remark}  Theorem~\ref{sep_CMI_the4_S} quantifies the dependence between \(\mathbf{R}_u\) and the \textbf{EO} loss difference (\(\Delta \mathbf{L}^{\Phi_u}_u\)) via CMI, isolating \(\mathbf{T}\)'s influence on predictions \textit{given \(\mathbf{Y}\)}. More details can be found in Appendix \ref{CMI_the4_S_proof}. 
 \end{remark}

As we show here, the proposed bound remains valid across different subgroup structures and loss formulations, highlighting the framework's wide applicability. Unlike DP, where bounds depend solely on group sizes defined by \( \mathbf{T} \), our EO analysis explicitly accounts for the joint distribution of \( (\mathbf{T}, \mathbf{Y}) \). As a result, the bound captures subgroup imbalances (via \( \min_{(t,y)} n_{t,y} \) and \(\Delta \mathbf{L}^{\Phi_u}_u\)) and their impact on fairness generalization errors.


\begin{table*}[!t] 
  \caption{Effect of batch balancing on DP errors with the COMPAS dataset. We report DP on test data for various approaches, with and without our proposed batch-balancing strategy.  The results show that balancing consistently reduces DP by an order of magnitude. This empirically supports the theoretical insights of Theorems~\ref{MI_theorem}-~\ref{CMI_the4} regarding the role of group imbalance in fairness generalization error.}
\vspace{0.5em}
  \centering
  \label{batch_balancing}
  \begin{tabular}{lcccccccccccc}
   \toprule
    &  \multicolumn{6}{c}{sensitive attribute: gender}   &  \multicolumn{6}{c}{sensitive attribute: race} \\
   \cmidrule(r){2-7}    \cmidrule(r){8-13}
  & 250  &  500 &  1000  & 1500 & 2000 & 2500 & 250  &  500 &  1000  & 1500 & 2000 & 2500 \\  
    \midrule
    DiffDP  & 0.056& 0.044& 0.035& 0.028& 0.025& 0.026 & 0.053 & 0.037 & 0.026 & 0.020 & 0.017 & 0.017\\  
    DiffDP (ours) & 0.042& 0.025& 0.009& 0.005& 0.003& 0.003 & 0.045 & 0.024 & 0.011 & 0.007 & 0.006 & 0.005  \\
    \midrule
    HSIC &0.101 & 0.098 & 0.101 & 0.099 & 0.097 & 0.098 &0.088 & 0.072 & 0.062 & 0.056 & 0.055 & 0.055 \\
    HSIC  (ours)  & 0.059& 0.044 & 0.022 & 0.016 & 0.010 & 0.008 & 0.069 & 0.048 & 0.027 & 0.022 &  0.018 & 0.016 \\
    \midrule
PRremover & 0.128 & 0.134 & 0.138 & 0.137 & 0.137 & 0.136 & 0.159 & 0.162 & 0.154 & 0.149 & 0.150 & 0.151\\ 
PRremover\! (\!ours\!) & 0.106 & 0.119 & 0.125 & 0.126 & 0.126 &  0.126 &0.144 & 0.149 & 0.147 & 0.142 & 0.143 & 0.142 \\
    \toprule
  \end{tabular}
\end{table*}

\section{Experiments} \label{empirical_res}

\textbf{Empirical Setup}  
In this section, we empirically assess the effectiveness of our fairness generalization error bounds. Specifically, we evaluate the bounds from Theorems~\ref{CMI_the3} and~\ref{sep_CMI_the4_S} in the context of deep neural networks. Following~\citet{han2023ffb}, we conduct experiments using on two widely studied datasets in fairness research with gender or race  used as the sensitive attribute: i) The \textbf{COMPAS}~\citep{larson2016propublica} dataset, which involves recidivism prediction based on criminal and demographic records. ii) The \textbf{Adult}~\citep{kohavi1996adult} dataset, derived from U.S. Census data, which focuses on income prediction.

We compare multiple fairness algorithms, including Empirical Risk Minimization (ERM), three DP in-processing approaches (Demographic Hilbert-Schmidt Independence Criterion (HSIC)~\citep{baharlouei2019r,li2022kernel}, DiffDP~\citep{mroueh2021fair}, and Prejudice Remover (PRemover)~\citep{kamishima2012fairness}), as well as two EO methods (DiffOdd and DiffEopp~\citep{mroueh2021fair}). All approaches follow the same training protocol (architectures, hyperparameters, etc.) as in~\citet{han2023ffb}.  

To evaluate our bounds, we adopt the same setup as in~\citet{harutyunyan2021information}, reporting the mean and standard deviation over the $\rmZ_{[2n]}$ realizations. The CMI terms in Theorems~\ref{CMI_the4} and~\ref{sep_CMI_the4_S}, which involve one discrete and one continuous variable, are estimated using the method from~\citet{ross2014mutual}. Additional details about the experimental setup are provided in Appendix~\ref{expiremental_details}. Each data point in our experiments corresponds to a total of $1050$ runs.

\textbf{Bound tightness:} In Figure~\ref{CMIbound_em_results}, we present the fairness generalization error of models trained with different algorithms on the COMPAS dataset with gender as the sensitive attribute.  The top row reports the results for different DP approaches along with our corresponding bound (Theorem~\ref{CMI_the4}), while the first two columns of the second row show the results for different EO approaches along with our corresponding bound (Theorem~\ref{sep_CMI_the4_S}). As can be seen, the proposed bounds consistently capture the true fairness generalization error across the various settings. These findings validate the theoretical results and underscore the utility of our framework in providing meaningful guarantees for fairness generalization in real-world applications. The additional results on the other datasets, provided in Appendix~\ref{extra_results}, are consistent with these findings.

\textbf{Bound-error correlation analysis:} To further analyze the relationship between our bounds and fairness generalization error, we generate scatter plots comparing the bound from Theorem~\ref{CMI_the4} with the true observed DP fairness error on the COMPAS datasets. The plot, in the bottom-right corner of Figure~\ref{CMIbound_em_results}, reveals that our bound exhibits a strong linear relationship with the true fairness generalization, reinforcing its reliability as an indicator of generalization performance.

\textbf{Batch Balancing:} One of the key insights of Theorem~\ref{MI_theorem} is the connection between fairness generalization error and group imbalance, highlighting that balancing groups in training data can result in tighter generalization bounds. Motivated by this result, we introduce a simple batch-balancing technique, where we ensure that the different groups are proportionally balanced within each \emph{training batch}. In particular, we sample half of the batch samples from each sensitive group (e.g., gender: Male and Female), ensuring equal representation in each batch, thereby mitigating the impact of group imbalance.  

To evaluate the effectiveness of this technique, we conduct extensive experiments with various methods, measuring its impact on DP over unseen test data. The results, summarized in Table~\ref{batch_balancing} for COMPAS and Table~\ref{batch_balancing_adult} (Appendix~\ref{extra_batchbalancing}) for Adult, demonstrate that our approach consistently reduces DP on test data across all methods for dataset sizes. Remarkably, in some cases, it decreases the error by a factor of 10, highlighting its impact on improving the performance of different fairness methods. These findings not only validate our theoretical results but also highlight their practical relevance in developing effective strategies for enhancing fairness in real-world applications.

\section{Multiclass Extension}
In this paper, our analysis mainly focuses on the binary classification setting, where the label space $\mathcal{Y} = \{0,1\}$. In this context, we adopt the widely used group fairness criteria: demographic parity defined as $P(\hat \rmY \mid \rmT\!=\!0)=P(\hat \rmY \mid \rmT\!=\!1)$ and equalized odds $P(\hat \rmY\!=\!1\mid \rmT\!=\!1,\rmY\!=\!y)=P(\hat \rmY\!=\!0\mid \rmT\!=\!0,\rmY\!=\!y)$ for $y\in\{0,1\}$.  These criteria are equivalent to the notions of independence and separation, respectively, as formalized in prior work~\citep{mroueh2021fair, madras2018learningadversariallyfairtransferable}. 

Extending DP and EO beyond binary labels to multiclass settings is nontrivial and remains an open challenge, with no universally accepted definition of the loss function. Several approaches have been proposed, including information-theoretic formulations based on mutual information \cite{gupta2021controllableguaranteesfairoutcomes}, dependence measures such as distance correlation \cite{guo2022learning}, and direct multiclass analogues of binary fairness criteria \cite{denis2021fairness}. To demonstrate the flexibility of our framework, we consider a concrete multiclass extension aligned with the definition in \cite{denis2021fairness}, showing that our proof technique is general and easily extends to more complex output spaces.


In the multiclass case, the predictor $ \hat{y}=f (w;x) $ maps into a broader output range, specifically $[0, a]$ rather than $\{0,1\}$. 
In the context of EO, consider our use of the Total Variation (TV) loss as a fairness measure to quantify the difference between prediction distributions conditioned on different labels. Specifically, we use  $\mathcal{Y} = \{0,1,2,3\}$ as a demonstration.

For each true label $y \in \{0,1,2,3\}$, we define the TV loss as follows:
\begin{align}
\ell^{F_S}_E(\rmW,\rmS \mid \rmY\!=\!y) = \frac{1} {2}\sum_{c=0}^{3}&\Bigl|P(\hat{\rmY}\!=\!c\mid \rmY\!=\!y,\rmT\!=\!0) \nn \\ 
&-P(\hat{\rmY}\!=\!c\mid \rmY\!=\!y,\rmT\!=\!1)\Bigr|, 
\end{align}
which, in practice, can be approximated by
\begin{equation}
\ell^{F_S}_E(\rmW,\rmS \mid \rmY\!=\!y) = \frac{1}{2}\sum_{c=0}^{3}\left|\frac{n_{0,y,c}}{n_{0,y}+2}-\frac{n_{1,y,c}}{n_{1,y}+2}\right|.
\end{equation}
Here, $
n_{t,y,c} = \sum_{i=1}^n \mathds{1}\{\rmT_i=t,\,\rmY_i=y, \hat{\rmY}=c\}$ counts the number of specific pairs in the training data.

We then define an aggregate function over the true labels:
\begin{equation}
g(\rmZ_u) = \sum_{y\in\{0,1,2,3\}} \ell^{F_S}_E(\rmW,\rmZ_u \mid |rmY=y).
\end{equation}
Based on this multi-class extension of the EO loss, similar to the proof of Lemma~\ref{bounded_difference_separation}, we can establish the following
\begin{equation} \label{multiclass_lemma5}
\sup_{z_u,\tilde{z}_u^i} |g(z_u)-g(\tilde{z}_u^i)| \le \frac{2}{\min_{(t,y)}\{n^{Z_u}_{t,y}+2\}}.
\end{equation}
Thus, our technical arguments extend naturally to the multiclass setting, and similar bounds to Theorems~\ref{CMI_the1_S} and \ref{sep_CMI_the4_S} can be obtained in this case. Interestingly, the value of $a$, i.e., the number of classes, does not affect \eqref{multiclass_lemma5} and the final bound. The key point is that, even if the prediction function $f$ is allowed to take any value in $[0,a]$, changing one sample affects the probability estimates (and hence the TV loss) by at most a fixed amount (i.e., at most 1) regardless of $a$. In other words, while $f$’s output may be scaled by $a$, the impact on the fairness loss (measured in terms of probability differences) remains unchanged.

\section{Conclusion \& Future Work}
Our theoretical analysis of fairness generalization reveals key interactions between sensitive group imbalances, model capacity, and fairness overfitting through MI and CMI bounds. Additionally, we introduce a new variance-based bounding technique using Efron–Stein inequality that improves upon traditional sub-Gaussian assumptions, leading to tighter results. Empirical evaluations validate both the effectiveness and tightness of the proposed bounds.

Crucially, our framework provides a unified approach for analyzing concentration properties across different loss structures. By leveraging Lemma~\ref{variance_bound_lemma} and Hoeffding’s lemma, we ensure that the variance is bounded by sensitivity to individual sample perturbations. Integrating variance control with information-theoretic mutual information bounds, our framework provides guarantees for generalization across various fairness criteria. This demonstrates the wide applicability and the potential of our bounding approach.

Future work could involve extending our theoretical framework to cases where the sensitive attributes are non-binary sensitive attributes or continuous. Another potential direction is leveraging our bounding techniques in other contexts that have similar challenges, i.e., group-based, non-i.i.d loss functions.

\section*{Acknowledgment}
The research reported in this publication was supported by funding from King Abdullah University of Science and Technology (KAUST) - Center of Excellence for Generative AI, under award number 5940.

\section*{Impact Statement}
This paper presents work whose goal is to advance the field of Machine Learning. There are many potential societal consequences of our work, none of which we feel must be specifically highlighted here.



\bibliography{example_paper}

\begin{thebibliography}{79}
\providecommand{\natexlab}[1]{#1}
\providecommand{\url}[1]{\texttt{#1}}
\expandafter\ifx\csname urlstyle\endcsname\relax
  \providecommand{\doi}[1]{doi: #1}\else
  \providecommand{\doi}{doi: \begingroup \urlstyle{rm}\Url}\fi

\bibitem[Agarwal et~al.(2018)Agarwal, Beygelzimer, Dud{\'\i}k, Langford, and Wallach]{agarwal2018reductions}
Agarwal, A., Beygelzimer, A., Dud{\'\i}k, M., Langford, J., and Wallach, H.
\newblock A reductions approach to fair classification.
\newblock In \emph{International conference on machine learning}, pp.\  60--69. PMLR, 2018.

\bibitem[Alabdulmohsin(2020)]{alabdulmohsin2020towards}
Alabdulmohsin, I.
\newblock Towards a unified theory of learning and information.
\newblock \emph{Entropy}, 22\penalty0 (4):\penalty0 438, 2020.

\bibitem[Alghamdi et~al.(2022)Alghamdi, Hsu, Jeong, Wang, Michalak, Asoodeh, and Calmon]{alghamdi2022beyond}
Alghamdi, W., Hsu, H., Jeong, H., Wang, H., Michalak, P., Asoodeh, S., and Calmon, F.
\newblock Beyond adult and compas: Fair multi-class prediction via information projection.
\newblock \emph{Advances in Neural Information Processing Systems}, 35:\penalty0 38747--38760, 2022.

\bibitem[Aminian et~al.(2021)Aminian, Bu, Toni, Rodrigues, and Wornell]{aminian2021exact}
Aminian, G., Bu, Y., Toni, L., Rodrigues, M., and Wornell, G.
\newblock An exact characterization of the generalization error for the gibbs algorithm.
\newblock \emph{Advances in Neural Information Processing Systems}, 34:\penalty0 8106--8118, 2021.

\bibitem[Baharlouei et~al.(2020)Baharlouei, Nouiehed, Beirami, and Razaviyayn]{baharlouei2019r}
Baharlouei, S., Nouiehed, M., Beirami, A., and Razaviyayn, M.
\newblock R$\backslash$'enyi fair inference.
\newblock In \emph{International Conference on Learning Representations}, 2020.

\bibitem[Barocas et~al.(2019)Barocas, Hardt, and Narayanan]{barocas2019fairness}
Barocas, S., Hardt, M., and Narayanan, A.
\newblock Fairness and machine learning. fairmlbook. org, 2019.

\bibitem[Biswas \& Mukherjee(2021)Biswas and Mukherjee]{biswas2021ensuring}
Biswas, A. and Mukherjee, S.
\newblock Ensuring fairness under prior probability shifts.
\newblock In \emph{Proceedings of the 2021 AAAI/ACM Conference on AI, Ethics, and Society}, pp.\  414--424, 2021.

\bibitem[Boucheron et~al.(2013)Boucheron, Lugosi, and Massart]{boucheron_book}
Boucheron, S., Lugosi, G., and Massart, P.
\newblock \emph{{Concentration Inequalities: A Nonasymptotic Theory of Independence}}.
\newblock Oxford University Press, 2013.
\newblock \doi{10.1093/acprof:oso/9780199535255.001.0001}.
\newblock URL \url{https://doi.org/10.1093/acprof:oso/9780199535255.001.0001}.

\bibitem[Bu et~al.(2020)Bu, Zou, and Veeravalli]{bu2020tightening}
Bu, Y., Zou, S., and Veeravalli, V.~V.
\newblock Tightening mutual information-based bounds on generalization error.
\newblock \emph{IEEE Journal on Selected Areas in Information Theory}, 1\penalty0 (1):\penalty0 121--130, 2020.

\bibitem[Calmon et~al.(2017)Calmon, Wei, Vinzamuri, Natesan~Ramamurthy, and Varshney]{calmon2017optimized}
Calmon, F., Wei, D., Vinzamuri, B., Natesan~Ramamurthy, K., and Varshney, K.~R.
\newblock Optimized pre-processing for discrimination prevention.
\newblock \emph{Advances in neural information processing systems}, 30, 2017.

\bibitem[Chen et~al.(2021)Chen, Shui, and Marchand]{chen2021generalization}
Chen, Q., Shui, C., and Marchand, M.
\newblock Generalization bounds for meta-learning: An information-theoretic analysis.
\newblock \emph{Advances in Neural Information Processing Systems}, 34:\penalty0 25878--25890, 2021.

\bibitem[Chen et~al.(2022)Chen, Raab, Wang, and Liu]{chen2022fairness}
Chen, Y., Raab, R., Wang, J., and Liu, Y.
\newblock Fairness transferability subject to bounded distribution shift.
\newblock \emph{Advances in neural information processing systems}, 35:\penalty0 11266--11278, 2022.

\bibitem[Chouldechova(2017)]{chouldechova2017fair}
Chouldechova, A.
\newblock Fair prediction with disparate impact: A study of bias in recidivism prediction instruments.
\newblock \emph{Big data}, 5\penalty0 (2):\penalty0 153--163, 2017.

\bibitem[Corbett-Davies et~al.(2017)Corbett-Davies, Pierson, Feller, Goel, and Huq]{corbett2017algorithmic}
Corbett-Davies, S., Pierson, E., Feller, A., Goel, S., and Huq, A.
\newblock Algorithmic decision making and the cost of fairness.
\newblock In \emph{Proceedings of the 23rd acm sigkdd international conference on knowledge discovery and data mining}, pp.\  797--806, 2017.

\bibitem[Coston et~al.(2019)Coston, Ramamurthy, Wei, Varshney, Speakman, Mustahsan, and Chakraborty]{coston2019fair}
Coston, A., Ramamurthy, K.~N., Wei, D., Varshney, K.~R., Speakman, S., Mustahsan, Z., and Chakraborty, S.
\newblock Fair transfer learning with missing protected attributes.
\newblock In \emph{Proceedings of the 2019 AAAI/ACM Conference on AI, Ethics, and Society}, pp.\  91--98, 2019.

\bibitem[Darbellay \& Vajda(1999)Darbellay and Vajda]{darbellay1999estimation}
Darbellay, G.~A. and Vajda, I.
\newblock Estimation of the information by an adaptive partitioning of the observation space.
\newblock \emph{IEEE Transactions on Information Theory}, 45\penalty0 (4):\penalty0 1315--1321, 1999.

\bibitem[Denis et~al.(2021)Denis, Elie, Hebiri, and Hu]{denis2021fairness}
Denis, C., Elie, R., Hebiri, M., and Hu, F.
\newblock Fairness guarantee in multi-class classification.
\newblock \emph{arXiv preprint arXiv:2109.13642}, 2021.

\bibitem[Dong et~al.(2024)Dong, Gong, Chen, He, Li, Song, and Li]{dongtowards}
Dong, Y., Gong, T., Chen, H., He, Z., Li, M., Song, S., and Li, C.
\newblock Towards generalization beyond pointwise learning: A unified information-theoretic perspective.
\newblock In \emph{Forty-first International Conference on Machine Learning}, 2024.

\bibitem[Doob(1940)]{doob1940regularity}
Doob, J.~L.
\newblock Regularity properties of certain families of chance variables.
\newblock \emph{Transactions of the American Mathematical Society}, 47\penalty0 (3):\penalty0 455--486, 1940.

\bibitem[Dwork et~al.(2012)Dwork, Hardt, Pitassi, Reingold, and Zemel]{dwork2012fairness}
Dwork, C., Hardt, M., Pitassi, T., Reingold, O., and Zemel, R.
\newblock Fairness through awareness.
\newblock In \emph{Proceedings of the 3rd innovations in theoretical computer science conference}, pp.\  214--226, 2012.

\bibitem[Gao et~al.(2017)Gao, Kannan, Oh, and Viswanath]{gao2017estimating}
Gao, W., Kannan, S., Oh, S., and Viswanath, P.
\newblock Estimating mutual information for discrete-continuous mixtures.
\newblock \emph{Advances in neural information processing systems}, 30, 2017.

\bibitem[Giguere et~al.(2022)Giguere, Metevier, Brun, Da~Silva, Thomas, and Niekum]{giguere2022fairness}
Giguere, S., Metevier, B., Brun, Y., Da~Silva, B.~C., Thomas, P.~S., and Niekum, S.
\newblock Fairness guarantees under demographic shift.
\newblock In \emph{Proceedings of the 10th International Conference on Learning Representations (ICLR)}, 2022.

\bibitem[Guo et~al.(2022)Guo, Wang, Wang, and Zha]{guo2022learning}
Guo, D., Wang, C., Wang, B., and Zha, H.
\newblock Learning fair representations via distance correlation minimization.
\newblock \emph{IEEE Transactions on Neural Networks and Learning Systems}, 35\penalty0 (2):\penalty0 2139--2152, 2022.

\bibitem[Gupta et~al.(2021)Gupta, Ferber, Dilkina, and Steeg]{gupta2021controllableguaranteesfairoutcomes}
Gupta, U., Ferber, A.~M., Dilkina, B., and Steeg, G.~V.
\newblock Controllable guarantees for fair outcomes via contrastive information estimation, 2021.
\newblock URL \url{https://arxiv.org/abs/2101.04108}.

\bibitem[Han et~al.(2024)Han, Chi, Chen, Wang, Zhao, Zou, and Hu]{han2023ffb}
Han, X., Chi, J., Chen, Y., Wang, Q., Zhao, H., Zou, N., and Hu, X.
\newblock Ffb: A fair fairness benchmark for in-processing group fairness methods.
\newblock In \emph{International Conference on Learning Representations}, 2024.

\bibitem[Hardt et~al.(2016)Hardt, Price, and Srebro]{hardt2016equality}
Hardt, M., Price, E., and Srebro, N.
\newblock Equality of opportunity in supervised learning.
\newblock \emph{Advances in neural information processing systems}, 29, 2016.

\bibitem[Harutyunyan et~al.(2021)Harutyunyan, Raginsky, Ver~Steeg, and Galstyan]{harutyunyan2021information}
Harutyunyan, H., Raginsky, M., Ver~Steeg, G., and Galstyan, A.
\newblock Information-theoretic generalization bounds for black-box learning algorithms.
\newblock \emph{Advances in Neural Information Processing Systems}, 34:\penalty0 24670--24682, 2021.

\bibitem[Harvey et~al.(2017)Harvey, Liaw, and Mehrabian]{harvey2017nearly}
Harvey, N., Liaw, C., and Mehrabian, A.
\newblock Nearly-tight vc-dimension bounds for piecewise linear neural networks.
\newblock In \emph{Conference on learning theory}, pp.\  1064--1068. PMLR, 2017.

\bibitem[Hellstr{\"o}m \& Durisi(2022)Hellstr{\"o}m and Durisi]{hellstrom2022new}
Hellstr{\"o}m, F. and Durisi, G.
\newblock A new family of generalization bounds using samplewise evaluated cmi.
\newblock \emph{Advances in Neural Information Processing Systems}, 35:\penalty0 10108--10121, 2022.

\bibitem[Huang \& Liu(2024)Huang and Liu]{huang2024bridging}
Huang, R. and Liu, H.
\newblock Bridging fairness gaps: A (conditional) distance covariance perspective in fairness learning.
\newblock \emph{arXiv preprint arXiv:2412.00720}, 2024.

\bibitem[Jiang et~al.(2020)Jiang, Pacchiano, Stepleton, Jiang, and Chiappa]{jiang2020wasserstein}
Jiang, R., Pacchiano, A., Stepleton, T., Jiang, H., and Chiappa, S.
\newblock Wasserstein fair classification.
\newblock In \emph{Uncertainty in artificial intelligence}, pp.\  862--872. PMLR, 2020.

\bibitem[Jose \& Simeone(2021)Jose and Simeone]{jose2021information}
Jose, S.~T. and Simeone, O.
\newblock Information-theoretic generalization bounds for meta-learning and applications.
\newblock \emph{Entropy}, 23\penalty0 (1):\penalty0 126, 2021.

\bibitem[Kamiran \& Calders(2012)Kamiran and Calders]{kamiran2012data}
Kamiran, F. and Calders, T.
\newblock Data preprocessing techniques for classification without discrimination.
\newblock \emph{Knowledge and information systems}, 33\penalty0 (1):\penalty0 1--33, 2012.

\bibitem[Kamishima et~al.(2012)Kamishima, Akaho, Asoh, and Sakuma]{kamishima2012fairness}
Kamishima, T., Akaho, S., Asoh, H., and Sakuma, J.
\newblock Fairness-aware classifier with prejudice remover regularizer.
\newblock In \emph{Machine Learning and Knowledge Discovery in Databases: European Conference, ECML PKDD 2012, Bristol, UK, September 24-28, 2012. Proceedings, Part II 23}, pp.\  35--50. Springer, 2012.

\bibitem[Kleinberg et~al.(2016)Kleinberg, Mullainathan, and Raghavan]{kleinberg2016inherent}
Kleinberg, J., Mullainathan, S., and Raghavan, M.
\newblock Inherent trade-offs in the fair determination of risk scores.
\newblock \emph{arXiv preprint arXiv:1609.05807}, 2016.

\bibitem[Kohavi \& Becker(1996)Kohavi and Becker]{kohavi1996adult}
Kohavi, R. and Becker, B.
\newblock Adult data set.
\newblock \emph{UCI machine learning repository}, 5:\penalty0 2093, 1996.

\bibitem[Kraskov et~al.(2004)Kraskov, St{\"o}gbauer, and Grassberger]{kraskov2004estimating}
Kraskov, A., St{\"o}gbauer, H., and Grassberger, P.
\newblock Estimating mutual information.
\newblock \emph{Physical Review E—Statistical, Nonlinear, and Soft Matter Physics}, 69\penalty0 (6):\penalty0 066138, 2004.

\bibitem[Laakom et~al.(2024)Laakom, Bu, and Gabbouj]{laakom2024class}
Laakom, F., Bu, Y., and Gabbouj, M.
\newblock Class-wise generalization error: an information-theoretic analysis.
\newblock \emph{arXiv preprint arXiv:2401.02904}, 2024.

\bibitem[Larson et~al.(2016)Larson, Mattu, Kirchner, and Angwin]{larson2016propublica}
Larson, J., Mattu, S., Kirchner, L., and Angwin, J.
\newblock Propublica compas analysis—data and analysis for ‘machine bias.’.
\newblock \emph{https://github. com/propublica/compas-analysis}, 2016.

\bibitem[Lee et~al.(2022)Lee, Bu, Sattigeri, Panda, Wornell, Karlinsky, and Feris]{lee2022maximal}
Lee, J., Bu, Y., Sattigeri, P., Panda, R., Wornell, G., Karlinsky, L., and Feris, R.
\newblock A maximal correlation approach to imposing fairness in machine learning.
\newblock In \emph{ICASSP 2022-2022 IEEE International Conference on Acoustics, Speech and Signal Processing (ICASSP)}, pp.\  3523--3527. IEEE, 2022.

\bibitem[Lee et~al.(2021)Lee, Bu, Rajan, Sattigeri, Panda, Das, and Wornell]{lee2021fair}
Lee, J.~K., Bu, Y., Rajan, D., Sattigeri, P., Panda, R., Das, S., and Wornell, G.~W.
\newblock Fair selective classification via sufficiency.
\newblock In \emph{International conference on machine learning}, pp.\  6076--6086. PMLR, 2021.

\bibitem[Li \& Liu(2022)Li and Liu]{li2022achieving}
Li, P. and Liu, H.
\newblock Achieving fairness at no utility cost via data reweighing with influence.
\newblock In \emph{International Conference on Machine Learning}, pp.\  12917--12930. PMLR, 2022.

\bibitem[Li et~al.(2022)Li, P{\'e}rez-Suay, Camps-Valls, and Sejdinovic]{li2022kernel}
Li, Z., P{\'e}rez-Suay, A., Camps-Valls, G., and Sejdinovic, D.
\newblock Kernel dependence regularizers and gaussian processes with applications to algorithmic fairness.
\newblock \emph{Pattern Recognition}, 132:\penalty0 108922, 2022.

\bibitem[Madras et~al.(2018)Madras, Creager, Pitassi, and Zemel]{madras2018learningadversariallyfairtransferable}
Madras, D., Creager, E., Pitassi, T., and Zemel, R.
\newblock Learning adversarially fair and transferable representations, 2018.
\newblock URL \url{https://arxiv.org/abs/1802.06309}.

\bibitem[Mehrabi et~al.(2021)Mehrabi, Morstatter, Saxena, Lerman, and Galstyan]{mehrabi2021survey}
Mehrabi, N., Morstatter, F., Saxena, N., Lerman, K., and Galstyan, A.
\newblock A survey on bias and fairness in machine learning.
\newblock \emph{ACM computing surveys (CSUR)}, 54\penalty0 (6):\penalty0 1--35, 2021.

\bibitem[Mehrotra \& Vishnoi(2022)Mehrotra and Vishnoi]{mehrotra2022fair}
Mehrotra, A. and Vishnoi, N.
\newblock Fair ranking with noisy protected attributes.
\newblock \emph{Advances in Neural Information Processing Systems}, 35:\penalty0 31711--31725, 2022.

\bibitem[Modak et~al.(2021)Modak, Asnani, and Prabhakaran]{modak2021renyi}
Modak, E., Asnani, H., and Prabhakaran, V.~M.
\newblock R{\'e}nyi divergence based bounds on generalization error.
\newblock In \emph{2021 IEEE Information Theory Workshop (ITW)}, pp.\  1--6. IEEE, 2021.

\bibitem[Mroueh et~al.(2021)]{mroueh2021fair}
Mroueh, Y. et~al.
\newblock Fair mixup: Fairness via interpolation.
\newblock In \emph{International Conference on Learning Representations}, 2021.

\bibitem[Neu et~al.(2021)Neu, Dziugaite, Haghifam, and Roy]{neu2021information}
Neu, G., Dziugaite, G.~K., Haghifam, M., and Roy, D.~M.
\newblock Information-theoretic generalization bounds for stochastic gradient descent.
\newblock In \emph{Conference on Learning Theory}, pp.\  3526--3545. PMLR, 2021.

\bibitem[Oneto et~al.(2020{\natexlab{a}})Oneto, Donini, Pontil, and Maurer]{oneto2020learning}
Oneto, L., Donini, M., Pontil, M., and Maurer, A.
\newblock Learning fair and transferable representations with theoretical guarantees.
\newblock In \emph{2020 IEEE 7th International Conference on Data Science and Advanced Analytics (DSAA)}, pp.\  30--39. IEEE, 2020{\natexlab{a}}.

\bibitem[Oneto et~al.(2020{\natexlab{b}})Oneto, Donini, Pontil, and Shawe-Taylor]{oneto2020randomized}
Oneto, L., Donini, M., Pontil, M., and Shawe-Taylor, J.
\newblock Randomized learning and generalization of fair and private classifiers: From pac-bayes to stability and differential privacy.
\newblock \emph{Neurocomputing}, 416:\penalty0 231--243, 2020{\natexlab{b}}.

\bibitem[Pessach \& Shmueli(2022)Pessach and Shmueli]{pessach2022review}
Pessach, D. and Shmueli, E.
\newblock A review on fairness in machine learning.
\newblock \emph{ACM Computing Surveys (CSUR)}, 55\penalty0 (3):\penalty0 1--44, 2022.

\bibitem[Pham et~al.(2023)Pham, Zhang, and Zhang]{pham2023fairness}
Pham, T.-H., Zhang, X., and Zhang, P.
\newblock Fairness and accuracy under domain generalization.
\newblock \emph{ArXiv}, 2023.

\bibitem[Rezaei et~al.(2021)Rezaei, Liu, Memarrast, and Ziebart]{rezaei2021robust}
Rezaei, A., Liu, A., Memarrast, O., and Ziebart, B.~D.
\newblock Robust fairness under covariate shift.
\newblock In \emph{Proceedings of the AAAI Conference on Artificial Intelligence}, volume~35, pp.\  9419--9427, 2021.

\bibitem[Rezazadeh et~al.(2021)Rezazadeh, Jose, Durisi, and Simeone]{rezazadeh2021conditional}
Rezazadeh, A., Jose, S.~T., Durisi, G., and Simeone, O.
\newblock Conditional mutual information-based generalization bound for meta learning.
\newblock In \emph{2021 IEEE International Symposium on Information Theory (ISIT)}, pp.\  1176--1181. IEEE, 2021.

\bibitem[Rodr{\'\i}guez~G{\'a}lvez et~al.(2021)Rodr{\'\i}guez~G{\'a}lvez, Bassi, Thobaben, and Skoglund]{rodriguez2021tighter}
Rodr{\'\i}guez~G{\'a}lvez, B., Bassi, G., Thobaben, R., and Skoglund, M.
\newblock Tighter expected generalization error bounds via wasserstein distance.
\newblock \emph{Advances in Neural Information Processing Systems}, 34:\penalty0 19109--19121, 2021.

\bibitem[Ross(2014)]{ross2014mutual}
Ross, B.~C.
\newblock Mutual information between discrete and continuous data sets.
\newblock \emph{PloS one}, 9\penalty0 (2):\penalty0 e87357, 2014.

\bibitem[Ruggieri et~al.(2023)Ruggieri, Alvarez, Pugnana, Turini, et~al.]{ruggieri2023can}
Ruggieri, S., Alvarez, J.~M., Pugnana, A., Turini, F., et~al.
\newblock Can we trust fair-ai?
\newblock In \emph{Proceedings of the AAAI Conference on Artificial Intelligence}, volume~37, pp.\  15421--15430, 2023.

\bibitem[Schumann et~al.(2019)Schumann, Wang, Beutel, Chen, Qian, and Chi]{schumann2019transfer}
Schumann, C., Wang, X., Beutel, A., Chen, J., Qian, H., and Chi, E.~H.
\newblock Transfer of machine learning fairness across domains.
\newblock \emph{arXiv preprint arXiv:1906.09688}, 2019.

\bibitem[Shah et~al.(2022)Shah, Bu, Lee, Das, Panda, Sattigeri, and Wornell]{shah2022selective}
Shah, A., Bu, Y., Lee, J.~K., Das, S., Panda, R., Sattigeri, P., and Wornell, G.~W.
\newblock Selective regression under fairness criteria.
\newblock In \emph{International Conference on Machine Learning}, pp.\  19598--19615. PMLR, 2022.

\bibitem[Sharifi-Malvajerdi et~al.(2019)Sharifi-Malvajerdi, Kearns, and Roth]{sharifi2019average}
Sharifi-Malvajerdi, S., Kearns, M., and Roth, A.
\newblock Average individual fairness: Algorithms, generalization and experiments.
\newblock \emph{Advances in neural information processing systems}, 32, 2019.

\bibitem[Shui et~al.(2020)Shui, Chen, Wen, Zhou, Gagn{\'e}, and Wang]{shui2020beyond}
Shui, C., Chen, Q., Wen, J., Zhou, F., Gagn{\'e}, C., and Wang, B.
\newblock Beyond h-divergence: Domain adaptation theory with jensen-shannon divergence.
\newblock \emph{arXiv preprint arXiv:2007.15567}, 6, 2020.

\bibitem[Shui et~al.(2022)Shui, Xu, Chen, Li, Ling, Arbel, Wang, and Gagn{\'e}]{shui2022learning}
Shui, C., Xu, G., Chen, Q., Li, J., Ling, C.~X., Arbel, T., Wang, B., and Gagn{\'e}, C.
\newblock On learning fairness and accuracy on multiple subgroups.
\newblock \emph{Advances in Neural Information Processing Systems}, 35:\penalty0 34121--34135, 2022.

\bibitem[Singh et~al.(2021)Singh, Singh, Mhasawade, and Chunara]{singh2021fairness}
Singh, H., Singh, R., Mhasawade, V., and Chunara, R.
\newblock Fairness violations and mitigation under covariate shift.
\newblock In \emph{Proceedings of the 2021 ACM Conference on Fairness, Accountability, and Transparency}, pp.\  3--13, 2021.

\bibitem[Sontag et~al.(1998)]{sontag1998vc}
Sontag, E.~D. et~al.
\newblock Vc dimension of neural networks.
\newblock \emph{NATO ASI Series F Computer and Systems Sciences}, 168:\penalty0 69--96, 1998.

\bibitem[Steinke \& Zakynthinou(2020)Steinke and Zakynthinou]{steinke2020reasoning}
Steinke, T. and Zakynthinou, L.
\newblock Reasoning about generalization via conditional mutual information.
\newblock In \emph{Conference on Learning Theory}, pp.\  3437--3452. PMLR, 2020.

\bibitem[Tang \& Zhang(2022)Tang and Zhang]{tang2022attainability}
Tang, Z. and Zhang, K.
\newblock Attainability and optimality: The equalized odds fairness revisited.
\newblock In \emph{Conference on Causal Learning and Reasoning}, pp.\  754--786. PMLR, 2022.

\bibitem[Wang et~al.(2023)Wang, Gao, and Calmon]{wang2023generalization}
Wang, H., Gao, R., and Calmon, F.~P.
\newblock Generalization bounds for noisy iterative algorithms using properties of additive noise channels.
\newblock \emph{J. Mach. Learn. Res.}, 24:\penalty0 26--1, 2023.

\bibitem[Wang \& Mao(2021)Wang and Mao]{wang2021generalization}
Wang, Z. and Mao, Y.
\newblock On the generalization of models trained with sgd: Information-theoretic bounds and implications.
\newblock \emph{arXiv preprint arXiv:2110.03128}, 2021.

\bibitem[Wang \& Mao(2023)Wang and Mao]{wang2023tighter}
Wang, Z. and Mao, Y.
\newblock Tighter information-theoretic generalization bounds from supersamples.
\newblock \emph{arXiv preprint arXiv:2302.02432}, 2023.

\bibitem[Woodworth et~al.(2017)Woodworth, Gunasekar, Ohannessian, and Srebro]{woodworth2017learning}
Woodworth, B., Gunasekar, S., Ohannessian, M.~I., and Srebro, N.
\newblock Learning non-discriminatory predictors.
\newblock In \emph{Conference on learning theory}, pp.\  1920--1953. PMLR, 2017.

\bibitem[Wu et~al.(2020)Wu, Manton, Aickelin, and Zhu]{wu2020information}
Wu, X., Manton, J.~H., Aickelin, U., and Zhu, J.
\newblock Information-theoretic analysis for transfer learning.
\newblock In \emph{2020 IEEE International Symposium on Information Theory (ISIT)}, pp.\  2819--2824. IEEE, 2020.

\bibitem[Wu et~al.(2024)Wu, Manton, Aickelin, and Zhu]{wu2024generalization}
Wu, X., Manton, J.~H., Aickelin, U., and Zhu, J.
\newblock On the generalization for transfer learning: An information-theoretic analysis.
\newblock \emph{IEEE Transactions on Information Theory}, 2024.

\bibitem[Xu \& Raginsky(2017)Xu and Raginsky]{xu2017information}
Xu, A. and Raginsky, M.
\newblock Information-theoretic analysis of generalization capability of learning algorithms.
\newblock \emph{Advances in Neural Information Processing Systems}, 30, 2017.

\bibitem[Yoon et~al.(2020)Yoon, Lee, and Lee]{yoon2020joint}
Yoon, T., Lee, J., and Lee, W.
\newblock Joint transfer of model knowledge and fairness over domains using wasserstein distance.
\newblock \emph{IEEE Access}, 8:\penalty0 123783--123798, 2020.

\bibitem[Zafar et~al.(2017)Zafar, Valera, Rogriguez, and Gummadi]{zafar2017fairness}
Zafar, M.~B., Valera, I., Rogriguez, M.~G., and Gummadi, K.~P.
\newblock Fairness constraints: Mechanisms for fair classification.
\newblock In \emph{Artificial intelligence and statistics}, pp.\  962--970. PMLR, 2017.

\bibitem[Zemel et~al.(2013)Zemel, Wu, Swersky, Pitassi, and Dwork]{zemel2013learning}
Zemel, R., Wu, Y., Swersky, K., Pitassi, T., and Dwork, C.
\newblock Learning fair representations.
\newblock In \emph{International conference on machine learning}, pp.\  325--333. PMLR, 2013.

\bibitem[Zhou et~al.(2022)Zhou, Tian, and Liu]{zhou2022individually}
Zhou, R., Tian, C., and Liu, T.
\newblock Individually conditional individual mutual information bound on generalization error.
\newblock \emph{IEEE Transactions on Information Theory}, 68\penalty0 (5):\penalty0 3304--3316, 2022.

\bibitem[Zhou et~al.(2023)Zhou, Tian, and Liu]{zhou2023exactly}
Zhou, R., Tian, C., and Liu, T.
\newblock Exactly tight information-theoretic generalization error bound for the quadratic gaussian problem.
\newblock \emph{arXiv preprint arXiv:2305.00876}, 2023.

\end{thebibliography}
\bibliographystyle{icml2025}

\newpage
\appendix
\onecolumn



\section{Extra Related Work} \label{relate_work}

\textbf{Information-theoretic Bounds:} Information-theoretic bounds have gained significant attention in recent years for characterizing the generalization behavior of learning algorithms~\citep{neu2021information,wu2020information,modak2021renyi,wang2021generalization,shui2020beyond,wang2023generalization,alabdulmohsin2020towards}. In the context of supervised learning, various generalization error bounds have been proposed, utilizing different information measures, such as KL divergence~\citep{laakom2024class,zhou2023exactly}, Wasserstein distance~\citep{rodriguez2021tighter}, and mutual information between the samples and model weights~\citep{xu2017information,bu2020tightening}. More recently, it has been demonstrated that tighter generalization bounds can be derived using conditional mutual information (CMI)~\citep{steinke2020reasoning,zhou2022individually}. Building on this framework, \citet{harutyunyan2021information} introduced $f$-CMI bounds based on model outputs. Additionally, \citet{hellstrom2022new} proposed tighter bounds leveraging CMI of the loss function, further improved by \citet{wang2023tighter} through the use of $\Delta L$ CMI. An exact characterization of the generalization error for the Gibbs algorithm is provided in ~\citep{aminian2021exact}. Beyond standard supervised settings, Information-theoretic bounds have been used to study contrastive learning~\citep{dongtowards}, meta-learning~\citep{chen2021generalization,jose2021information,rezazadeh2021conditional}, transfer learning~\citep{wu2020information,wu2024generalization}, and class-generalization error~\citep{laakom2024class}.

\textbf{Algorithmic Fairness:} Ensuring fairness in machine learning has become a critical area of research, aiming to mitigate biases that may disadvantage certain individuals or groups. Fairness is broadly categorized into \textit{group fairness}~\citep{dwork2012fairness,hardt2016equality,corbett2017algorithmic}, which seeks to ensure equitable treatment across predefined demographic groups, and \textit{individual fairness}~\citep{dwork2012fairness,sharifi2019average}, which enforces the principle that similar individuals should receive similar predictions. Bias mitigation strategies are typically classified into three approaches: \textit{pre-processing} methods that modify the data before training~\citep{kamiran2012data,calmon2017optimized}, \textit{post-processing} approaches that adjust model predictions after training~\citep{hardt2016equality,jiang2020wasserstein}, and \textit{in-processing} techniques, the main focus of this paper, that integrate fairness constraints within the learning algorithm~\citep{kamishima2012fairness,baharlouei2019r,mroueh2021fair,lee2021fair,lee2022maximal,shah2022selective,alghamdi2022beyond,shui2022learning,mehrotra2022fair}. 

\textbf{Fairness Theory:} In the theoretical analysis of fairness, prior works have primarily focused on the context of domain adaptation, imposing strong assumptions on distributional shifts \citep{chen2022fairness, singh2021fairness, coston2019fair, rezaei2021robust}. For instance, several studies have considered fairness under covariate shifts \citep{singh2021fairness, coston2019fair, rezaei2021robust}, demographic shifts \citep{giguere2022fairness}, and prior probability shifts \citep{biswas2021ensuring}. However, these assumptions may not always hold in real-world scenarios, limiting their practical applicability \citep{chen2022fairness, singh2021fairness, coston2019fair, rezaei2021robust, oneto2020learning, schumann2019transfer, yoon2020joint}. Beyond domain adaptation, \citet{pham2023fairness} provides a theoretical analysis of invariant representation learning for domain generalization, establishing upper bounds on both prediction error and unfairness in terms of the Jensen-Shannon (JS) distance. Additionally, \citet{huang2024bridging} investigates the convergence rate between empirical and population-level conditional distance covariances, which measure the statistical dependence between model predictions and sensitive attributes. In this paper, we study a different problem, i.e., fairness overfitting in a supervised learning setting, and derive information-theoretic bounds for generalization errors of different fairness losses. Another line of research has focused on the impossibility of satisfying multiple group-level fairness criteria simultaneously~\citet{chouldechova2017fair, kleinberg2016inherent,tang2022attainability}, highlighting the fundamental trade-offs between different fairness notions.  In contrast to these impossibility results, which establish lower bounds on population-level fairness risk, our work provides upper bounds on fairness generalization error. Notably, a model can have low generalization error but still perform poorly in terms of fairness on the population level.

\textbf{Fairness Generalization:} In the context of fairness generalization, it is worth highlighting the work of \citet{woodworth2017learning} and \citet{agarwal2018reductions}, which derive generalization guarantees for loss functions corresponding to DP and EO within specific algorithms. In contrast, our work targets a more general algorithmic framework in the DP and EO setting, accommodating a wider range of loss functions. Moreover, while their analysis primarily focuses on overall sample complexity, our bounds account for additional factors, including properties of the learning algorithm, the choice of loss function, dataset characteristics, and, importantly, the group balance in the data. A more closely related work is that of \citet{oneto2020randomized}, which derives a generalization bound (Theorem 1) for fairness under randomized algorithms. However, their KL-divergence-based bound is not computable in practice for realistic settings. In contrast, our bounds, particularly Theorem~\ref{CMI_the4}, are computable even for modern deep neural networks. This makes our results not only theoretically sound but also practically applicable, enabling the study of fairness generalization in real-world scenarios.

\section{Proofs of the Theorems/Lemmas in Section~\ref{section_classgenerror}}
This appendix includes the proofs of the results presented in the main text in Section~\ref{section_classgenerror}.

\begin{lemma} \label{boundedsubgaussian}
    Let $\rmX$ be a bounded random variable, i.e., $\rmX \in [a,b]$ almost surely. If $\E[\rmX]=0$, then $\rmX$ is $(b-a)$-sub-gaussian and we have:
    \begin{equation}
        \E[e^{\lambda \mX}] \leq e^{\frac{\lambda^2 (b-a)^2}{8}}, \: \: \forall \lambda \in \sR.
    \end{equation}
\end{lemma}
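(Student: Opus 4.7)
The statement is the classical Hoeffding's lemma, so the plan is to follow the standard convexity-plus-Taylor argument. First I would exploit convexity of the exponential: for any $x \in [a,b]$, write $x$ as a convex combination $x = \frac{b-x}{b-a}\,a + \frac{x-a}{b-a}\,b$, and apply Jensen (in reverse, via convexity of $t \mapsto e^{\lambda t}$) to obtain the pointwise bound $e^{\lambda x} \leq \frac{b-x}{b-a}e^{\lambda a} + \frac{x-a}{b-a}e^{\lambda b}$. This step uses only boundedness and is where the interval $[a,b]$ enters.

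Next, I would take expectations and invoke the hypothesis $\E[X] = 0$, which collapses the right-hand side to $\frac{b}{b-a}e^{\lambda a} - \frac{a}{b-a}e^{\lambda b}$. To package this cleanly, I introduce the reparametrization $p = -a/(b-a) \in [0,1]$ (valid because $a \leq 0 \leq b$ since $\E[X]=0$) and $u = \lambda(b-a)$, which rewrites the upper bound as $e^{-pu}\bigl(1-p+pe^{u}\bigr)$. Defining $\phi(u) = -pu + \ln\bigl(1-p+pe^{u}\bigr)$, the claim reduces to showing $\phi(u) \leq u^2/8$ for all $u \in \mathbb{R}$.

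The key analytic step is the second-derivative bound. A direct computation gives $\phi'(u) = -p + \frac{p e^{u}}{1-p+pe^{u}}$ and $\phi''(u) = \frac{p(1-p)e^{u}}{(1-p+pe^{u})^{2}} = s(1-s)$, where $s = \frac{pe^{u}}{1-p+pe^{u}} \in [0,1]$. Since $s(1-s) \leq 1/4$ for all $s \in [0,1]$, we get $\phi''(u) \leq 1/4$ uniformly. Combined with $\phi(0) = 0$ and $\phi'(0) = 0$, Taylor's theorem with remainder yields $\phi(u) \leq u^2/8$. Substituting back $u = \lambda(b-a)$ gives $\E[e^{\lambda X}] \leq e^{\lambda^{2}(b-a)^{2}/8}$, establishing the claim.

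The only mildly subtle step is the uniform bound $\phi''(u) \leq 1/4$; everything else is routine. No novel ingredients beyond convexity of the exponential, the AM--GM--type inequality $s(1-s) \leq 1/4$, and Taylor's theorem are needed, so the proof should be short and self-contained.
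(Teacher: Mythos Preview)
Your proposal is correct and is precisely the classical proof of Hoeffding's lemma. The paper, however, does not supply its own proof of this statement: Lemma~\ref{boundedsubgaussian} is simply stated in the appendix as a standard fact and then invoked (under the name ``Hoeffding's lemma'') in the proofs of Lemma~\ref{subgauss_alternative} and Theorem~\ref{MI_theorem}. So there is nothing to compare against beyond noting that your argument is the textbook one the paper implicitly relies on.
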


\begin{lemma} (Efron–Stein Inequality) \label{efron_inequality} (\citet[Theorem 3.1]{boucheron_book})
Let $ X_1, X_2, \ldots, X_n$ be independent random variables, and let $ \rmZ =  f(X_1, X_2, \ldots, X_n) $ be a real-valued square-integrable function of these variables. For each $ i $, let $ X_i' $ be an independent copy of $ X_i $, and define for every $i$:
\begin{equation}
 \rmZ'_i = f(X_1, \ldots, X_{i-1}, X'_i, X_{i+1}, \ldots, X_n).
\end{equation}
Then, the variance of $ \rmZ = f(X_1, X_2, \ldots, X_n)$ satisfies the following inequality:
\begin{equation}
    \Var(\rmZ) \leq \frac{1}{2} \sum_{i=1}^n \mathbb{E}\left[ (\rmZ - \rmZ'_i)^2 \right] = \inf_{\rmZ_i} \sum_{i=1}^n \mathbb{E}\left[ (\rmZ - \rmZ_i)^2 \right], 
\end{equation}
where the infimum is over $\rmZ_i=g_i(X_1, \ldots, X_{i-1}, X_{i+1}, \ldots, X_n)$ the class of all measurable functions $g_i: \gX^{n-1} \rightarrow \sR$.
\end{lemma}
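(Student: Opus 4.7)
The plan is to prove the Efron--Stein inequality by the classical martingale--difference (Doob) decomposition of $\rmZ$ with respect to the filtration generated by $X_1,\ldots,X_n$. The decomposition automatically yields an orthogonal expansion of $\Var(\rmZ)$; I then bound each component by a one--coordinate conditional variance and convert to the symmetric "replace--one--coordinate" form via the identity for squared differences of conditionally i.i.d.\ copies.

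Concretely, I would first let $\mathcal{F}_i=\sigma(X_1,\ldots,X_i)$ (with $\mathcal{F}_0$ trivial) and define $\Delta_i=\E[\rmZ\mid\mathcal{F}_i]-\E[\rmZ\mid\mathcal{F}_{i-1}]$, so that $\rmZ-\E[\rmZ]=\sum_{i=1}^n\Delta_i$. Because the $\Delta_i$ are martingale differences and $\rmZ$ is square--integrable, orthogonality gives $\Var(\rmZ)=\sum_{i=1}^n\E[\Delta_i^2]$. Next, I would introduce the partial integration $\rmZ^{(i)}:=\E[\rmZ\mid X_1,\ldots,X_{i-1},X_{i+1},\ldots,X_n]$, and use the independence of $X_i$ from the remaining coordinates to verify that $\E[\rmZ^{(i)}\mid\mathcal{F}_i]=\E[\rmZ^{(i)}\mid\mathcal{F}_{i-1}]=\E[\rmZ\mid\mathcal{F}_{i-1}]$. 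This identifies $\Delta_i=\E[\rmZ-\rmZ^{(i)}\mid\mathcal{F}_i]$, and Jensen's inequality followed by the tower property yields $\E[\Delta_i^2]\le\E[(\rmZ-\rmZ^{(i)})^2]$.

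The remaining step converts to the symmetric randomised form. Conditionally on $X_{\setminus i}:=(X_1,\ldots,X_{i-1},X_{i+1},\ldots,X_n)$, the quantities $\rmZ$ and $\rmZ'_i$ are i.i.d.\ copies with conditional mean $\rmZ^{(i)}$; hence
\begin{equation}
\E\bigl[(\rmZ-\rmZ'_i)^2\mid X_{\setminus i}\bigr]=2\,\Var(\rmZ\mid X_{\setminus i})=2\,\E\bigl[(\rmZ-\rmZ^{(i)})^2\mid X_{\setminus i}\bigr].
\end{equation}
Taking outer expectation and summing over $i$ gives the claimed bound $\Var(\rmZ)\le\tfrac{1}{2}\sum_{i=1}^n\E[(\rmZ-\rmZ'_i)^2]$. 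For the infimum characterisation, I would observe that $\rmZ^{(i)}$ is the $L^2$--projection of $\rmZ$ onto the subspace of square--integrable functions of $X_{\setminus i}$, so $\E[(\rmZ-\rmZ^{(i)})^2]\le\E[(\rmZ-\rmZ_i)^2]$ for every admissible $\rmZ_i=g_i(X_{\setminus i})$, with equality at $\rmZ_i=\rmZ^{(i)}$; summing over $i$ yields the equality stated in the lemma.

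The main technical obstacle is keeping the three distinct conditionings straight: the filtration expectation $\E[\cdot\mid\mathcal{F}_i]$, the leave--one--out expectation $\E[\cdot\mid X_{\setminus i}]$, and the "replacement" expectation over the fresh copy $X_i'$. The two linchpins are $\E[\rmZ'_i\mid X_{\setminus i}]=\rmZ^{(i)}$ (integrating the independent copy $X_i'$ against the law of $X_i$) and the conditional i.i.d.\ property of $(\rmZ,\rmZ'_i)$ given $X_{\setminus i}$; both must be justified by the assumed independence of the $X_i$. Once these are in hand, the martingale--difference bound, the symmetrisation identity, and the $L^2$--projection optimality assemble directly into the stated inequality and its infimum form.
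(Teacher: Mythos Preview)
Your proof is correct and is the standard martingale--difference argument for the Efron--Stein inequality. Note, however, that the paper does not actually prove this lemma: it is quoted verbatim from \citet[Theorem~3.1]{boucheron_book} and used as a black--box tool in the proof of Lemma~\ref{variance_bound_lemma}. Your argument is essentially the same proof that appears in that reference, so there is nothing to compare.
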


\subsection{ Proof of Lemma~\ref{variance_bound_lemma}} \label{variance_bound_lemma_proof}

\textbf{ Lemma~\ref{variance_bound_lemma} } (restated)
 Let $ g(\rmV)=g( \rmV_1, \rmV_2, \ldots,  \rmV_m) $ be a real-valued function of $m$ i.i.d random variables. Given fixed $v= (v_1,\ldots, v_m)$ and an index $i \in \{ 1, \ldots,m \}$, define $\Tilde{v}^{i}=( \Tilde{v}_1,\ldots,\Tilde{v}_m) $  such that $\Tilde{v}_j= v_j $ for all $ j \neq i$ except on $i$, i.e., $\Tilde{v}_i \neq v_j$. 
If $g(v)$ satisfies the following condition for a fixed $n^{v}_{0}$ (and as a consequence $n^{v}_{1} = m -n^{v
}_{0} $), 
\begin{equation} 
   \sup_{v\in \gV^m,\Tilde{v}_i \in \gV} |g(v)-  g(\Tilde{v}^{i}) |  \leq \beta,
   \quad 1 \leq i \leq m,
\end{equation}
where $\beta$ is a random variable that may depend on $n^{v}_{0}$ and hence is a random variable. 
Then, we have 
\begin{equation}
    \Var( g(\rmV)) \leq  \frac{m}{4}  \E[ \beta^2].
\end{equation}
\begin{proof}
    We will use Efron-Stein inequality in Lemma~\ref{efron_inequality} to show this result. Using the Lemma statement on $g(\rmV)$, we have
\begin{equation} \label{efron_applied}
    \Var(g(\rmV)) \leq \inf_{g_i} \sum_{i=1}^m \mathbb{E}\left[ \big(g(\rmV) - g_i(\rmV_0,\ldots,\rmV_{i-1},\rmV_{i+1},\ldots,\rmV_m) \big)^2 \right] 
\end{equation}
where $g_i$ is an arbitrary function. Using the law of total expectation in~\eqref{efron_applied}, we have
\begin{equation} \label{efron_applied2}
      \Var(g(\rmV)) \leq \inf_{g_i} \sum_{i=1}^m \E_{n^{\rmV}_{0}}\E\left[  \big(g(\rmV_u)- g_i(\rmV_0,\ldots,\rmV_{i-1},\rmV_{i+1},\ldots,\rmV_m) \big)^2 | n^{V}_{0}  \right] 
\end{equation}
Note that by conditioning on $n^{\rmV}_{0}$, we have also $n^{\rmV}_{1}=m - n^{\rmV}_{0}$ is fixed. As this is true for any arbitrary $g_i$ depending on $(\rmV_1,\ldots,\rmV_{i-1},\rmV_{i+1},\ldots,\rmV_m)$, we select 
\begin{align}
    g_i(\rmV_1,\ldots,\rmV_{i-1},\rmV_{i+1},\rmV_m) = & \frac{1}{2} \big(\sup_{x'_i \in \gX} g(\rmV_1,\ldots,\rmV_{i-1},x'_i,\rmV_{i+1},\ldots,\rmV_m) \\ 
   & + \inf_{x'_i \in \gX} g(\rmV_1,\ldots,\rmV_{i-1},x'_i,\rmV_{i+1},\ldots,\rmV_m)  \big)
\end{align}

Hence, as in the inner expectation in~\eqref{efron_applied2}, $ n^{\rmV}_{0}$ and $n^{\rmV}_{1}$  are fixed, using the main condition, we have
\begin{equation}
   | g(\rmV) - g_i(\rmV_0,\ldots,\rmV_{i-1},\rmV_{i+1},\ldots,\rmV_m) | \leq \frac{1}{2} \beta
\end{equation}
Taking the square on both sides,
\begin{equation}
   ( g(\rmV) - g_i(\rmV_0,\ldots,\rmV_{i-1},\rmV_{i+1},\ldots,\rmV_m) )^2 \leq \frac{1}{4} \beta^2
\end{equation}
Replacing in the inner expectation in~\eqref{efron_applied2}, we have
\begin{equation} \label{var_g}
      \Var(g(\rmV) ) \leq  \sum_{i=1}^m  \E [ \frac{1}{4} \beta^2 ] =  \frac{m}{4}  \E[ \beta^2]
\end{equation}
\end{proof}

\section{Proof of the Theorems/Lemmas in Section~\ref{section_DPgenerror}}

\subsection{ Proof of Lemma~\ref{bounded_difference_lemma}} \label{bounded_difference_lemma_proof}

\textbf{ Lemma~\ref{bounded_difference_lemma} } (restated)
Assume that $ |f| \in [0,1]$. let $g: \gV^{m} \rightarrow \sR $ be defined as $g(v)=g(v_1,\ldots,v_m) =  \ell^{F}_E(w,v)$, where $ \ell ^{F}_E(w,v)$ is defined \eqref{emp_risk_fairness}.  Then, for a fixed $w \in \gW$ and a fixed $n^v_{0}$ and $n^v_{1}$,  for any $1 \leq i \leq m$, we have
\begin{equation} 
   \sup_{v\in \gV^m,\Tilde{v}_i \in \gV} |g(v)-  g(\Tilde{v}^{i}) |  \leq \frac{1}{\sH(n^{v}_{0},n^v_{1})},
\end{equation}
where $\sH$ denotes a shifted harmonic mean operator, i.e., $\forall \{a_i \}_{i=1}^k, \, \sH(a_1,\cdots,a_k)= \frac{1}{\frac{1}{a_1+2} +\cdots + \frac{1}{a_k+2}} $.
\begin{proof}
For $ 1 \leq i\leq m $, we have
\begin{align}
 |g(v)-  g(\Tilde{v}^{i}) |  =\Big|  |   &  \frac{1}{n^{v}_{0}+2} \sum_{T_j=0} f(w,x_j) -  \frac{1}{n^{v}_{1}+2}\sum_{T_j=1} f(w,x_j)         |  \label{bound_t>1_start} \\
& - |     \frac{1}{n^{\Tilde{v}^{i}}_{0}+2} \sum_{\Tilde{T}_j=0} f(w,\Tilde{x}_j) - \frac{1}{n^{\Tilde{v}^{i}}_{1}+2}\sum_{\Tilde{T}_j=1} f(w,\Tilde{x}_j)         |       \Big| \\
 \leq \Big|  &     \frac{1}{n^{v}_{0}+2} \sum_{T_j=0} f(w,x_j) - \frac{1}{n^{v}_{1}+2}\sum_{T_j=1} f(w,x_j)          \\  
& - \frac{1}{n^{\Tilde{v}^{i}}_{0}+2} \sum_{\Tilde{T}_j=0} f(w,\Tilde{x}_j) + \frac{1}{n^{\Tilde{v}^{i}}_{1}+2}\sum_{\Tilde{T}_j=1} f(w,\Tilde{x}_j)           \Big|,  \label{mc_decomposition}
\end{align}
where the inequality comes from the fact $ | |\cdot|  - |\cdot| | \leq  | \cdot - \cdot | $.  Furthermore, as $\Tilde{v}_j= v_j $ for all $ j \neq i$, based on the values of $t_i$ and $\Tilde{t}_i$, we have 4 different cases:

\paragraph*{1st case $t_i= \Tilde{t}_i = 0$:}
In this case, we have $n^{v}_{0} = n^{\Tilde{v}^{i}}_{0}$, $n^{v}_{1} = n^{\Tilde{v}^{i}}_{1}$, and  $f(w,x_j) = f(w,\Tilde{x}_j) \forall j \neq i$.   Hence, based on~\eqref{mc_decomposition}, we have:
\begin{align}
   |g(v)-  g(\Tilde{v}^{i}) | & \leq |  \frac{1}{n^{v}_{0}+2} f(w,x_i) - \frac{1}{n^{v}_{0}+2}  f(w,\Tilde{x}_i)   | \\ 
    & \leq \frac{1}{n^{v}_{0}+2} |  f(w,x_i) - f(w,\Tilde{x}_i)    |  \leq  \frac{1}{n^{v}_{0}+2} 
\end{align}

\paragraph*{2nd case $t_i= \Tilde{t}_i = 1$:}
Similar to the previous case, we can show that 
\begin{align}
    |g(v)-  g(\Tilde{v}^{i}) | \leq \frac{1}{n^{v}_{1}+2} 
\end{align}

\paragraph*{3rd case $t_i= 0, \Tilde{t}_i = 1$:}
In this case, we have $n^{\Tilde{v}}_{0} = n^{v}_{0} -1 $, $n^{\Tilde{v}}_{1} = n^{v}_{1} +1 $. Hence, we have:
\begin{align}
    |g(v)- g(\Tilde{v}^{i})   |  \leq   
   & \Big|       \frac{1}{n^{v}_{0}+2} \sum_{T_j=0} f(w,x_j) - \frac{1}{n^{v}_{1}+2}\sum_{T_j=1} f(w,x_j)  \\      
   & \hspace{3 cm}-            \frac{1}{n^{v}_{0} +1 }  \sum_{\Tilde{T}_j=0} f(w,\Tilde{x}_j) + \frac{1}{n^{v}_{1} +3}\sum_{\Tilde{T}_j=1} f(w,\Tilde{x}_j)             \Big| \\  
& = \Big|    (\frac{1}{n^{v}_{0}+2} - \frac{1}{n^{v}_{0} +1 })  \sum_{t_j=0, j \neq i} f(w,x_j)   + ( \frac{1}{n^{v}_{1} +3} - \frac{1}{n^{v}_{1}+2}) \sum_{t_j=1, j \neq i} f(w,x_j)  \nonumber  \\  
& \hspace{3 cm}  +     \frac{1}{n^{v}_{0}+2} f(w,x_i) +  \frac{1}{n^{v}_{1} + 3} f(w,\Tilde{x}_i)  \Big| \\ 
& = \Big|   \frac{-1}{(n^{v}_{0}+2)(n^{v}_{0} +1) }    \sum_{t_j=0, j \neq i} f(w,x_j)   + \frac{-1}{(n^{v}_{1} +3)(n^{v}_{1}+2)   } \sum_{t_j=1, j \neq i} f(w,x_j)  \nonumber  \\ 
& \hspace{3 cm}  +     \frac{1}{n^{v}_{0}+2} f(w,x_i) +  \frac{1}{n^{v}_{1} + 3} f(w,\Tilde{x}_i)  \Big| \\ 
& = \Big| \Big(  \frac{1}{n^{v}_{0}+2} f(w,x_i) - \frac{1}{(n^{v}_{0}+2)(n^{v}_{0} +1) }    \sum_{t_j=0, j \neq i} f(w,x_j)  \Big)    \\
& \hspace{3 cm} +  \Big( \frac{1}{n^{v}_{1} + 3} f(w,\Tilde{x}_i) - \frac{1}{(n^{v}_{1} +3)(n^{v}_{1}+2)   } \sum_{t_j=1, j \neq i} f(w,x_j) \Big)  \Big| \nonumber  \\ 
& \leq \Big|  \frac{1}{n^{v}_{0}+2} f(w,x_i) - \frac{1}{(n^{v}_{0}+2)(n^{v}_{0} +1) }    \sum_{t_j=0, j \neq i} f(w,x_j)  \Big|    \\ \nonumber
& \hspace{3 cm} +  \Big|  \frac{1}{n^{v}_{1} + 3} f(w,\Tilde{x}_i) - \frac{1}{(n^{v}_{1} +3)(n^{v}_{1}+2)   } \sum_{t_j=1, j \neq i} f(w,x_j) \Big|   \\
\end{align}

where the last inequality comes from the fact that $ | |\cdot|  - |\cdot| | \leq  | \cdot - \cdot | $.  Next as $f(w,\cdot ) \in [0,1]$ is positive,  we have:
\begin{align}
   |g(v)- & g(\Tilde{v}^{i})|   \nonumber \\
& \leq \max\Big(  \frac{1}{n^{v}_{0}+2} f(w,x_i) , \frac{1}{(n^{v}_{0}+2)(n^{v}_{0} +1) }    \sum_{t_j=0, j \neq i} f(w,x_j)  \Big)  \nonumber  \\ 
& \hspace{2 cm} +  \max\Big( \frac{1}{n^{v}_{1} + 3} f(w,\Tilde{x}_i) , \frac{1}{(n^{v}_{1} +3)(n^{v}_{1}+2)   } \sum_{t_j=1, j \neq i} f(w,x_j) \Big) \\ 
& \leq \max\Big(  \frac{1}{n^{v}_{0}+2}  , \frac{1}{(n^{v}_{0}+2)(n^{v}_{0} +1) }    \sum_{t_j=0, j \neq i} 1  \Big)  \nonumber  \\ 
& \hspace{2 cm} +  \max\Big( \frac{1}{n^{v}_{1} + 3}  , \frac{1}{(n^{v}_{1} +3)(n^{v}_{1}+2)   } \sum_{t_j=1, j \neq i} 1 \Big) \\ 
& = \max\Big(  \frac{1}{n^{v}_{0}+2}  , \frac{1}{(n^{v}_{0}+2)(n^{v}_{0} +1) }  (n^{v}_{0}-1)   \Big)  \nonumber  \\ 
& \hspace{2 cm} +  \max\Big( \frac{1}{n^{v}_{1} + 3}  , \frac{1}{(n^{v}_{1} +3)(n^{v}_{1}+2)   } (n^{v}_{1}-1) \Big) \label{eqstep1}
\end{align}
As, $\forall n^{v}_{0},  n^{v}_{1}$, we have $\frac{n^{v}_{0}-1}{n^{v}_{0}+1} \leq 1$ and $\frac{ n^{v}_{1} - 1 }{n^{v}_{1} + 2} \leq 1 $. Hence, we have from \eqref{eqstep1}, 
\begin{align}
   |g(v)-  g(\Tilde{v}^{i})   | 
 & \leq \max\Big(  \frac{1}{n^{v}_{0}+2}  , \frac{1}{n^{v}_{0}+2}     \Big)  +  \max\Big( \frac{1}{n^{v}_{1} + 3}  , \frac{1}{n^{v}_{1} +3   }  \Big) \\ 
 & \leq \frac{1}{n^{v}_{0}+2}   + \frac{1}{n^{v}_{1} + 3}  \leq \frac{1}{n^{v}_{0}+2}  + \frac{1}{n^{v}_{1} + 2}  
\end{align}

\paragraph*{4th case $t_i= 1, \Tilde{t}_i = 0$:}
Similarly, we can show that 
\begin{align}
     |g(v)-  g(\Tilde{v}^{i}) |  \leq \frac{1}{n^{v}_{0} + 3}  + \frac{1}{n^{v}_{1}+2}  \leq \frac{1}{n^{v}_{0}+2} + \frac{1}{n^{v}_{1}+2} 
\end{align}

Hence, using the  cases discussed above, for a fixed $w$, we  always have 
\begin{equation} \label{bounded_difference1}
|g(v)-  g(\Tilde{v}^{i}) | \leq \frac{1}{n^{v}_{0}+2} + \frac{1}{n^{v}_{1}+2}  \quad 1 \leq i \leq m 
\end{equation}

For a fixed $n^{v}_{0}$ and $n^{v}_{1}$, taking the supremum with respect $v,\Tilde{v}^i $ in both sides of~\eqref{bounded_difference1}, we have
\begin{equation} \label{bounded_difference2}
   \sup_{v\in \gV^m,\Tilde{v}^i_i \in \gV} |g(v)-  g(\Tilde{v}^{i}) |  \leq \frac{1}{n^{v}_{0}+2} + \frac{1}{n^{v}_{1}+2}  \quad 1 \leq i \leq n
\end{equation}
which completes the proof.
\end{proof}

\subsection{ Proof of Theorem~\ref{MI_theorem}} \label{MI_theorem_proof}
\textbf{ Theorem~\ref{MI_theorem} } (restated)
Assume that $ |f| \in [0,1]$, then for any $m \in \{2,\cdots,n\} $
\begin{equation}
    \overline{gen}_{fairness}\leq \frac{1}{|C^m_n|} \sum_{u \in C^m_n}  \sqrt{\frac{m }{2} \E_{\rmV_{u}}[ (\frac{1}{n^{\rmV_{u}}_{0}+2} + \frac{1}{n^{\rmV_{u}}_{1}+2} )^2] I (\rmW, \rmV_{u}) } 
\end{equation}
where  $C^m_n$ is the set of $m$-combinations. 
\begin{proof}
Let $m \in \{2,\cdots,n\} $ be a fixed number and let $P^m_n$ be the set of $m$-permutations of $n$. For a fixed $u \in P^m_n$, the corresponding $\ell^{F}_E(w,Z_{u})$ for the set $Z_{u} \sim \mu^m$ is as follows:
\begin{equation} 
  \ell^{F}_E(w,Z_{u}) \triangleq  \Big|\frac{1}{n^{V_{u}}_{0}+2} \sum_{T_i=0} f(w,x_i) - \frac{1}{n^{V_{u}}_{1}+2}\sum_{T_i=1} f(w,x_i) \Big| 
\end{equation}
 Then, using the notion of $\ell^{F}_E(\rmW,\rmZ_{u})$, starting from Definition~\ref{gen_fair} we have:
\begin{align}
\overline{gen}_{fairness}   = & |\E_{P_{\rmW} \otimes P_{\rmS}} [   \ell^{F}_E(\rmW,\rmS) ]  -     \E_{P_{\rmW,\rmS}} [\ell^{F}_E(\rmW,\rmS)   ] | \\
= & \Big|  \E_{P_{\overline{\rmW},\overline{\rmZ}_{u}}} [ \ell^{F}_E(\overline{\rmW},\overline{\rmZ}_{u}) ] - \frac{1}{|P^m_n|} \sum_{u \in P^m_n}  \E_{P_{\rmW,\rmZ_{u}}} [\ell^{F}_E(\rmW,\rmZ_{u})] \Big| \\ 
\leq &  \frac{1}{|P^m_n|} \sum_{u \in P^m_n} \Big|  \E_{P_{\overline{\rmW},\overline{\rmZ}_{u}}} [ \ell^{F}_E(\overline{\rmW},\overline{\rmZ}_{u}) ] -\E_{P_{\rmW,\rmZ_{u}}} [\ell^{F}_E(\rmW,\rmZ_{u})]  \Big| \label{gen_formulation1}
\end{align}
Hence, to bound $\overline{gen}_{fairness}$, we only need to bound $ \Big|  \E_{P_{\overline{\rmW},\overline{\rmZ}_{u}}} [ \ell^{F}_E(\overline{\rmW},\overline{\rmZ}_{u}) ] -\E_{P_{\rmW,\rmZ_{u}}} [\ell^{F}_E(\rmW,\rmZ_{u})]  \Big|$.

Using the Donsker–Varadhan variational representation of the relative entropy,  we have for any function $h(\cdot)$ and for every $\lambda$
\begin{align} 
    I (\rmW, \rmV_{u})\geq \E_{P_{\rmW,\rmV_{u}}}[\lambda  h(\rmW,\rmV_{u}) ]  - \log \E_{P_{\overline{\rmW}}\otimes P_{\overline{\rmV_{u}}}} [e^{\lambda  h(\overline{\rmW},\overline{V_{u}})} ],  \forall \lambda \in \sR. 
 \end{align} 
By taking $h(w,v_{u}) =   \ell^{F}_E(w,v_{u}) -    \E_{P_{\overline{\rmV_{u}}}} [ \ell^{F}_E(w,\overline{\rmV_{u} }) ] $

\begin{align} 
    I (\rmW, \rmV_{u})\geq & \E_{P_{\rmW,\rmV_{u}}}[\lambda  h(\rmW,\rmV_{u}) ]  - \log \E_{P_{\overline{\rmW}}\otimes P_{\overline{\rmV_{u}}}} [e^{\lambda  h(\overline{\rmW},\overline{V_{u}})} ],  \forall \lambda \in \sR. \\
            I (\rmW, \rmV_{u})\geq & \E_{P_{\rmW,\rmV_{u}}} \Big[\lambda  \big(   \ell^{F}_E(\rmW,\rmV_{u}) -    \E_{P_{\overline{V_{u}}}} [ \ell^{F}_E(\rmW,\overline{\rmV_{u} }) ] \big) \Big]  \nonumber \\
& - \log \E_{P_{\overline{\rmW}}\otimes P_{\overline{\rmV_{u}}}} [e^{\lambda ( \ell^{F}_E(\overline{\rmW},\overline{\rmV_{u}}) -    \E_{P_{\overline{V_{u}}}} [ \ell^{F}_E(\overline{\rmW},\overline{V_{u} }) ]) } ],  \forall \lambda \in \sR. \\
        I (\rmW, \rmV_{u})\geq & \E_{P_{\rmW,\rmV_{u}}} \Big[\lambda  \big(   \ell^{F}_E(\rmW,\rmV_{u}) -    \E_{P_{\overline{V_{u}}}} [ \ell^{F}_E(\rmW,\overline{\rmV_{u} }) ] \big) \Big]  \nonumber \\
          & - \log \E_{P_{\overline{\rmW}}} \Big[ \E_{P_{\overline{\rmV_{u}}}} [e^{\lambda ( \ell^{F}_E(w,\overline{\rmV_{u}}) -    \E_{P_{\overline{V_{u}}}} [ \ell^{F}_E(w,\overline{V_{u} }) ]) } ] | \overline{\rmW}=w \Big],  \forall \lambda \in \sR. \label{donsker11}
   \end{align} 
Next, for a fixed $w$, we will bound $\E_{P_{\overline{\rmV_{u}}}} [e^{\lambda ( \ell^{F}_E(w,\overline{\rmV_{u}}) -    \E_{P_{\overline{V_{u}}}} [ \ell^{F}_E(w,\overline{V_{u} }) ]) } ]$. To this end, we want to bound the variance of $\ell^{F}_E(w,\overline{\rmV_{u}}) -    \E_{P_{\overline{V_{u}}}} [ \ell^{F}_E(w,\overline{V_{u} }) ]$. As $\Var(\rmX+s) = \Var(\rmX), $ $\forall s \in sR$ and $\E[\ell^{F}_E(w,\rmV_{u}) ]$ is constant for a fixed $w$, we have
\begin{equation} \label{variance_equality}
     \Var\Big(\ell^{F}_E(w,\overline{\rmV_{u}})  - \E[\ell^{F}_E(\rmW,\overline{\rmV_{u}}) ] \Big)=\Var\big(\ell^{F}_E(w,\overline{\rmV_{u}})\big)
\end{equation} 
Hence, it is sufficient to bound the variance $\ell^{F}_E(w,\overline{\rmV_{u}})$.  We will use the results of Lemma~\ref{bounded_difference_lemma} and Lemma~\ref{variance_bound_lemma} to upper bound the aforementioned variance. 

As $ |f| \in [0,1]$, we have 
\begin{align} \label{square_integrable_l}
  \ell(w,v_u) = \Big|\frac{1}{n_{0}+2} \sum_{T_i=0} f(w,x_i) - \frac{1}{n_{1}+2}\sum_{T_i=1} f(w,x_i) \Big| & \leq \max \Bigg( \Big|\frac{1}{n_{0}+2} \sum_{T_i=0} f(w,x_i)\Big|, \Big| \frac{1}{n_{1}+2}\sum_{T_i=1} f(w,x_i) \Big| \Bigg) \\
 &  \leq \max(\frac{n_{0}}{n_{0}+2},\frac{n_{1}}{n_{1}+2}) \leq \max(1,1) = 1 
\end{align}

Hence, $\ell(w,v_u)$ is bounded and therefore square-integrable. Applying Lemma~\ref{bounded_difference_lemma} with $g(v_u) = \ell(w,v_u)$, we have for fixed $w, n^{V_{u}}_{0} $ :
\begin{equation} 
   \sup_{v_u\in \gV^m,\Tilde{v}_i \in \gV} |g(v_u)-  g(\Tilde{v}^{i}_u) |  \leq \frac{1}{n^{V_{u}}_{0}+2} + \frac{1}{n^{V_{u}}_{1}+2}  \quad 1 \leq i \leq n
\end{equation}
Hence, $g(v_u)=\ell(w,v_u)$ verifies the condition in Lemma~\ref{variance_bound_lemma} with $\beta=\frac{1}{n^{V_{u}}_{0}+2} + \frac{1}{n^{V_{u}}_{1}+2}  $. Thus, using Lemma~\ref{variance_bound_lemma}, the variance of the random variable $\ell^{F}_E(w,\overline{\rmV_{u}})$ can be bounded as 
\begin{equation}
    \Var(\ell^{F}_E(w,\overline{\rmV_{u}})) \leq  \frac{m}{4}  \E[ (\frac{1}{n^{\rmV_{u}}_{0}+2} + \frac{1}{n^{\rmV_{u}}_{1}+2} )^2],
\end{equation}
for every $w \in \gW$.  Thus, from~\eqref{variance_equality},we have
\begin{equation}
    \Var\big(\ell^{F}_E(w,\rmV_{u})  - \E[\ell^{F}_E(\rmW,\rmV_{u}) ] \big)=\Var(\ell^{F}_E(w,\rmV_{u}))  \leq  \frac{m}{4}  \E[ (\frac{1}{n^{\rmV_{u}}_{0}+2} + \frac{1}{n^{\rmV_{u}}_{1}+2} )^2]
\end{equation}

So, the random variable $\ell^{F}_E(w,\rmV_{u})  - \E[\ell^{F}_E(w,\rmV_{u}) ]$ has a bounded variance and an expectation $\E_{P_{\overline{\rmV_{u}}}}[\ell^{F}_E(w,\rmV_{u}) - \E[\ell^{F}_E(w,\rmV_{u})]]=0$.  Hence, using Hoeffding's lemma, we have for every $w \in \gW$:
\begin{equation}
    \E_{P_{\overline{\rmV_{u}}}} [e^{\lambda ( \ell^{F}_E(w,\overline{\rmV_{u}}) -    \E_{P_{\overline{V_{u}}}} [ \ell^{F}_E(w,\overline{V_{u} }) ]) } ] \leq  e^{\frac{\lambda^2 m
    }{8}  \E [ (\frac{1}{n^{\rmV_{u}}_{0}+2} + \frac{1}{n^{\rmV_{u}}_{1}+2} )^2 ] } 
\end{equation}

Replacing in \eqref{donsker11}, we have

\begin{align} 
 I (\rmW, \rmV_{u})\geq  \E_{P_{\rmW,\rmV_{u}}} \Big[\lambda & \big(   \ell^{F}_E(\rmW,\rmV_{u}) -    \E_{P_{\overline{V_{u}}}} [ \ell^{F}_E(\rmW,\overline{\rmV_{u} }) ] \big) \Big]  \nonumber \\
          & - \log \E_{P_{\overline{\rmW}}} \Big[ e^{\frac{\lambda^2 m
    }{8}  \E [ (\frac{1}{n^{\rmV_{u}}_{0}+2} + \frac{1}{n^{\rmV_{u}}_{1}+2} )^2 ] } \Big| \overline{\rmW}=w \Big],  \forall \lambda \in \sR. \\
    I (\rmW, \rmV_{u}) \geq  \E_{P_{\rmW,\rmV_{u}}} \Big[\lambda & \big(   \ell^{F}_E(\rmW,\rmV_{u}) -    \E_{P_{\overline{V_{u}}}} [ \ell^{F}_E(\rmW,\overline{\rmV_{u} }) ] \big) \Big] \nonumber \\
  &  -   \frac{\lambda^2 m}{8}  \E [(\frac{1}{n^{\rmV_{u}}_{0}+2} + \frac{1}{n^{\rmV_{u}}_{1}+2} )^2],  \forall \lambda \in \sR. \label{donsker2}
   \end{align} 
where the last inequality comes from the fact that $\E [(\frac{1}{n^{\rmV_{u}}_{0}+2} + \frac{1}{n^{\rmV_{u}}_{1}+2} )^2]$ is independent of $\rmW$. From~\eqref{donsker2}, we have

\begin{align}
\frac{\lambda^2 m}{8}  \E [(\frac{1}{n^{\rmV_{u}}_{0}+2} + \frac{1}{n^{\rmV_{u}}_{1}+2} )^2] -  
\lambda \Big( \E_{P_{\rmW,\overline{\rmV}_{u}}} [ \ell^{F}_E(\rmW,\overline{\rmV}_{u}) ]  & -\E_{P_{\rmW,\rmV_{u}}} [\ell^{F}_E(\rmW,\rmV_{u})] \Big) \nonumber \\
  &   +  I (\rmW, \rmV_{u}) \geq 0  ,  \forall \lambda \in \sR. \label{donsker3}
   \end{align} 

So, the parabola (with respect to $\lambda$) on the left side of \eqref{donsker3} is always positive. Hence, its discriminant must be negative:

\begin{equation}
    \Big|  \E_{P_{\rmW,\overline{\rmZ}_{u}}} [ \ell^{F}_E(\rmW,\overline{\rmZ}_{u}) ] -\E_{P_{\rmW,\rmZ_{u}}} [\ell^{F}_E(\rmW,\rmZ_{u})]  \Big| \leq \sqrt{\frac{m}{2} \E[ (\frac{1}{n^{V_{u}}_{0}+2} + \frac{1}{n^{V_{u}}_{1}+2} )^2 ] I (\rmW, \rmV_{u}) }
\end{equation}

Replacing in \eqref{gen_formulation1}, we have
\begin{equation}
    \overline{gen}_{fairness}\leq \frac{1}{|P^m_n|} \sum_{u \in P^m_n}  \sqrt{\frac{m}{2} \E[(\frac{1}{n^{\rmV_{u}}_{0}+2} + \frac{1}{n^{\rmV_{u}}_{1}+2} )^2] I (\rmW, \rmV_{u})}
\end{equation}
In addition, as mutual information is invariant to sample permutations, we have 
\begin{equation}
    \overline{gen}_{fairness}\leq \frac{1}{|C^m_n|} \sum_{u \in C^m_n}  \sqrt{\frac{m}{2} \E[(\frac{1}{n^{\rmV_{u}}_{0}+2} + \frac{1}{n^{\rmV_{u}}_{1}+2} )^2] I (\rmW, \rmV_{u})}
\end{equation}
where  $C^m_n$ is the set of $m$-combinations.

\end{proof}

\subsection{ Proof of Lemma~\ref{subgauss_alternative}} \label{subgauss_alternative_proof}
\textbf{ Lemma~\ref{subgauss_alternative} } (restated)
$\forall \lambda \in \sR$, we have
    \begin{equation}
    \E_{\overline{\rmR}_u|\rmZ_{[2n]}= z_{[2n]},\overline{\rmW}=w} \big[ e^{ \lambda (\ell^{F}_E(w,\rmS_u) -  \ell^{F}_E(w,\overline{\rmS}_u) )   } \big] \leq  e^{\frac{\lambda^2 m
    }{8}  \E [ \big(\frac{1}{n^{S_{u}}_{0}+2} + \frac{1}{n^{S_{u}}_{1}+2}+ \frac{1}{n^{\overline{S}_{u}}_{0}+2} + \frac{1}{n^{\overline{S}_{u}}_{1}+2} \big)^2 ] } 
\end{equation}
\begin{proof}
     we will bound $\log \E_{\overline{\rmR}_u|\rmZ_{[2n]}= z_{[2n]},\overline{\rmW}=w} [e^{ \lambda ( \ell^{F}_E(w,\rmS_u) -  \ell^{F}_E(w,\overline{\rmS}_u))}]$ by bounding its variance using Lemma~\ref{bounded_difference_lemma} and Lemma~\ref{variance_bound_lemma}. 

Given a fixed $z_{[2n]} \in \gZ^{2n}$ and $w \in \gW$, we define $g_{z_{[2n]}}: \gV^{n} \rightarrow \sR $ as follows: $g_{z_{[2n]}}(\rmR_u) =  \ell^{F}_E(w,\rmS_u) -  \ell^{F}_E(w,\overline{\rmS}_u)$. Furthermore, given a set $r_u= (r_1,\ldots, r_m)$  and an index $i \in \{ 1, \ldots,m \}$, define $\Tilde{r}^{i}_u=( \Tilde{r}_1,\ldots,,\Tilde{r}_m) $  such that $\Tilde{r}_j= r_j $ for all $ j \neq i$ except on $i$, i.e., $\Tilde{r}_i \neq r_j$.  The goal is to upper-bound  $|g_{z_{[2n]}}(r_u)-  g_{z_{[2n]}}(\Tilde{r}_u) |  $ for $i \in \{1,\ldots,m\}$.

For $i \in [1,m]$, we have
\begin{align}
|g_{z_{[2n]}}(r_u)-  g_{z_{[2n]}}(\Tilde{r}_u) |  = |& \ell^{F}_E(w,S_u)- \ell^{F}_E(w,\overline{S}_u) - \ell^{F}_E( w,\Tilde{S_u})+ \ell^{F}_E(w,\overline{\Tilde{S}}_u)  | \\
\leq | &  \ell^{F}_E(w,S_u)- \ell^{F}_E( w,\Tilde{S}_u) | + | \ell^{F}_E(w,\overline{S}_u)- \ell^{F}_E( w,\overline{\Tilde{S}}_u)|  
\end{align}
Hence, we can bound each term separately using Lemma~\ref{bounded_difference_lemma}. It follows that:
\begin{align}
|g_{z_{[2n]}}(r_u)-  g_{z_{[2n]}}(\Tilde{r}^i_u) | & \leq \big(\frac{1}{n^{S_{u}}_{0}+2} + \frac{1}{n^{S_{u}}_{1}+2} \big)   + \big( \frac{1}{n^{\overline{S}_{u}}_{0}+2} + \frac{1}{n^{\overline{S}_{u}}_{1}+2} \big)   \\
\end{align}
Hence we have

\begin{align} \label{bounded_differenceCMI2}
    |g_{z_{[2n]}}(r_u)-  g_{z_{[2n]}}(\Tilde{r}^i_u) | & \leq \frac{1}{n^{S_{u}}_{0}+2} + \frac{1}{n^{S_{u}}_{1}+2}+ \frac{1}{n^{\overline{S}_{u}}_{0}+2} + \frac{1}{n^{\overline{S}_{u}}_{1}+2}   \quad 1 \leq i \leq m 
\end{align}

Furthermore, \( g_{z_{[2n]}}(\rmR_u) \), being the difference of two bounded random variables (see~\eqref{square_integrable_l}), is itself bounded and hence square-integrable. Now, we have all the ingredients to apply Lemma~\ref{variance_bound_lemma}. Using the Lemma statement on $g_{z_{[2n]}}(\rmR_u) =  \ell^{F}_E(w,\rmS_u) -  \ell^{F}_E(w,\overline{\rmS}_u)$, we have

\begin{equation} \label{bound_varianceCMI}
    \Var\Big(\ell^{F}_E(w,\rmS_u) -  \ell^{F}_E(w,\overline{\rmS}_u)\Big) \leq  \frac{m}{4}  \E[ \big(\frac{1}{n^{S_{u}}_{0}+2} + \frac{1}{n^{S_{u}}_{1}+2}+ \frac{1}{n^{\overline{S}_{u}}_{0}+2} + \frac{1}{n^{\overline{S}_{u}}_{1}+2} \big)^2]
\end{equation}
Furthermore, as $\overline{\rmR}_i$ are independent  Rademacher variables, we have
\begin{equation}
    \E_{\overline{\rmR}_u|\rmZ_{[2n]}= z_{[2n]},\overline{\rmW}=w}[\ell^{F}_E(w,\rmS_u) -  \ell^{F}_E(w,\overline{\rmS}_u)] = 0
\end{equation}
Hence, the random variable $\ell^{F}_E(w,\rmS_u) -  \ell^{F}_E(w,\overline{\rmS}_u)$ is centered and with bounded variance (as shown in~\eqref{bound_varianceCMI}). It follows that using Hoeffding's lemma, we have the final results:
\begin{equation}
    \E\big[ e^{ \lambda (\ell^{F}_E(w,\rmS_u) -  \ell^{F}_E(w,\overline{\rmS}_u) )   } \big] \leq  e^{\frac{\lambda^2 m
    }{8}  \E [ \big(\frac{1}{n^{S_{u}}_{0}+2} + \frac{1}{n^{S_{u}}_{1}+2}+ \frac{1}{n^{\overline{S}_{u}}_{0}+2} + \frac{1}{n^{\overline{S}_{u}}_{1}+2} \big)^2 ] } 
\end{equation}

\end{proof}

\subsection{ Proof of Theorem~\ref{CMI_the1}} \label{CMI_the1_proof}
\textbf{ Theorem~\ref{CMI_the1} } (restated)
Assume that $ |f| \in [0,1]$, then for any $m \in \{2,\cdots,n\} $
\begin{equation}
\overline{gen}_{fairness}  \leq \frac{1}{|C^m_n|}   \E_{\rmZ_{[2n]}} \Big[ \sum_{u \in C^m_n} \sqrt{\frac{m}{2} \E_{\rmR_u}[ \big(\frac{1}{n^{S_{u}}_{0}+2} + \frac{1}{n^{S_{u}}_{1}+2}+ \frac{1}{n^{\overline{S}_{u}}_{0}+2} + \frac{1}{n^{\overline{S}_{u}}_{1}+2} \big)^2 ] I_{z_{[2n]}} (\rmW, \rmR_u) } \Big]
\end{equation}
where  $C^m_n$ is the set of $m$-combinations. 
\begin{proof}
Similar to the MI setting in Theorem~\ref{MI_theorem}, the fairness generalization in the supersample setting, can be bounded as follows: 
\begin{align}
\overline{gen}_{fairness}   = & | \E_{P_{\rmW,\rmZ_{[2n]},\rmR}} [    \ell^{F}_E(\rmW,\rmS) - \ell^{F}_E(\rmW,\overline{\rmS})   ]  | \\
\leq  &  \E_{P_{\rmZ_{[2n]}}}  |  \E_{P_{\rmW,\rmR |\rmZ_{[2n]}}} [    \ell^{F}_E(\rmW,\rmS) -  \ell^{F}_E(\rmW,\overline{\rmS})   ]  | \\
\leq & \frac{1}{|P^m_n|} \sum_{u \in P^m_n}  \E_{P_{\rmZ_{[2n]}}}  |  \E_{P_{\rmW,\rmR |\rmZ_{[2n]}}} [    \ell^{F}_E(\rmW,\rmS_u) -  \ell^{F}_E(\rmW,\overline{\rmS}_u)   |
\label{CMI_gen_formulation}
\end{align}

For a fixed $u \in  P^m_n$, let $(\overline{\rmW}, \overline{\rmR_u})$ be an independent copy of $(\rmW ,\rmR_u)$. The disintegrated mutual information $I_{z_{[2n]}} (\rmW, \rmR_u )$ is equal to:
\begin{equation}    
    I_{z_{[2n]}} (\rmW, \rmR_u  )  = 
    D\big(P_{\rmW, \rmR_u|\rmZ_{[2n]}= z_{[2n]} }\|P_{\rmW|\rmZ_{[2n]}= z_{[2n]}} P_{\rmR_u}\big), 
\end{equation}
Thus, by the Donsker–Varadhan variational representation of KL
divergence,  $\forall \lambda \in \sR$, we have

\begin{align} 
       I_{z_{[2n]} } (\rmW; \rmR_u )  \geq   \lambda \E_{\rmW, \rmR_u|\rmZ_{[2n]}= z_{[2n]}}&[  \ell^{F}_E(\rmW,\rmS_u) -  \ell^{F}_E(\rmW,\overline{\rmS}_u) ]  \nonumber \\
   & -     \log \E_{\overline{\rmW},\overline{\rmR}_u|\rmZ_{[2n]}= z_{[2n]}} [e^{ \lambda  ( \ell^{F}_E(\overline{\rmW},\rmS_u) -  \ell^{F}_E(\overline{\rmW},\overline{\rmS}_u)) }]. \\
          I_{z_{[2n]} } (\rmW; \rmR_u )  \geq   \lambda \E_{\rmW, \rmR_u|\rmZ_{[2n]}= z_{[2n]}}&[  \ell^{F}_E(\rmW,\rmS_u) -  \ell^{F}_E(\rmW,\overline{\rmS}_u) ]  \nonumber \\
   & -     \log \E_{\overline{\rmW}} \Big[ \E_{\overline{\rmR}_u|\rmZ_{[2n]}= z_{[2n]},\overline{\rmW}=w} [e^{ \lambda  ( \ell^{F}_E(w,\rmS_u) -  \ell^{F}_E(w,\overline{\rmS}_u)) } ] \Big]. \label{CMI_classvariational}
\end{align}

Using  Lemma~\ref{subgauss_alternative}, we have
\begin{align}
       I_{z_{[2n]} } (\rmW; \rmR_u )  \geq   \lambda \E_{\rmW, \rmR_u|\rmZ_{[2n]}= z_{[2n]}}&[  \ell^{F}_E(\rmW,\rmS_u) -  \ell^{F}_E(\rmW,\overline{\rmS}_u) ]  \nonumber \\
   & -     \frac{\lambda^2 m}{8} \E [ \big(\frac{1}{n^{S_{u}}_{0}+2} + \frac{1}{n^{S_{u}}_{1}+2}+ \frac{1}{n^{\overline{S}_{u}}_{0}+2} + \frac{1}{n^{\overline{S}_{u}}_{1}+2} \big)^2].
\end{align}
So the parabola (function of $\lambda$)
\begin{align}
       I_{z_{[2n]} } (\rmW; \rmR_u )  -&   \lambda \E_{\rmW, \rmR_u|\rmZ_{[2n]}= z_{[2n]}}[  \ell^{F}_E(\rmW,\rmS_u)  -  \ell^{F}_E(\rmW,\overline{\rmS}_u) ]  \nonumber \\
   &   +  \frac{\lambda^2 m}{8} \E [ \big(\frac{1}{n^{S_{u}}_{0}+2} + \frac{1}{n^{S_{u}}_{1}+2}+ \frac{1}{n^{\overline{S}_{u}}_{0}+2} + \frac{1}{n^{\overline{S}_{u}}_{1}+2} \big)^2] \geq 0,\quad  \forall \lambda \in \sR 
\end{align}
is always positive. Hence, its discriminant is always negative. It follows:
\begin{equation}
    \Big|  \E_{\rmW, \rmR_u|\rmZ_{[2n]}= z_{[2n]}}[  \ell^{F}_E(\rmW,\rmS_u) -  \ell^{F}_E(\rmW,\overline{\rmS}_u) ]   \Big| \leq \sqrt{\frac{m}{2} \E[ \big(\frac{1}{n^{S_{u}}_{0}+2} + \frac{1}{n^{S_{u}}_{1}+2}+ \frac{1}{n^{\overline{S}_{u}}_{0}+2} + \frac{1}{n^{\overline{S}_{u}}_{1}+2} \big)^2 ] I_{z_{[2n]}} (\rmW; \rmR_u ) }
\end{equation}
Replacing in~\eqref{CMI_gen_formulation},  we have 

\begin{equation}
\overline{gen}_{fairness}  \leq \frac{1}{|P^m_n|}   \E_{\rmZ_{[2n]}} \Big[ \sum_{u \in P^m_n} \sqrt{\frac{m}{2} \E_{\rmR_u}[ \big(\frac{1}{n^{S_{u}}_{0}+2} + \frac{1}{n^{S_{u}}_{1}+2}+ \frac{1}{n^{\overline{S}_{u}}_{0}+2} + \frac{1}{n^{\overline{S}_{u}}_{1}+2} \big)^2 ] I_{z_{[2n]}}  (\rmW; \rmR_u ) } \Big]
\end{equation}
As mutual information is invariant to permutations, we have the final results.
\end{proof}

\subsection{ Proof of Theorem~\ref{CMI_the2}} \label{CMI_the2_proof}
 
\textbf{ Theorem~\ref{CMI_the2} } (restated)
Assume that $ |f| \in [0,1]$, then for any $m \in \{2,\cdots,n\} $, we have
\begin{align}
&\overline{gen}_{fairness}  \leq \frac{1}{|C^m_n|} \E_{\rmZ_{[2n]}} \Big[ \sum_{u \in C^m_n} \\
& \sqrt{\frac{m}{2} \E_{\rmR_u}\Big[ \frac{1}{\sH\big(n^{S_{u}}_{0},n^{S_{u}}_{1},n^{\overline{S}_{u}}_{0},n^{\overline{S}_{u}}_{1}\big)^2} \Big] I_{z_{[2n]}} (\rmF_{u};\rmR_u) } \Big], \nn
\end{align}
where  $C^m_n$ is the set of $m$-combinations. 

\begin{proof}
We have
\begin{align}
\overline{gen}_{fairness}   = & | \E_{P_{\rmW,\rmZ_{[2n]},\rmR}} [    \ell^{F}_E(\rmW,\rmS) - \ell^{F}_E(\rmW,\overline{\rmS})   ]  | \\
\leq  &  \E_{P_{\rmZ_{[2n]}}}  |  \E_{P_{\rmW,\rmR |\rmZ_{[2n]}}} [    \ell^{F}_E(\rmW,\rmS) -  \ell^{F}_E(\rmW,\overline{\rmS})   ]  | \\
\leq & \frac{1}{|P^m_n|} \sum_{u \in P^m_n}  \E_{P_{\rmZ_{[2n]}}}  |  \E_{P_{\rmW,\rmR |\rmZ_{[2n]}}} [    \ell^{F}_E(\rmW,\rmS_u) -  \ell^{F}_E(\rmW,\overline{\rmS}_u)   | \\
=  & \frac{1}{|P^m_n|} \sum_{u \in P^m_n}  \E_{P_{\rmZ_{[2n]}}}  |  \E_{P_{\rmW,\rmR |\rmZ_{[2n]}}} [ \Big|\frac{1}{n^{\rmS_u}_{0}+2} \sum_{\rmS_u,T_i=0} f(w,x_i) - \frac{1}{n^{\rmS_u}_{1}+2}\sum_{\rmS_u,T_i=1} f(w,x_i) \Big|  \\
& \hspace{4 cm} - \Big|\frac{1}{n^{\overline{\rmS}}_{0}+2} \sum_{\overline{\rmS},T_i=0} f(w,x_i) - \frac{1}{n^{\overline{\rmS}}_{1}+2}\sum_{\overline{\rmS},T_i=1} f(w,x_i) \Big| ] \\
=  & \frac{1}{|P^m_n|} \sum_{u \in P^m_n}  \E_{P_{\rmZ_{[2n]}}}  |  \E_{P_{\rmF_{u},\rmR |\rmZ_{[2n]}}} \Bigg[ \Big|\frac{1}{n^{\rmS_u}_{0}+2} \sum_{\rmS_u,T_i=0} \rmF_i - \frac{1}{n^{\rmS_u}_{1}+2}\sum_{\rmS_u,T_i=1} \rmF_i \Big|  \\
& \hspace{4 cm} - \Big|\frac{1}{n^{\overline{\rmS}}_{0}+2} \sum_{\overline{\rmS},T_i=0} \rmF_i - \frac{1}{n^{\overline{\rmS}}_{1}+2}\sum_{\overline{\rmS},T_i=1} \overline{\rmF_i} \Big| \Bigg] 
\end{align}    

For a fixed $u \in  P^m_n$, let $(\overline{\rmF_u}, \overline{\rmR_u})$ be an independent copy of $(\rmF_u, \rmR_u)$.

By the Donsker–Varadhan variational representation of KL
divergence,  $\forall \lambda \in \sR$, we have

\begin{align} 
       I_{z_{[2n]} } (\rmF_{u}; \rmR_u )  \geq   \lambda \E_{\rmF_{u}, \rmR_u|\rmZ_{[2n]}= z_{[2n]}}&[  \ell^{F}_E(\rmW,\rmS_u) -  \ell^{F}_E(\rmW,\overline{\rmS}_u) ]  \nonumber \\
   & -     \log \E_{\overline{\rmF_{u}},\overline{\rmR}_u|\rmZ_{[2n]}= z_{[2n]}} [e^{ \lambda  ( \ell^{F}_E(\overline{\rmW},\rmS_u) -  \ell^{F}_E(\overline{\rmW},\overline{\rmS}_u)) }]. \\
        I_{z_{[2n]} } (\rmF_{u}; \rmR_u )   \geq   \lambda \E_{\rmF_{u}, \rmR_u|\rmZ_{[2n]}= z_{[2n]}}&[  \ell^{F}_E(\rmW,\rmS_u) -  \ell^{F}_E(\rmW,\overline{\rmS}_u) ]  \nonumber \\
   & -     \log \E_{\overline{\rmF_{u}}} \Big[ \E_{\overline{\rmR}_u|\rmZ_{[2n]}= z_{[2n]},\overline{\rmF_{u}}=F_u} [e^{ \lambda  ( \ell^{F}_E(w,\rmS_u) -  \ell^{F}_E(w,\overline{\rmS}_u)) } ] \Big]. \label{CMI_classvariationaltho2}
\end{align}
Using the result of Lemma~\ref{subgauss_alternative}, we have 
\begin{align}
            I_{z_{[2n]} } (\rmF_{u}; \rmR_u )   \geq   \lambda \E_{\rmF_{u}, \rmR_u|\rmZ_{[2n]}= z_{[2n]}}&[  \ell^{F}_E(\rmW,\rmS_u) -  \ell^{F}_E(\rmW,\overline{\rmS}_u) ]  \nonumber \\
   & -     \frac{\lambda^2 m}{8} \E [ \big(\frac{1}{n^{S_{u}}_{0}+2} + \frac{1}{n^{S_{u}}_{1}+2}+ \frac{1}{n^{\overline{S}_{u}}_{0}+2} + \frac{1}{n^{\overline{S}_{u}}_{1}+2} \big)^2]
\end{align}
Hence, the discriminant is negative:

\begin{equation}
\overline{gen}_{fairness}  \leq \frac{1}{|P^m_n|}   \E_{\rmZ_{[2n]}} \Big[ \sum_{u \in P^m_n} \sqrt{\frac{m}{2} \E_{\rmR_u}[ \big(\frac{1}{n^{S_{u}}_{0}+2} + \frac{1}{n^{S_{u}}_{1}+2}+ \frac{1}{n^{\overline{S}_{u}}_{0}+2} + \frac{1}{n^{\overline{S}_{u}}_{1}+2} \big)^2 ] I_{z_{[2n]}}  (\rmF_{u}; \rmR_u ) } \Big]
\end{equation}
As mutual information is invariant to permutations, we have the final results.
\end{proof}

\subsection{ Proof of Theorem~\ref{CMI_the3}} \label{CMI_the3_proof}
\textbf{ Theorem~\ref{CMI_the3} } (restated)
Assume that $ |f| \in [0,1]$, then for any $m \in \{2,\cdots,n\} $
\begin{equation}
\overline{gen}_{fairness}  \leq \frac{1}{|C^m_n|}   \E_{\rmZ_{[2n]}} \Big[ \sum_{u \in C^m_n} \sqrt{\frac{m}{2} \E_{\rmR_u}[ \big(\frac{1}{n^{S_{u}}_{0}+2} + \frac{1}{n^{S_{u}}_{1}+2}+ \frac{1}{n^{\overline{S}_{u}}_{0}+2} + \frac{1}{n^{\overline{S}_{u}}_{1}+2} \big)^2 ] I_{z_{[2n]}} (\rmL_{u};\rmR_u) } \Big]
\end{equation}
where  $C^m_n$ is the set of $m$-combinations. 
\begin{proof}
We have
\begin{align}
\overline{gen}_{fairness}   = & | \E_{P_{\rmW,\rmZ_{[2n]},\rmR}} [    \ell^{F}_E(\rmW,\rmS) - \ell^{F}_E(\rmW,\overline{\rmS})   ]  | \\
\leq  &  \E_{P_{\rmZ_{[2n]}}}  |  \E_{P_{\rmW,\rmR |\rmZ_{[2n]}}} [    \ell^{F}_E(\rmW,\rmS) -  \ell^{F}_E(\rmW,\overline{\rmS})   ]  | \\
\leq & \frac{1}{|P^m_n|} \sum_{u \in P^m_n}  \E_{P_{\rmZ_{[2n]}}}  |  \E_{P_{\rmW,\rmR |\rmZ_{[2n]}}} [    \ell^{F}_E(\rmW,\rmS_u) -  \ell^{F}_E(\rmW,\overline{\rmS}_u)   | \\
\leq & \frac{1}{|P^m_n|} \sum_{u \in P^m_n}  \E_{P_{\rmZ_{[2n]}}}  |  \E_{P_{\rmL_{u},\rmR _{u} |\rmZ_{[2n]}}} [   \rmL^{\overline{\rmR}}_{u} -  \rmL^{\rmR}_{u}   ]  | 
\end{align}    
where $\rmL_{u} = (\rmL^{\overline{\rmR}}_{u},\rmL^{\rmR}_{u}) = (\ell^{F}_E(\rmW,\overline{\rmS}_u),\ell^{F}_E(\rmW,\rmS_u))$. By the Donsker–Varadhan variational representation of KL
divergence,  $\forall \lambda \in \sR$, we have
\begin{align} 
       I_{z_{[2n]} } (\rmL_{u}; \rmR_u )  \geq   \lambda \E_{\rmL_{u}, \rmR_u|\rmZ_{[2n]}= z_{[2n]}}&[  \rmL^{\overline{\rmR}}_{u} -  \rmL^{\rmR}_{u} ]  \nonumber \\
   & -     \log \E_{\overline{\rmL_{u}},\overline{\rmR}_u|\rmZ_{[2n]}= z_{[2n]}} [e^{ \lambda  (  \overline{\rmL^{\overline{\rmR}}_{u}} -  \overline{\rmL^{\rmR}_{u}}) }]. \\
        I_{z_{[2n]} } (\rmL_{u}; \rmR_u )   \geq   \lambda \E_{\rmL_{u}, \rmR_u|\rmZ_{[2n]}= z_{[2n]}}&[  \rmL^{\overline{\rmR}}_{u} -  \rmL^{\rmR}_{u} ]  \nonumber \\
   & -     \log \E_{\overline{\rmW}} \Big[ \E_{\overline{\rmR}_u|\rmZ_{[2n]}= z_{[2n]},\overline{\rmW}=w} [e^{ \lambda  (  \rmL^{\overline{\rmR}}_{u} -  \rmL^{\rmR}_{u}) } ] \Big]. \label{CMI_classvariationaltho3}
\end{align}
Using the result of Lemma~\ref{subgauss_alternative}, we have 
\begin{align}
            I_{z_{[2n]} } (\rmL_{u}; \rmR_u )   \geq   \lambda \E_{\rmL_{u}, \rmR_u|\rmZ_{[2n]}= z_{[2n]}}&[  \ell^{F}_E(\rmW,\rmS_u) -  \ell^{F}_E(\rmW,\overline{\rmS}_u) ]  \nonumber \\
   & -     \frac{\lambda^2 m}{8} \E [ \big(\frac{1}{n^{S_{u}}_{0}+2} + \frac{1}{n^{S_{u}}_{1}+2}+ \frac{1}{n^{\overline{S}_{u}}_{0}+2} + \frac{1}{n^{\overline{S}_{u}}_{1}+2} \big)^2]
\end{align}
Hence, the discriminant is negative:

\begin{equation}
\overline{gen}_{fairness}  \leq \frac{1}{|P^m_n|}   \E_{\rmZ_{[2n]}} \Big[ \sum_{u \in P^m_n} \sqrt{\frac{m}{2} \E_{\rmR_u}[ \big(\frac{1}{n^{S_{u}}_{0}+2} + \frac{1}{n^{S_{u}}_{1}+2}+ \frac{1}{n^{\overline{S}_{u}}_{0}+2} + \frac{1}{n^{\overline{S}_{u}}_{1}+2} \big)^2 ] I_{z_{[2n]}}  (\rmL_{u}; \rmR_u ) } \Big]
\end{equation}
As mutual information is invariant to permutations, we have the final results.

\end{proof}

\subsection{ Proof of Theorem~\ref{CMI_the4}} \label{CMI_the4_proof}
Let $ \rmR_u = \{ \rmR_{u_i} \}_{i=1}^m \in \gB^m $ be the sequence of supersample variables indexed by $ u $, and let $ \Phi_u = \{ R_{u_1} \oplus R_{u_i} \}_{i=2}^m \in B^{m-1} $, where $ \oplus $ denotes the XOR operation. Given a binary value $ b \in \{0, 1\} $, we define $ b \otimes \Phi_u = (b, \{ \Phi_{u_i} \oplus b \}_{i=1}^{m-1}) \in B^m $. To simplify the notation, we denote $ 0 \otimes \Phi_u $ and $ 1 \otimes \Phi_u $ as $ \Phi_u^- $ and $ \Phi_u^+ $, respectively. The notation $ L^{\Phi_u}_u  = (L^{\Phi^-_u}, L^{\Phi^+_u}) $ then represents a pair of losses, while $ \Delta^{\Phi_u} L_u  = L^{\Phi^+_u}_u - L^{\Phi^-_u}_u $ denotes their difference.

\textbf{ Theorem~\ref{CMI_the4} } (restated)
Assume that $ |f| \in [0,1]$, then for any $m \in \{2,\cdots,n\} $
\begin{equation}
\overline{gen}_{fairness}  \leq \frac{1}{|C^m_n|}   \E_{\rmZ_{[2n]}} \Big[ \sum_{u \in C^m_n} \sqrt{\frac{m}{2} \E_{\rmR_u}[ \big(\frac{1}{n^{S_{u}}_{0}+2} + \frac{1}{n^{S_{u}}_{1}+2}+ \frac{1}{n^{\overline{S}_{u}}_{0}+2} + \frac{1}{n^{\overline{S}_{u}}_{1}+2} \big)^2 ] I_{z_{[2n]}} (\Delta \rmL^{\Phi_u}_{u};\rmR_{u_1}) } \Big]
\end{equation}
where  $C^m_n$ is the set of $m$-combinations. 
\begin{proof}
We have
\begin{align}
\overline{gen}_{fairness}   = & | \E_{P_{\rmW,\rmZ_{[2n]},\rmR}} [    \ell^{F}_E(\rmW,\rmS) - \ell^{F}_E(\rmW,\overline{\rmS})   ]  | \\
\leq  &  \E_{P_{\rmZ_{[2n]}}}  |  \E_{P_{\rmW,\rmR |\rmZ_{[2n]}}} [    \ell^{F}_E(\rmW,\rmS) -  \ell^{F}_E(\rmW,\overline{\rmS})   ]  | \\
\leq & \frac{1}{|P^m_n|} \sum_{u \in P^m_n}  \E_{P_{\rmZ_{[2n]}}}  |  \E_{P_{\rmW,\rmR |\rmZ_{[2n]}}} [    \ell^{F}_E(\rmW,\rmS_u) -  \ell^{F}_E(\rmW,\overline{\rmS}_u)   | \\
\leq & \frac{1}{|P^m_n|} \sum_{u \in P^m_n}  \E_{P_{\rmZ_{[2n]}}}  |  \E_{P_{L^{\Phi_u}_u,\rmR_{u_1} |\rmZ_{[2n]}}} [  (-1)^{\rmR_{u_1}} \big( \rmL^{\Phi^+_u}_u - \rmL^{\Phi^-_u}_u \big) ]  | \\
\leq & \frac{1}{|P^m_n|} \sum_{u \in P^m_n}  \E_{P_{\rmZ_{[2n]}}}  |  \E_{P_{\Delta \rmL^{\Phi_u}_{u},\rmR _{u} |\rmZ_{[2n]}}} [  (-1)^{\rmR_{u_1}} \Delta\rmL_{u}^{\Phi_u} ]  | 
\end{align}    

By the Donsker–Varadhan variational representation of KL
divergence,  $\forall \lambda \in \sR$, we have
\begin{align} 
       I_{z_{[2n]} } (\Delta\rmL_{u}^{\Phi_u} ; \rmR_{u_1} )  \geq   \lambda \E_{\Delta\rmL_{u}^{\Phi_u} , \rmR_{u_1}|\rmZ_{[2n]}= z_{[2n]}}&[   (-1)^{\rmR_{u_1}} \Delta\rmL_{u}^{\Phi_u} ]  \nonumber \\
   & -     \log \E_{ \overline{\Delta\rmL_{u}^{\Phi_u}} , \overline{\rmR_{u_1}}|\rmZ_{[2n]}= z_{[2n]}} [e^{ \lambda  \big( (-1)^{\overline{\rmR_{u_1}}} \:  \overline{\Delta\rmL_{u}^{\Phi_u}} \big) }]. \\
          I_{z_{[2n]} } (\Delta\rmL_{u}^{\Phi_u} ; \rmR_{u_1} )  \geq   \lambda \E_{\Delta\rmL_{u}^{\Phi_u} , \rmR_{u_1}|\rmZ_{[2n]}= z_{[2n]}}&[   (-1)^{\rmR_{u_1}} \Delta\rmL_{u}^{\Phi_u} ]  \nonumber \\
   & -     \log \E_{ \overline{\rmW}, \overline{\rmR_u}|\rmZ_{[2n]}= z_{[2n]}} [e^{ \lambda  \big(  \ell^{F}_E(\overline{\rmW},\rmS_u) -  \ell^{F}_E(\overline{\rmW},\overline{\rmS}_u)  \big) }]. \\
          I_{z_{[2n]} } (\Delta\rmL_{u}^{\Phi_u} ; \rmR_{u_1} )  \geq   \lambda \E_{\Delta\rmL_{u}^{\Phi_u} , \rmR_{u_1}|\rmZ_{[2n]}= z_{[2n]}}&[   (-1)^{\rmR_{u_1}} \Delta\rmL_{u}^{\Phi_u} ]  \nonumber \\
   & -     \log \E_{\overline{\rmW}} \Big[ \E_{\overline{\rmR}_u|\rmZ_{[2n]}= z_{[2n]},\overline{\rmW}=w} [e^{ \lambda  ( \ell^{F}_E(w,\rmS_u) -  \ell^{F}_E(w,\overline{\rmS}_u)) } ] \Big]. \\ 
\end{align}
Using the result of Lemma~\ref{subgauss_alternative}, we have 
\begin{align}
          I_{z_{[2n]} } (\Delta\rmL_{u}^{\Phi_u} ; \rmR_{u_1} )    \geq   \lambda \E_{\Delta\rmL_{u}^{\Phi_u} , \rmR_{u_1}|\rmZ_{[2n]}= z_{[2n]}}&[   (-1)^{\rmR_{u_1}} \Delta\rmL_{u}^{\Phi_u} ]  \nonumber \\
   & -     \frac{\lambda^2 m}{8} \E [ \big(\frac{1}{n^{S_{u}}_{0}+2} + \frac{1}{n^{S_{u}}_{1}+2}+ \frac{1}{n^{\overline{S}_{u}}_{0}+2} + \frac{1}{n^{\overline{S}_{u}}_{1}+2} \big)^2]
\end{align}
Hence, the discriminant is negative:

\begin{equation}
\overline{gen}_{fairness}  \leq \frac{1}{|P^m_n|}   \E_{\rmZ_{[2n]}} \Big[ \sum_{u \in P^m_n} \sqrt{\frac{m}{2} \E_{\rmR_u}[ \big(\frac{1}{n^{S_{u}}_{0}+2} + \frac{1}{n^{S_{u}}_{1}+2}+ \frac{1}{n^{\overline{S}_{u}}_{0}+2} + \frac{1}{n^{\overline{S}_{u}}_{1}+2} \big)^2 ]  I_{z_{[2n]} } (\Delta\rmL_{u}^{\Phi_u} ; \rmR_{u_1} )  } \Big]
\end{equation}
As mutual information is invariant to permutations, we have the final results.

\end{proof}

\section{Proof of the Theorems/Lemmas in Section~\ref{sec_separation}}
\paragraph{EO Generalization Error.}
Analogously to the DP setting, we define the population-level separation fairness loss as
\begin{equation}
\ell^{F_S}_P(\rmW,P_{\rmS}) 
= \E_{P_{\rmS}}\bigl[\ell^{F_S}_E(\rmW,\rmS)\bigr],
\end{equation}
and the separation-based fairness generalization gap becomes
\begin{equation}
\overline{gen}_{\mathrm{fairness}}
= \E_{P_{\rmW,\rmS}}\bigl[\ell^{F_S}_P(\rmW,P_{\rmS}) 
- \ell^{F_S}_E(\rmW,\rmS)\bigr].
\end{equation}
\subsection{Proof of Lemma~\ref{bounded_difference_separation}} \label{bounded_difference_separation_proof}
\textbf{Lemma~\ref{bounded_difference_separation}} (restated) \\
Fix \(w \in \mathcal{W}\) and a fixed 
\(n^{Z_{u}}_{0, 0}\), \(n^{Z_{u}}_{1, 0}\), \(n^{Z_{u}}_{0, 1}\), \(n^{Z_{u}}_{1, 1}\). Let $g: \gZ^{m} \rightarrow \sR $ be defined as follows: $g(z_1,\ldots,z_m) =  \ell^{F_S}_E(w,z_{u})$, where $ \ell ^{F_S}_E(w,z_{u})$ is defined \eqref{emp_risk_DEO}. Then, for any $1 \leq i \leq n$, we have
\begin{equation}
\sup_{z_u, \tilde{z}_u^i} 
\bigl|g(z_u) - g(\tilde{z}_u^i)\bigr|
\le \frac{2}{\min_{(t,y)}\bigl(n^{{Z}_u}_{t,y}+2\bigr)},
\end{equation}
where \(|f(w, x_i)|\) is bounded by $1$ and \(\min_{(t,y)}(n_{t,y}+2)\) is the smallest subgroup size across both \(\rmT\) and \(\rmY\).

\begin{proof}
For \( 1 \leq i \leq m \), consider the difference:
\begin{align}
|g(z_u) - g(\tilde{z}^{i}_u)| &= \Big|  | \frac{1}{n^{Z_u}_{t=0, y=0} + 2} \sum_{\substack{T_j=0 \\ Y_j=0}} f(w, x_j) - \frac{1}{n^{Z_u}_{t=1, y=0} + 2} \sum_{\substack{T_j=1 \\ Y_j=0}} f(w, x_j) | \nonumber \\
&\quad - | \frac{1}{n^{\Tilde{Z}_{u}}_{t=0, y=0} + 2} \sum_{\substack{\tilde{T}_j=0 \\ \tilde{Y}_j=0}} f(w, \tilde{x}_j) - \frac{1}{n^{\Tilde{Z}_{u}}_{t=1, y=0} + 2} \sum_{\substack{\tilde{T}_j=1 \\ \tilde{Y}_j=0}} f(w, \tilde{x}_j) | \Big| \\
&+\Big| | \frac{1}{n^{Z_u}_{t=0, y=1} + 2} \sum_{\substack{T_j=0 \\ Y_j=1}} f(w, x_j) - \frac{1}{n^{Z_u}_{t=1, y=1} + 2} \sum_{\substack{T_j=1 \\ Y_j=1}} f(w, x_j) | \nonumber \\
&\quad - | \frac{1}{n^{\Tilde{Z}_{u}}_{t=0, y=1} + 2} \sum_{\substack{\tilde{T}_j=0 \\ \tilde{Y}_j=1}} f(w, \tilde{x}_j) - \frac{1}{n^{\Tilde{Z}_{u}}_{t=1, y=1} + 2} \sum_{\substack{\tilde{T}_j=1 \\ \tilde{Y}_j=1}} f(w, \tilde{x}_j) | \Big| \label{bound_start} \\
&\leq \Big| \frac{1}{n^{Z_u}_{t=0, y=0} + 2} \sum_{\substack{T_j=0 \\ Y_j=0}} f(w, x_j) - \frac{1}{n^{Z_u}_{t=1, y=0} + 2} \sum_{\substack{T_j=1 \\ Y_j=0}} f(w, x_j) \nonumber \\
&\quad - \frac{1}{n^{\Tilde{Z}_{u}}_{t=0, y=0} + 2} \sum_{\substack{\tilde{T}_j=0 \\ \tilde{Y}_j=0}} f(w, \tilde{x}_j) + \frac{1}{n^{\Tilde{Z}_{u}}_{t=1, y=0} + 2} \sum_{\substack{\tilde{T}_j=1 \\ \tilde{Y}_j=0}} f(w, \tilde{x}_j) \Big| \\
&+\Big| \frac{1}{n^{Z_u}_{t=0, y=1} + 2} \sum_{\substack{T_j=0 \\ Y_j=1}} f(w, x_j) - \frac{1}{n^{Z_u}_{t=1, y=1} + 2} \sum_{\substack{T_j=1 \\ Y_j=1}} f(w, x_j) \nonumber \\
&\quad + \frac{1}{n^{\Tilde{Z}_{u}}_{t=0, y=1} + 2} \sum_{\substack{\tilde{T}_j=0 \\ \tilde{Y}_j=1}} f(w, \tilde{x}_j) + \frac{1}{n^{\Tilde{Z}_{u}}_{t=1, y=1} + 2} \sum_{\substack{\tilde{T}_j=1 \\ \tilde{Y}_j=1}} f(w, \tilde{x}_j) \Big| \\
\label{mcs_decomposition}
\end{align}

The inequality in \eqref{bound_start} follows from the triangle inequality \( |\, |a| - |b|\, | \leq |a - b| \).

Since \( \tilde{z}_j = z_j \) for all \( j \neq i \), the change affects only the \( i \)-th component. Based on the values of \( t_i \) and \( \tilde{t}_i \), we have 8 different subcases within 4 main cases:

\paragraph*{Case 1.1: \( y_i = 0 \) and \( \tilde{y}_i = 1 \) while \( t_i = 0 \).}

Before the change:
\begin{align}
L_{0}(z_u) &= \left| \ell_{t=0,0}(z_u) - \ell_{t=1,0}(z_u) \right|.
\end{align}

After the change, \( n^{Z_u}_{t=0,0} = n^{Z_u}_{t=0,0} - 1 \) and \( n^{Z_u}_{t=0,1} = n^{Z_u}_{t=0,1} + 1 \):
\begin{align}
L_{0}(\tilde{z}^{i}_u) &= \left| \frac{1}{n^{Z_u}_{t=0,0} + 1} \sum_{\substack{T_j = 0 \\ Y_j = 0 \\ j \neq i}} f(w, x_j) - \ell_{t=1,0}(z_u) \right|, \\
L_{1}(\tilde{z}^{i}_u) &= \left| \frac{1}{n^{Z_u}_{t=0,1} + 3} \left( \sum_{\substack{T_j = 0 \\ Y_j = 1}} f(w, x_j) + f(w, \tilde{x}_i) \right) - \ell_{t=1,1}(z_u) \right|.
\end{align}

The change in \( g(z_u) \) is:
\begin{align}
|g(z_u) - g(\tilde{z}^{i}_u)| &= |L_{0}(z_u) - L_{0}(\tilde{z}^{i}_u)| + |L_{1}(z_u) - L_{1}(\tilde{z}^{i}_u)| \nonumber \\
&\leq \left| \frac{1}{n^{Z_u}_{t=0,0} + 2} \sum_{\substack{T_j = 0 \\ Y_j = 0}} f(w, x_j) - \frac{1}{n^{Z_u}_{t=0,0} + 1} \sum_{\substack{T_j = 0 \\ Y_j = 0 \\ j \neq i}} f(w, x_j) \right| \nonumber \\
&\quad + \left| \frac{1}{n^{Z_u}_{t=0,1} + 2} \sum_{\substack{T_j = 0 \\ Y_j = 1}} f(w, x_j) - \frac{1}{n^{Z_u}_{t=0,1} + 3} \left( \sum_{\substack{T_j = 0 \\ Y_j = 1}} f(w, x_j) + f(w, \tilde{x}_i) \right) \right| \nonumber \\
&\leq \frac{1}{n^{Z_u}_{t=0,0} + 2} + \frac{1}{n^{Z_u}_{t=0,1} + 2}. \label{case1.1_bound}
\end{align}

\paragraph*{Case 1.2: \( y_i = 1 \) and \( \tilde{y}_i = 0 \) while \( t_i = 0 \).}

This scenario is symmetric to Case 1.1. The bound is identical:
\begin{align}
|g(z_u) - g(\tilde{z}^{i}_u)| \leq \frac{1}{n^{Z_u}_{t=0,0} + 2} + \frac{1}{n^{Z_u}_{t=0,1} + 2}. \label{case1.2_bound}
\end{align}

\paragraph*{Case 2.1: \( y_i = 0 \) and \( \tilde{y}_i = 1 \) while \( t_i = 1 \).}

Before the change:
\begin{align}
L_{0}(z_u) &= \left| \ell_{t=0,0}(z_u) - \ell_{t=1,0}(z_u) \right|.
\end{align}

After the change, \( n^{Z_u}_{t=1,0} = n^{Z_u}_{t=1,0} - 1 \) and \( n^{Z_u}_{t=1,1} = n^{Z_u}_{t=1,1} + 1 \):
\begin{align}
L_{0}(\tilde{z}^{i}_u) &= \left| \ell_{t=0,0}(z_u) - \frac{1}{n^{Z_u}_{t=1,0} + 1} \sum_{\substack{T_j = 1 \\ Y_j = 0 \\ j \neq i}} f(w, x_j) \right|, \\
L_{1}(\tilde{z}^{i}_u) &= \left| \frac{1}{n^{Z_u}_{t=1,1} + 3} \left( \sum_{\substack{T_j = 1 \\ Y_j = 1}} f(w, x_j) + f(w, \tilde{x}_i) \right) - \ell_{t=1,1}(z_u) \right|.
\end{align}

The change in \( g(z_u) \) is:
\begin{align}
|g(z_u) - g(\tilde{z}^{i}_u)| &= |L_{0}(z_u) - L_{0}(\tilde{z}^{i}_u)| + |L_{1}(z_u) - L_{1}(\tilde{z}^{i}_u)| \nonumber \\
&\leq \left| \frac{1}{n^{Z_u}_{t=1,0} + 2} \sum_{\substack{T_j = 1 \\ Y_j = 0}} f(w, x_j) - \frac{1}{n^{Z_u}_{t=1,0} + 1} \sum_{\substack{T_j = 1 \\ Y_j = 0 \\ j \neq i}} f(w, x_j) \right| \nonumber \\
&\quad + \left| \frac{1}{n^{Z_u}_{t=1,1} + 2} \sum_{\substack{T_j = 1 \\ Y_j = 1}} f(w, x_j) - \frac{1}{n^{Z_u}_{t=1,1} + 3} \left( \sum_{\substack{T_j = 1 \\ Y_j = 1}} f(w, x_j) + f(w, \tilde{x}_i) \right) \right| \nonumber \\
&\leq \frac{1}{n^{Z_u}_{t=1,0} + 2} + \frac{1}{n^{Z_u}_{t=1,1} + 2}. \label{case2.1_bound}
\end{align}

\paragraph*{Case 2.2: \( y_i = 1 \) and \( \tilde{y}_i = 0 \) while \( t_i = 1 \).}

This scenario is symmetric to Case 2.1. The bound is identical:
\begin{align}
|g(z_u) - g(\tilde{z}^{i}_u)| \leq \frac{1}{n^{Z_u}_{t=1,0} + 2} + \frac{1}{n^{Z_u}_{t=1,1} + 2}. \label{case2.2_bound}
\end{align}

\paragraph*{Case 3.1: \( t_i = 0 \) and \( \tilde{t}_i = 1 \) while \( y_i = 0 \).}

Before the change:
\begin{align}
L_{0}(z_u) &= \left| \ell_{t=0,0}(z_u) - \ell_{t=1,0}(z_u) \right|.
\end{align}

After the change, \( n^{Z_u}_{t=0,0} = n^{Z_u}_{t=0,0} - 1 \) and \( n^{Z_u}_{t=1,0} = n^{Z_u}_{t=1,0} + 1 \):
\begin{align}
L_{0}(\tilde{z}^{i}_u) &= \left| \frac{1}{n^{Z_u}_{t=0,0} + 1} \sum_{\substack{T_j = 0 \\ Y_j = 0 \\ j \neq i}} f(w, x_j) - \frac{1}{n^{Z_u}_{t=1,0} + 3} \sum_{\substack{T_j = 1 \\ Y_j = 0}} f(w, x_j) \right|.
\end{align}

The change in \( g(z_u) \) is:
\begin{align}
|g(z_u) - g(\tilde{z}^{i}_u)| &= |L_{0}(z_u) - L_{0}(\tilde{z}^{i}_u)| \nonumber \\
&\leq \left| \frac{1}{n^{Z_u}_{t=0,0} + 2} \sum_{\substack{T_j = 0 \\ Y_j = 0}} f(w, x_j) - \frac{1}{n^{Z_u}_{t=0,0} + 1} \sum_{\substack{T_j = 0 \\ Y_j = 0 \\ j \neq i}} f(w, x_j) \right| \nonumber \\
&\quad + \left| \frac{1}{n^{Z_u}_{t=1,0} + 2} \sum_{\substack{T_j = 1 \\ Y_j = 0}} f(w, x_j) - \frac{1}{n^{Z_u}_{t=1,0} + 3} \left( \sum_{\substack{T_j = 1 \\ Y_j = 0}} f(w, x_j) + f(w, \tilde{x}_i) \right) \right| \nonumber \\
&\leq \frac{1}{n^{Z_u}_{t=0,0} + 2} + \frac{1}{n^{Z_u}_{t=1,0} + 2}. \label{case3.1_bound}
\end{align}

\paragraph*{Case 3.2: \( t_i = 1 \) and \( \tilde{t}_i = 0 \) while \( y_i = 0 \).}

This scenario is similar to Case 3.1. The bound is identical:
\begin{align}
|g(z_u) - g(\tilde{z}^{i}_u)| \leq \frac{1}{n^{Z_u}_{t=0,0} + 2} + \frac{1}{n^{Z_u}_{t=1,0} + 2}. \label{case3.2_bound}
\end{align}

\paragraph*{Case 4.1: \( t_i = 1 \) and \( \tilde{t}_i = 0 \) while \( y_i = 1 \).}

Before the change:
\begin{align}
L_{1}(z_u) &= \left| \ell_{t=0,1}(z_u) - \ell_{t=1,1}(z_u) \right|.
\end{align}

After the change, \( n^{Z_u}_{t=1,1} = n^{Z_u}_{t=1,1} - 1 \) and \( n^{Z_u}_{t=0,1} = n^{Z_u}_{t=0,1} + 1 \):
\begin{align}
L_{1}(\tilde{z}^{i}_u) &= \left| \frac{1}{n^{Z_u}_{t=0,1} + 3} \left( \sum_{\substack{T_j = 0 \\ Y_j = 1}} f(w, x_j) + f(w, \tilde{x}_i) \right) - \frac{1}{n^{Z_u}_{t=1,1} + 1} \sum_{\substack{T_j = 1 \\ Y_j = 1 \\ j \neq i}} f(w, x_j) \right|.
\end{align}

The change in \( g(z_u) \) is:
\begin{align}
|g(z_u) - g(\tilde{z}^{i}_u)| &= |L_{1}(z_u) - L_{1}(\tilde{z}^{i}_u)| \nonumber \\
&\leq \left| \frac{1}{n^{Z_u}_{t=0,1} + 2} \sum_{\substack{T_j = 0 \\ Y_j = 1}} f(w, x_j) - \frac{1}{n^{Z_u}_{t=0,1} + 3} \left( \sum_{\substack{T_j = 0 \\ Y_j = 1}} f(w, x_j) + f(w, \tilde{x}_i) \right) \right| \nonumber \\
&\quad + \left| \frac{1}{n^{Z_u}_{t=1,1} + 2} \sum_{\substack{T_j = 1 \\ Y_j = 1}} f(w, x_j) - \frac{1}{n^{Z_u}_{t=1,1} + 1} \sum_{\substack{T_j = 1 \\ Y_j = 1 \\ j \neq i}} f(w, x_j) \right| \nonumber \\
&\leq \frac{1}{n^{Z_u}_{t=0,1} + 2} + \frac{1}{n^{Z_u}_{t=1,1} + 2}. \label{case4.1_bound}
\end{align}

\paragraph*{Case 4.2: \( t_i = 0 \) and \( \tilde{t}_i = 1 \) while \( y_i = 1 \).}

This scenario is similar to Case 4.1. The bound is identical:
\begin{align}
|g(z_u) - g(\tilde{z}^{i}_u)| \leq \frac{1}{n^{Z_u}_{t=0,1} + 2} + \frac{1}{n^{Z_u}_{t=1,1} + 2}. \label{case4.2_bound}
\end{align}

\paragraph*{Case 5: \( \tilde{t}_i = t_i \) or \( \tilde{y}_i = y_i \).}

In this case, the change affects only one group, either in \( t \) or \( y \), while the other attribute remains unchanged. T
For both scenarios, the counts are updated as follows:
\begin{align}
|g(z_u) - g(\tilde{z}^{i}_u)| &\leq \frac{1}{n^{Z_u}_{t, y} + 2} .
\end{align}

\paragraph*{Aggregate the Bounds}

From Cases 1.1 to 5, in each scenario where a change occurs, we can write the following:
\begin{align}
\sup_{z_u \in \mathcal{Z}^m, \tilde{z}_i \in \mathcal{Z}} |g(z_u) - g(\tilde{z}^{i}_u)| &\leq  \max_{\substack{(t, y), (t', y') \\ (t, y) \neq (t', y')}} \left( \frac{1}{n^{Z_u}_{t, y} + 2} + \frac{1}{n^{Z_u}_{t', y'} + 2} \right) \\
&\leq 2 \max_{(t, y)} \frac{1}{n^{Z_u}_{t, y} + 2} \\
&= \frac{2}{\min_{(t, y)} n^{Z_u}_{t, y} + 2}
\end{align}
 This completes the proof.

\end{proof}

\subsection{Proof of Theorem \ref{CMI_the1_S}}
\label{CMI_the1_S_proof}
\textbf{Theorem~\ref{CMI_the1_S}} (restated) Assume that $ |f| \in [0,1]$, then for any $m \in \{2,\cdots,n\} $
    
\begin{equation}    \overline{gen}_{\mathrm{fairness}} \leq \frac{1}{|P^m_n|} \sum_{u \in P^m_n}  \sqrt{2m \E[ ( \frac{1}{ \min_{(t, y)}\bigr( n_{t, y} + 2\bigr)} )^2] I (\rmW, \rmZ_{u})}
\end{equation}

And the label with fewer samples will typically dominate the maximum variance, so the upper bound depends not only on the expectation over the sensitive attribute but also on the number of samples for each label.

\begin{equation}
\overline{gen}_{fairness}   \triangleq  \E_{P_{\rmW,\rmS}} [  \ell^{F}_{P}(\rmW,P_{\rmS}) -    \ell^{F}_{E}(\rmW,\rmS)   ]
\end{equation}
\begin{proof}
\begin{equation}
\Var(\ell_{Y}(W, V_{\text{aug}} | Y )) \leq  \frac{m}{4}  \E[ \max_{\substack{t, y ,  t', y' \\ (t, y) \neq (t', y')}} ( \frac{1}{n^{Z_u}_{t=t, Y=y} + 2} + \frac{1}{n^{Z_u}_{t=t', Y=y'} + 2} )^2]
\end{equation}
\begin{align}
\overline{gen}_{fairness}  
=&= \Big| \mathbb{E}_{P_W \otimes P_S} \big[\ell^{F_S}_E(W, V^{\text{aug}} | Y) \big] 
- \mathbb{E}_{P_{W, S}} \big[\ell^{F_S}_E(W, V^{\text{aug}} | Y) \big] \Big| \\
&= \Big| \mathbb{E}_{P_{\overline{W}, \overline{Z}_u}} \big[\ell^{F_S}_E(\overline{W}, \overline{Z}_u) \big] 
- \frac{1}{|P^m_n|} \sum_{u \in P^m_n} \mathbb{E}_{P_{W, Z_u}} \big[\ell^{F_S}_E(W, Z_u) \big] \Big| \\
&\leq \frac{1}{|P^m_n|} \sum_{u \in P^m_n} \Big| 
\mathbb{E}_{P_{\overline{W}, \overline{Z}_u}} \big[\ell^{F_S}_E(\overline{W}, \overline{Z}_u) \big] 
- \mathbb{E}_{P_{W, Z_u}} \big[\ell^{F_S}_E(W, Z_u) \big] \Big|\\
\leq &\frac{1}{|P^m_n|} \sum_{u \in P^m_n}  \sqrt{\frac{m}{2} \E[\max_{\substack{t, y ,  t', y' \\ (t, y) \neq (t', y')}} ( \frac{1}{n^{Z_u}_{t=t, Y=y} + 2} + \frac{1}{n^{Z_u}_{t=t', Y=y'} + 2} )^2] I (\rmW, \rmZ_{u})}\\
\leq & \frac{1}{|P^m_n|} \sum_{u \in P^m_n}  \sqrt{2m \E[ ( \frac{1}{ \min_{(t, y)} n_{t, y} + 2} )^2] I (\rmW, \rmZ_{u})}
\end{align}
In addition, as mutual information is invariant to sample permutations, we have 
\begin{equation}
   \overline{gen}_{fairness}\leq \frac{1}{|C^m_n|} \sum_{u \in C^m_n}  \sqrt{2m \E[ ( \frac{1}{ \min_{(t, y)} n_{t, y} + 2} )^2] I (\rmW, \rmZ_{u})}
\end{equation}
where  $C^m_n$ is the set of $m$-combinations.

And we have $a=1$ for the binary label case. This completes the proof.

\end{proof}

\begin{lemma}\label{CMI_lemma_S_proof}
Assume that $ |f| \in [0,1]$, then for any $m \in \{2,\cdots,n\} $
\begin{equation}
\overline{gen}_{\mathrm{fairness}} \leq \frac{1}{|P^m_n|}   \E_{\rmZ_{[2n]}} \Big[ \sum_{u \in P^m_n} \sqrt{2m \E_{\rmR_u}[ \big(\frac{1}{\displaystyle \min_{(t, y)} n^{Z_u}_{t, y} + 2} + \frac{1}{\displaystyle \min_{(t, y)} n^{\overline{Z}_u}_{t, y} + 2} \big)^2 ] I_{z_{[2n]}}  (\rmW; \rmR_u ) } \Big]
\end{equation}
\end{lemma}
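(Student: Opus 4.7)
The plan is to mirror the proof of Theorem~\ref{CMI_the1} (the DP CMI bound) while swapping in the EO-specific bounded-differences result from Lemma~\ref{bounded_difference_separation}. First, I would decompose the fairness generalization error in the super-sample framework as in Appendix~\ref{CMI_the1_proof}: by linearity of expectation, averaging over $u \in P^m_n$, and pulling the expectation over $\rmZ_{[2n]}$ outside, we obtain
\begin{equation}
\overline{gen}_{\mathrm{fairness}} \le \frac{1}{|P^m_n|} \sum_{u \in P^m_n} \E_{\rmZ_{[2n]}} \Bigl| \E_{\rmW,\rmR_u \mid \rmZ_{[2n]}} [\ell^{F_S}_E(\rmW,\rmS_u) - \ell^{F_S}_E(\rmW,\overline{\rmS}_u)] \Bigr|.
\end{equation}
For each fixed $u$ and $z_{[2n]}$, I would then apply the Donsker--Varadhan variational representation of $I_{z_{[2n]}}(\rmW;\rmR_u)$ to reduce the task to controlling the log-MGF $\log \E_{\overline{\rmR}_u \mid z_{[2n]}, \overline{\rmW}=w}[e^{\lambda(\ell^{F_S}_E(w,\rmS_u) - \ell^{F_S}_E(w,\overline{\rmS}_u))}]$ uniformly in $w$, exactly as in~\eqref{CMI_classvariational}.

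The crucial step is proving the EO analog of Lemma~\ref{subgauss_alternative}. Fix $z_{[2n]}$ and $w$, and define $g_{z_{[2n]}}(r_u) \triangleq \ell^{F_S}_E(w,\rmS_u) - \ell^{F_S}_E(w,\overline{\rmS}_u)$, viewed as a function of the selection vector $r_u \in \{0,1\}^m$. Flipping a single coordinate $r_{u_i}$ swaps the corresponding sample between $\rmS_u$ and $\overline{\rmS}_u$, so by the triangle inequality the change in $g$ is bounded by the sum of the induced changes in the two EO risks. Applying Lemma~\ref{bounded_difference_separation} separately to each term yields
\begin{equation}
\sup_{r_u,\tilde{r}^i_u} |g_{z_{[2n]}}(r_u) - g_{z_{[2n]}}(\tilde{r}^i_u)| \le \frac{2}{\min_{(t,y)} n^{S_u}_{t,y} + 2} + \frac{2}{\min_{(t,y)} n^{\overline{S}_u}_{t,y} + 2} \triangleq \beta.
\end{equation}
Since $\beta$ depends on the random subgroup counts, I would invoke Lemma~\ref{variance_bound_lemma} (conditioning on $n^{S_u}_{t,y}$ and $n^{\overline{S}_u}_{t,y}$) to obtain $\Var(g_{z_{[2n]}}(\overline{\rmR}_u)) \le \tfrac{m}{4}\E[\beta^2]$, then apply Hoeffding's lemma in the same manner as in Appendix~\ref{subgauss_alternative_proof} to derive the sub-Gaussian-type log-MGF bound $\log \E[e^{\lambda g_{z_{[2n]}}(\overline{\rmR}_u)}] \le \tfrac{\lambda^2 m}{8}\E[\beta^2]$.

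Plugging this MGF bound back into the Donsker--Varadhan inequality gives a quadratic in $\lambda$ whose non-negativity for all $\lambda \in \sR$ forces a non-positive discriminant, producing
\begin{equation}
\bigl|\E_{\rmW,\rmR_u \mid z_{[2n]}}[\ell^{F_S}_E(\rmW,\rmS_u) - \ell^{F_S}_E(\rmW,\overline{\rmS}_u)]\bigr| \le \sqrt{\tfrac{m}{2}\,\E_{\rmR_u}[\beta^2]\, I_{z_{[2n]}}(\rmW;\rmR_u)}.
\end{equation}
Expanding $\beta^2 = 4\bigl(\tfrac{1}{\min_{(t,y)} n^{S_u}_{t,y} + 2} + \tfrac{1}{\min_{(t,y)} n^{\overline{S}_u}_{t,y} + 2}\bigr)^2$ converts the leading constant $m/2$ into the claimed $2m$, and averaging over $u$ and $\rmZ_{[2n]}$ yields the statement. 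I expect the main obstacle to be the case analysis in the bounded-differences step: one must carefully justify that flipping $r_{u_i}$ induces an additive, rather than coupled, perturbation across the training and test EO risks, so that the two applications of Lemma~\ref{bounded_difference_separation} compose cleanly. Once that is established, the Efron--Stein plus Hoeffding plus discriminant pipeline from the DP case carries over verbatim.
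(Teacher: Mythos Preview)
Your proposal is correct and follows essentially the same approach as the paper: decompose $\overline{gen}_{\mathrm{fairness}}$ over $u\in P^m_n$ as in the proof of Theorem~\ref{CMI_the1}, establish the EO analog of Lemma~\ref{subgauss_alternative} by combining Lemma~\ref{bounded_difference_separation} with Lemma~\ref{variance_bound_lemma} and Hoeffding's lemma, and then run the Donsker--Varadhan plus discriminant argument. Your observation that the extra factor of $2$ in the EO bounded-differences constant (Lemma~\ref{bounded_difference_separation}) is what turns $m/2$ into $2m$ is exactly the point, and the paper's own proof is in fact terser than yours, simply citing ``similar to'' the DP versions of these steps.
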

Similar to the result of Lemma \ref{subgauss_alternative}, by applying the separation loss to the variance bound, we obtain:
\begin{equation} \label{bound_varianceCMI_modified}
    \Var\left( \ell^{F_S}_E(w, \rmS_u) - \ell^{F_S}_E(w, \overline{\rmS}_u) \right) \leq \frac{m}{4} \E \left[ \left( \frac{2}{\min_{(t, y)} n^{S_u}_{t, y} + 2} + \frac{2}{\min_{(t, y)} n^{\overline{S}_u}_{t, y} + 2} \right)^2 \right]
\end{equation}

Finally, using Hoeffding's Lemma, we get:

\begin{equation}
    \E\left[ e^{\lambda \left( \ell^{F_S}_E(w, \rmS_u) - \ell^{F_S}_E(w, \overline{\rmS}_u) \right)} \right] \leq e^{\frac{\lambda^2 m}{8} \E\left[ \left( \frac{2}{\min_{(t, y)} n^{S_u}_{t, y} + 2} + \frac{2}{\min_{(t, y)} n^{\overline{S}_u}_{t, y} + 2} \right)^2 \right]}
\end{equation}
For a fixed $u \in  P^m_n$, let $(\overline{\rmW}, \overline{\rmR_u})$ be an independent copy of $(\rmW ,\rmR_u)$. The disintegrated mutual information $I_{z_{[2n]}} (\rmW, \rmR_u )$ is equal to:
\begin{equation}    
    I_{z_{[2n]}} (\rmW, \rmR_u  )  = 
    D\big(P_{\rmW, \rmR_u|\rmZ_{[2n]}= z_{[2n]} }\|P_{\rmW|\rmZ_{[2n]}= z_{[2n]}} P_{\rmR_u}\big), 
\end{equation}
And here we only change the independence loss function to the separation-based loss 
\begin{align}
\overline{gen}_{fairness}   = & | \E_{P_{\rmW,\rmZ_{[2n]},\rmR}} [    \ell^{F_S}_E(\rmW,\rmS) - \ell^{F_S}_E(\rmW,\overline{\rmS})   ]  | \\
\leq  &  \E_{P_{\rmZ_{[2n]}}}  |  \E_{P_{\rmW,\rmR |\rmZ_{[2n]}}} [    \ell^{F_S}_E(\rmW,\rmS) -  \ell^{F_S}_E(\rmW,\overline{\rmS})   ]  | \\
\leq & \frac{1}{|P^m_n|} \sum_{u \in P^m_n}  \E_{P_{\rmZ_{[2n]}}}  |  \E_{P_{\rmW,\rmR |\rmZ_{[2n]}}} [    \ell^{F_S}_E(\rmW,\rmS_u) -  \ell^{F_S}_E(\rmW,\overline{\rmS}_u)   |
\label{CMI_gen_S_formulation}
\end{align}

Similar the proof in Theorem \ref{CMI_the1}, we can have 
\begin{equation}
\overline{gen}_{\mathrm{fairness}} \leq \frac{1}{|P^m_n|}   \E_{\rmZ_{[2n]}} \Big[ \sum_{u \in P^m_n} \sqrt{2m \E_{\rmR_u}[ \big(\frac{1}{\displaystyle \min_{(t, y)} n^{S_u}_{t, y} + 2} + \frac{1}{\displaystyle \min_{(t, y)} n^{\overline{S}_u}_{t, y} + 2} \big)^2 ] I_{z_{[2n]}}  (\rmW; \rmR_u ) } \Big]
\end{equation}
As mutual information is invariant to permutations, we have the final results.

\subsection{ Proof of Theorem~\ref{sep_CMI_the4_S}} \label{CMI_the4_S_proof}
\textbf{ Theorem~\ref{sep_CMI_the4_S} } (restated) Assume that $ |f| \in [0,1]$, then for any $m \in \{2,\cdots,n\} $
\begin{equation}
\overline{gen}_{fairness}  \leq \frac{1}{|C^m_n|}   \E_{\rmZ_{[2n]}} \Big[ \sum_{u \in C^m_n} \sqrt{2m \E_{\rmR_u}[ \big(\frac{1}{\min_{(t, y)} n^{S_u}_{t, y} + 2} + \frac{1}{\min_{(t, y)} n^{\overline{S}_u}_{t, y} + 2} \big)^2 ] I_{z_{[2n]}} (\Delta \rmL^{\Phi_u}_{u};\rmR_{u_1}) } \Big]
\end{equation}

\begin{proof}
We have
\begin{align}
\overline{gen}_{fairness}   = & | \E_{P_{\rmW,\rmZ_{[2n]},\rmR}} [    \ell^{F}_E(\rmW,\rmS) - \ell^{F}_E(\rmW,\overline{\rmS})   ]  | \\
\leq  &  \E_{P_{\rmZ_{[2n]}}}  |  \E_{P_{\rmW,\rmR |\rmZ_{[2n]}}} [    \ell^{F}_E(\rmW,\rmS) -  \ell^{F}_E(\rmW,\overline{\rmS})   ]  | \\
\leq & \frac{1}{|P^m_n|} \sum_{u \in P^m_n}  \E_{P_{\rmZ_{[2n]}}}  |  \E_{P_{\rmW,\rmR |\rmZ_{[2n]}}} [    \ell^{F}_E(\rmW,\rmS_u) -  \ell^{F}_E(\rmW,\overline{\rmS}_u)   | \\
\leq & \frac{1}{|P^m_n|} \sum_{u \in P^m_n}  \E_{P_{\rmZ_{[2n]}}}  |  \E_{P_{L^{\Phi_u}_u,\rmR_{u_1} |\rmZ_{[2n]}}} [  (-1)^{\rmR_{u_1}} \big( \rmL^{\Phi^+_u}_u - \rmL^{\Phi^-_u}_u \big) ]  | \\
\leq & \frac{1}{|P^m_n|} \sum_{u \in P^m_n}  \E_{P_{\rmZ_{[2n]}}}  |  \E_{P_{\Delta \rmL^{\Phi_u}_{u},\rmR _{u} |\rmZ_{[2n]}}} [  (-1)^{\rmR_{u_1}} \Delta\rmL_{u}^{\Phi_u} ]  | 
\end{align}    

By the Donsker–Varadhan variational representation of KL
divergence,  $\forall \lambda \in \sR$, we have
\begin{align} 
       I_{z_{[2n]} } (\Delta\rmL_{u}^{\Phi_u} ; \rmR_{u_1} )  \geq   \lambda \E_{\Delta\rmL_{u}^{\Phi_u} , \rmR_{u_1}|\rmZ_{[2n]}= z_{[2n]}}&[   (-1)^{\rmR_{u_1}} \Delta\rmL_{u}^{\Phi_u} ]  \nonumber \\
   & -     \log \E_{ \overline{\Delta\rmL_{u}^{\Phi_u}} , \overline{\rmR_{u_1}}|\rmZ_{[2n]}= z_{[2n]}} [e^{ \lambda  \big( (-1)^{\overline{\rmR_{u_1}}} \:  \overline{\Delta\rmL_{u}^{\Phi_u}} \big) }]. \\
          I_{z_{[2n]} } (\Delta\rmL_{u}^{\Phi_u} ; \rmR_{u_1} )  \geq   \lambda \E_{\Delta\rmL_{u}^{\Phi_u} , \rmR_{u_1}|\rmZ_{[2n]}= z_{[2n]}}&[   (-1)^{\rmR_{u_1}} \Delta\rmL_{u}^{\Phi_u} ]  \nonumber \\
   & -     \log \E_{ \overline{\rmW}, \overline{\rmR_u}|\rmZ_{[2n]}= z_{[2n]}} [e^{ \lambda  \big(  \ell^{F}_E(\overline{\rmW},\rmS_u) -  \ell^{F}_E(\overline{\rmW},\overline{\rmS}_u)  \big) }]. \\
          I_{z_{[2n]} } (\Delta\rmL_{u}^{\Phi_u} ; \rmR_{u_1} )  \geq   \lambda \E_{\Delta\rmL_{u}^{\Phi_u} , \rmR_{u_1}|\rmZ_{[2n]}= z_{[2n]}}&[   (-1)^{\rmR_{u_1}} \Delta\rmL_{u}^{\Phi_u} ]  \nonumber \\
   & -     \log \E_{\overline{\rmW}} \Big[ \E_{\overline{\rmR}_u|\rmZ_{[2n]}= z_{[2n]},\overline{\rmW}=w} [e^{ \lambda  ( \ell^{F}_E(w,\rmS_u) -  \ell^{F}_E(w,\overline{\rmS}_u)) } ] \Big]. \\ 
\end{align}
Using the result of Lemma~\ref{CMI_lemma_S_proof}, we have 
\begin{align}
          I_{z_{[2n]} } (\Delta\rmL_{u}^{\Phi_u} ; \rmR_{u_1} )    \geq   \lambda \E_{\Delta\rmL_{u}^{\Phi_u} , \rmR_{u_1}|\rmZ_{[2n]}= z_{[2n]}}&[   (-1)^{\rmR_{u_1}} \Delta\rmL_{u}^{\Phi_u} ]  \nonumber \\
   & -     \frac{\lambda^2 m}{8}  \E_{\rmR_u}[ \big(\frac{1}{\displaystyle \min_{(t, y)} n^{S_u}_{t, y} + 2} + \frac{1}{\displaystyle \min_{(t, y)} n^{\overline{S}_u}_{t, y} + 2} \big)^2 ] 
\end{align}
Hence, the discriminant is negative:

\begin{equation}
\overline{gen}_{fairness}  \leq \frac{1}{|P^m_n|}   \E_{\rmZ_{[2n]}} \Big[ \sum_{u \in P^m_n} \sqrt{2\,m\,a^2  \E_{\rmR_u}[ \big(\frac{1}{\displaystyle \min_{(t, y)} n^{S_u}_{t, y} + 2} + \frac{1}{\displaystyle \min_{(t, y)} n^{\overline{S}_u}_{t, y} + 2} \big)^2 ]  I_{z_{[2n]} } (\Delta\rmL_{u}^{\Phi_u} ; \rmR_{u_1} )  } \Big]
\end{equation}
As mutual information is invariant to permutations and we have $a=1$ for the binary label case, we have the final results.
    
\end{proof}

\section{Bounds based on prior techniques} \label{appendix_loosebound}

\begin{lemma} \label{loosebound1} 
Assume that $ |f| \in [0,1]$, then
\begin{equation}
\overline{gen}_{fairness}   \leq \sqrt{ 2 I (\rmW, \rmV_{u}) }
\end{equation}
\end{lemma}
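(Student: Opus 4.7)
The plan is to apply the classical Xu--Raginsky mutual-information generalization bound directly to the fairness loss, treating $\ell^{F}_E(\rmW, \rmV_u)$ as a single bounded function of $\rmV_u$ without any sample-wise decomposition. Starting from Definition~\ref{gen_fair}, I would rewrite the gap using an independent copy $(\overline{\rmW}, \overline{\rmV}_u)$ of $(\rmW, \rmV_u)$ and then invoke the Donsker--Varadhan variational representation of $I(\rmW; \rmV_u)$ with the test function
\[
h(w, v) = \ell^{F}_E(w, v) - \E_{\overline{\rmV}_u}[\ell^{F}_E(w, \overline{\rmV}_u)],
\]
scaled by a free parameter $\lambda \in \sR$ to be optimized at the end.

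The crux is to upper-bound the log-moment generating function $\log \E_{\overline{\rmV}_u}[\exp(\lambda h(w, \overline{\rmV}_u))]$ uniformly in $w$. Here I would deliberately avoid the Efron--Stein refinement of Lemma~\ref{variance_bound_lemma} and rely only on the crude boundedness of $\ell^{F}_E$. Since $|f| \in [0,1]$ forces each of the two sample averages inside the absolute value in \eqref{emp_risk_fairness} to lie in $[0,1]$, the loss $\ell^{F}_E(w, v)$ lies in $[0,1]$ as well, so $h(w, \overline{\rmV}_u)$ is zero-mean and supported in an interval of length at most $2$. Hoeffding's lemma (Lemma~\ref{boundedsubgaussian}) then yields $\log \E_{\overline{\rmV}_u}[\exp(\lambda h(w, \overline{\rmV}_u))] \leq \lambda^2/2$, i.e.\ a $1$-sub-Gaussian estimate that is uniform in $w$.

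Plugging this into the Donsker--Varadhan inequality produces the quadratic inequality $\lambda^2/2 - \lambda \cdot \overline{gen}_{fairness} + I(\rmW; \rmV_u) \geq 0$ valid for every $\lambda \in \sR$, whose discriminant is nonpositive precisely when $\overline{gen}_{fairness} \leq \sqrt{2 I(\rmW; \rmV_u)}$. I do not expect any real obstacle: the only substantive step is centering correctly and applying Hoeffding with the crude range $[0,1]$. It is exactly this crudeness that makes the bound loose, since treating $\ell^{F}_E$ as an arbitrary bounded random variable ignores the bounded-differences structure that Lemmas~\ref{variance_bound_lemma} and~\ref{bounded_difference_lemma} later exploit in Theorem~\ref{MI_theorem} to introduce the sharper group-imbalance factor $1/\sH(n^{\rmV_u}_0, n^{\rmV_u}_1)$.
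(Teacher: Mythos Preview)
Your proposal is correct and follows essentially the same approach as the paper's proof: apply Donsker--Varadhan to $I(\rmW;\rmV_u)$, use that $|f|\in[0,1]$ forces $\ell^{F}_E\in[0,1]$, invoke Hoeffding's lemma to control the log-moment generating function by $\lambda^2/2$, and read off the bound from the nonpositivity of the discriminant of the resulting quadratic in $\lambda$. The only cosmetic difference is that you center the loss before applying Hoeffding whereas the paper applies it directly to $\ell^{F}_E$ and moves the mean to the other side; the arithmetic and the final constant coincide.
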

\begin{proof}
Using the Donsker–Varadhan variational representation of the relative entropy,  we have
\begin{equation} \label{donsker1}
    I (\rmW, \rmV_{u})\geq \E_{P_{\rmW,\rmS}}[\lambda  \ell_{E}(\rmW,\rmV_{u}) ]  - \log \E_{P_{\overline{\rmW}}\otimes P_{\overline{\rmV_{u}}}} [e^{\lambda  \ell_{E}(\rmW,\rmV_{u})} ],  \forall \lambda \in \sR.
   \end{equation} 

We have $ |f| \in [0,1]$,  Thus 
\begin{align} 
  \ell_E(w,S) = \Big|\frac{1}{n_{0}} \sum_{T_i=0} f(w,x_i) - \frac{1}{n_{1}}\sum_{T_i=1} f(w,x_i) \Big| & \leq \max \Bigg( \Big|\frac{1}{n_{0}} \sum_{T_i=0} f(w,x_i)\Big|, \Big| \frac{1}{n_{1}}\sum_{T_i=1} f(w,x_i) \Big| \Bigg) \\
 & \leq \max(1,1) = 1 
\end{align}
So $F_{E}(w,X'_u) \in [0,1]$ is bounded. Thus, using Hoeffding's lemma, we have:

\begin{equation}
   \log \E_{P_{\overline{\rmW}}\otimes P_{\overline{\rmV_{u}}}} [e^{\lambda  \ell_{E}(\rmW,\rmV_{u})} ] \leq \lambda   \E_{P_{\overline{\rmW}}\otimes P_{\overline{\rmV_{u}}}} [ \ell_{E}(\rmW,\rmV_{u})] + \lambda^2 \frac{1}{2}
\end{equation}
By replacing in \ref{donsker1}, we have:

\begin{equation} \label{parabola1}
    I (\rmW, \rmV_{u}) - \lambda (\E_{P_{\rmW,\rmV_{u}}}[  \ell_{E}(\rmW,\rmV_{u}) ] -  \E_{P_{\overline{\rmW}}\otimes P_{\overline{\rmV_{u}}}} [ \ell_{E}(\rmW,\rmV_{u})]) +  \lambda^2 \frac{1}{2}\geq 0 \forall \lambda \in \sR.
\end{equation}
Thus, \eqref{parabola1} must have a non-positive discriminant. 

\begin{equation}
  |  \E_{P_{\rmW,\rmV_{u}}}[  \ell_{E}(\rmW,\rmV_{u}) ] -  \E_{P_{\overline{\rmW}}\otimes P_{\overline{\rmV_{u}}}} [ \ell_{E}(\rmW,\rmV_{u})]| \leq  \sqrt{ 2 I (\rmW, \rmV_{u}) }
\end{equation}
\end{proof}

\subsection{ Proof of Lemma~\ref{loosebound2}} \label{loosebound2_proof}
\textbf{ Lemma~\ref{loosebound2} } (restated)
Assume that $ |f| \in [0,1]$, then for any $m \in \{2,\cdots,n\} $
    \begin{equation}
    \overline{gen}_{fairness} \leq \frac{1}{|C^m_n|} \sum_{u \in C^m_n} \sqrt{2 I(\rmW; \rmV_{u}) }
\end{equation}
\begin{proof}  
\begin{equation}
\overline{gen}_{fairness}   =  \E_{P_{\rmW} \otimes P_{\rmS}} [   \ell^{F}_E(\rmW,\rmS) ]  -     \E_{P_{\rmW,\rmS}} [\ell^{F}_E(\rmW,\rmS)   ]
\end{equation}

Let $m \in \{2,\cdots,n\} $ be a fixed number. Given the set $Z_{1:m} \sim \mu^m$, the corresponding $\ell^{F}_E(w,Z_{1:m})$ is as follows:
\begin{equation} 
 \ell^{F}_E(w,Z_{1:m}) \triangleq  \Big|\frac{1}{n^{Z_{1:m}}_{0}} \sum_{T_i=0} f(w,x_i) - \frac{1}{n^{Z_{1:m}}_{1}}\sum_{T_i=1} f(w,x_i) \Big| 
\end{equation}
Given a sequence of indices $u = \{u_i\}^m_{i=1} \in [1, n]^m$, let $Z_u = \{Z_{u_i}\}^m_{i=1}$ be the sequence of training samples indexed by $u$. Let $P^m_n$ be the set of $m$-permutations of $n$.  Then, using the notion of $\ell^{F}_E(\rmW,\rmZ_{1:m})$ we have:

\begin{align}
\overline{gen}_{fairness}   = & |\E_{P_{\rmW} \otimes P_{\rmS}} [   \ell^{F}_E(\rmW,\rmS) ]  -     \E_{P_{\rmW,\rmS}} [\ell^{F}_E(\rmW,\rmS)   ] | \\
= & \Big|  \E_{P_{\rmW,\overline{\rmZ}_{1:m}}} [\ell^{F}_E(\rmW,\overline{\rmZ}_{1:m}) ] - \frac{1}{|P^m_n|} \sum_{u \in P^m_n}  \E_{P_{\rmW,\rmZ_{u}}} [\ell^{F}_E(\rmW,\rmZ_{u})] \Big| \\ 
\leq &  \frac{1}{|P^m_n|} \sum_{u \in P^m_n} \Big|  \E_{P_{\rmW,\overline{\rmZ}_{1:m}}} [\ell^{F}_E(\rmW,\overline{\rmZ}_{1:m}) ] -\E_{P_{\rmW,\rmZ_{u}}} [\ell^{F}_E(\rmW,\rmZ_{u})]  \Big| \label{gen_formulation}
\end{align}

Assuming that $ |f| \in [0,1]$, Similar to Lemma~\ref{loosebound1}, we can show that 
\begin{equation} \label{singleel}
  \E_{P_{\rmW,\overline{\rmZ}_{1:m}}} [ \ell^{F}_E(\rmW,\overline{\rmZ}_{1:m}) ] -\E_{P_{\rmW,\rmZ_{u}}} [\ell^{F}_E(\rmW,\rmZ_{u})] \leq \sqrt{2 I(\rmW; \rmV_{u}) }  
\end{equation}
where $\rmV_{u} = (\rmX_{u},\rmT_u)$

By plugging \eqref{singleel} into \eqref{gen_formulation}, we have
\begin{align}
    \overline{gen}_{fairness} & \leq   \frac{1}{|P^m_n|} \sum_{u \in P^m_n} \Big|  \E_{P_{\rmW,\overline{\rmZ}_{1:m}}} [ \ell^{F}_E(\rmW,\overline{\rmZ}_{1:m}) ] -\E_{P_{\rmW,\rmZ_{u}}} [\ell^{F}_E(\rmW,\rmZ_{u})]  \Big| \\
   & \leq   \frac{1}{|P^m_n|} \sum_{u \in P^m_n} \sqrt{2  I(\rmW; \rmV_{u} ) } 
\end{align}
However, as mutual information is invariant to sample permutations, we have 
\begin{equation}
    \overline{gen}_{fairness} \leq \frac{1}{|C^m_n|} \sum_{u \in C^m_n} \sqrt{2  I(\rmW; \rmV_{u}) },
\end{equation}
where  $C^m_n$ is the set of $m$-combinations.

\end{proof}

\newpage 
\section{Additional Empirical results} \label{}

\subsection{Experimental details} \label{expiremental_details}

In all the experiments in this paper, similar to~\citet{harutyunyan2021information}, for every number of training data $n$, we randomly sample $m_1=21$ different $2n$ samples from the original dataset. Next, for each $z_{[2n]}$, we draw $m_2=50$ different train/test splits, i.e., $m_2$ random realizations of $\rmR$. Hence, each data point in the figures corresponds to a total of $m_1m_2=1050$ experiments. We report the mean and standard deviation on the $m_1$ results.

We conducted experiments using the following datasets:
\begin{itemize}
    \item \textbf{COMPAS} \citep{larson2016propublica}: The COMPAS dataset contains records of criminal defendants and is used to predict whether the defendant will recidivate within two years. The dataset includes attributes such as criminal history and demographic information, including race and gender.
    
    \item \textbf{Adult}~\citep{kohavi1996adult}: The Adult dataset is based on the 1994 U.S. Census and is widely used in machine learning research. The goal is to predict whether an individual earns more than \$50K annually based on demographic and financial data. In fairness studies, gender is often used as the sensitive (binary) attribute.
    
\end{itemize}
The following fairness methods were evaluated in our experiments:
\begin{itemize}

    \item \textbf{ERM}: Empirical Risk Minimization (ERM) is a standard machine learning approach that minimizes the empirical risk on the training data. It serves as a baseline for fairness methods.
    
    \item \textbf{DiffDP, DiffEopp, DiffEodd}~\citep{mroueh2021fair}: These methods apply gap regularization for demographic parity, equalized opportunity, and equalized odds, respectively. Since these fairness metrics cannot be optimized directly, gap regularization provides a differentiable alternative loss that can be optimized using gradient descent.
    
    \item \textbf{PRemover (PrejudiceRemover)}~\citep{kamishima2012fairness}:  This method minimizes the mutual information between prediction accuracy and sensitive attributes.
    
    \item \textbf{HSIC}~\citep{baharlouei2019r,li2022kernel}: The Hilbert-Schmidt Independence Criterion (HSIC) is minimized to reduce the dependence between prediction accuracy and sensitive attributes.
\end{itemize}

\subsection{Extra Numerical results for our bound in Theorems~\ref{CMI_the4}} \label{extra_results}

We report the experimental results of fairness generalization error and our bounds in Theorem~\ref{CMI_the4} (DP) and Theorem~\ref{sep_CMI_the4_S} (EO) as a function of the the total number of training samples $n$ on with COMPAS dataset with gender and race as sensitive attribute in Figure~\ref{compas_gender} and Figure~\ref{compas_race}, respectively. In Figure~\ref{adult_gender} and Figure~\ref{adult_race}, we report the results of our bound in Theorem~\ref{CMI_the4} (DP) on Adult with Gender and Race as sensitive attributes, respectively. 

\begin{figure*}[!ht]
\centering
\includegraphics[width=0.33\linewidth]{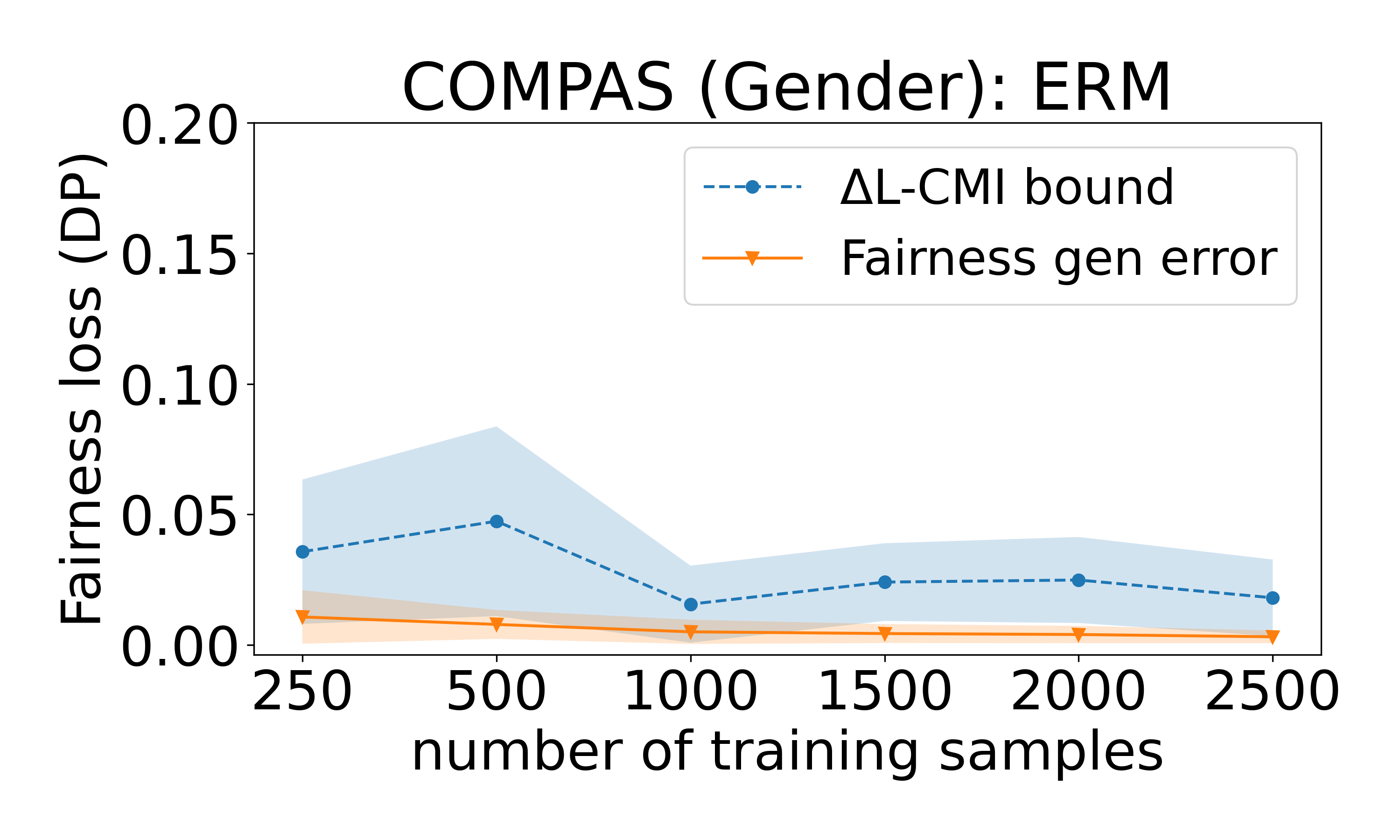}
\includegraphics[width=0.33\linewidth]{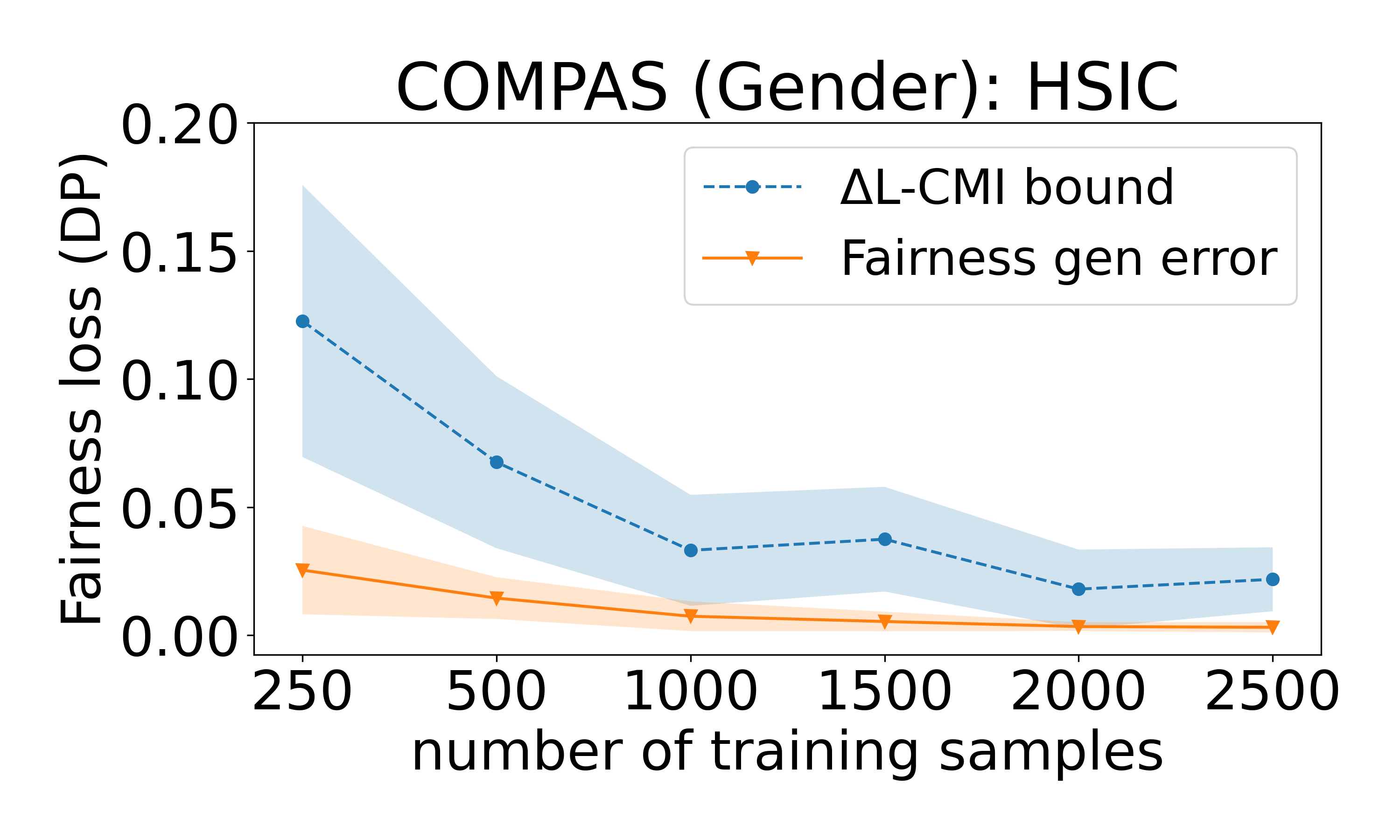}
\includegraphics[width=0.33\linewidth]{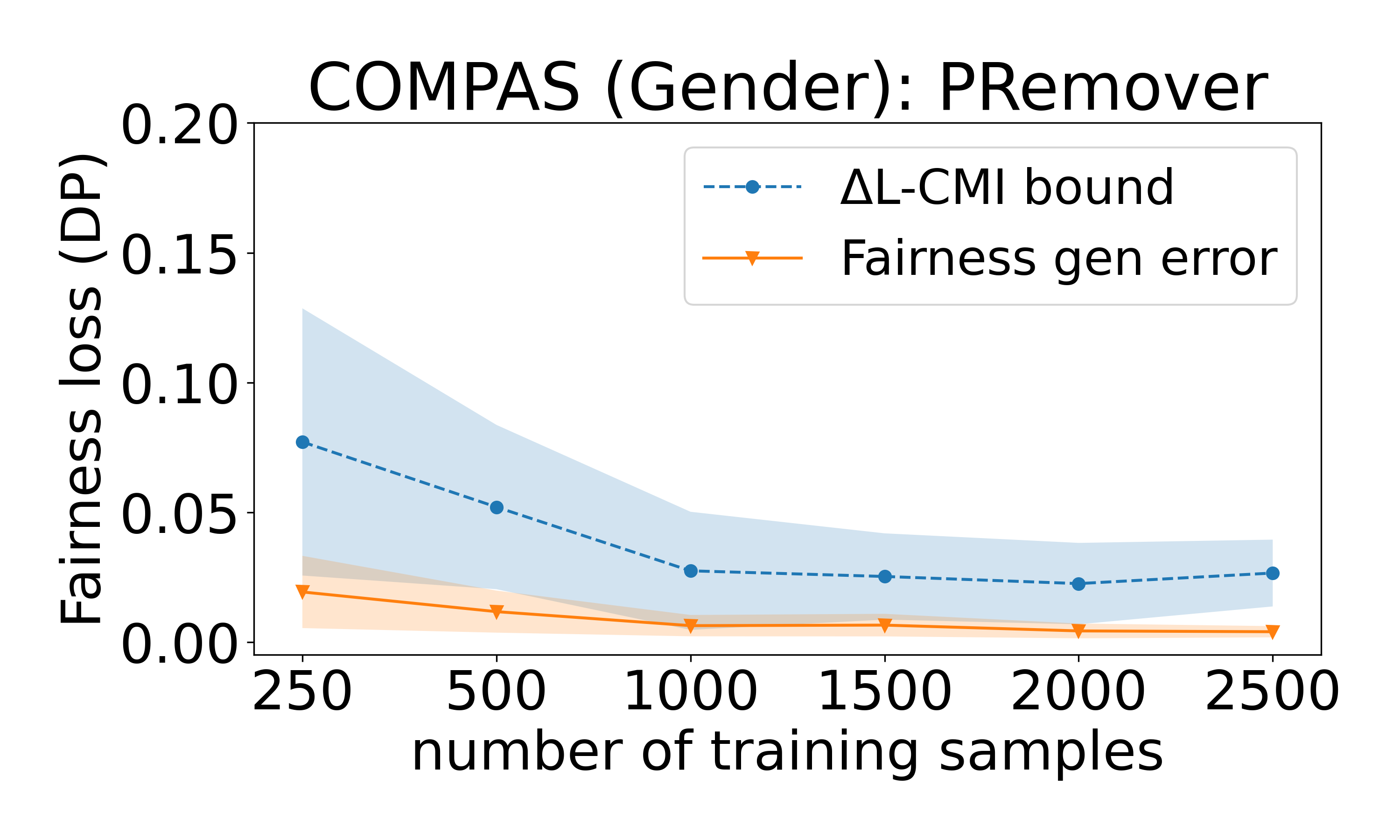}
\includegraphics[width=0.33\linewidth]{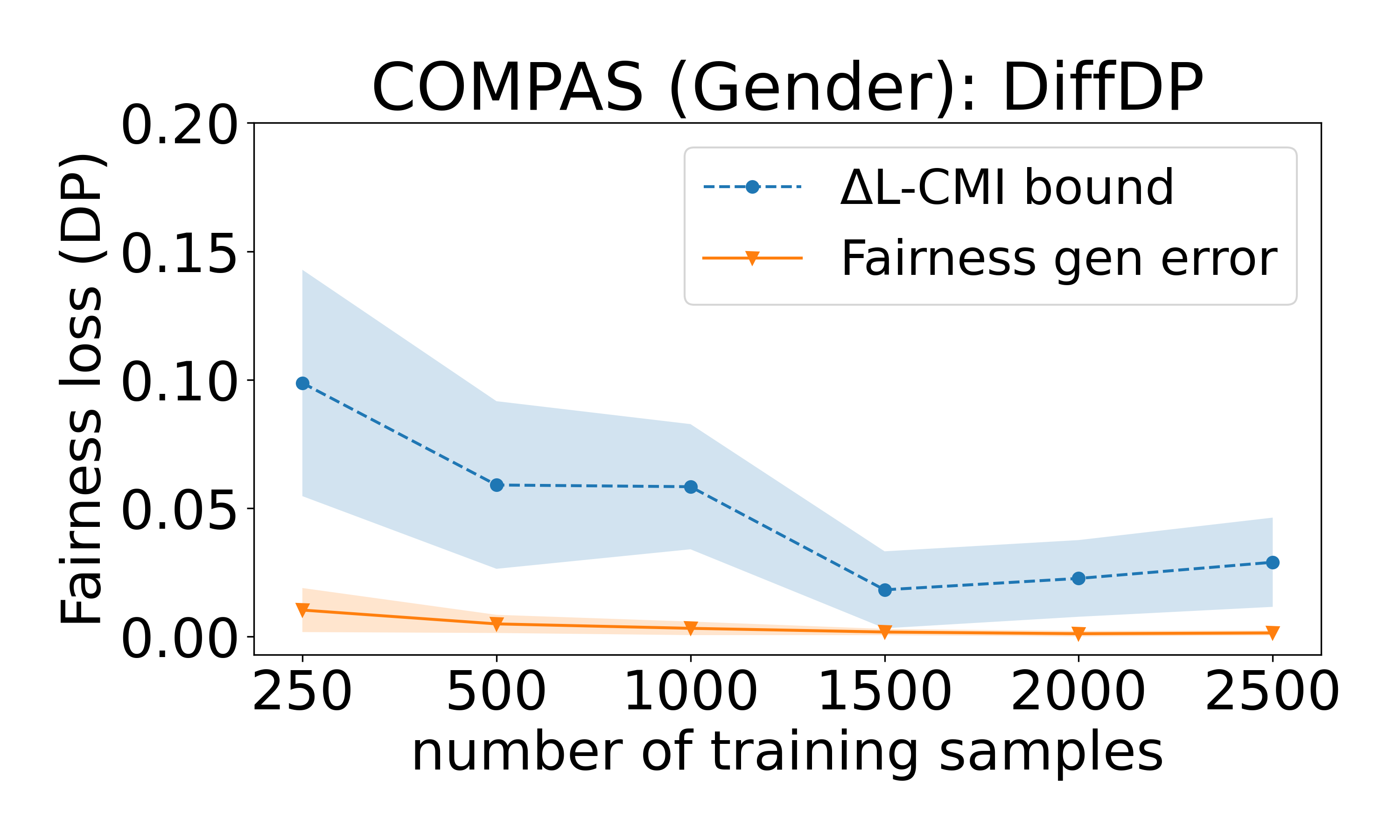}
\includegraphics[width=0.33\linewidth]{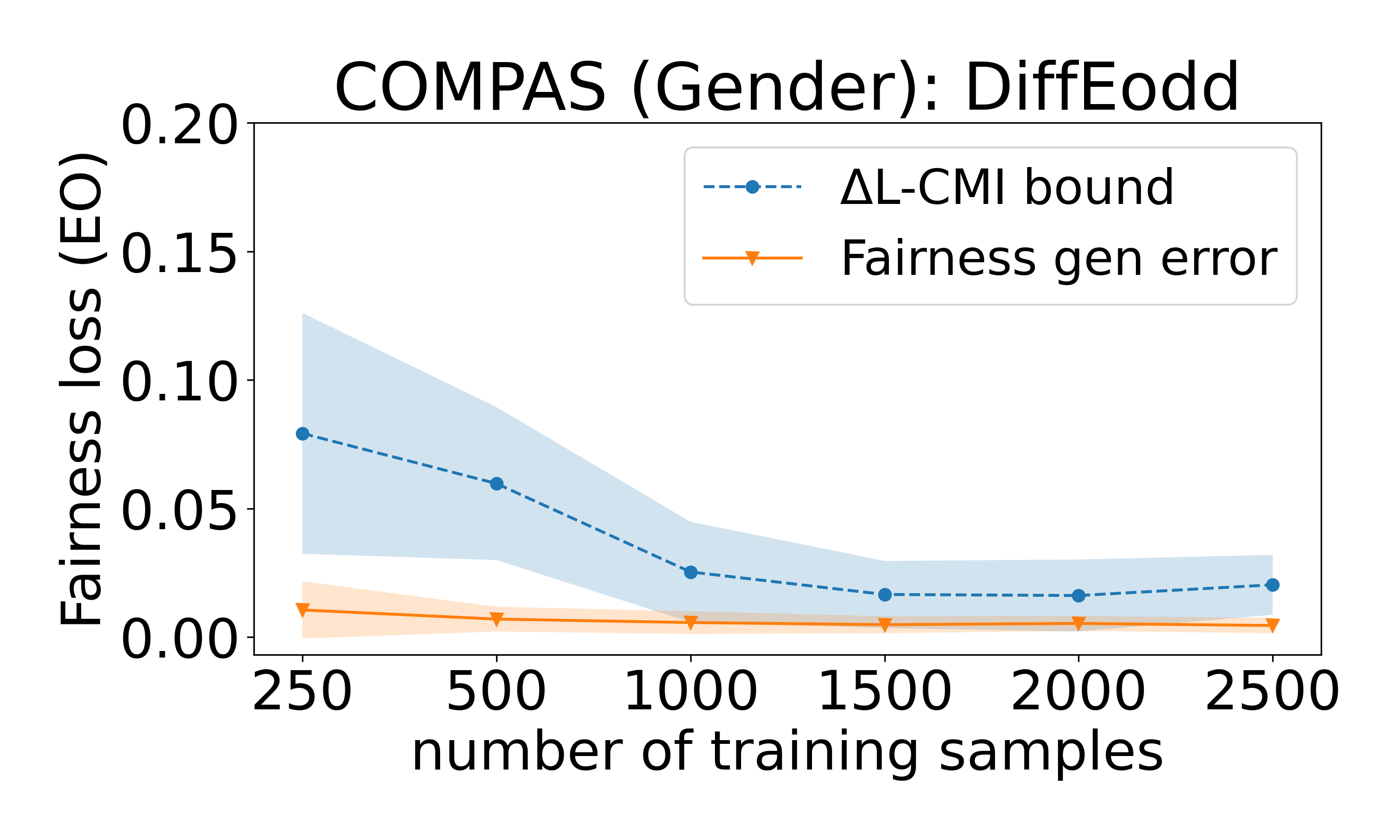}
\includegraphics[width=0.33\linewidth]{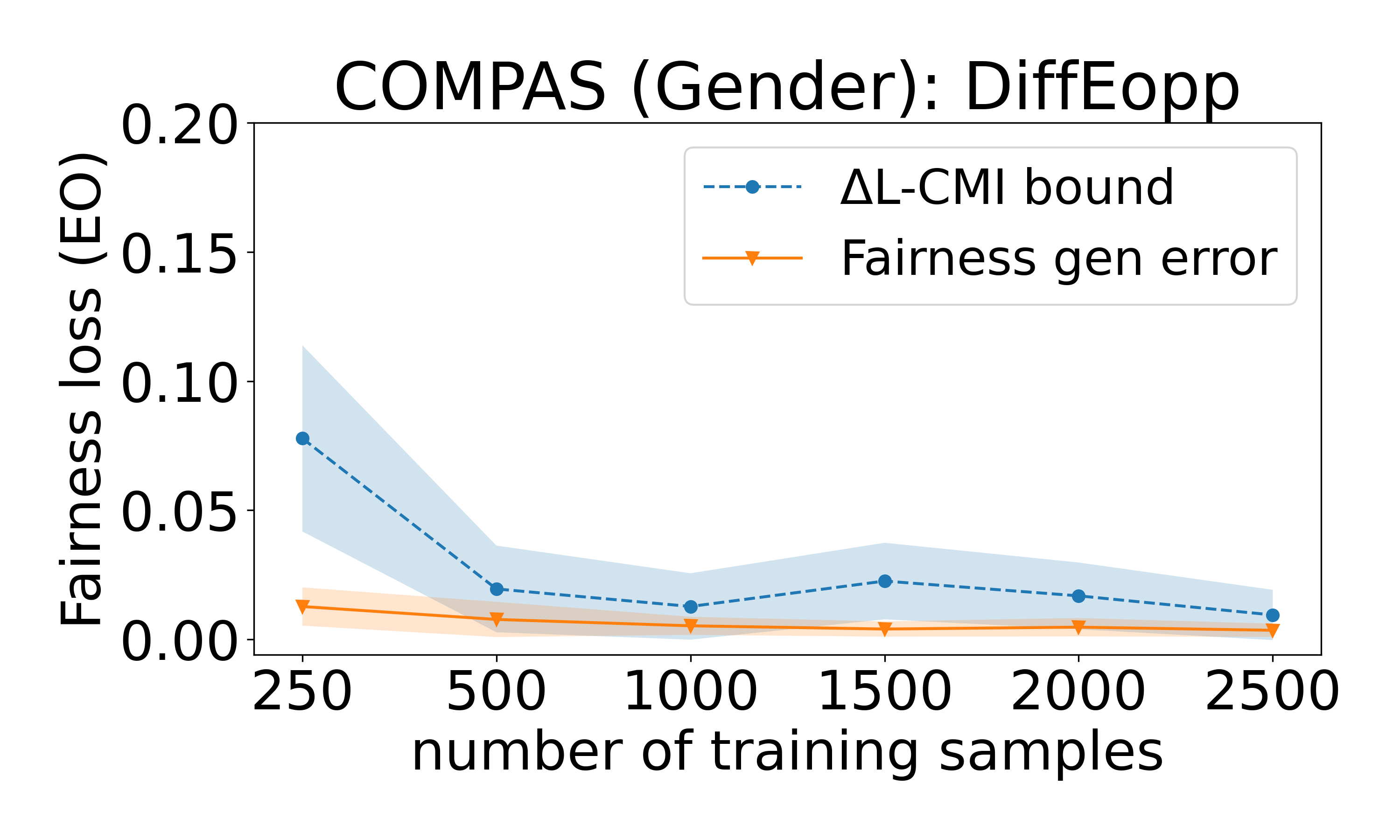}
\caption{Experimental results with COMPAS dataset (gender as sensitive attribute) of fairness generalization error and our bounds in Theorems~\ref{CMI_the4} (DP) and Theorem~\ref{sep_CMI_the4_S} (EO) as a function of the total number of training samples $n$. }
\vspace{-1em}
\label{compas_gender}
\end{figure*}

\begin{figure*}[!ht]
\centering
\includegraphics[width=0.33\linewidth]{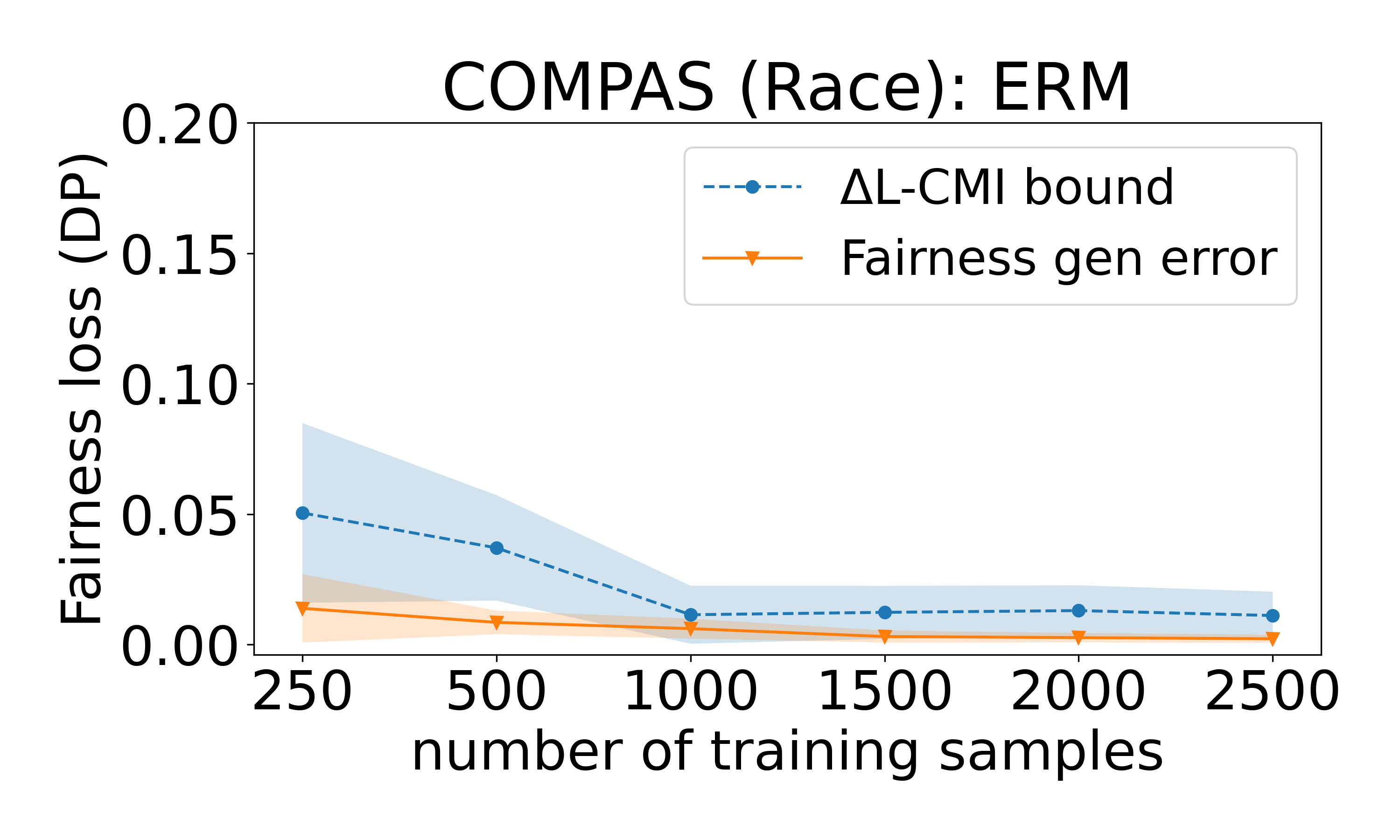}
\includegraphics[width=0.33\linewidth]{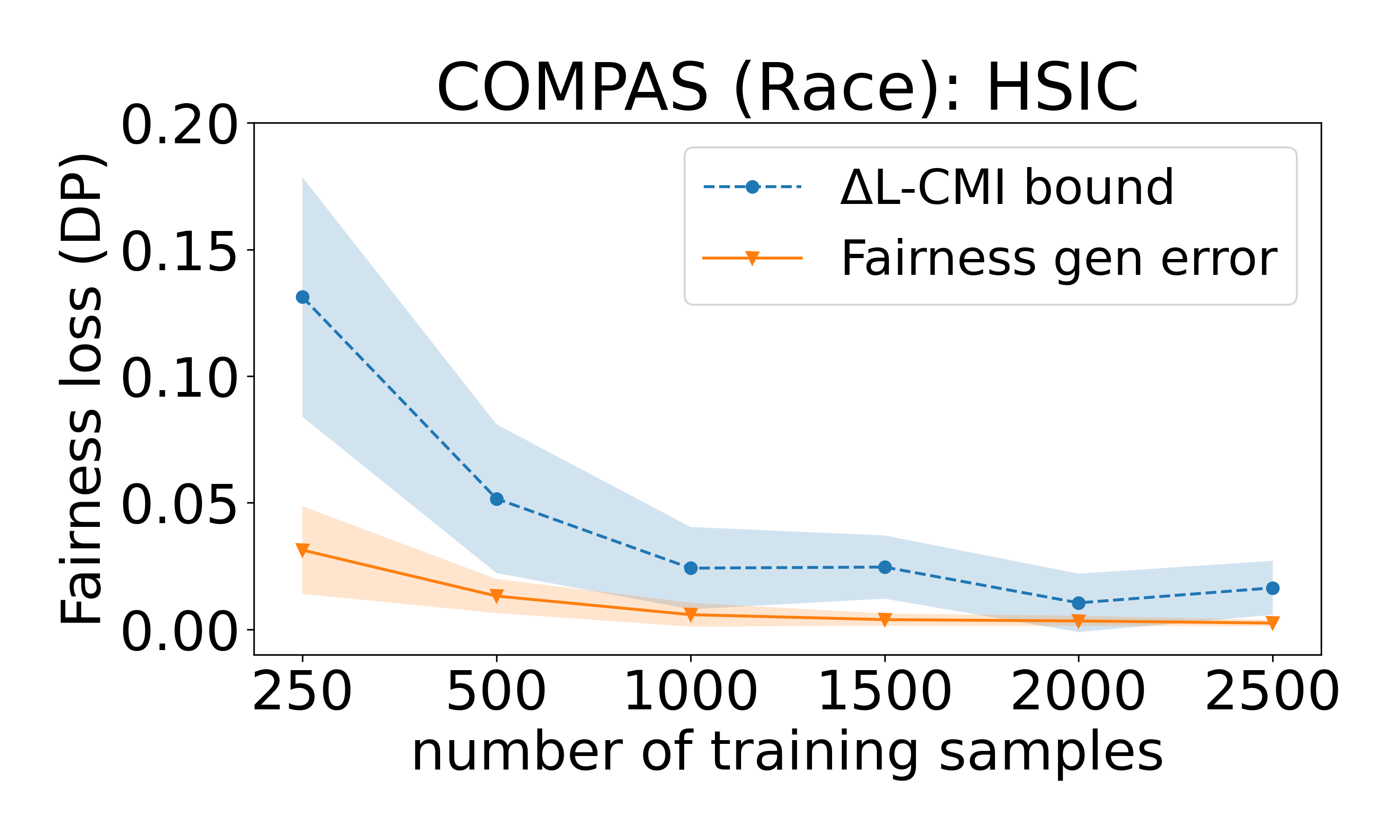}
\includegraphics[width=0.33\linewidth]{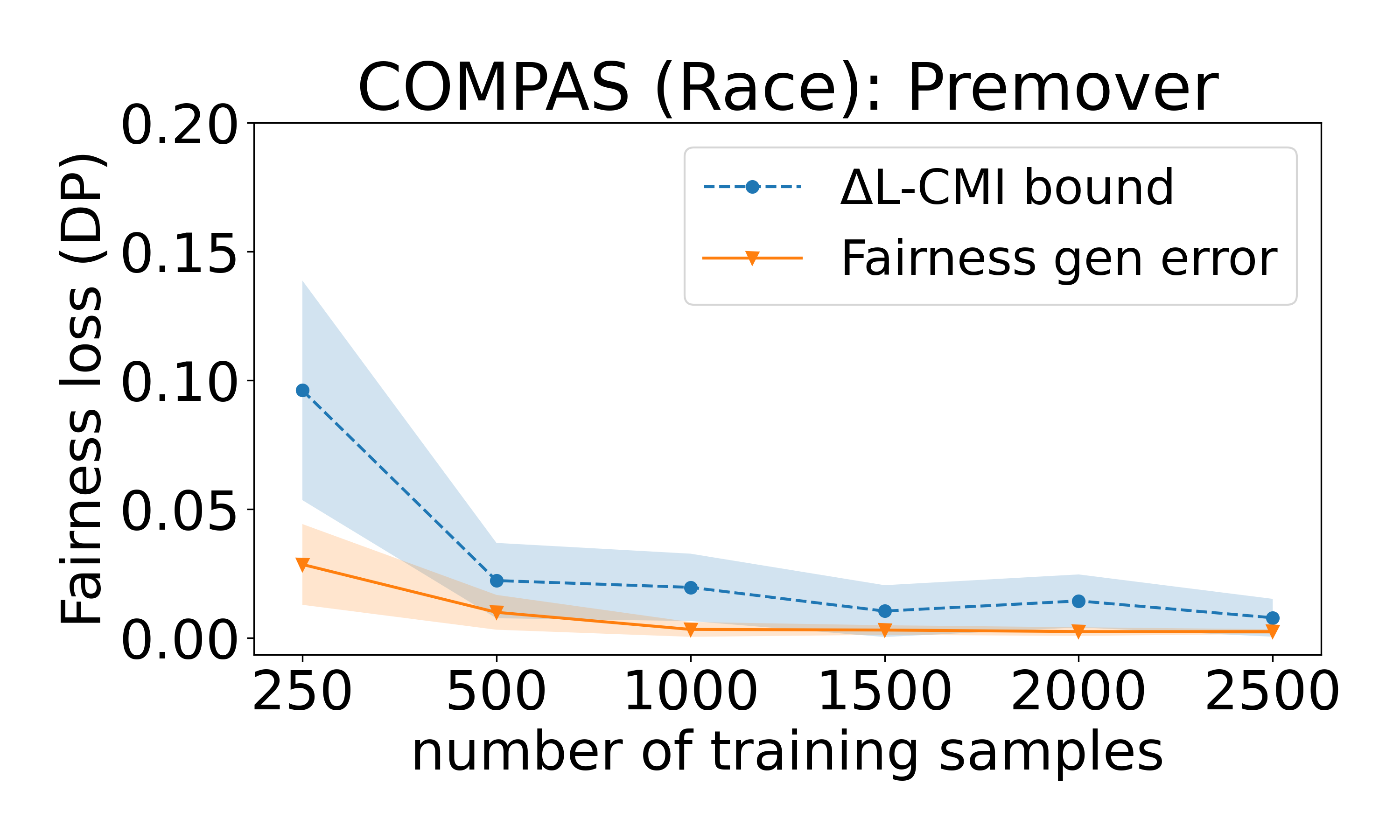}
\includegraphics[width=0.33\linewidth]{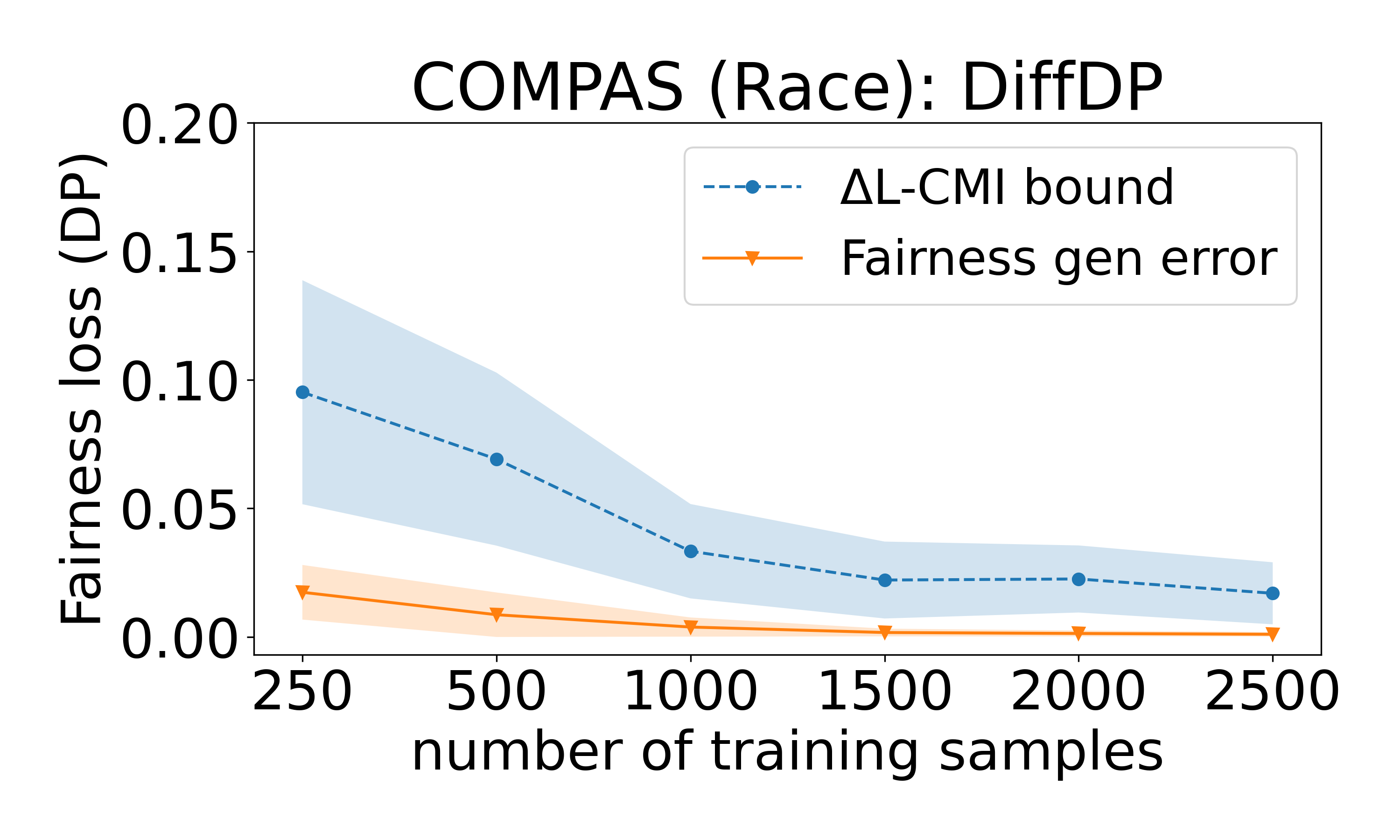}
\includegraphics[width=0.33\linewidth]{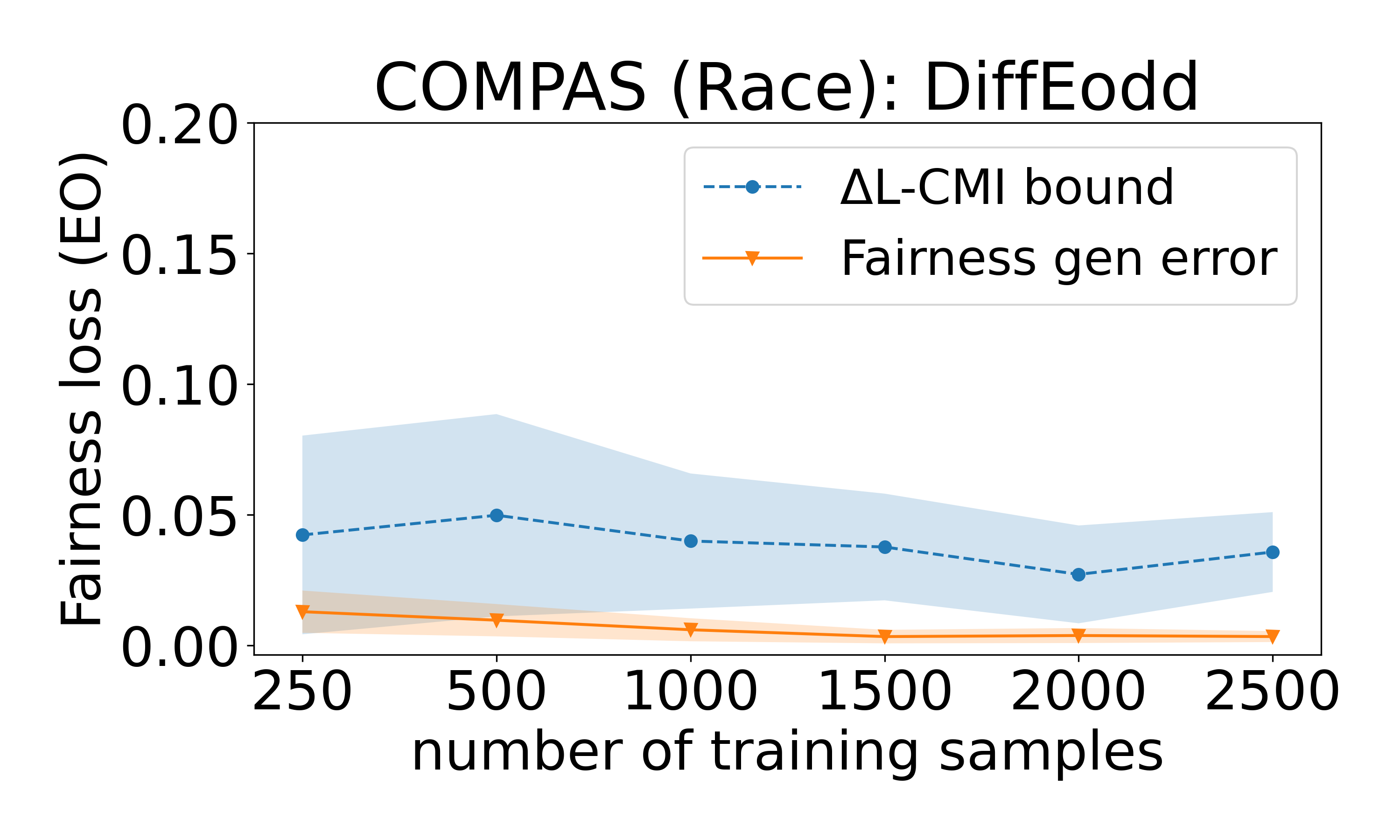}
\includegraphics[width=0.33\linewidth]{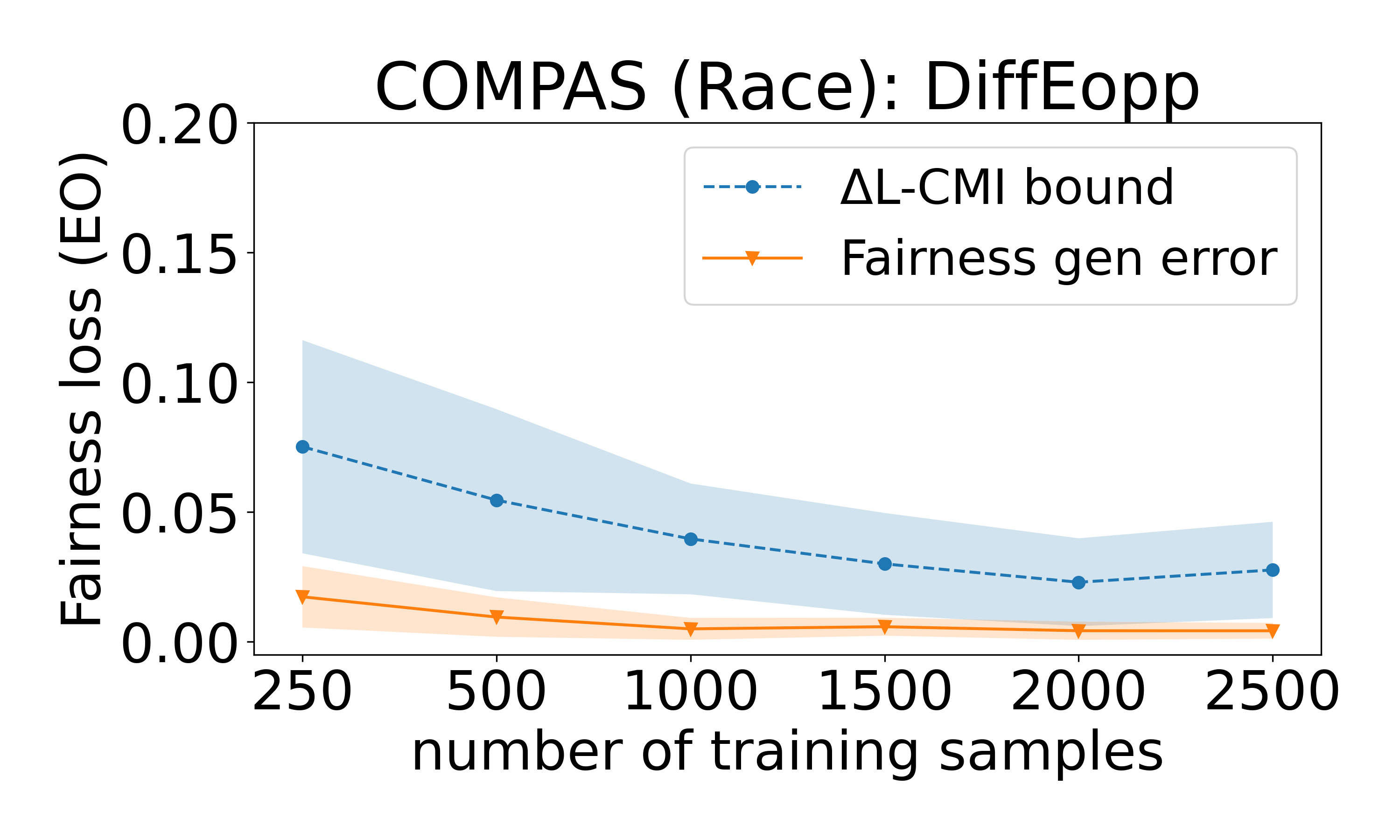}
\caption{Experimental results with COMPAS dataset (Race as sensitive attribute) of fairness generalization error and our bounds in Theorems~\ref{CMI_the4} (DP) and Theorem~\ref{sep_CMI_the4_S} (EO) as a function of the total number of training samples $n$. }
\vspace{-1em}
\label{compas_race}
\end{figure*}

\begin{figure*}[!ht]
\centering
\includegraphics[width=0.4\linewidth]{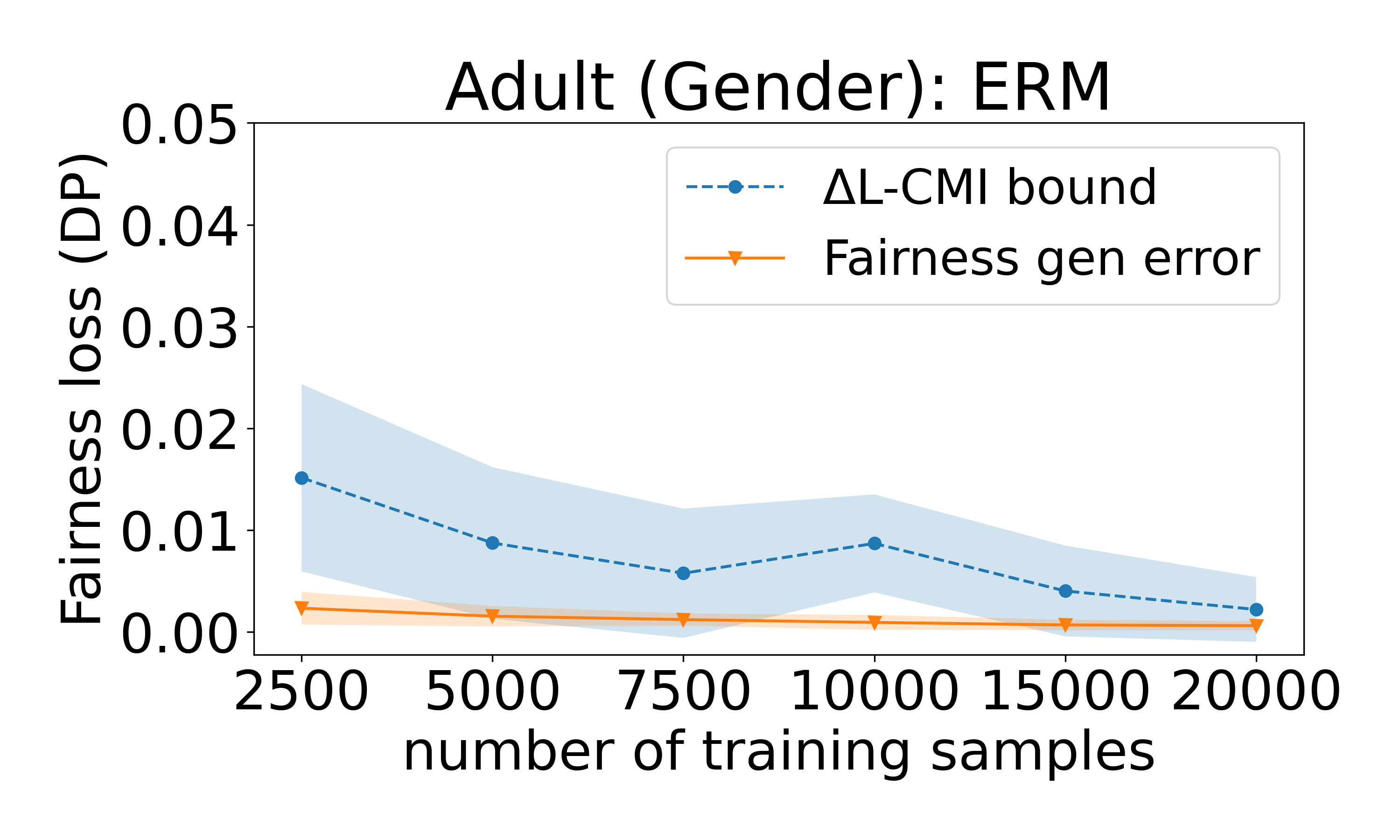}
\includegraphics[width=0.4\linewidth]{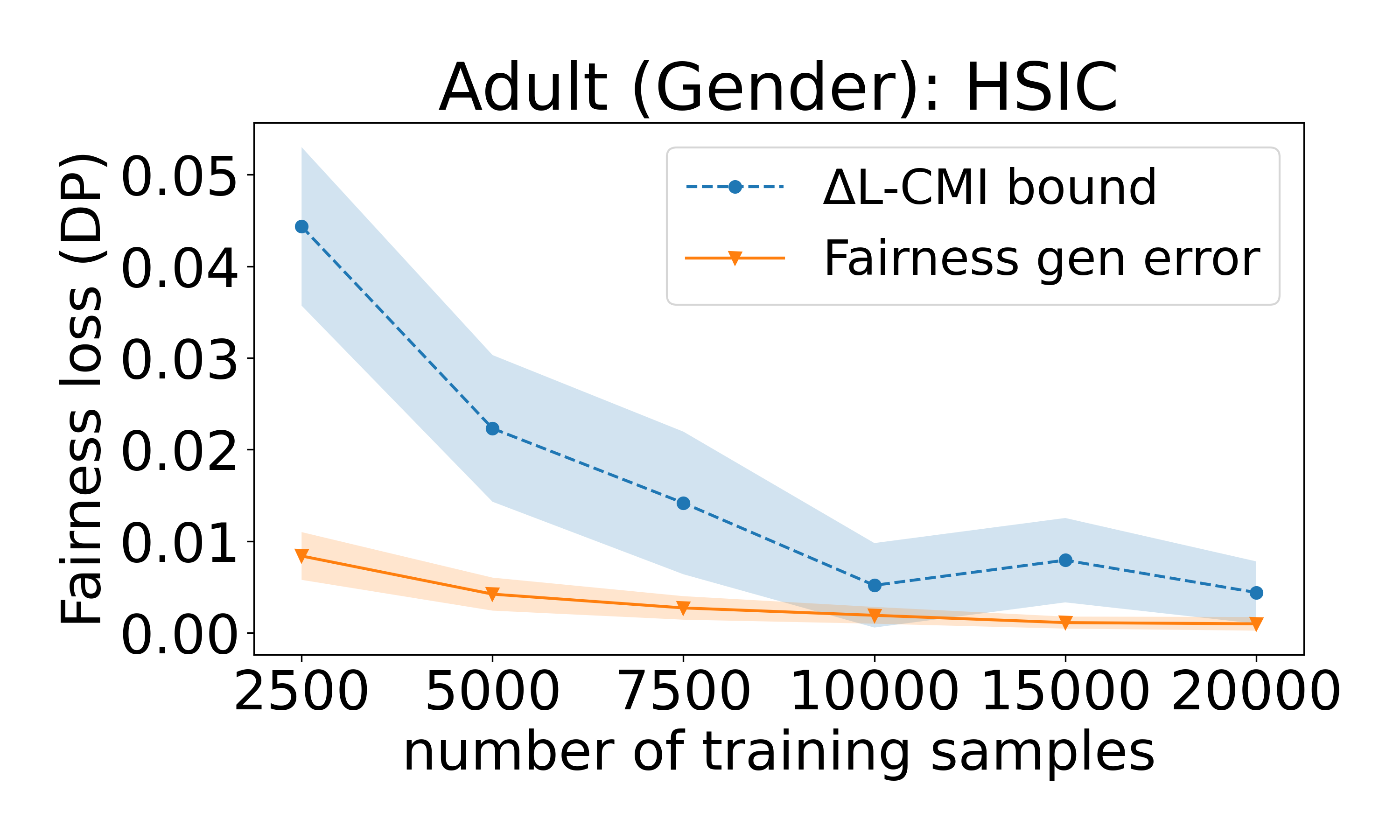}
\includegraphics[width=0.4\linewidth]{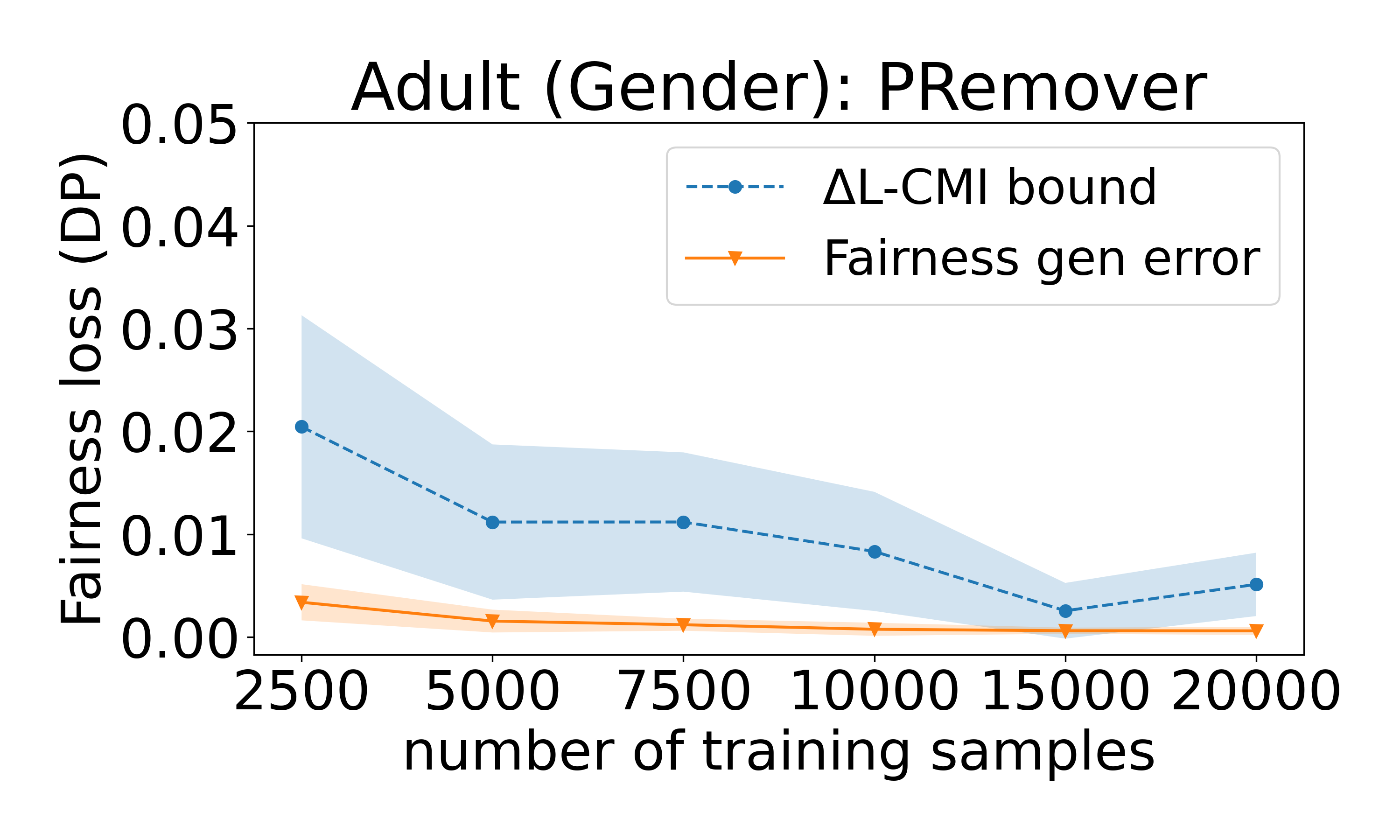}
\includegraphics[width=0.4\linewidth]{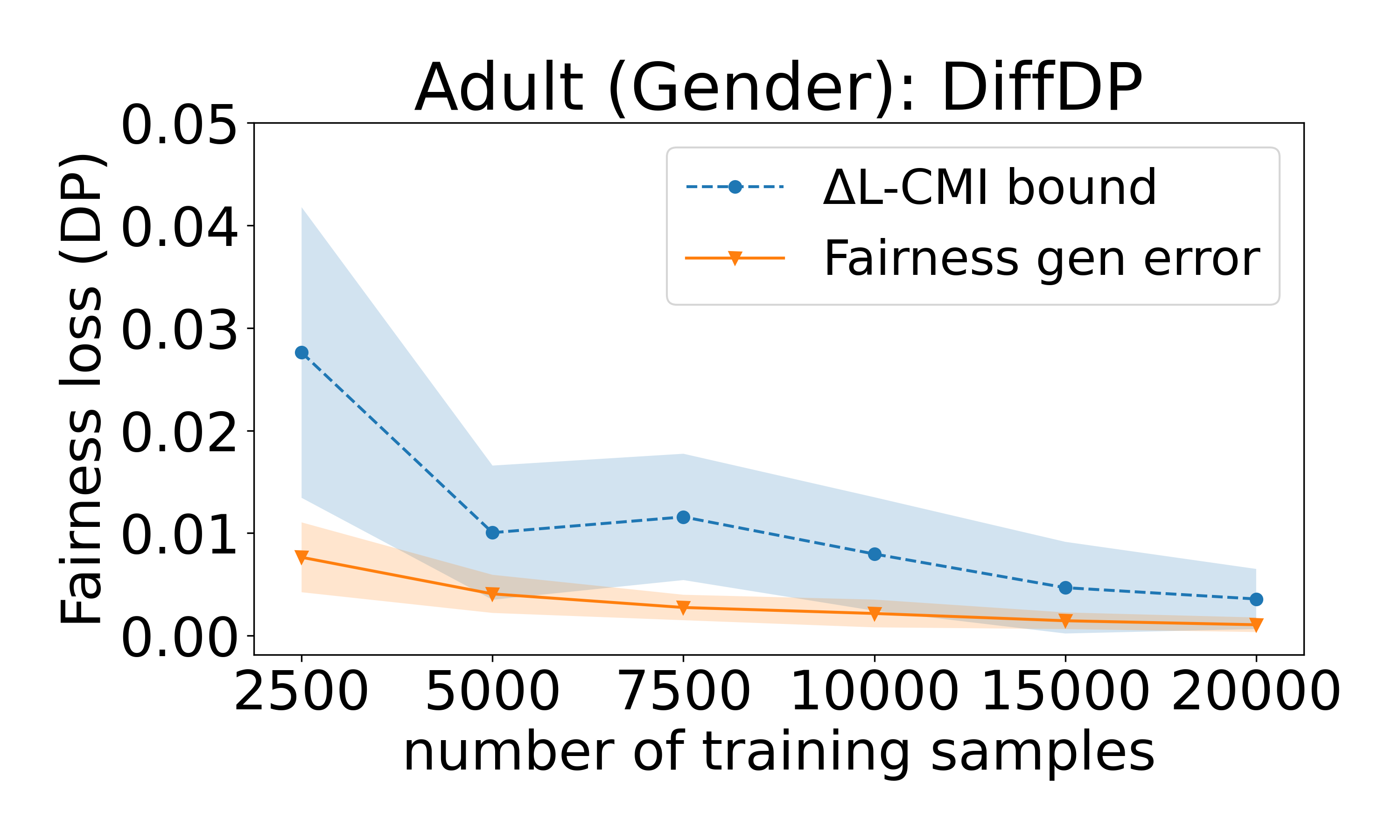}
\caption{Experimental results with Adult dataset (Gender as sensitive attribute) of fairness generalization error and our bound in Theorem~\ref{CMI_the4} (DP) as a function of the total number of training samples $n$. }
\vspace{-1em}
\label{adult_gender}
\end{figure*}

\begin{figure*}[!ht]
\centering
\includegraphics[width=0.4\linewidth]{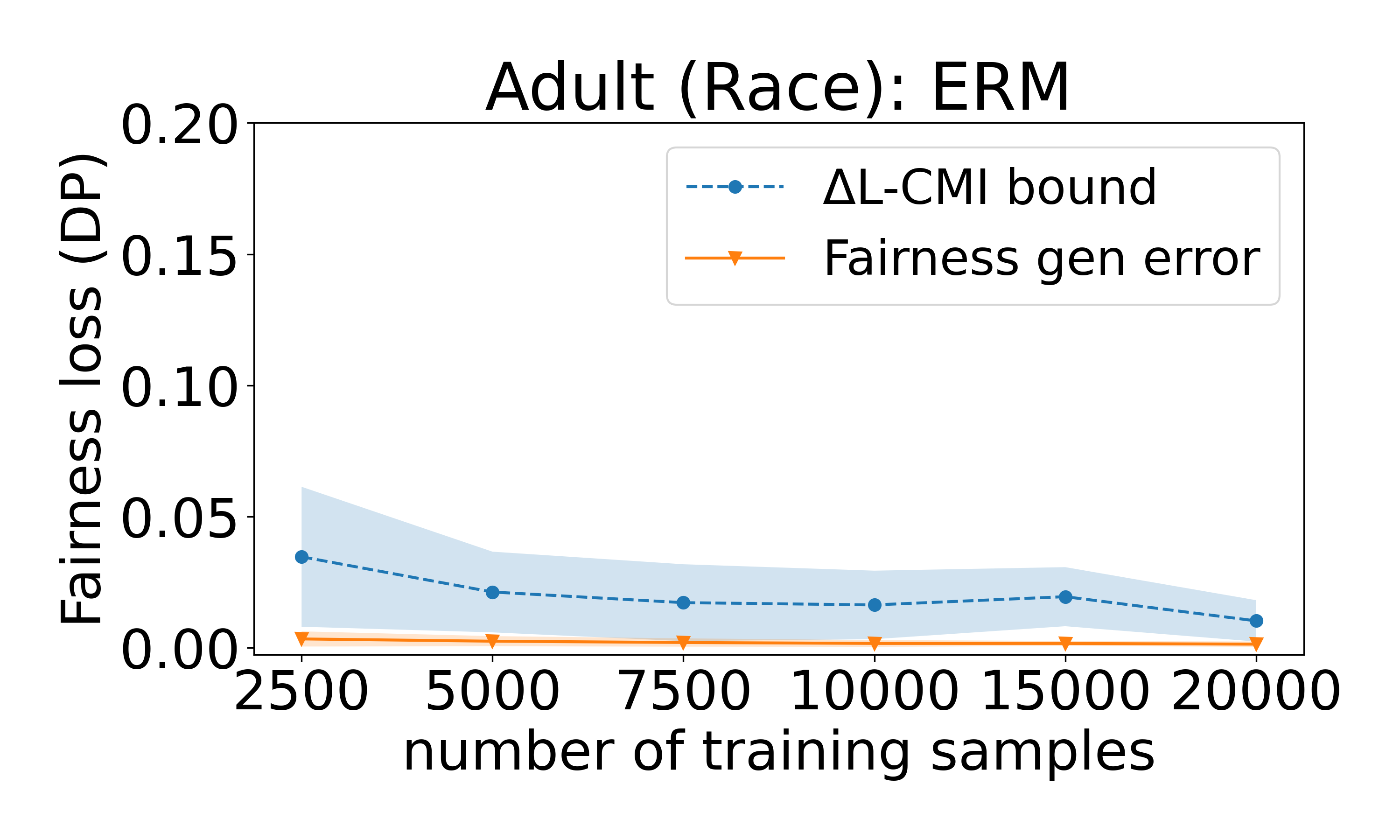}
\includegraphics[width=0.4\linewidth]{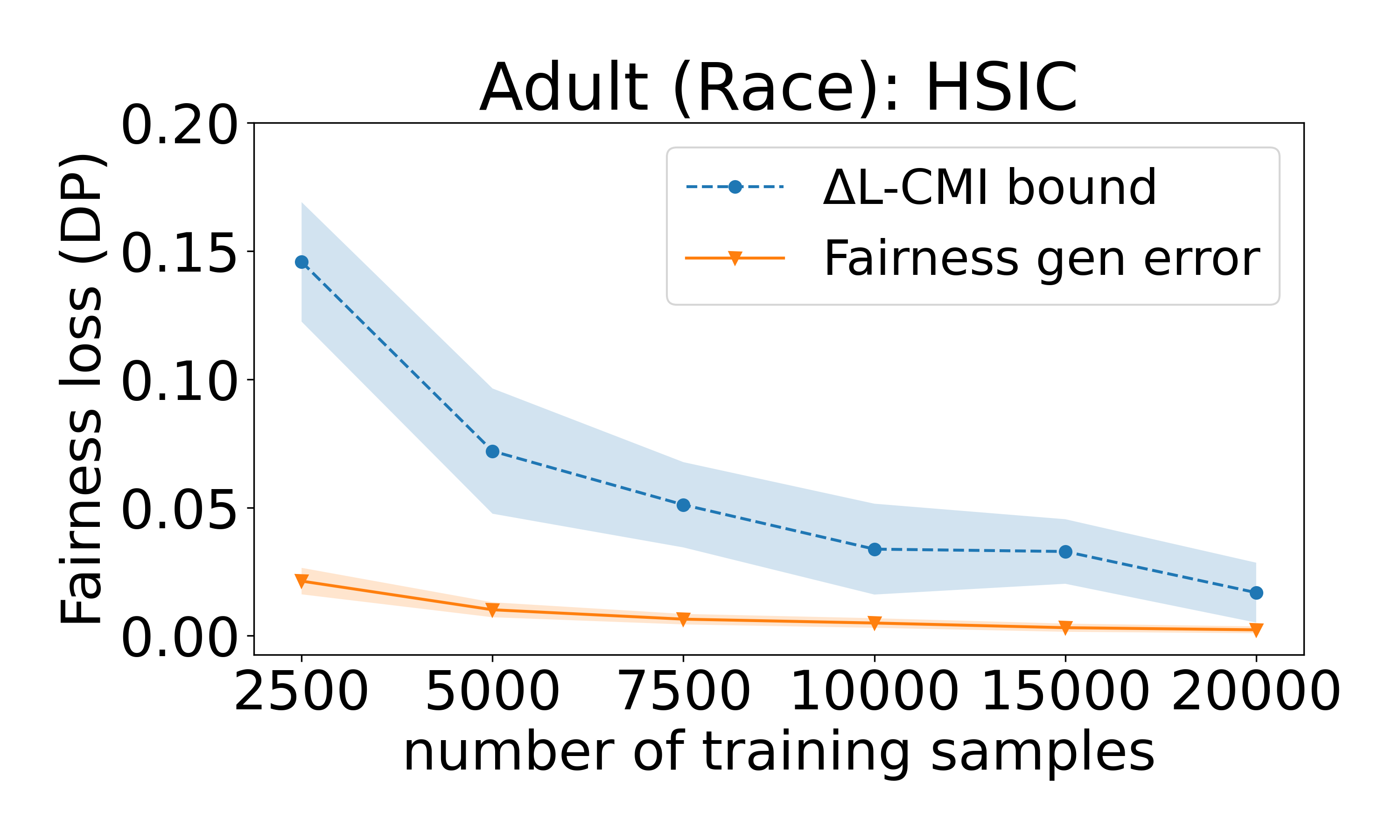}
\includegraphics[width=0.4\linewidth]{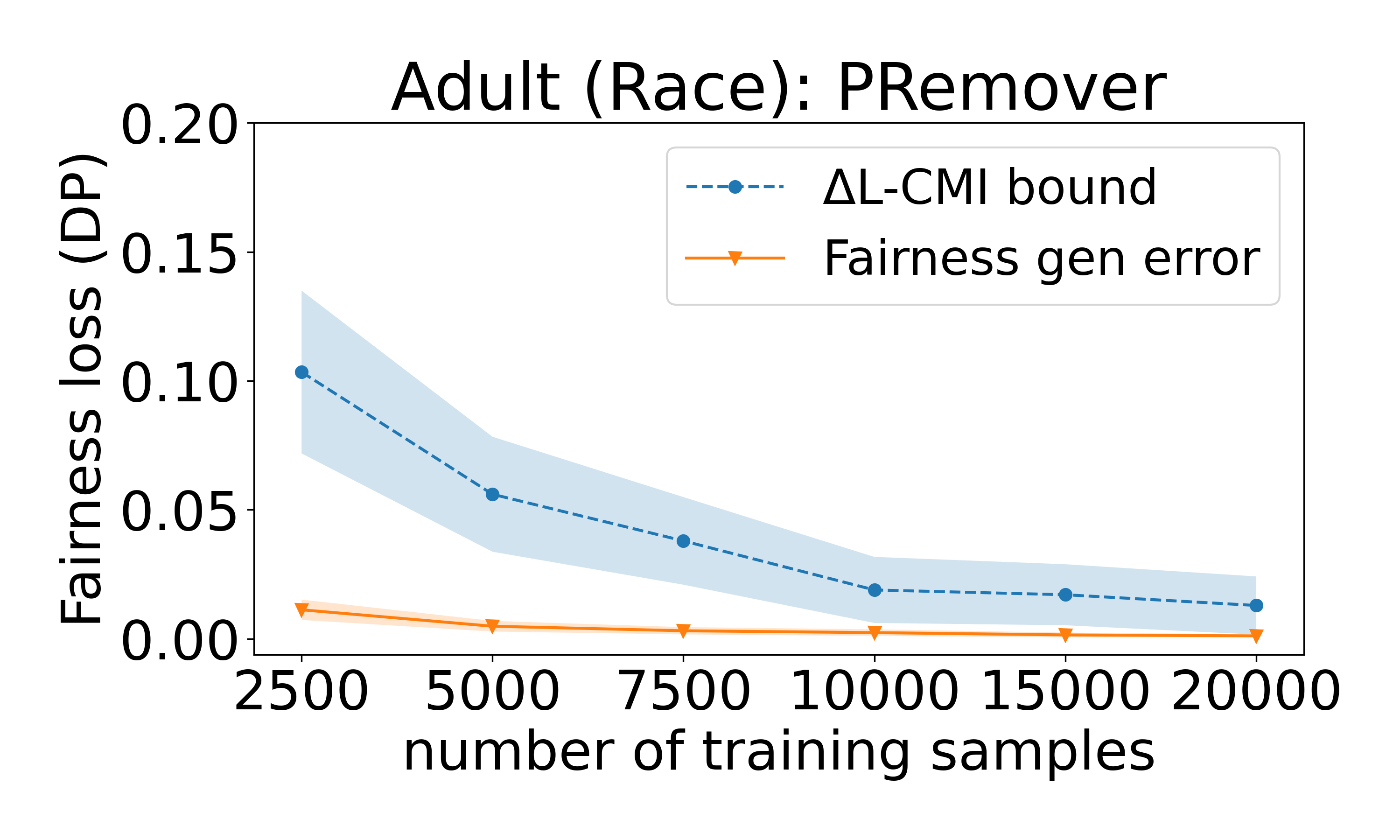}
\includegraphics[width=0.4\linewidth]{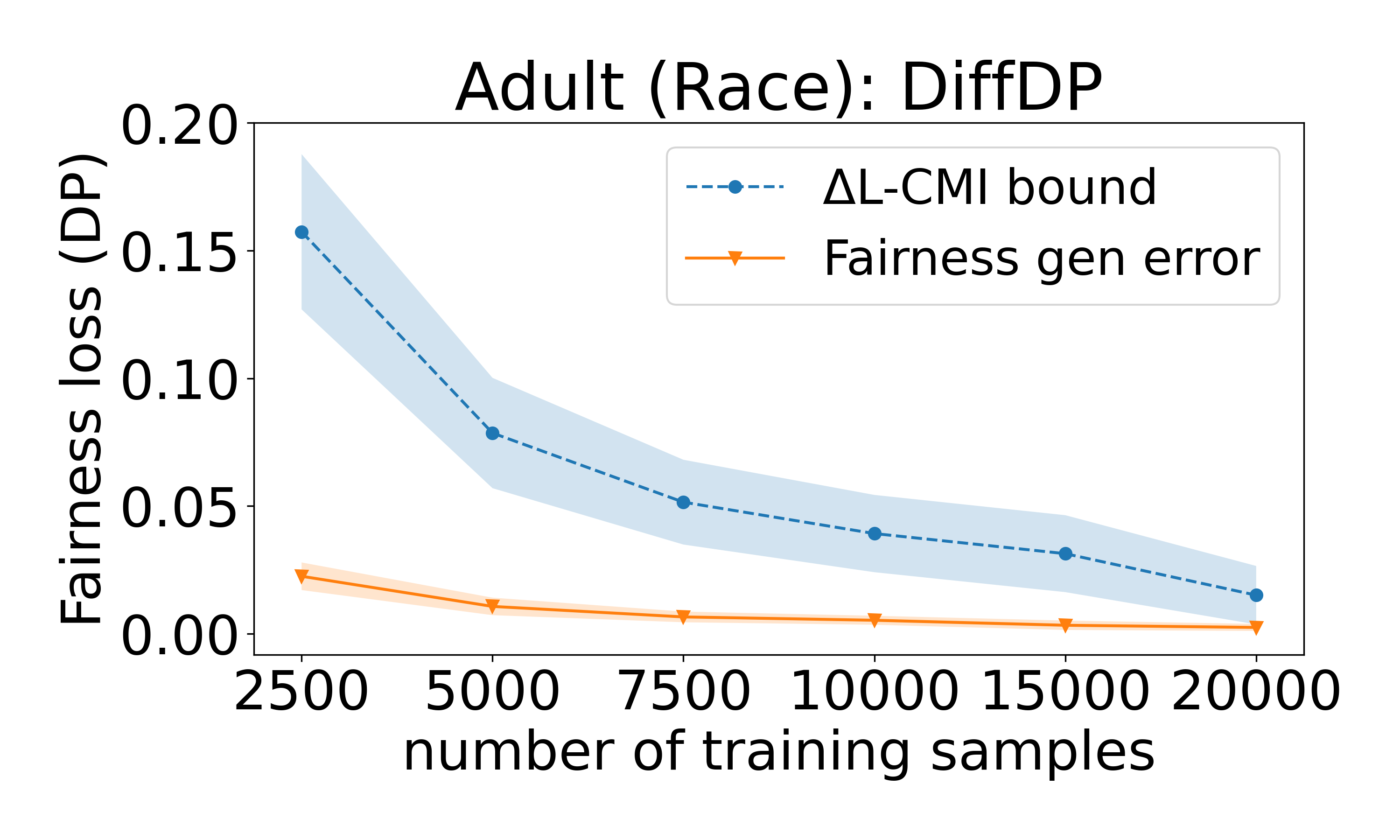}
\caption{Experimental results with Adult dataset (Race as sensitive attribute) of fairness generalization error and our bound in Theorem~\ref{CMI_the4} (DP) as a function of the total number of training samples $n$. }
\vspace{-1em}
\label{adult_race}
\end{figure*}

The CMI terms in Theorems~\ref{CMI_the4} (DP) and Theorem~\ref{sep_CMI_the4_S} (EO) involve both a continuous and a discrete variable.  We evaluate different estimators for these terms. Specifically, in addition to the main estimator used in this paper, i.e.,~\citet{ross2014mutual}, we experiment with the estimators proposed in \citet{gao2017estimating}, \citet{darbellay1999estimation}, and \citet{kraskov2004estimating}. The main results of these comparisons are presented in Figure~\ref{compas_gender_estimator}, showing that~\citet{ross2014mutual} is the most robust estimator in our case.

\begin{figure*}[!ht]
\centering
\includegraphics[width=0.4\linewidth]{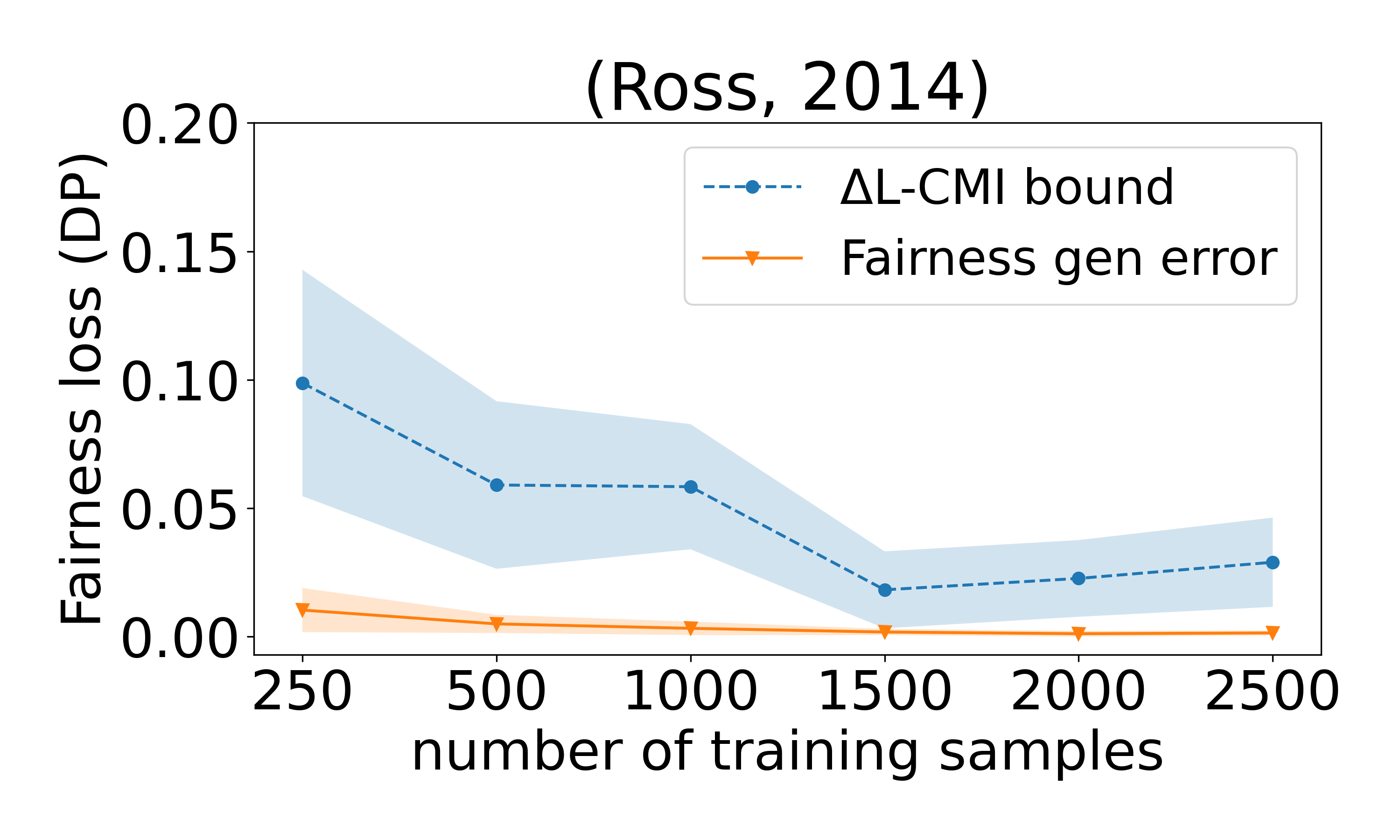}
\includegraphics[width=0.4\linewidth]{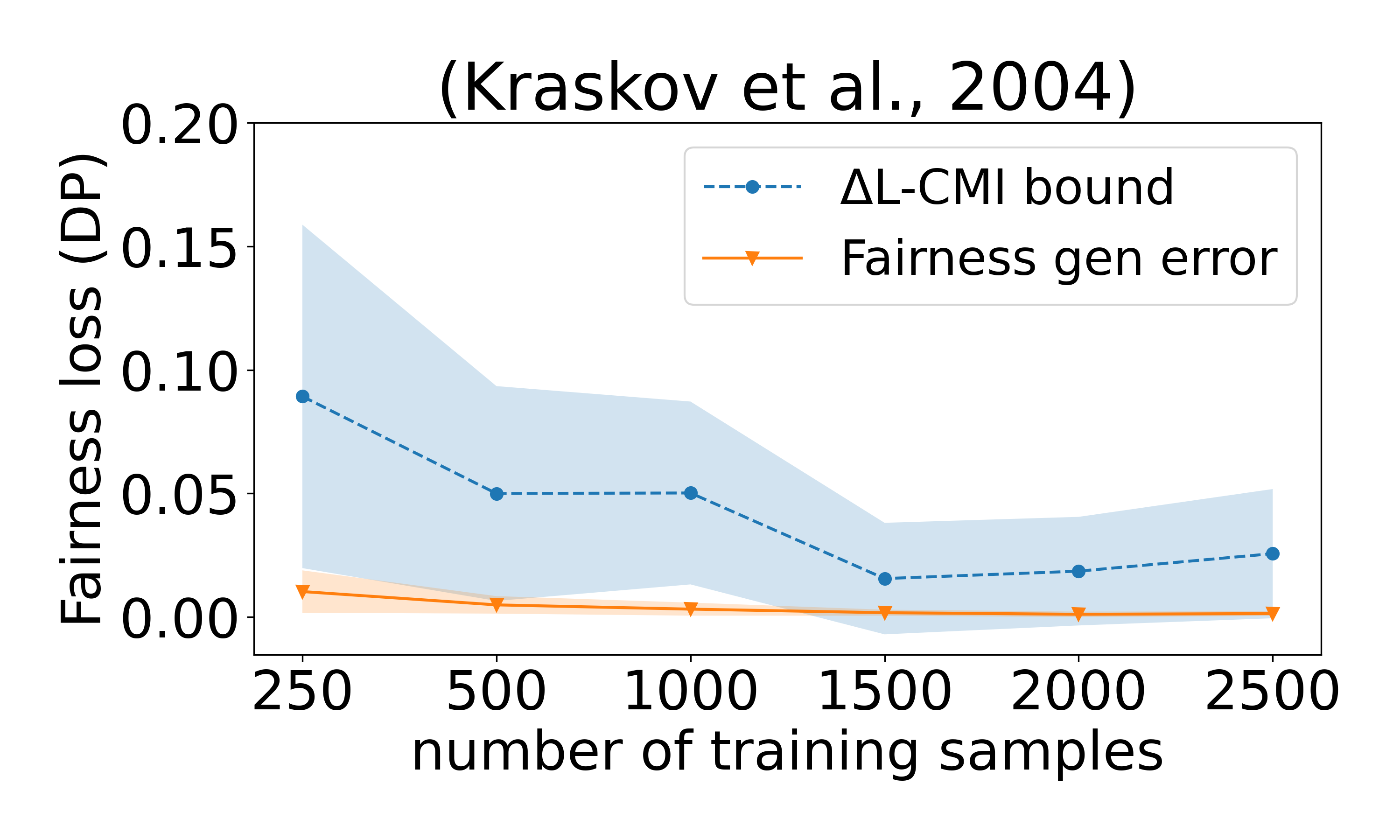}
\includegraphics[width=0.4\linewidth]{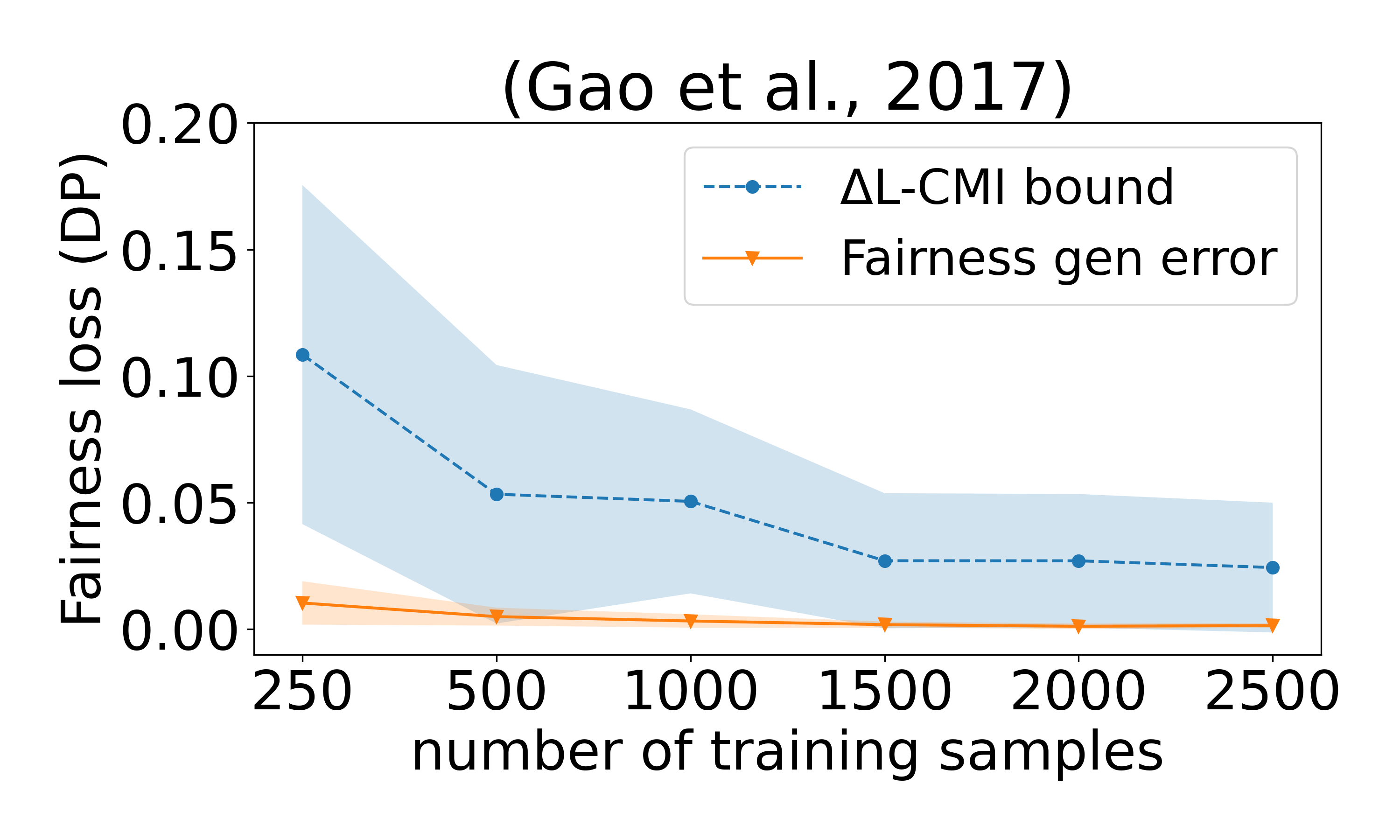}
\includegraphics[width=0.4\linewidth]{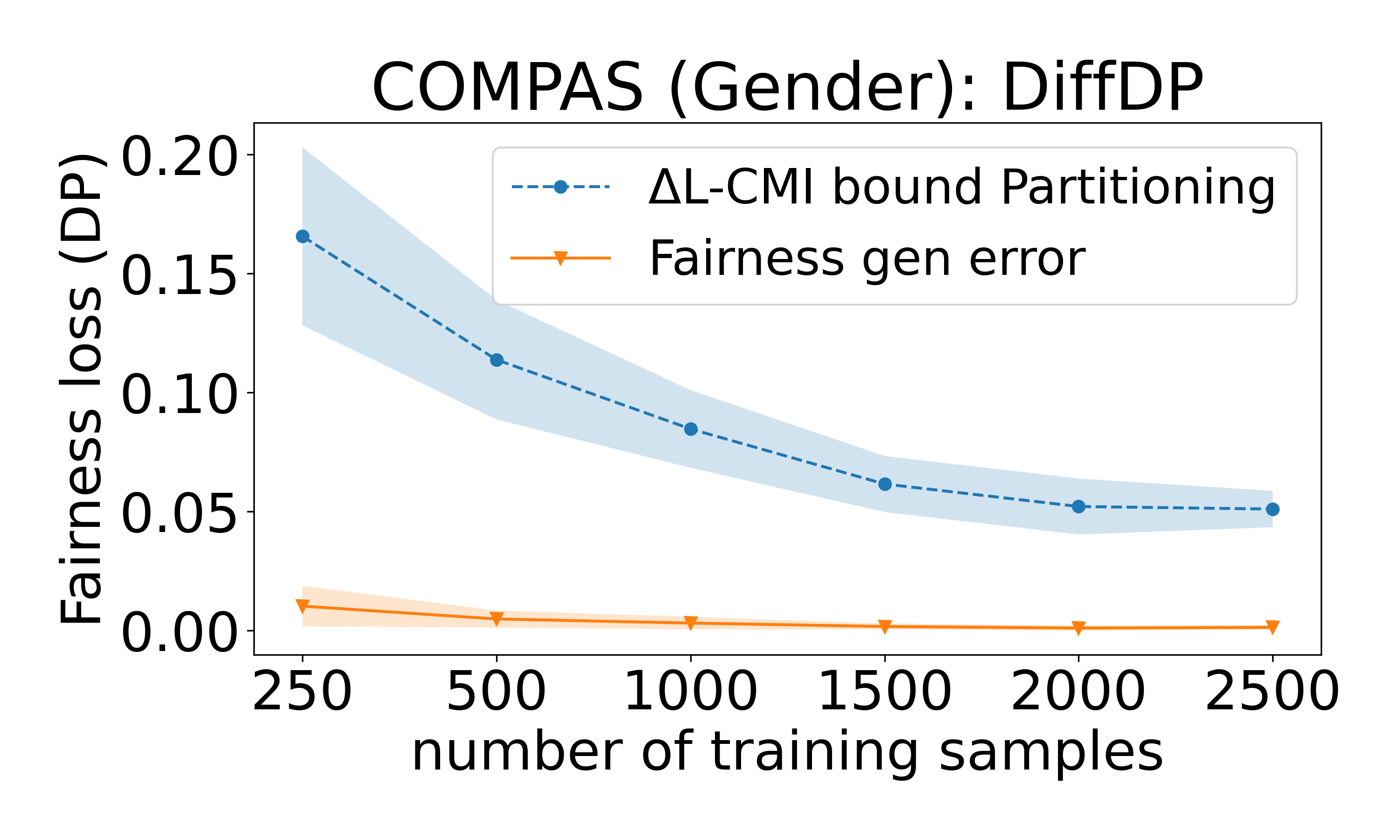}
\caption{Experimental results of our bound  Theorem~\ref{CMI_the4} (DP) with different MI estimators using the DiffDP approach on the COMPAS dataset (gender as sensitive attribute) as a function of the total number of training samples $n$. }
\vspace{-1em}
\label{compas_gender_estimator}
\end{figure*}

\subsection{Extra Numerical results for batch balancing } \label{extra_batchbalancing}

In Table~\ref{batch_balancing_adult}, we report the results for our batch balancing approach on the Adult dataset.  The results are consistent with Table~\ref{batch_balancing}, highlighting the strength of our approach and its ability to improve fairness test errors of different algorithms.

In Figures ~\ref{compas_diffdp_balance_figures},~\ref{compas_hisc_balance_figures}, and~\ref{compas_PRremover_balance_figures}, we report the mean of test accuracy, the mean of fairness generalization error, and the mean of fairness test errors of the different approaches using COMPAS (gender) dataset with different number of training samples $n$.

\begin{table*}[ht] 
  \caption{Effect of batch balancing on DP errors with the Adult dataset. We report DP on test data for various approaches, with and without our proposed batch-balancing strategy.  The results show that balancing consistently reduces DP by an order of magnitude. This empirically supports the theoretical insights of Theorems~\ref{MI_theorem}-~\ref{CMI_the4} regarding the role of group imbalance in fairness generalization error.}
  \centering
  \label{batch_balancing_adult}
  \begin{tabular}{lcccccccc}
   \toprule
    &  \multicolumn{4}{c}{sensitive attribute: gender}   &  \multicolumn{4}{c}{sensitive attribute: race} \\
   \cmidrule(r){2-5}    \cmidrule(r){6-9}
  & 7500  &  10000 &  15000  & 20000   & 7500  &  10000 &  15000  & 20000 \\  
    \midrule
    DiffDP  & 0.038 & 0.037& 0.039 &0.038 & 0.083 & 0.083 &0.084 & 0.084  \\  
    DiffDP (ours) &0.020 &0.020 &0.021 &0.020 & 0.063 & 0.063 & 0.064 & 0.063 \\  
    \midrule
    HSIC    & 0.026&0.025 & 0.025&0.026 & 0.087&  0.086&  0.086&0.086 \\
    HSIC  (ours)&0.009 & 0.005&0.007 &0.007 & 0.026 & 0.023 & 0.019 &  0.0178\\
    \midrule
PRremover & 0.027 &0.026 &0.026 &0.026 & 0.035 & 0.033 & 0.032 & 0.031 \\
PRremover (ours) &0.008 &0.007 &0.007 &0.007 & 0.011 & 0.009 & 0.007 & 0.006 \\
    \toprule
  \end{tabular}
\end{table*}

\begin{figure*}[ht]
\centering
\includegraphics[width=0.33\linewidth]{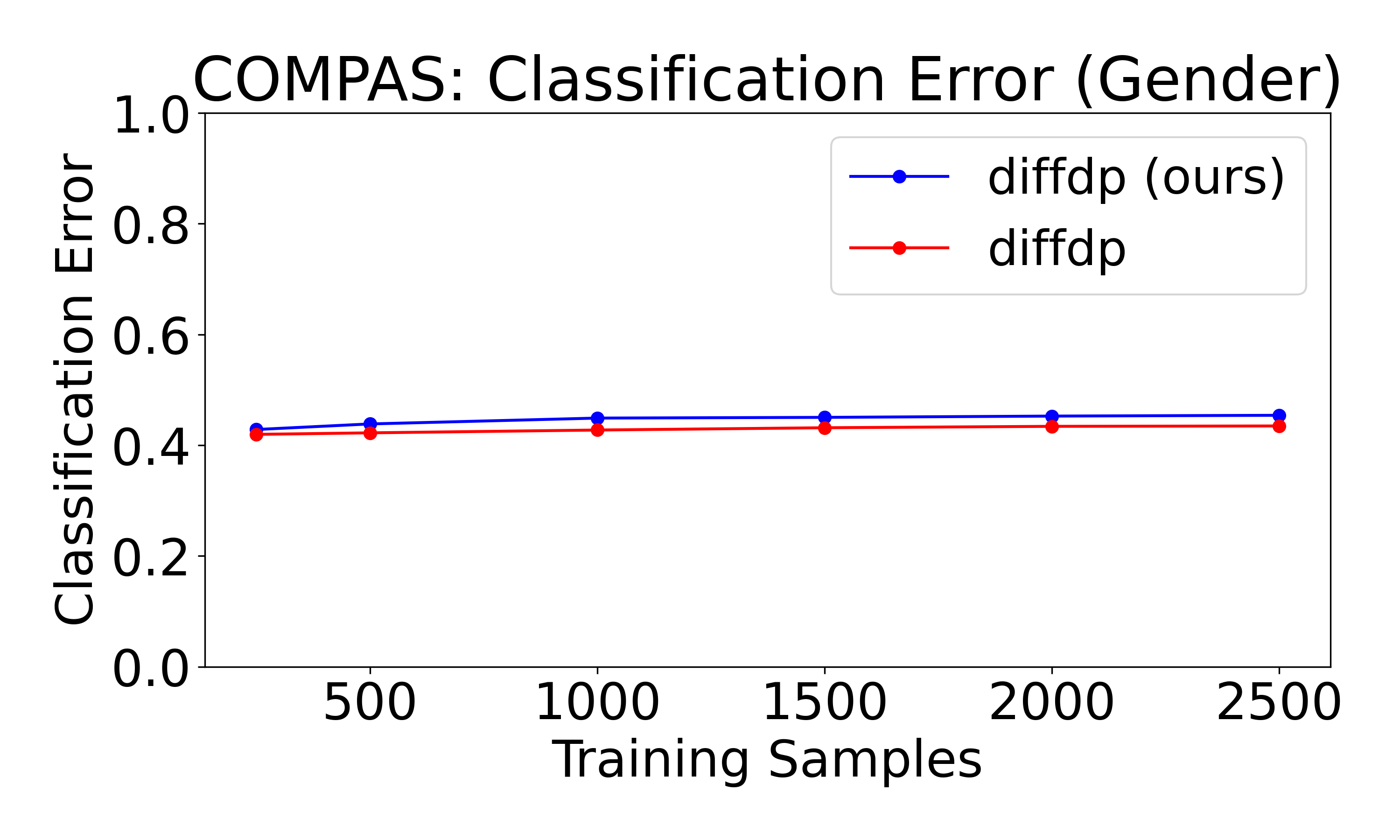}
\includegraphics[width=0.33\linewidth]{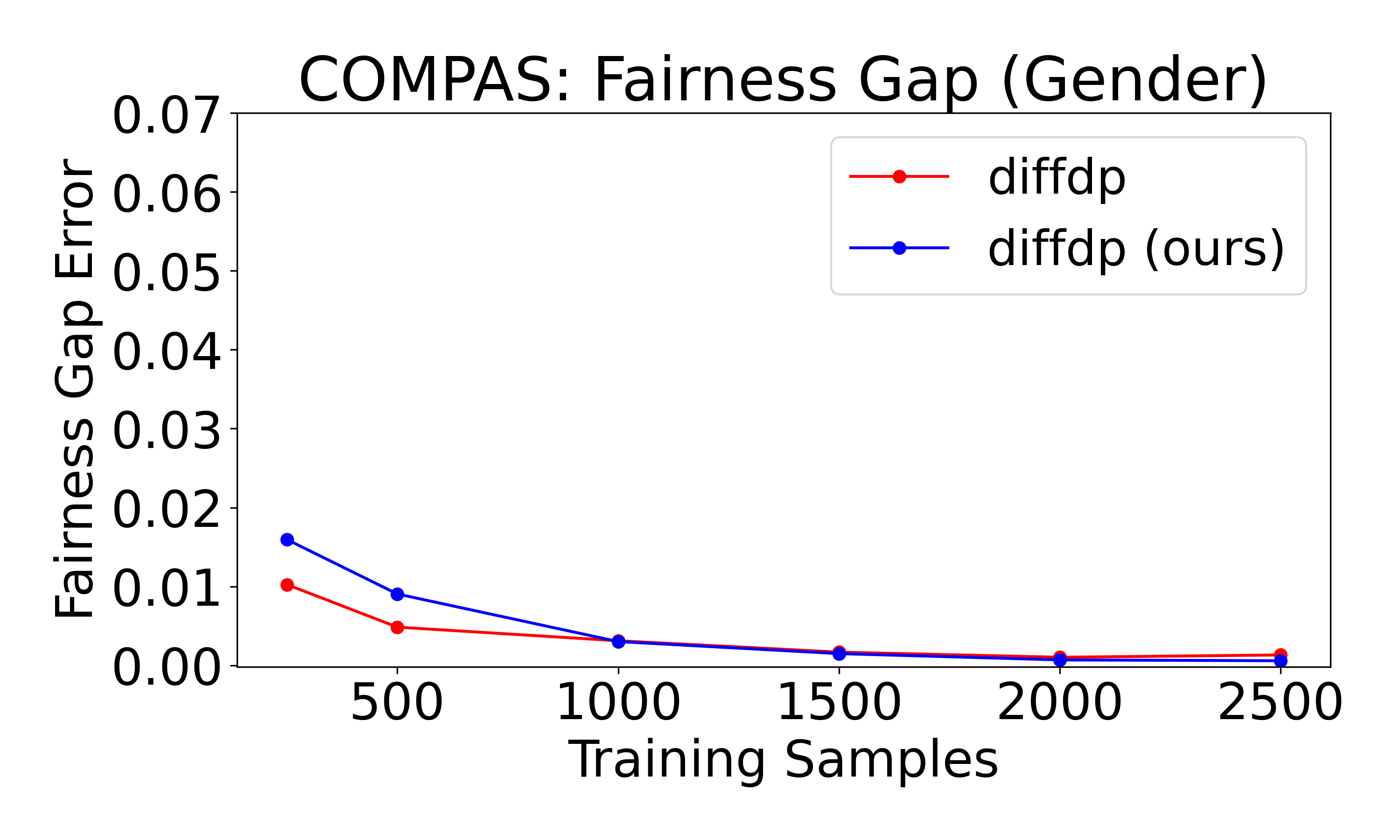}
\includegraphics[width=0.33\linewidth]{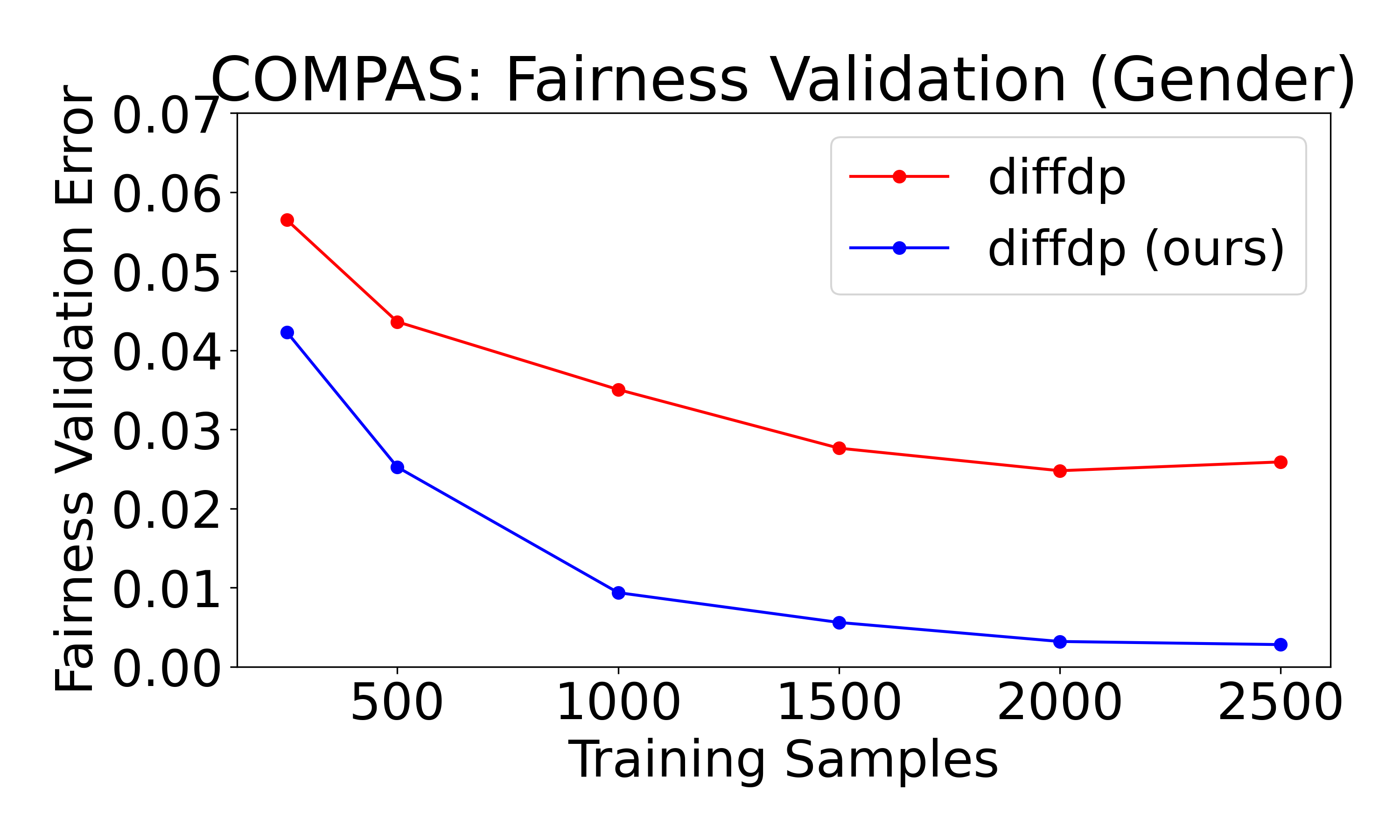}
\includegraphics[width=0.33\linewidth]{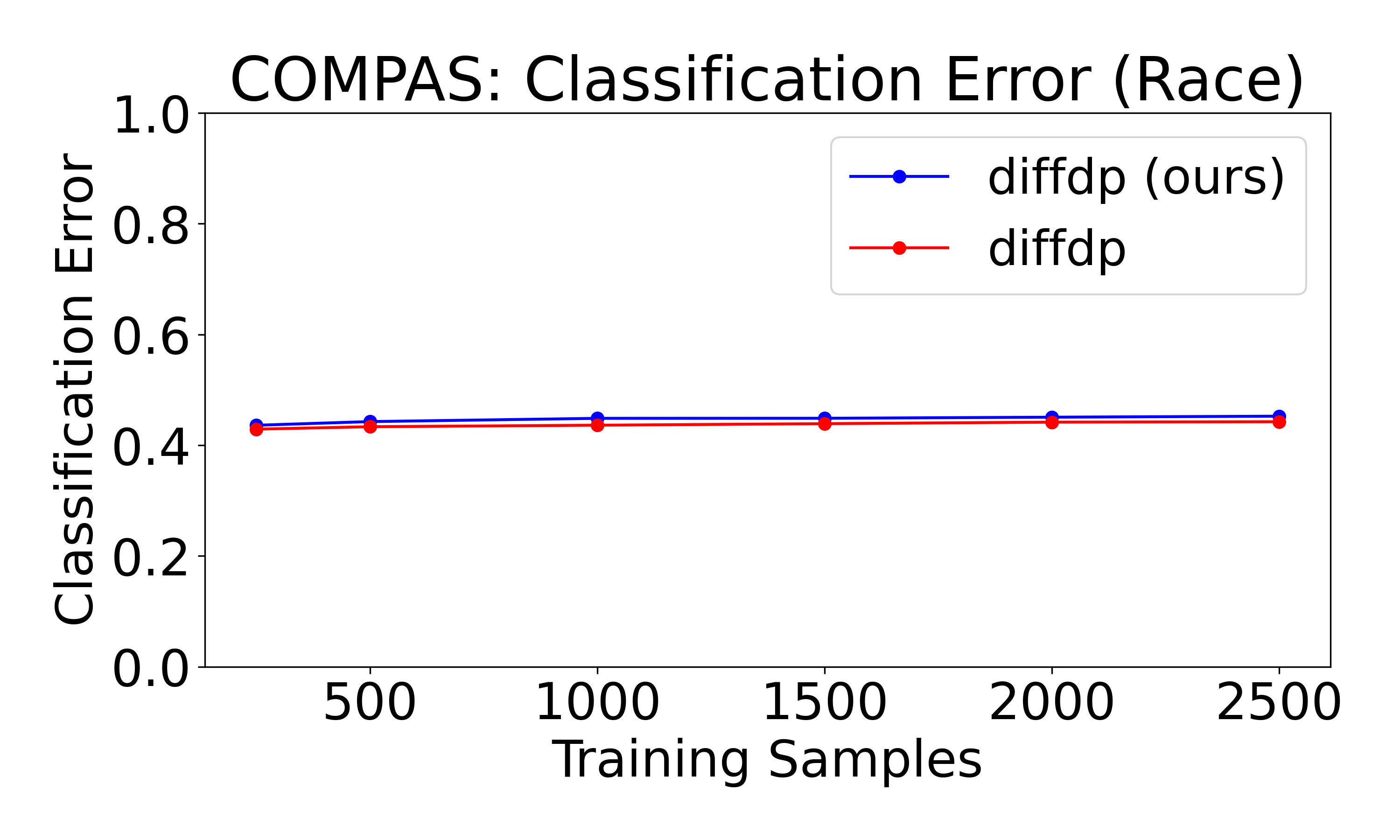}
\includegraphics[width=0.33\linewidth]{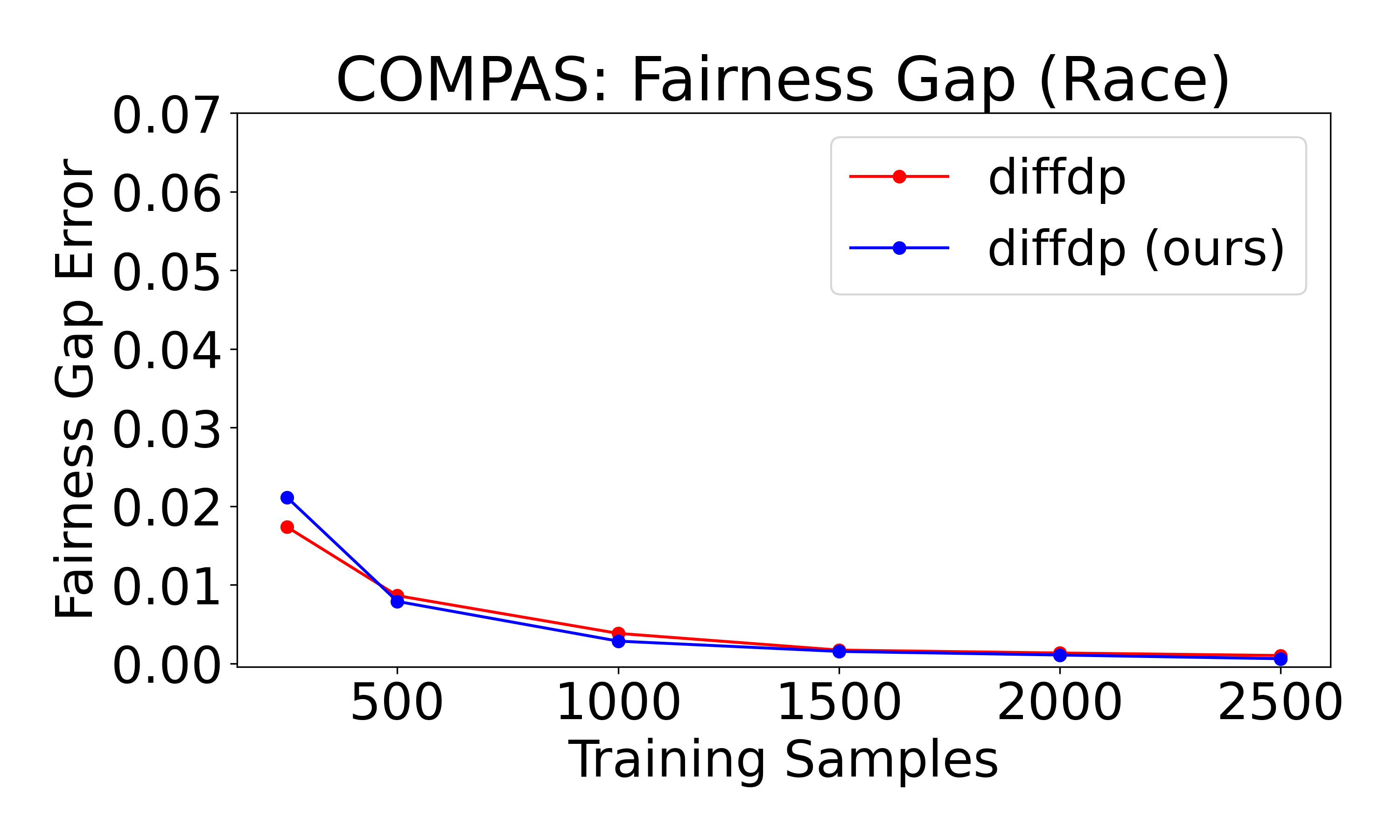}
\includegraphics[width=0.33\linewidth]{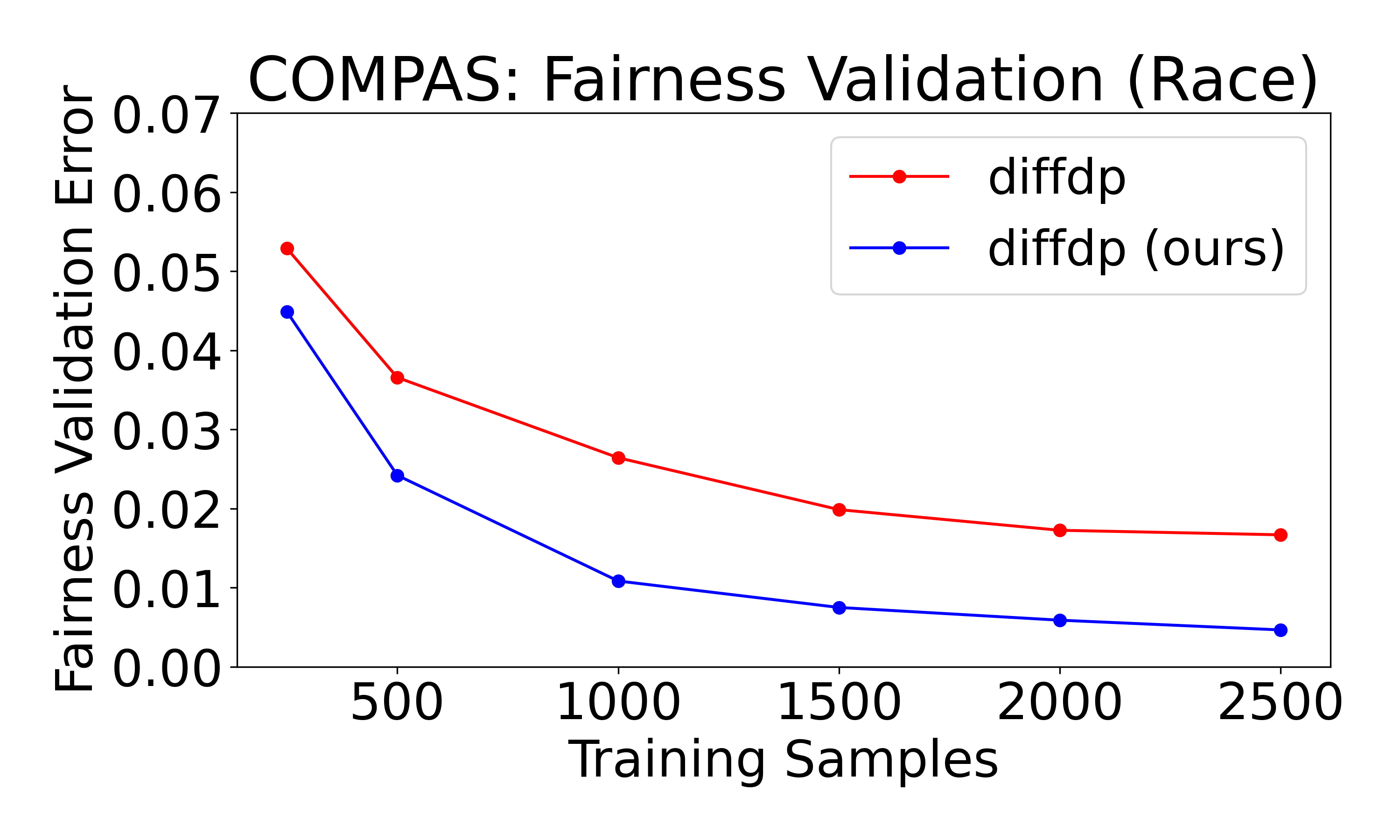}
\caption{Experimental results with COMPAS  (gender) dataset of our batch-balancing technique for diffDP as a function of the total number of training samples $n$. We report the mean over $m_1$.}
\vspace{-1em}
\label{compas_diffdp_balance_figures}
\end{figure*}

\begin{figure*}[ht]
\centering
\includegraphics[width=0.33\linewidth]{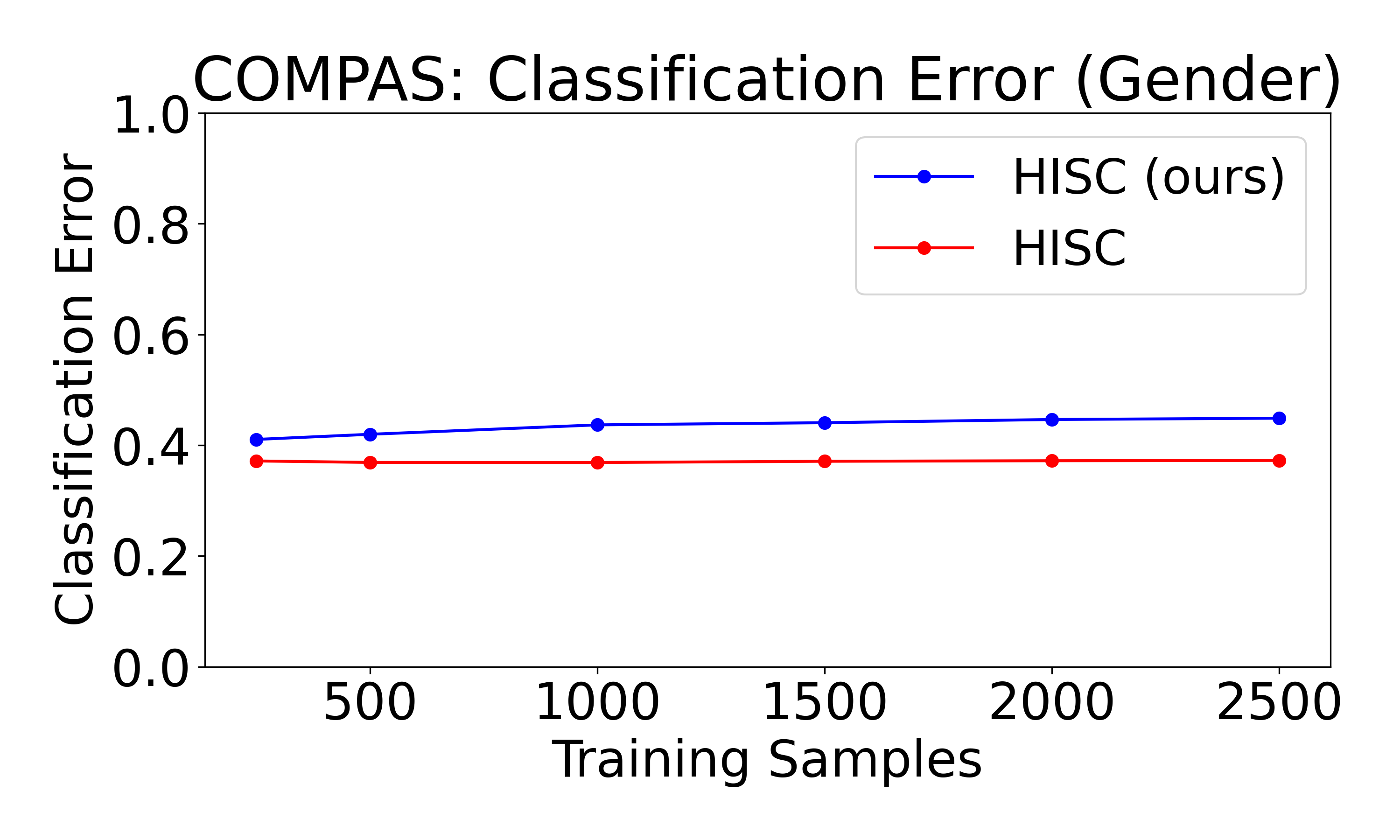}
\includegraphics[width=0.33\linewidth]{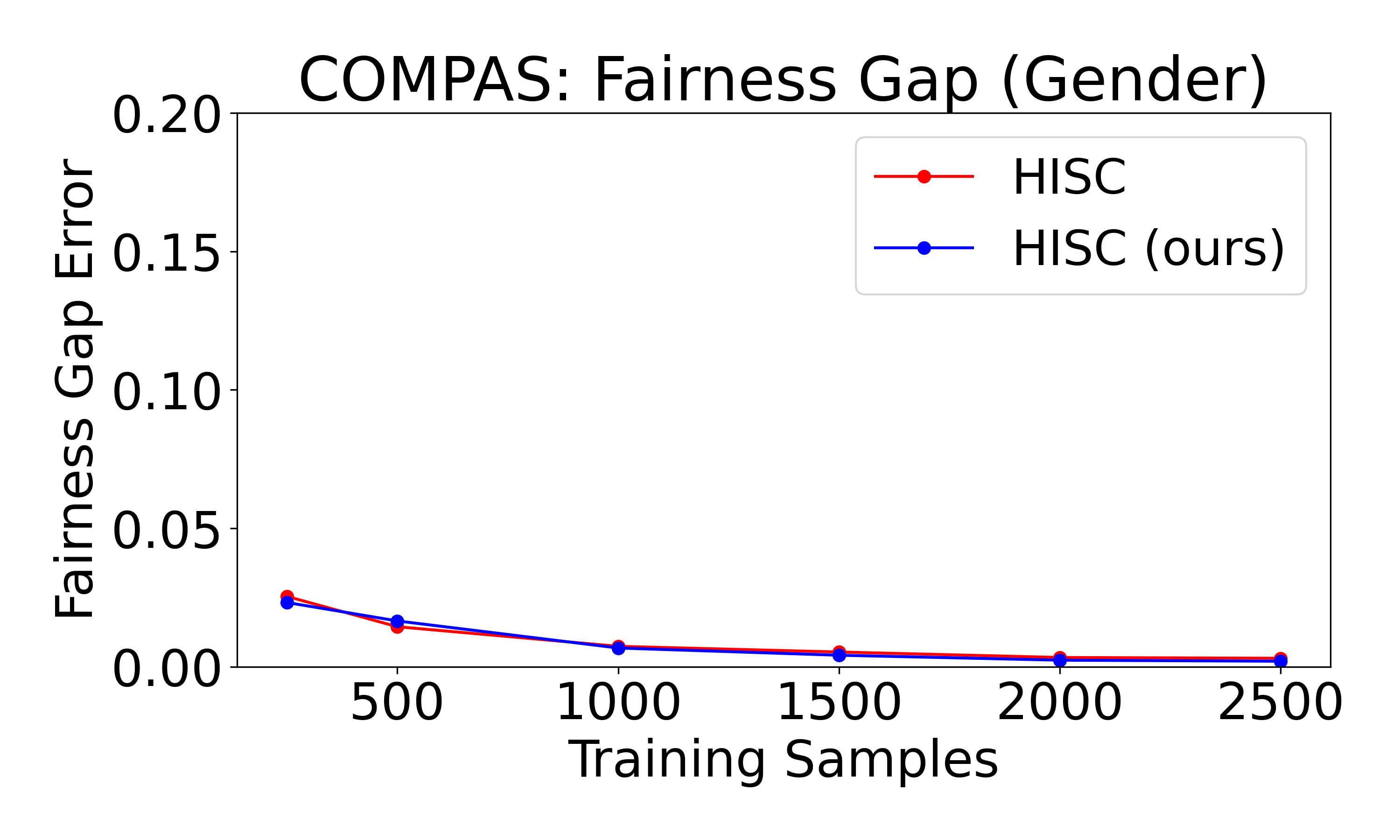}
\includegraphics[width=0.33\linewidth]{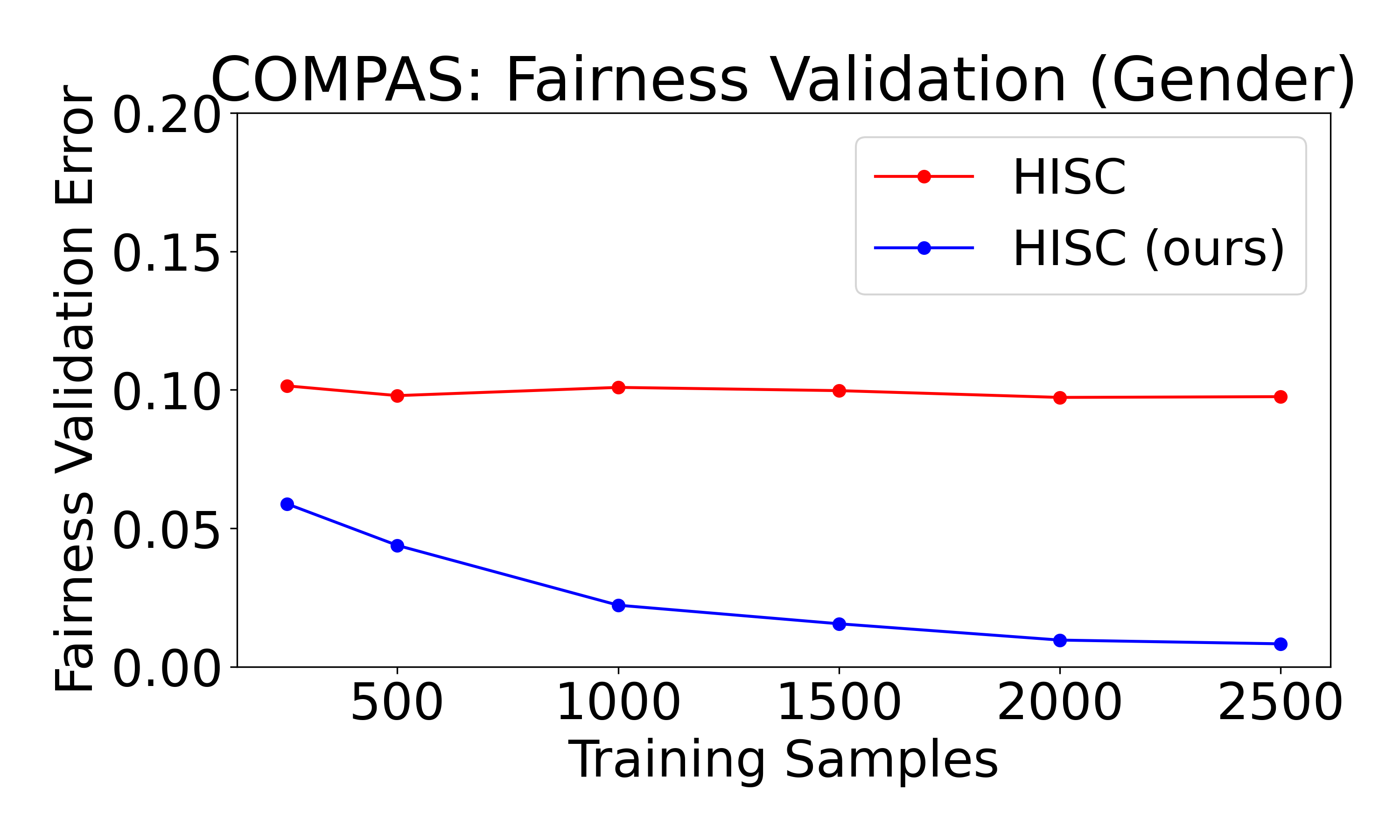}
\includegraphics[width=0.33\linewidth]{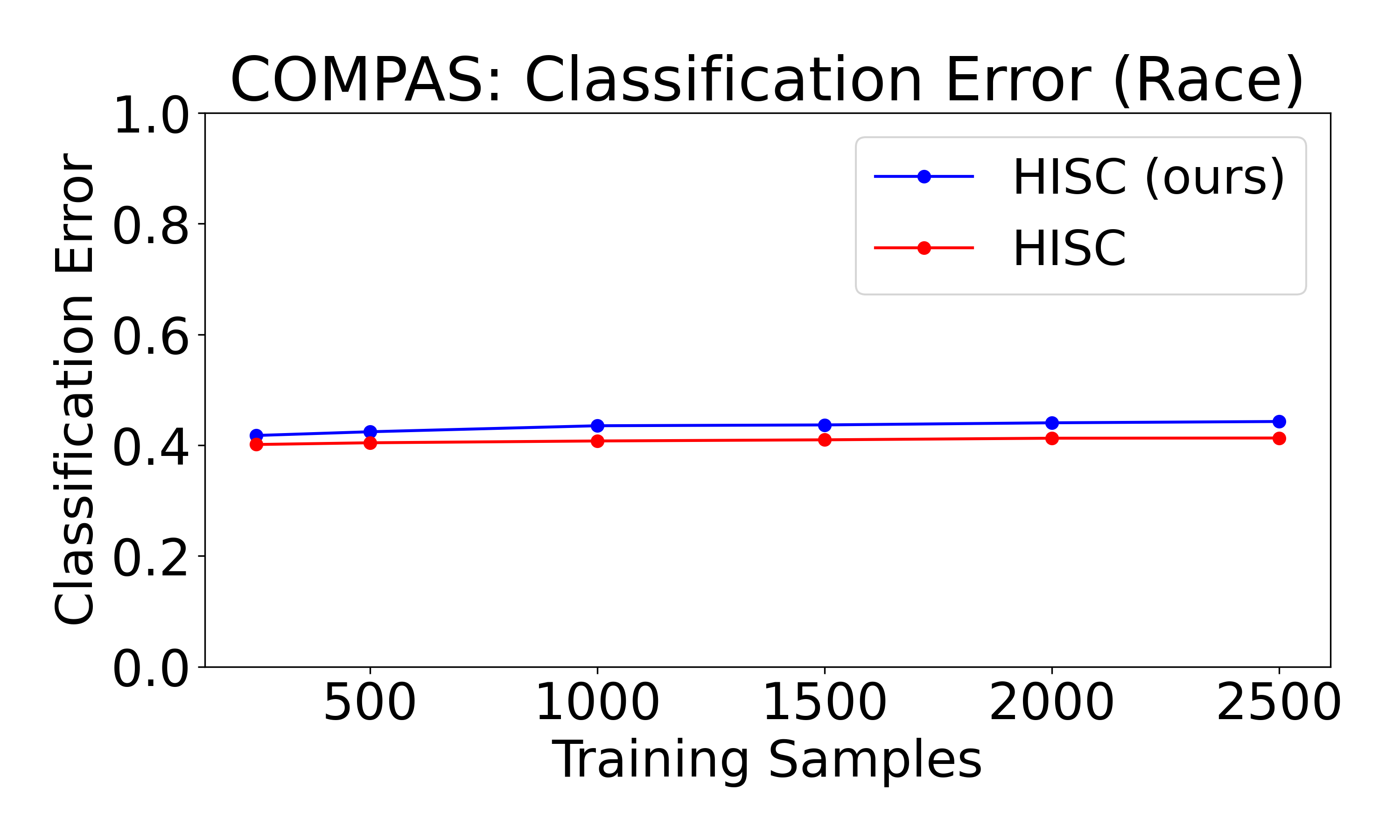}
\includegraphics[width=0.33\linewidth]{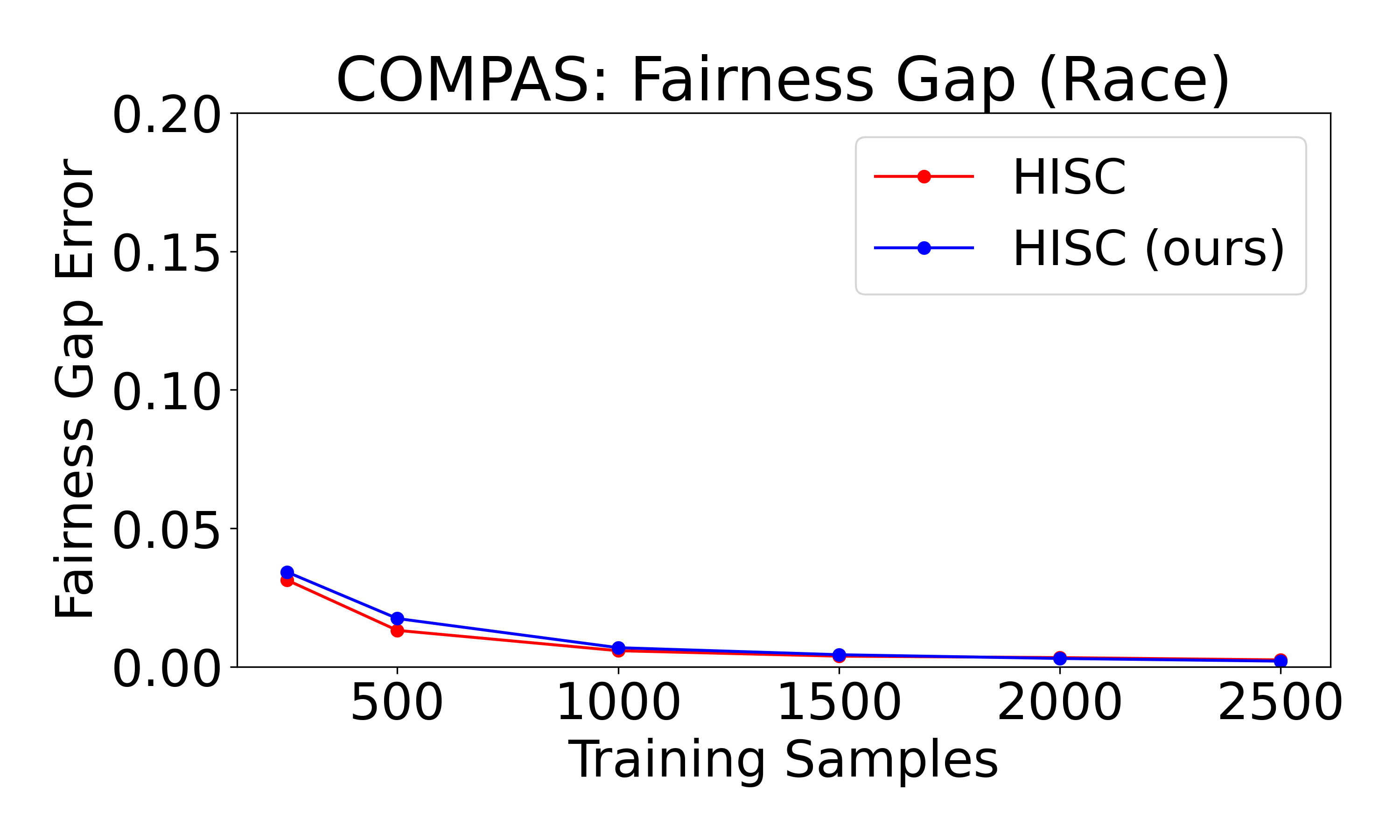}
\includegraphics[width=0.33\linewidth]{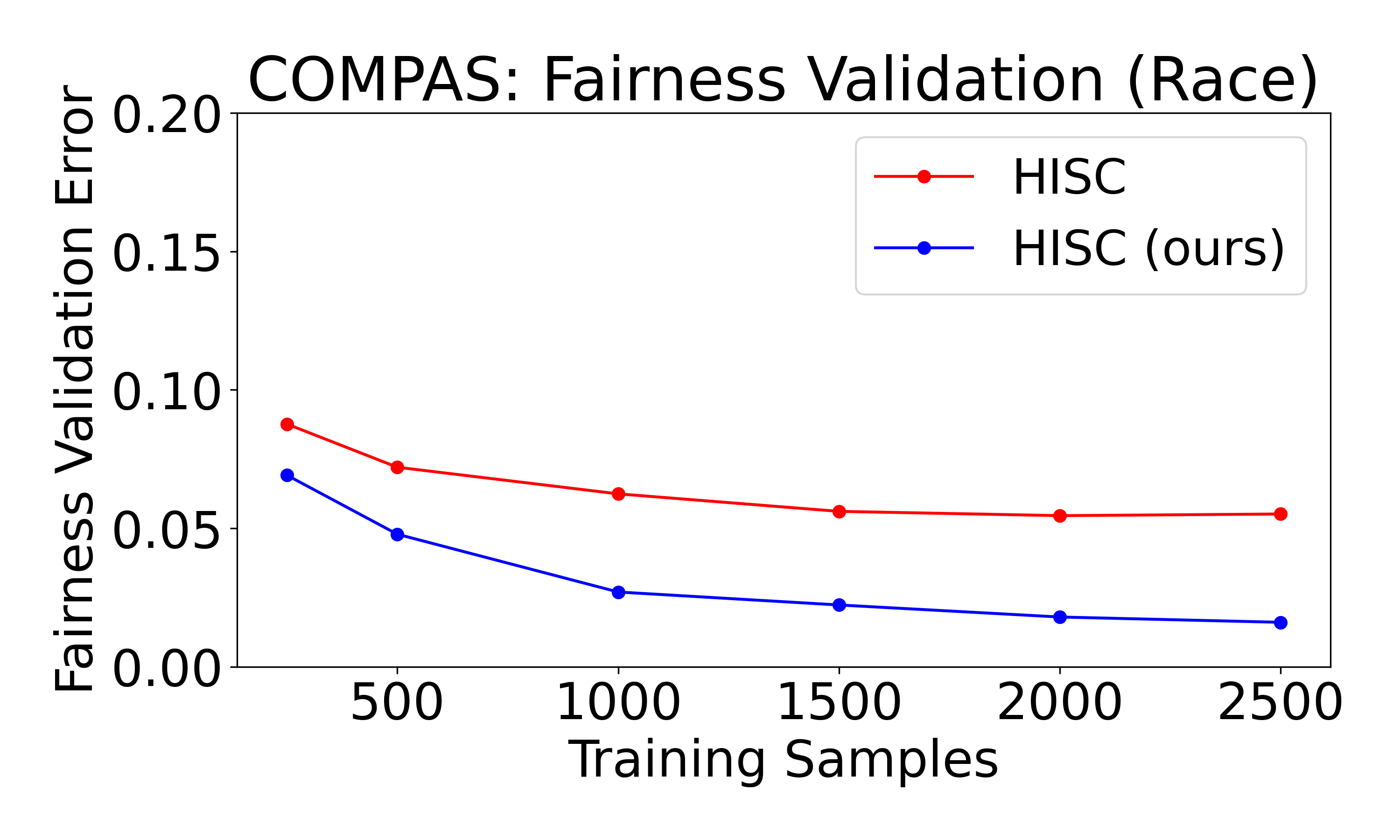}
\caption{Experimental results with COMPAS (gender) dataset of our batch-balancing technique for HISC as a function of the total number of training samples $n$.  We report the mean over $m_1$.}
\vspace{-1em}
\label{compas_hisc_balance_figures}
\end{figure*}

\begin{figure*}[ht]
\centering
\includegraphics[width=0.33\linewidth]{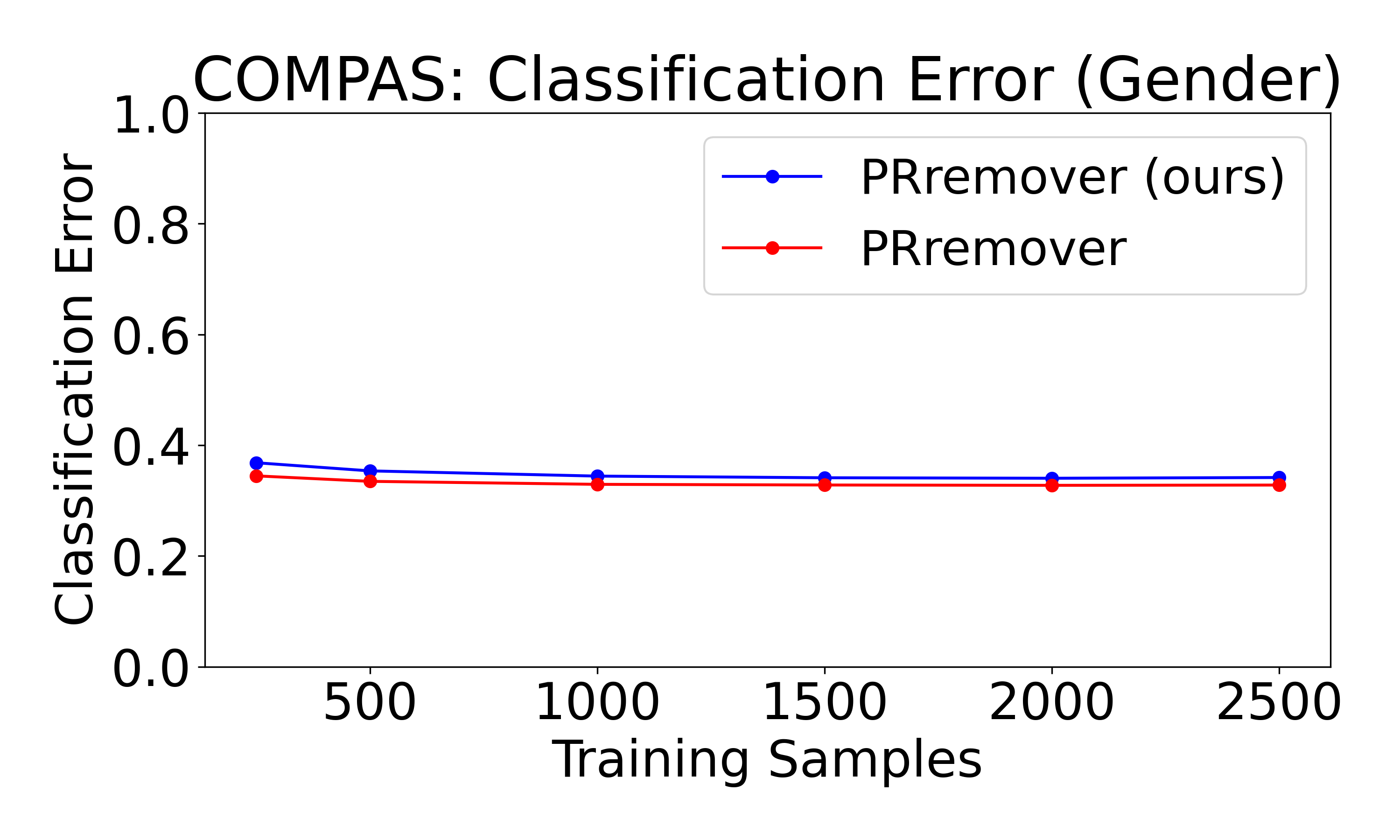}
\includegraphics[width=0.33\linewidth]{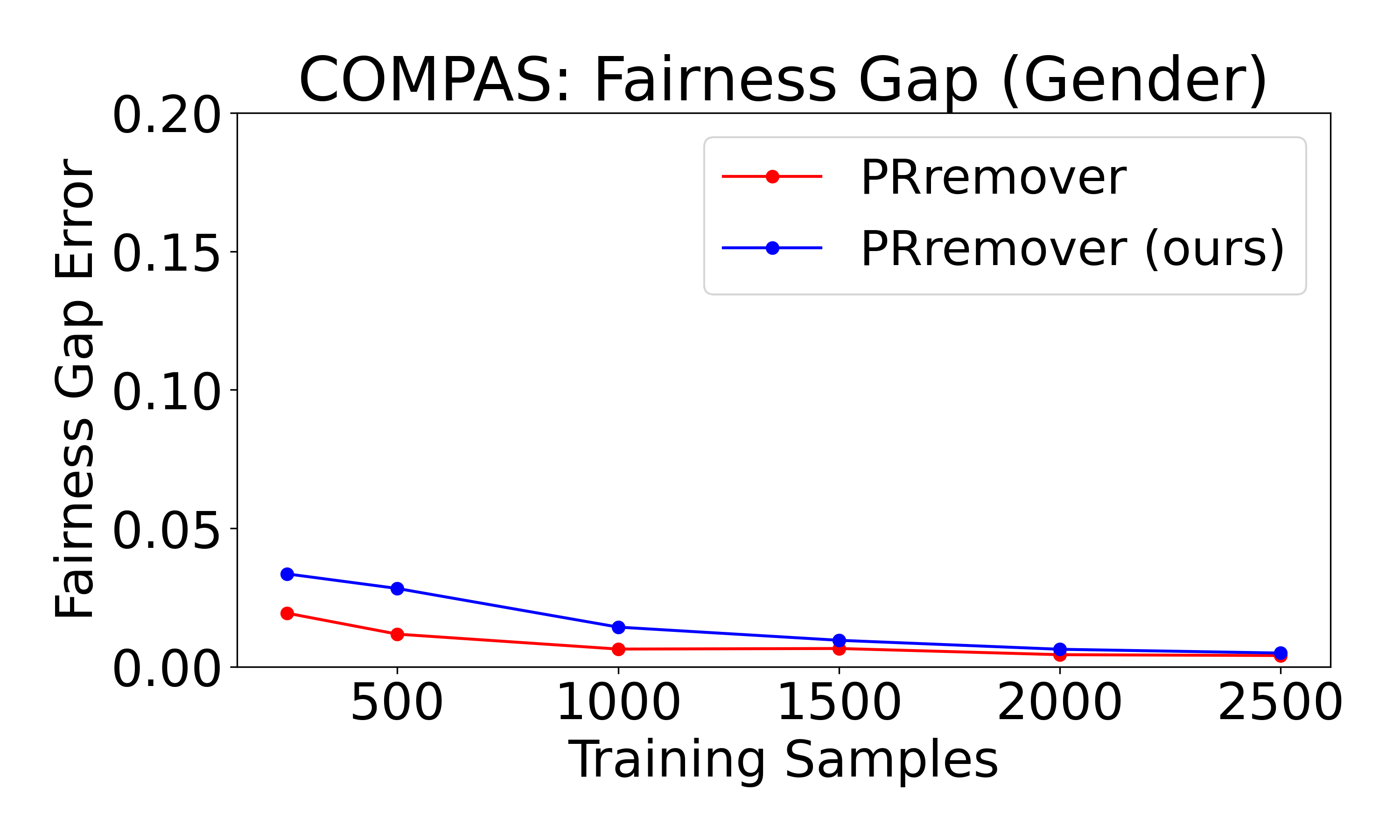}
\includegraphics[width=0.33\linewidth]{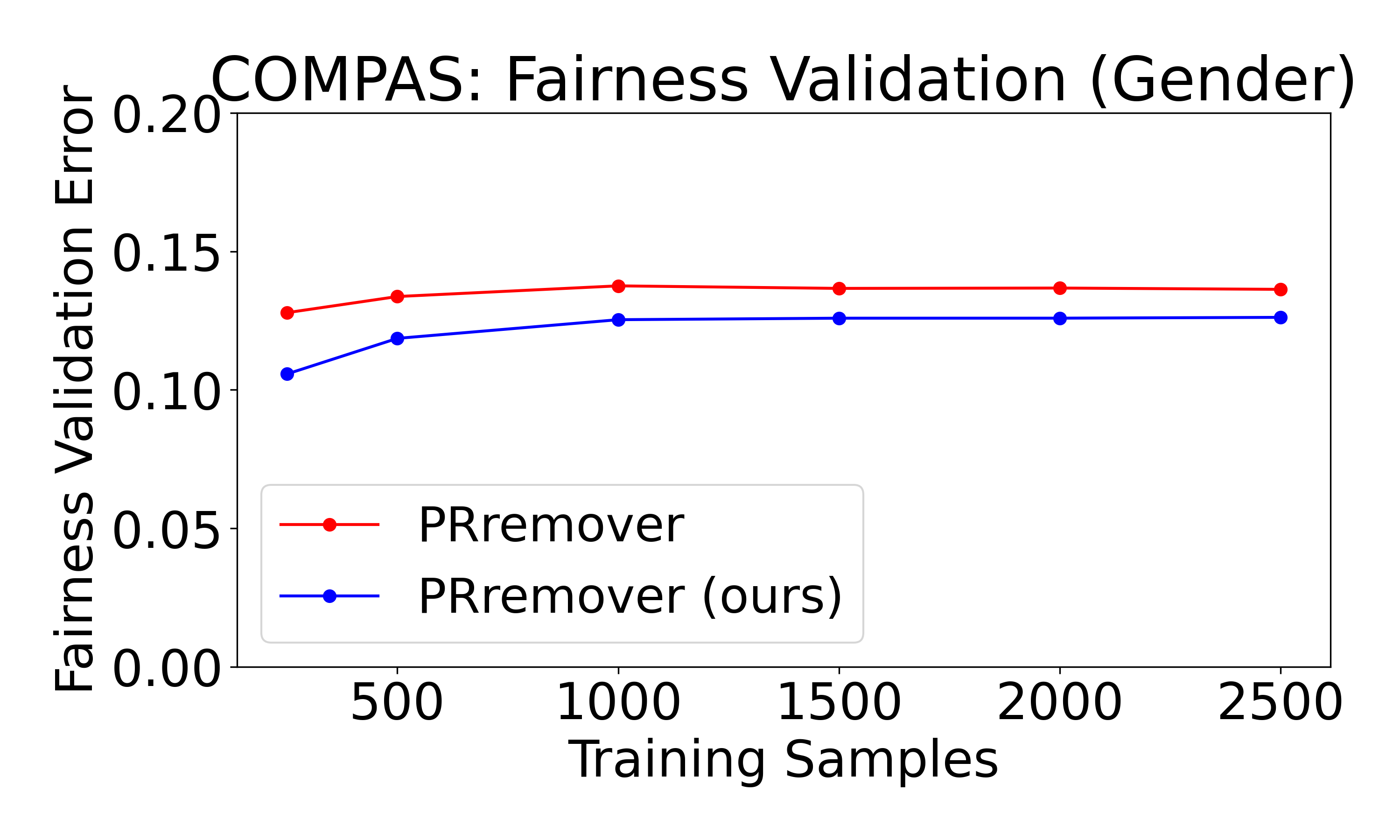}
\includegraphics[width=0.33\linewidth]{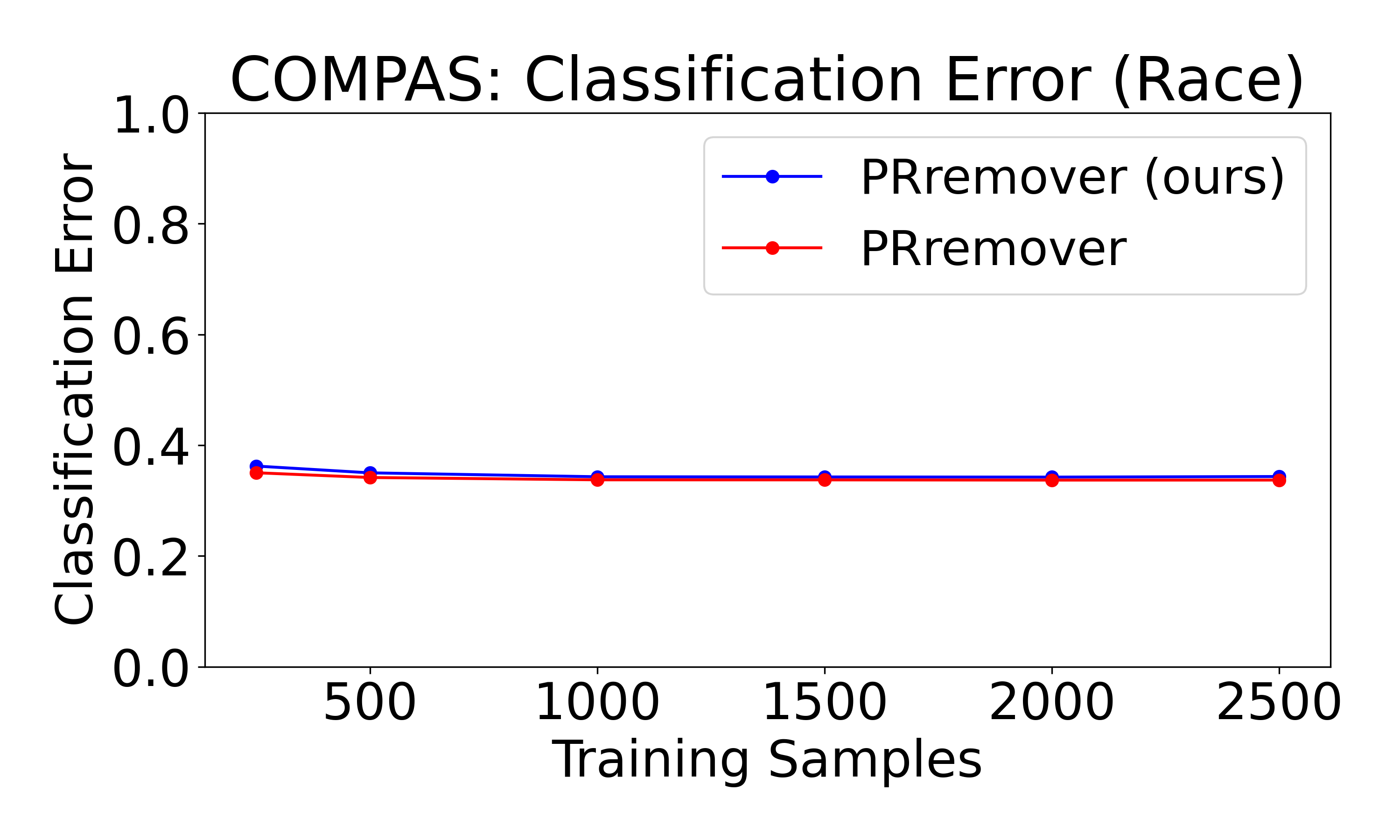}
\includegraphics[width=0.33\linewidth]{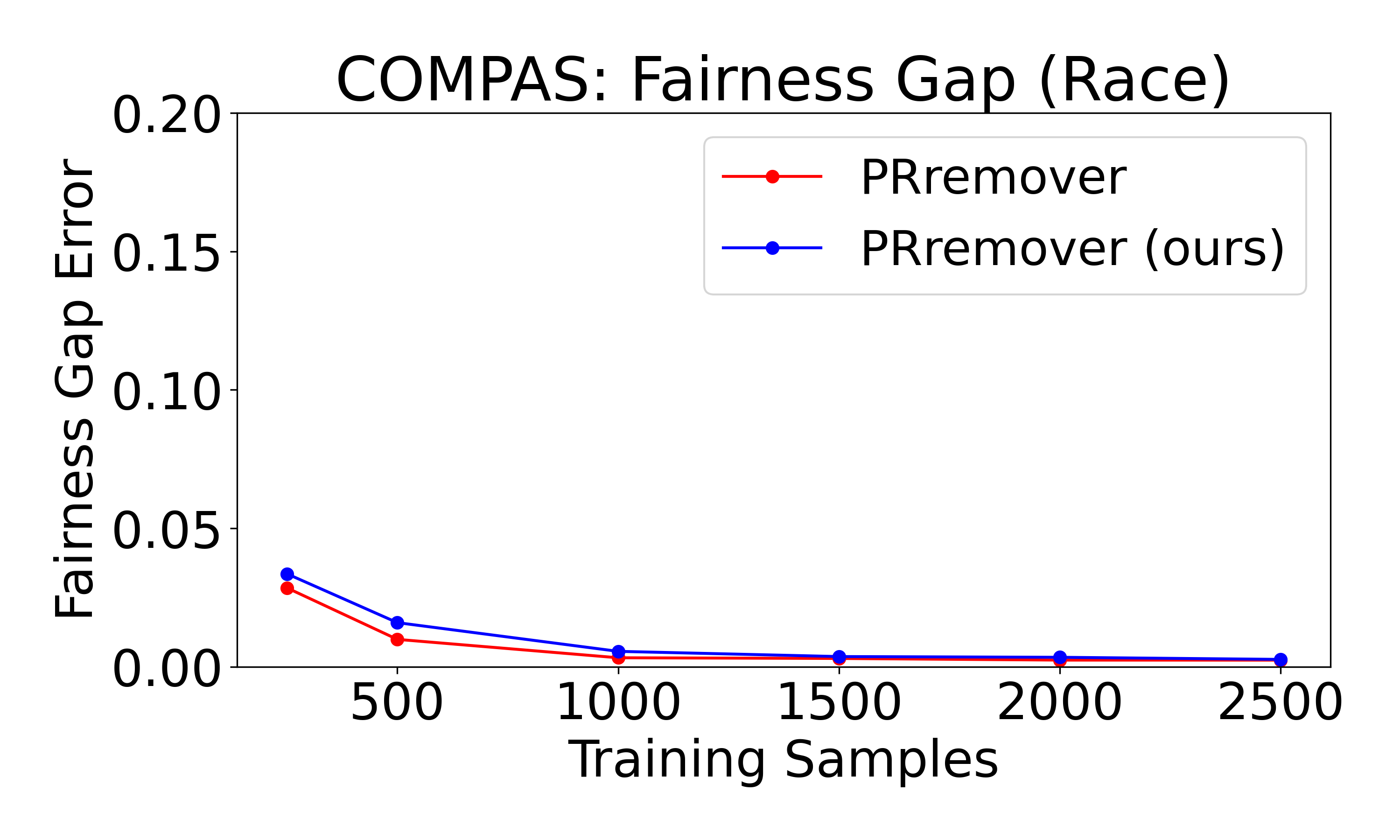}
\includegraphics[width=0.33\linewidth]{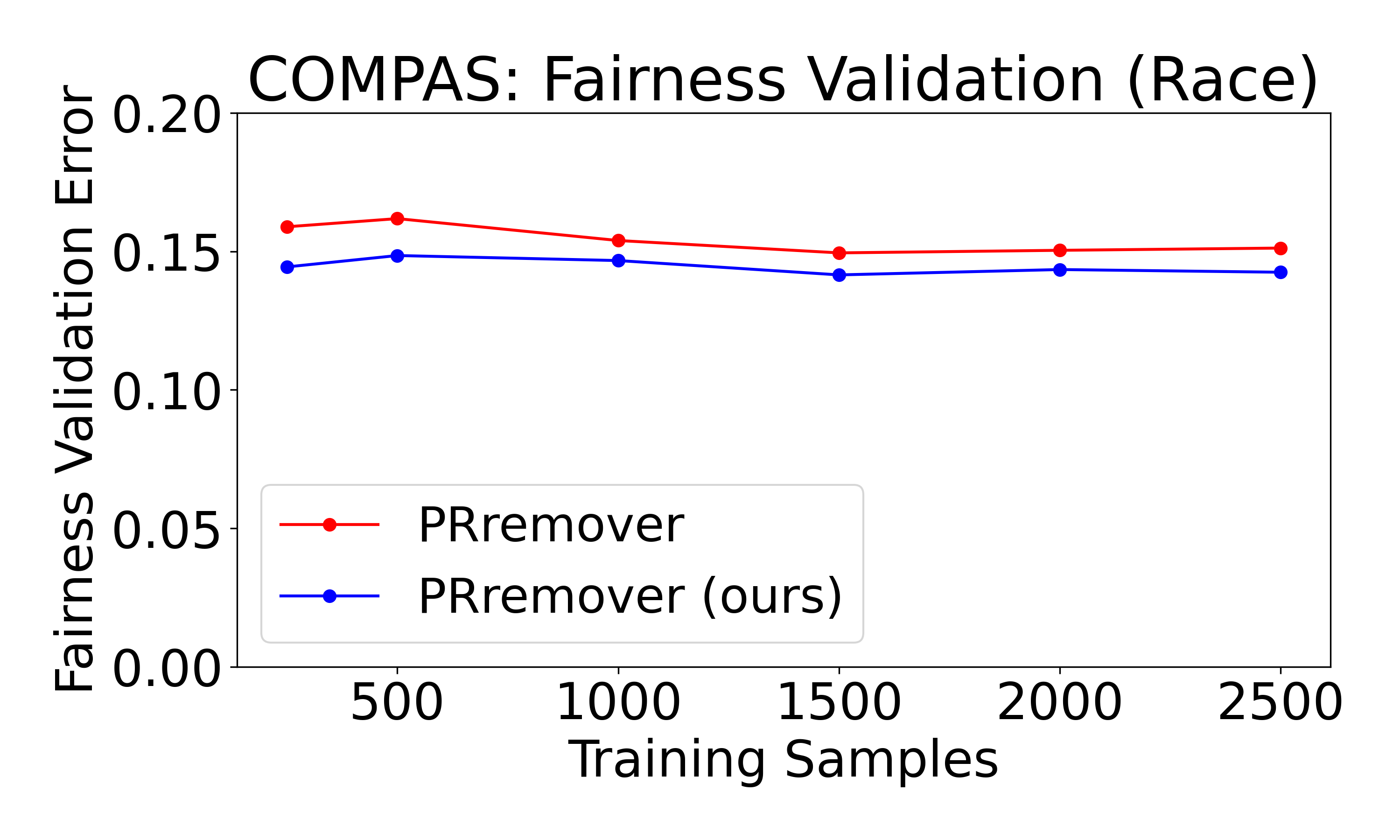}
\caption{Experimental results with COMPAS (gender) dataset of our batch-balancing technique for PRremover as a function of the total number of training samples $n$.  We report the mean over $m_1$.}
\vspace{-1em}
\label{compas_PRremover_balance_figures}
\end{figure*}

\end{document}